\PassOptionsToPackage{round,authoryear}{natbib}
\RequirePackage{fix-cm}
\documentclass[twoside,11pt,tablecaptionbottom]{jmlr}
\makeatletter\@twosidetrue\@mparswitchtrue\makeatother

\usepackage[T1]{fontenc}
\usepackage[utf8]{inputenc}
\usepackage{amsmath,amssymb,amsfonts,mathtools,bm}
\DeclareMathSizes{7.7}{7}{5}{5}
\DeclareMathSizes{7.1}{7}{5}{5}
\DeclareMathSizes{5.2}{5}{5}{5}
\DeclareMathSizes{6.55}{7}{5}{5}
\usepackage{booktabs,multirow,array,longtable}
\usepackage{graphicx,adjustbox}
\makeatletter
\@ifundefined{@org@Ginclude@graphics}{}{\let\Ginclude@graphics\@org@Ginclude@graphics}
\makeatother
\usepackage{xcolor}
\usepackage{enumitem}
\usepackage{microtype}
\usepackage{siunitx}
\usepackage{float}
\usepackage{placeins}
\usepackage{url}
\usepackage{hyperref}
\usepackage[nameinlink,capitalize,noabbrev]{cleveref}
\usepackage{tikz}
\usetikzlibrary{arrows.meta,positioning,fit,calc}

\hypersetup{
  hidelinks,
  hypertexnames=false,
  pdftitle={Training Under Challenge: Executable Certificates and Challenge-Closed Optimality for Neural Networks},
  pdfauthor={Farhang Yeganegi, Arian Eamaz, and Mojtaba Soltanalian},
  pdfsubject={Executable neural training certificates, current-state challenge coverage, challenge-closed optimality, representation sufficiency, and quantized neural networks},
  pdfkeywords={training assurance, executable challenges, spectral coverage, representation sufficiency, quantized neural networks}
}

\graphicspath{{figures/}}
\definecolor{tucblue}{HTML}{1F5A85}
\definecolor{tucgreen}{HTML}{238B57}
\definecolor{tucgold}{HTML}{A36A00}
\definecolor{tucred}{HTML}{C8374A}
\definecolor{tucgray}{HTML}{606770}
\definecolor{palegreen}{HTML}{E8F5EE}
\definecolor{paleblue}{HTML}{EAF2F8}
\definecolor{palegold}{HTML}{FFF7DB}
\definecolor{palered}{HTML}{FAE8EB}
\definecolor{palegray}{HTML}{F1F3F4}

\definecolor{DeepBlue}{HTML}{1F4E79}
\definecolor{SoftBlue}{HTML}{EAF2F8}
\definecolor{SeaGreen}{HTML}{287A5A}
\definecolor{SoftGreen}{HTML}{EAF6F0}
\definecolor{WarmOrange}{HTML}{B76517}
\definecolor{SoftOrange}{HTML}{FDF1E6}
\colorlet{tucorange}{WarmOrange}
\definecolor{GlanceGold}{HTML}{9A7200}
\definecolor{SoftGold}{HTML}{FFF8D8}
\definecolor{SlateBlue}{HTML}{4C6573}
\definecolor{SoftSlate}{HTML}{F3F6F8}
\newcommand{\glancebox}[3]{%
  \begingroup
  \setlength{\fboxsep}{5.5pt}%
  \setlength{\fboxrule}{0.8pt}%
  \noindent\fcolorbox{#1}{#2}{%
    \begin{minipage}{\dimexpr\linewidth-2\fboxsep-2\fboxrule\relax}
      #3
    \end{minipage}}%
  \endgroup
}
\newcommand{\glanceboxfixed}[4]{%
  \begingroup
  \setlength{\fboxsep}{5.5pt}%
  \setlength{\fboxrule}{0.8pt}%
  \noindent\fcolorbox{#1}{#2}{%
    \begin{minipage}[t][#3][t]{\dimexpr\linewidth-2\fboxsep-2\fboxrule\relax}
      #4
    \end{minipage}}%
  \endgroup
}
\newcommand{\glanceragged}{%
  \raggedright
  \hyphenpenalty=10000
  \exhyphenpenalty=10000
  \emergencystretch=1.5em
}

\crefname{theorem}{theorem}{theorems}
\Crefname{theorem}{Theorem}{Theorems}
\crefname{proposition}{proposition}{propositions}
\Crefname{proposition}{Proposition}{Propositions}
\crefname{corollary}{corollary}{corollaries}
\Crefname{corollary}{Corollary}{Corollaries}
\crefname{assumption}{assumption}{assumptions}
\Crefname{assumption}{Assumption}{Assumptions}
\crefname{definition}{definition}{definitions}
\Crefname{definition}{Definition}{Definitions}
\crefname{lemma}{lemma}{lemmas}
\Crefname{lemma}{Lemma}{Lemmas}

\newcommand{\R}{\mathbb R}
\newcommand{\E}{\mathbb E}
\newcommand{\Prb}{\mathbb P}
\newcommand{\norm}[1]{\left\lVert#1\right\rVert}
\newcommand{\fro}[1]{\left\lVert#1\right\rVert_{\!F}}
\newcommand{\opnorm}[1]{\left\lVert#1\right\rVert_{\mathrm{op}}}
\newcommand{\ip}[2]{\left\langle #1,#2\right\rangle}
\newcommand{\rank}{\operatorname{rank}}
\newcommand{\range}{\operatorname{range}}
\newcommand{\tr}{\operatorname{tr}}
\newcommand{\diag}{\operatorname{diag}}
\newcommand{\vecop}{\operatorname{vec}}
\newcommand{\argmin}{\operatorname*{arg\,min}}

\newcommand{\KL}{D_{\mathrm{KL}}}
\newcommand{\green}{\textcolor{tucgreen}{\textbf{Green}}}
\newcommand{\yellow}{\textcolor{tucgold}{\textbf{Yellow}}}
\newcommand{\redstatus}{\textcolor{tucred}{\textbf{Red}}}

\title[Training Under Challenge]{Training Under Challenge: Executable Certificates and\\Challenge-Closed Optimality for Neural Networks}

\author[Yeganegi, Eamaz, and Soltanalian]{%
\Name{Farhang Yeganegi} \Email{fyegan2@uic.edu}\\
\addr Department of Electrical and Computer Engineering \\University of Illinois Chicago\\Chicago, Illinois, USA
\AND
\Name{Arian Eamaz} \Email{aeamaz2@uic.edu}\\
\addr Department of Electrical and Computer Engineering \\University of Illinois Chicago\\Chicago, Illinois, USA
\AND
\Name{Mojtaba Soltanalian} \Email{msol@uic.edu}\\
\addr Department of Electrical and Computer Engineering \\University of Illinois Chicago\\Chicago, Illinois, USA}
\jmlrauthors{Farhang Yeganegi, Arian Eamaz, and Mojtaba Soltanalian}

\begin{document}
\maketitle
\pagestyle{jmlrps}

\begin{abstract}
A flat training curve does not reveal whether a neural network has reached a global optimum, is locally trapped, is representation-limited, or is mismatched to its trainer. We introduce \emph{Training Under Challenge}, an executable-certificate framework in which predeclared, architecture-valid procedures construct complete alternatives in the same certified class and reevaluate the same objective. Any lower-valued candidate is a replayable witness that lower-bounds the checkpoint's empirical global-optimality gap. Passing a finite suite is only suite-relative; global-gap conclusions require a separately justified coverage mechanism. We define a resource-indexed challenge-power modulus that characterizes the largest gap compatible with passage. For squared loss, current block-decrease operators make coverage checkable and yield uniform and realized-residual bounds. We prove the converse frontier: without coverage, a first-order ReLU trainer can reach infinitely many exact conditional head optima while converging to a non-global point. On a channel-gated ResNet-18 distillation problem with known optimum, eight internal challenges cover all 240 audited output directions, and realized-residual bounds lie within factors of $1.74$--$3.02$ of the true gap. Paired predictive certificates separate decoder under-use from representation insufficiency, while quantized-denoising studies demonstrate diagnosis, repair, and current-state recertification.
\end{abstract}
\vspace{.4cm}

\begin{keywords}
training assurance, executable challenges, spectral coverage, representation sufficiency, quantized neural networks
\end{keywords}


\clearpage
\begingroup
\thispagestyle{empty}
\setlength{\parindent}{0pt}
\setlength{\parskip}{0pt}

\begin{center}
{\Large\bfseries\color{DeepBlue}
At a Glance: The Executable Certificate Framework}\par
\vspace{2pt}
{\small
Complete alternative models establish failure constructively;
passage, coverage, tracking, and protected evidence determine
the strongest conclusion earned.}
\end{center}

\vspace{5pt}


\glancebox{DeepBlue}{SoftSlate}{%
\fontsize{9.2}{10.7}\selectfont
\glanceragged

\begin{center}

\textbf{\texttt{\MakeUppercase{Online status and post-passage examination}}}\par\vspace{10pt}
\end{center}

{\fontsize{8.9}{10.2}\selectfont

\begin{minipage}[t]{0.19\linewidth}
\centering
\textcolor{tucred}{\bfseries Red}\par
$J_t>J_{\mathrm{ref}}+\tau_G$\par
A feasible reference already beats the checkpoint.
\end{minipage}\hfill
\begin{minipage}[t]{0.19\linewidth}
\centering
\textcolor{tucgold}{\bfseries Yellow}\par
$G_t+\tau_G<J_t\le J_{\mathrm{ref}}+\tau_G$\par
The reference is passed, but executable headroom remains.
\end{minipage}\hfill
\begin{minipage}[t]{0.19\linewidth}
\centering
\textcolor{tucgreen}{\bfseries Green}\par
$J_t\le G_t+\tau_G$\par
The named current suite is passed at its tolerance and budget.
\end{minipage}\hfill
\begin{minipage}[t]{0.19\linewidth}
\centering
\textcolor{DeepBlue}{\bfseries Green-0}\par
The first current Green event launches the stronger policy.
\end{minipage}\hfill
\begin{minipage}[t]{0.19\linewidth}
\centering
\textcolor{tucgray}{\bfseries Gray}\par
A retained stronger target is feasible but not yet earned by the trainer.
\end{minipage}

\vspace{.5cm}

\textit{Notation:}
$J_t:=J(\theta_t)$ is the current checkpoint value;
$J_{\mathrm{ref}}:=J(\theta_{\mathrm{ref}})$ is the value of a fixed feasible reference;
$G_t$ is the best retained executable frontier through time $t$;
$\tau_G\ge0$ is the declared Green tolerance;
and $E_t\in\{0,1\}$ is the mandatory current-suite completion flag.
The Red, Yellow, and Green inequalities apply when $E_t=1$; if $E_t=0$, the current report is \emph{Inconclusive}.
All values use the same certified objective.

}\par\vspace{4pt}
}

\vspace{5pt}


\noindent
\begin{minipage}[t]{0.49\linewidth}

\glanceboxfixed{DeepBlue}{SoftBlue}{240pt}{%
\fontsize{9.1}{10.4}\selectfont
\glanceragged

\textbf{\texttt{\MakeUppercase{CENTRAL THEORETICAL CLAIMS}}}\par\vspace{7pt}

\textbf{Executable witnesses.}
If a complete challenger has value $B<J_t$, then
\[
J_t-J^\star\ge J_t-B.
\]
The model itself is retained as the replayable evidence.

\vspace{3pt}

\textbf{Budgeted finite passage.}
Challenge power $\Psi(\mathcal B,s)$ and its inverse $E(\mathcal B,\tau)$ connect
audit resources, detectable suboptimality, and the largest gap
compatible with passage.

\vspace{3pt}

\textbf{Current spectral coverage.}
Certified decrease operators yield uniform, nullspace-aware,
normal-residual, and realized-residual gap bounds.

\vspace{3pt}

\textbf{Exact globality frontier.}
Coverage upgrades passage to near-globality.
Without coverage, infinitely many reached exact conditional
head optima can remain non-global.

\vspace{3pt}

\textbf{Paired representation certificates.}
Outer and representation-preserving brackets separate decoder
under-use from representation insufficiency.

}

\end{minipage}\hfill
\begin{minipage}[t]{0.49\linewidth}

\glanceboxfixed{DeepBlue}{SoftBlue}{240pt}{%
\fontsize{9.1}{10.4}\selectfont
\glanceragged

\textbf{\texttt{\MakeUppercase{PRACTITIONER'S GUIDE}}}\par\vspace{6pt}

\textbf{Declare the problem.}
Freeze the certified class and objective, feasible reference model,
mandatory suite, budgets, tolerances, solver policies, permitted
information, optional null, and protected-task data.

\textbf{Audit the checkpoint.}
Materialize each alternative,
re-evaluate the fixed objective, and archive the best model with its  solver and 
replay record.

\textbf{Interpret the status.}
Red identifies a failed minimum standard.
Yellow exhibits executable headroom.
Green records current-suite passage.
Inconclusive records incomplete mandatory execution.

\textbf{After Green-0.}
Continue the original trainer to test attainability with stronger challengers, producing an optimality staircase.
Every accepted stair lies below the loss that creates it.

\textbf{Attach only earned evidence.}
Add spectral or proof-bearing coverage for gap meaning,
paired brackets for representation meaning, and protected evidence
for task meaning.

\textbf{Record null-relative rarity separately.} This quantifies how unusual the observed certificate depth is under
the declared null. 
}

\end{minipage}

\vspace{5pt}


\vspace{5pt}

\glancebox{SeaGreen}{SoftGreen}{%
\fontsize{9.0}{10.2}\selectfont
\glanceragged

\textbf{\texttt{\MakeUppercase{Certificate ladder:}}}
Stronger conclusions require additional evidence; passage alone is not globality.
\par\vspace{10pt}

\noindent
\begin{minipage}[c]{0.16\linewidth}
\centering
\textbf{\textcolor{tucgray}{1.\ Complete\\ witness}}\par
\vspace{2pt}
$B<J_t$\par
\vspace{2pt}
{\fontsize{8.3}{9.2}\selectfont
Constructive\\ suboptimality evidence}
\end{minipage}%
\begin{minipage}[c]{0.04\linewidth}
\centering{\Large\color{DeepBlue}$\Rightarrow$}
\end{minipage}%
\begin{minipage}[c]{0.16\linewidth}
\centering
\textbf{\textcolor{SlateBlue}{2.\ Suite\\ passage}}\par
\vspace{2pt}
Current suite\\ passes\par
\vspace{2pt}
{\fontsize{8.3}{9.2}\selectfont
Named budget\\ and tolerance}
\end{minipage}%
\begin{minipage}[c]{0.04\linewidth}
\centering{\Large\color{DeepBlue}$\Rightarrow$}
\end{minipage}%
\begin{minipage}[c]{0.16\linewidth}
\centering
\textbf{\textcolor{SeaGreen}{3.\ Global-gap\\ bound}}\par
\vspace{2pt}
$+$ coverage\par
\vspace{2pt}
{\fontsize{8.3}{9.2}\selectfont
Challenge power,\\ spectral, or proof-bearing}
\end{minipage}%
\begin{minipage}[c]{0.04\linewidth}
\centering{\Large\color{DeepBlue}$\Rightarrow$}
\end{minipage}%
\begin{minipage}[c]{0.16\linewidth}
\centering
\textbf{\textcolor{tucgold}{4.\ Representation\\ bound}}\par
\vspace{2pt}
$+$ paired\\ certificates\par
\vspace{2pt}
{\fontsize{8.3}{9.2}\selectfont
Decoder under-use\\ vs.\ rep.\ deficit}
\end{minipage}%
\begin{minipage}[c]{0.04\linewidth}
\centering{\Large\color{DeepBlue}$\Rightarrow$}
\end{minipage}%
\begin{minipage}[c]{0.16\linewidth}
\centering
\textbf{\textcolor{tucorange}{5.\ Task\\ evidence}}\par
\vspace{2pt}
$+$ valid bridge\par
\vspace{2pt}
{\fontsize{8.3}{9.2}\selectfont
Protected or\\ transferred task meaning}
\end{minipage}

\vspace{6pt}

\par
}

\endgroup
\clearpage

\section{Introduction}
\label{sec:introduction}

A flat training curve can accompany a global solution, a poor local basin, a blockwise bottleneck, a dead representation, a numerical plateau, or the limit of one optimizer and schedule. Gradient and update norms describe the local behavior of that trainer; held-out metrics describe another sample or task. The optimization question that motivates this paper is constructive:
\begin{quote}
\emph{Can a declared, architecture-valid procedure construct a materially better complete model for the same objective?}
\end{quote}

We propose a principled framework to construct feasible challenges to the training under investigation. If the candidate attains a lower loss value $B<J(\theta_t)$ in the same certified class, then feasibility alone gives
\begin{equation}
J(\theta_t)-J^\star\ge J(\theta_t)-B.
\label{eq:intro-witness}
\end{equation}
Figure~\ref{fig:framework-overview} shows how the proposed framework turns this primitive into a monitoring and certification system.

\begin{figure}[t]
\centering
\includegraphics[width=.98\textwidth]{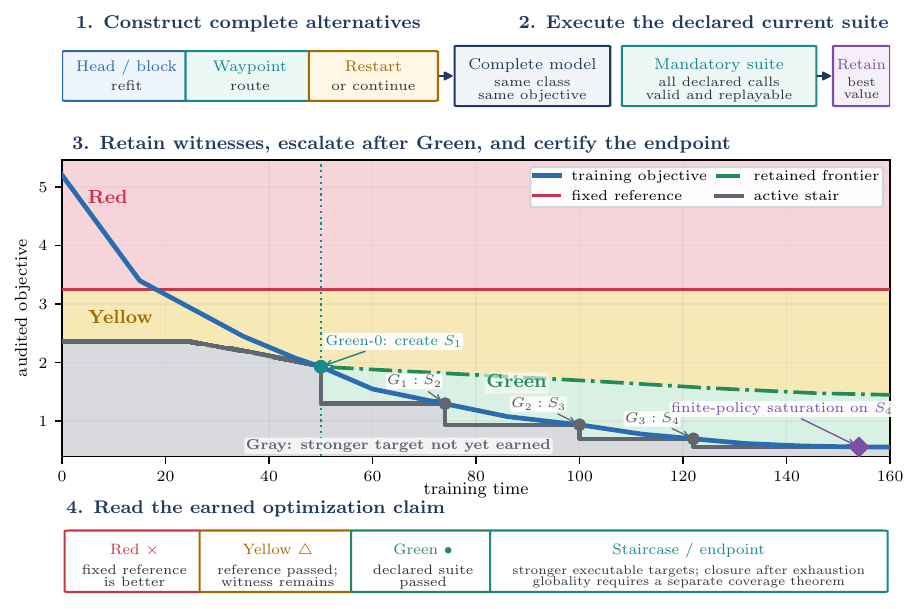}
\caption{The proposed framework turns complete executable alternatives into a retained evidence frontier. Red and Yellow preserve explicit lower-loss models. Current Green reports passage of the completed named suite and launches the stronger examination. Each later event creates a complete executable stair strictly below the triggering loss. The endpoint becomes globally meaningful only when a coverage theorem, exact solver, or proof-bearing lower floor connects the retained values to the complete certified class.}
\label{fig:framework-overview}
\end{figure}

\subsection{A constructive audit of a checkpoint}

The framework fixes one empirical optimization problem: model class, data, objective, regularization, precision, and every state needed for deterministic execution. A predeclared challenge suite then constructs complete alternative models through head or block refits, waypoint-guided routes, continuations, restarts, or proof-bearing subsolvers. Only the complete terminal model and its value enter the evidence record.

The online monitor answers a limited set of questions necessary for the trainer's \emph{certification}. Red means that a fixed feasible reference still beats the checkpoint. Yellow means that the reference has been passed but a lower-valued executable challenger remains. Green means that every mandatory current call completed and the checkpoint passed that named suite at its declared tolerance and budget. 
Passing Green opens a stronger examination: the original trainer may be continued toward a lower executable target to test attainability, or the challenger may be adopted and the current suite rerun to test actionability.


\subsection{The theorem spine and contributions}
\label{sec:contributions}

We develop the presentation around the questions in \cref{tab:result-spine}. The first four rows form the central optimization spine; representation and task evidence are downstream extensions.  The paper contributes four connected components:
\begin{enumerate}[leftmargin=1.65em,itemsep=4pt,topsep=3pt]
\item \textbf{Executable semantics and status.} Architecture-valid candidates, same-objective reevaluation, current-suite completion, retained evidence, and event-driven stronger targets define what is observed and what each status means.
\item \textbf{Finite passage and coverage.} Challenge power gives the sharp resource-indexed interpretation of suite passage. For squared loss, certified decrease operators turn current challenge executions into full, partial, normal-residual, and realized-residual gap bounds; E-optimal design selects complementary calls. Exact affine motifs, collective rank-deficient blocks, a stable-core ReLU theorem, and a primal--dual adapter bracket provide concrete coverage mechanisms.
\item \textbf{The globality boundary.} Challenge-closed optimality characterizes exhaustive endpoint families, while an explicit ReLU construction proves that infinitely many reached exact conditional head optima may still converge to a non-global point when one representation direction remains uncovered.
\item \textbf{Representation, statistical, and operational evidence.} Paired predictive certificates quantify representation insufficiency; finite-trial calibration controls misses of a frozen policy; a ResNet-18 study measures certificate tightness against known ground truth; and quantized-denoising experiments separate executable headroom, trainer attainability, structural coordination, and protected-task behavior.
\end{enumerate}

\begin{table}[t]
\centering
\small
\caption{The theorem spine. Each stronger conclusion requires the additional object named in the middle column.}
\label{tab:result-spine}
\begin{tabular}{>{\raggedright\arraybackslash}p{0.28\linewidth}>{\raggedright\arraybackslash}p{0.25\linewidth}>{\raggedright\arraybackslash}p{0.37\linewidth}}
\toprule
Question & Mathematical object & Conclusion earned \\
\midrule
Can the checkpoint be beaten? & complete executable challenger & a realized improvement lower-bounds empirical suboptimality \\
What does finite passage mean? & budgeted challenge power and its inverse & the largest gap compatible with passage by the named resource-qualified family \\
When does passage become near-global? & spectral, exact-affine, contraction, or proof-bearing coverage & a quantitative upper bound on the full-class empirical gap \\
Why is coverage necessary? & exact non-global ReLU staircase & repeated exact conditional progress can remain globally suboptimal \\
What is missing from the representation? & paired outer and representation-preserving brackets & decoder under-use is separated from representation insufficiency \\
What does the result mean for deployment? & population calibration, transfer theorem, or protected audit & a separately licensed population or task statement \\
\bottomrule
\end{tabular}
\end{table}

\subsection{Evidence and organization}

The experiments serve two different purposes. The ResNet-18 and exact-regression studies are controlled validations of theorem identities, scope, and numerical tightness in problems where ground truth is available. The quantized-denoising study tests the operational system: whether complete alternatives expose actionable headroom, whether the original trainer reaches stronger targets, whether adoption survives current-state recertification, and whether protected image quality changes. These are checkpoint-based empirical claims; population and intended-task conclusions remain separate evidence fields.

Sections~\ref{sec:framework}--\ref{sec:staircase} define the audit semantics and explain the challenge operations through a running graph-cut example. Sections~\ref{sec:challenge-power}--\ref{sec:closure-main} develop finite passage, coverage, necessity, and challenge-closed endpoints. Sections~\ref{sec:representation}--\ref{sec:coverage-plugins} develop representation, tracking, exact-solver, proof-bearing, population, and null-relative extensions. Section~\ref{sec:experiments} reports the controlled and neural studies, and Section~\ref{sec:protocol} gives the reporting, tolerance, and cost protocol. Full proofs and replay records appear in the appendices and artifact.

\section{Related Work and Positioning}
\label{sec:related-work}

\subsection{Optimization diagnostics, variable projection, and local training}

Classical optimization monitors objective descent, gradients, KKT residuals, line-search conditions, and duality gaps \citep{nesterov2004introductory,bottou2018optimization}. These quantities describe a chosen formulation and update process. Variable projection removes affine variables from separable nonlinear least squares \citep{golub1973variable}, and neural variants exploit exact or convex output-layer structure \citep{newman2021varpro,galashov2025closedform}. Block-coordinate, greedy layerwise, and local-error methods use structured subproblems as the training algorithm \citep{tseng2001bcd,belilovsky2019greedy,nokland2019local}.

Globally convergent trainers form an important boundary case rather than a contradiction to post-hoc challenge auditing. \citet{akiyama2025bcd} proves global convergence of a particular auxiliary-variable BCD procedure for fully connected squared-error regression under full-row-rank training data, a common hidden width, and strictly monotone activations whose slope is uniformly bounded away from zero. 
When these assumptions and the prescribed procedure match the exact certified problem, the theorem directly resolves the empirical optimization question, and an additional challenge audit is unnecessary for establishing globality. Outside such a theorem-backed regime, the distinction motivating this paper remains. A guarantee for one architecture--objective--data--algorithm tuple does not determine the status of an arbitrary realized checkpoint produced by an arbitrary trainer on a problem outside the theorem setting. Our proposed framework is solver-agnostic: heuristic, approximate, exact-conditional, and globally convergent solvers can all enter the challenge portfolio. 

Waypoint reconstruction is especially close to auxiliary-coordinate and local-target methods. The method of auxiliary coordinates introduces explicit intermediate states to remove deep nesting and alternates over local parameter and state subproblems \citep{carreira2014mac}; Akiyama's BCD construction likewise introduces auxiliary hidden representations and layer-local reconstruction losses, but uses them to define and analyze a trainer \citep{akiyama2025bcd}. The proposed framework does not claim local targets as a new training primitive. It uses such local objectives as \emph{challenge generators} for an independently realized checkpoint. The evidentiary object is always the complete assembled model reevaluated under the original objective; a local waypoint or auxiliary loss is never itself a certificate.

\subsection{Local search, error bounds, and resource-indexed challenge power}

Local search defines optimality relative to a neighborhood or move family \citep{johnson1988localsearch,kolda2003direct}. PL and error-bound conditions convert local residuals into global-gap and rate statements \citep{karimi2016pl}. KL analyses provide a complementary convergence language for alternating nonsmooth methods \citep{bolte2014palm}. Coordinate and subspace corrections quantify progress from structured directions \citep{nesterov2012coordinate,xuzikatanov2002}, while fusion frames and E-optimal design characterize collective subspace coverage \citep{casazza2008fusion,pukelsheim2006optimal}.

Challenge power places these ideas inside an executable audit. The move family is state dependent and includes solver policy, permitted information, complete materialization, and total budget. Its inverse states what finite passage can mean. It does not remove the hard coverage question: coverage must be established by a theorem, a current operator calculation, a proof-bearing floor, or another declared mechanism. The paper develops several such mechanisms.

\subsection{Neural geometry and exact conditional subproblems}

Wide-network analyses control Jacobian or neural-tangent spectra at initialization or within a neighborhood \citep{jacot2018ntk,allen2019convergence,oymak2019overparameterized,nguyen2021ntk,banerjee2023ntk,carvalho2024ntk,song2026ntk}. These results explain when a positive coverage coefficient can arise. The certificate here is checkpoint specific and execution weighted: it is built from the challenges actually run, their accepted steps, and their solver errors, and it reports uncovered current residual energy when the operator bank is rank deficient.

Selected neural classes admit global-optimality conditions, convex representations, exact reformulations, and relaxations \citep{haeffele2017global,bach2017convex,ergenpilanci2021geometry,pilanci2020convex,ergen2020convex,mishkin2022fast,kim2024convex,sahiner2022attention}. Deep-linear squared-loss structure is classical \citep{baldi1989linear}. In this paper, such results enter as exact or proof-bearing conditional challenges whose scope remains attached to the frozen components, regularizer, width, and data.

\subsection{Verification, repair, and algorithm portfolios}

Robustness verification and proof-of-learning certify prediction properties or computation histories \citep{anderson2025branching,jia2021proof,srivastava2024optimistic,mao2025ctbench}. Counterexample-guided synthesis and neural repair construct changes after a specification failure \citep{solarlezama2006sketching,boetius2023robustrepair,sotoudeh2021provable,tao2023architecture}. Algorithm-selection and variable-neighborhood methods allocate heterogeneous search effort \citep{rice1976algorithm,mladenovic1997vns}. The object certified here is different: the empirical optimization quality of one checkpoint under one fixed objective, together with the complete alternatives that establish failure and the named suite whose passage was observed.

\subsection{Representation probes, information, and control tasks}

Linear probes train independent decoders on frozen intermediate representations to diagnose what those representations make accessible \citep{alain2016probes}. Control tasks test whether probe performance reflects reusable representation structure rather than probe memorization \citep{hewitt2019control}. Blackwell comparison, statistical information, proper scoring, predictive $\mathcal V$-information, and decodable bottlenecks relate experiments, predictor classes, and Bayes risks \citep{blackwell1953,degroot1962,gneiting2007,reid2011information,williamson2024bridge,xu2020usable,dubois2020dib,banerjee2020deficiency}.

The representation-preserving challenge in this paper is probe-like, but it returns a complete model and an executable objective value. Its novelty is the paired certificate: an outer bracket and a frozen-representation bracket are combined so that decoder under-use is not misreported as information absent from the representation. Permutation selectivity is used as a diagnostic control, not as a substitute for the paired risk bounds.

\subsection{Statistical risk control and quantized training}

Learn-then-test and risk-control procedures calibrate frozen decision pipelines over declared populations \citep{bates2021rcps,angelopoulos2025ltt}. The finite-trial population result applies that discipline to heterogeneous challenge episodes. Its probability space is separate from deterministic pointwise coverage and from the randomness of the observed trainer.

Quantization-aware training includes binary networks, integer inference, learned clipping and step sizes, oscillation-aware optimization, and recent low-bit methods \citep{courbariaux2015binaryconnect,hubara2016binarized,jacob2018quantization,choi2018pact,esser2020lsq,nagel2022oscillation,jin2025parq,panferov2025quest}. The quantized-denoising experiments use this difficult regime to test diagnosis and intervention.

\subsection{What is inherited and what is new}

The individual optimization operations have important precedents. The contribution is their use inside one executable evidence system and the bridges that state what finite passage licenses. \Cref{tab:positioning} summarizes the relationship. 

This paper builds on our earlier data-aware training-monitoring program. In 
\citet{yeganegi2025dataaware}, we introduced training bounds and a color-coded monitoring system built from layerwise achievable reference solutions. \citet{eamaz2026trust} extended that construction to low-bit decoder-only transformers through layer peeling, local fits to intermediate representations, and multiple target assignments. Those studies established that structured alternatives can expose optimization inefficiencies hidden by an aggregate training curve. The present work turns that construction principle into a general executable-certificate theory. A route may select any subset of architecture-native graph cuts, from no internal waypoint to every available waypoint. More importantly, the framework distinguishes what a lower-valued complete model proves, what current-suite passage means, and what additional coverage is required for global-gap, representation, population, or task claims. The earlier layer-wise bound and transformer-peeling constructions are therefore concrete challenge generators within the broader hierarchy developed here.

\begin{table}[t]
\centering
\small
\caption{Relationship to neighboring methods. The middle column identifies the inherited idea; the right column identifies the additional role it plays in the proposed framework.}
\label{tab:positioning}
\begin{tabular}{>{\raggedright\arraybackslash}p{0.24\linewidth}>{\raggedright\arraybackslash}p{0.30\linewidth}>{\raggedright\arraybackslash}p{0.36\linewidth}}
\toprule
Neighboring line & Shared ingredient & Distinction in this paper \\
\midrule
Variable projection / head elimination & exact conditional optimization of affine variables & post-hoc complete candidate, same-objective replay, and a conditional-gap record \\
Auxiliary coordinates / local-target and layerwise training & intermediate targets and local subproblems & local objectives generate counterfactual routes; only the complete terminal model carries evidence \\
Local search and error bounds & move-family optimality and residual-to-gap conditions & state-dependent executable families, explicit resource budgets, and a passage-versus-globality distinction \\
Neural repair / counterexample synthesis & construct a modified model after detecting failure & optimize the same empirical objective and retain strict objective improvements as checkpoint evidence \\
Convex or exact neural subclasses & global or bracketed optimization in a restricted class & plug in as proof-bearing challenges with the class boundary recorded \\
Linear probes and control tasks & frozen-representation diagnosis and selectivity & pair inner and outer risk brackets to separate decoder under-use from representation insufficiency \\
\bottomrule
\end{tabular}
\end{table}

\section{The Proposed Framework}
\label{sec:framework}

\subsection{Problem identity: training, certification, and protected task}

A certificate begins by fixing the exact optimization problem. Let $D_{\mathrm{cert}}$ be the deterministic certification sample and let $\Theta=\Theta_{\mathrm{cert}}$ be the complete certified class. The certification objective is
\begin{equation}
J_{\mathrm{cert}}(\theta)
=
\frac{1}{n_{\mathrm{cert}}}
\sum_{(x_i,y_i)\in D_{\mathrm{cert}}}
\ell(f_\theta(x_i),y_i)
+
\Omega_{\mathrm{cert}}(\theta),
\qquad
J^\star=\inf_{\vartheta\in\Theta}J_{\mathrm{cert}}(\vartheta)\in\mathbb R.
\label{eq:objective}
\end{equation}
The class contains every state that affects deterministic execution: architecture, dimensions, parameter sharing, masks, routing, normalization mode and buffers, quantization, sparsity, constraints, and precision semantics. Choose a fixed feasible reference model $\theta_{\mathrm{ref}}\in\Theta$ and set $J_{\mathrm{ref}}:=J(\theta_{\mathrm{ref}})$. Throughout the paper $J$ denotes $J_{\mathrm{cert}}$ when no ambiguity remains.

Training and task quantities answer different questions. Table~\ref{tab:objective-roles} separates them.
\begin{table}[t]
\centering
\small
\caption{Three objective roles. Every executable inequality in the certificate uses the fixed certification objective. Training and protected-task quantities enter as separately named evidence.}
\label{tab:objective-roles}
\begin{tabular}{>{\raggedright\arraybackslash}p{0.17\linewidth}>{\raggedright\arraybackslash}p{0.32\linewidth}>{\raggedright\arraybackslash}p{0.39\linewidth}}
\toprule
Quantity & Role & Question answered \\
\midrule
$J_{\mathrm{train}}$ on $D_{\mathrm{train}}$ & objective and stochastic transformations used by the trainer & What update problem generated the checkpoint? \\
$J_{\mathrm{cert}}$ on $D_{\mathrm{cert}}$ & fixed complete objective used for every challenge and status comparison & Can the checkpoint be executably beaten for the audited empirical problem? \\
$L_{\mathrm{task}}$ on protected data or a population law & intended-use risk excluded from challenge construction & Does the optimization evidence transfer to the task of interest? \\
\bottomrule
\end{tabular}
\end{table}
When $J_{\mathrm{train}}=J_{\mathrm{cert}}$, the certificate audits the trainer's empirical problem directly. When they differ, the report identifies the fixed certification problem explicitly.

\subsection{Core notation and claim levels}

In this paper, a certificate is a replayable record supporting a precisely stated inequality or status for the fixed certification problem. The notation used throughout is collected in \cref{tab:core-notation}.

\begin{table}[t]
\centering
\small
\caption{Core notation. Current quantities are checkpoint specific; they must be recomputed before being transferred to another checkpoint.}
\label{tab:core-notation}
\begin{tabular}{>{\raggedright\arraybackslash}p{0.18\linewidth}>{\raggedright\arraybackslash}p{0.72\linewidth}}
\toprule
Symbol & Meaning \\
\midrule
$J_t=J(\theta_t)$ & value of the current checkpoint under the fixed certification objective \\
$J^\star$ & infimum of that objective over the complete certified class $\Theta$ \\
$J_{\mathrm{ref}}$ & value of a fixed feasible reference model \\
$B_{\mathrm{on}}(\theta_t)$ & best complete value returned by the mandatory current suite, including the fallback \\
$G_t$ & best executable value retained through time $t$ \\
$E_t\in\{0,1\}$ & flag indicating whether every mandatory current call completed validly \\
$S_j$ & active stronger target created after the preceding passage or hit event \\
$I_{\mathrm{suite}}(\theta_t)=J_t-B_{\mathrm{on}}(\theta_t)$ & largest improvement exposed by the mandatory current suite \\
\bottomrule
\end{tabular}
\end{table}

The claims form a ladder. A lower-valued complete model is a one-sided witness. Completed current execution can establish passage of a named suite. A coverage theorem or proof-bearing lower floor can turn that passage into a global-gap bound. Paired predictive certificates can then isolate representation insufficiency. Population and intended-task claims require an independent calibration or protected bridge. Each level retains the evidence supporting the levels below it.

\subsection{Executable challenges and one-sided evidence}

At checkpoint $t$, let $\mathcal I_t$ be the declared information available to a challenge. It may include the current model, optimizer state, stored graph-cut tensors, certification data, earlier retained models, and declared randomness. Future checkpoints and protected audit data remain outside $\mathcal I_t$ unless the protocol explicitly includes them.

\begin{definition}[Executable challenge]
\label{def:challenge}
An executable challenge $C$ maps permitted information and auxiliary randomness to a complete model
\[
\widehat\theta_{C,t}=C(\mathcal I_t,\omega)\in\Theta.
\]
Its realized value is $B_{C,t}=J(\widehat\theta_{C,t})$, recomputed through the complete forward pass and the same certification objective. External replay requires either the candidate state or a deterministic regeneration recipe containing the released starting state, code, configuration, randomness, and environment.
\end{definition}

\begin{proposition}[One-sided executable certificate]
\label{prop:witness}
Every finite sound challenge satisfies
\begin{equation}
J(\theta_t)-J^\star
\ge
J(\theta_t)-B_{C,t}.
\label{eq:witness}
\end{equation}
A positive realized gap is therefore a quantitative certificate of empirical suboptimality.
\end{proposition}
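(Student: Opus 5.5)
The argument is essentially a feasibility observation, and the plan is to make explicit which part of the hypotheses ``finite'' and ``sound'' supplies each step. First, by \cref{def:challenge}, a sound challenge returns a complete model $\widehat\theta_{C,t}=C(\mathcal I_t,\omega)$ that lies in the certified class $\Theta$. Here ``complete'' means every execution-relevant state (masks, buffers, quantization, precision) is materialized, so the model really belongs to $\Theta=\Theta_{\mathrm{cert}}$ and is not a relaxation or partial object. Second, its value $B_{C,t}=J(\widehat\theta_{C,t})$ is recomputed through the full forward pass under the same objective \eqref{eq:objective}. This is what ``sound'' must guarantee: the reported number equals $J$ evaluated at a member of $\Theta$, not a local waypoint loss or an auxiliary surrogate. ``Finite'' ensures the call terminates with an actual model and a real number $B_{C,t}$, so the inequality compares real quantities.

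With these two facts in hand, the infimum definition $J^\star=\inf_{\vartheta\in\Theta}J(\vartheta)$ gives $J^\star\le J(\widehat\theta_{C,t})=B_{C,t}$, because $\widehat\theta_{C,t}$ is one admissible point of the class. Since $J^\star\in\R$ by \eqref{eq:objective}, we may subtract both sides from $J(\theta_t)$ and obtain \eqref{eq:witness}. For the second sentence of the proposition, if $J(\theta_t)-B_{C,t}>0$, then $J(\theta_t)-J^\star>0$, so $\theta_t$ is not an empirical global minimizer. The realized difference is a quantitative lower bound on the gap, and the retained candidate is the replayable witness.

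The only place where care is needed is stating precisely what ``sound'' includes: membership in $\Theta$ and exact same-objective reevaluation. If floating-point reevaluation differs from the mathematical $J$, the inequality should be read with $J$ defined as the certified deterministic execution semantics fixed in the problem identity. With that convention the proof is immediate, and no randomness in $\omega$ matters, because the statement holds pointwise for every realization.
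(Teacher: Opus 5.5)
Your proof is correct and is the same as the paper's: the materialized challenger lies in $\Theta$, so $J^\star\le B_{C,t}$, and subtracting from $J(\theta_t)$ gives the inequality. Your remarks on what ``finite'' and ``sound'' contribute are consistent with the definition of an executable challenge, but the argument needs nothing beyond that one feasibility step.
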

\begin{proof}
The materialized challenger belongs to $\Theta$, so $J^\star\le B_{C,t}$. Subtract from $J(\theta_t)$.
\end{proof}

Soundness and strength are separate. Complete materialization validates the inequality. Construction, solver status, budget, and coverage determine whether a challenge is likely to expose a meaningful defect and how much passage means.

\paragraph{Canonical example: an exactly solved or tolerance-certified head refit.}
Let $f_{\phi,w}=g_w\circ h_\phi$ and freeze the current representation $h_{\phi_t}$. Define
\[
B_{\mathrm{head}}^\star(\phi_t)=\inf_wJ(\phi_t,w).
\]
When compactness, coercivity, or an explicit regularizer gives attainment, an exact solver returns $\widehat w\in\argmin_wJ(\phi_t,w)$. Otherwise it returns a feasible $\varepsilon_{\mathrm{head}}$-optimal decoder satisfying
\[
J(\phi_t,\widehat w)\le B_{\mathrm{head}}^\star(\phi_t)+\varepsilon_{\mathrm{head}}.
\]
The complete candidate $\widehat\theta=(\phi_t,\widehat w)$ is evaluated by the original model from the original input. Its realized improvement always lower-bounds the full-class global gap; with an exact solve it is the exact conditional decoder gap, and with a tolerance-certified solve it approximates that gap to the declared additive tolerance. Later sections combine this representation-preserving value with an outer certificate to separate decoder under-use from representation insufficiency.

\subsection{Typed graph cuts}

Modern architectures are naturally represented as computation graphs. Choose ordered cuts
\[
Z_0,Z_1,\ldots,Z_M,
\]
where each $Z_r$ contains every live tensor and state required to continue exact execution beyond the cut. A cut can include residual streams, feature maps, masks, positions, recurrent state, routing state, normalization buffers, skip tensors, and caches. Parameters shared across graph regions remain shared inside every candidate.

\begin{table}[t]
\centering
\small
\caption{Representative architecture-native cuts. A sound cut stores every component of the continuation state required for exact downstream execution.}
\label{tab:graph-cuts}
\begin{tabular}{>{\raggedright\arraybackslash}p{0.23\linewidth}>{\raggedright\arraybackslash}p{0.29\linewidth}>{\raggedright\arraybackslash}p{0.39\linewidth}}
\toprule
Architecture & Natural cut & State retained for exact continuation \\
\midrule
MLP & hidden block boundary & activation tensor and normalization state \\
CNN / ResNet & stage or residual-group boundary & feature map, skip stream, buffers \\
Transformer & residual stream after a block & token states, masks, positions, cache policy \\
Graph network & message-passing round & node, edge, and global states \\
RNN / state-space model & recurrent stage or time window & hidden, recurrent, and auxiliary state \\
U-Net & encoder/decoder stage & stage activation and all live skip tensors \\
Mixture of experts & router--expert group & routing state, expert outputs, capacity state \\
Deep unfolding & algorithmic iteration & current iterate and algorithmic auxiliaries \\
\bottomrule
\end{tabular}
\end{table}

\paragraph{Running three-segment example.}
Consider
\[
x\xrightarrow{F_1}Z_1\xrightarrow{F_2}Z_2\xrightarrow{F_3}\widehat y
\]
at checkpoint $t$, with stored states $Z_1^{(t)}$ and $Z_2^{(t)}$. The operations below answer different questions:
\begin{itemize}[leftmargin=1.55em,itemsep=2pt,topsep=3pt]
\item a \emph{waypoint reconstruction} asks whether a perturbed copy of $F_2$ can reproduce the stored map $Z_1^{(t)}\mapsto Z_2^{(t)}$;
\item a \emph{frozen-context block refit} changes $F_2$ while evaluating the complete model from $x$ under the original objective;
\item a \emph{sequential route} rebuilds several segments and feeds the newly produced state $\widehat Z_1$, not the stored state $Z_1^{(t)}$, into the next rebuilt segment;
\item \emph{route-and-release} starts from the assembled route, unfreezes the permitted parameters, and resumes end-to-end optimization of the original objective.
\end{itemize}
Thus waypoint reconstruction and block refitting are local challenge objectives; sequential routing is a rule for composing several local proposals; route-and-release is an optional final end-to-end refinement. They are not four competing definitions of the same operation.

\subsection{Local challenge objectives}

A challenge configuration records the selected outer cuts, optimized blocks, within-segment decomposition and target-assignment rule, local and release budgets, solver portfolio, restart rule, waypoint metric, and cost origin.

\paragraph{Waypoint reconstruction.}
A segment between cuts $a$ and $b$ is reinitialized or perturbed and optimized to reproduce the stored checkpoint state:
\begin{equation}
\widehat\phi_{a:b}
\in
\operatorname{Opt}_{A,B}
\,d_b\!\left(F_{a:b}(Z_a^{(t)};\phi),Z_b^{(t)}\right).
\label{eq:waypoint-template}
\end{equation}
The stored input and target make this a local reconstruction problem. The restart or perturbation is part of the declaration; initializing at the current weights while targeting the current output would be a trivial identity exercise. Waypoint reconstruction tests whether an alternative local route can reproduce the checkpoint's interface state more effectively or with different conditioning. Its local loss is a construction device, not a certificate value.

\paragraph{Frozen-context block refit.}
All parameters outside block $r$ are fixed and the complete end-to-end objective is optimized:
\begin{equation}
\widehat\phi_r
\in
\operatorname{Opt}_{A,B}
J(\theta_{-r,t},\phi_r).
\label{eq:block-template}
\end{equation}
This directly asks whether block $r$ is underoptimized in its current upstream and downstream context. Exact heads, convex internal blocks, coordinate updates, and bounded-gap subsolvers fit this template.

\subsection{Multiple-waypoint routes form a nested challenge hierarchy}
\label{sec:multi-waypoint-hierarchy}

Waypoint reconstruction is not restricted to one intermediate state. For ordered cuts $Z_0,\ldots,Z_M$, let
\[
\mathcal I=\{1,\ldots,M-1\}
\]
index the available internal waypoints. Any ordered subset
\[
\mathcal S=\{i_1<\cdots<i_m\}\subseteq\mathcal I
\]
defines $m$ fixed waypoint targets. Together with the input and output cuts, the ordered sequence
\[
0=i_0<i_1<\cdots<i_m<i_{m+1}=M
\]
partitions the model into $m+1$ segments. Thus a network with $M-1$ internal cuts admits routes with zero, one, two, and up to all $M-1$ waypoints. At exactly $m$ waypoints there are $\binom{M-1}{m}$ possible location sets before accounting for solver, budget, restart, and composition choices.

In the running three-segment example, the internal set is $\mathcal I=\{1,2\}$. The admissible waypoint sets are
\[
\varnothing,
\qquad
\{1\},\ \{2\},
\qquad
\{1,2\}.
\]
The two singleton choices impose different intermediate targets, while $\{1,2\}$ fixes both stored interfaces. This is the combinatorial source of alternative training routes: one checkpoint supplies many architecture-valid counterfactual decompositions of the same end-to-end problem.

\paragraph{Outer waypoint choice and within-segment solution are separate.}
Selecting sparse checkpoint waypoints does \emph{not} force the span between them to be optimized as one deep subnetwork. The outer set $\mathcal S$ specifies which stored checkpoint states are fixed as route anchors; independently, the challenge declares how each induced segment is solved. If two selected anchors occur at cuts $a<b$, one challenger may optimize the whole map $F_{a:b}$ jointly through \Cref{eq:waypoint-template}. Another may peel the same span into layer- or block-local problems, introducing additional internal targets solely for the segment solver.

For example, if the selected checkpoint anchors are the outputs of layers 2 and 7, both of the following are admissible constructions:
\[
Z_2^{(t)}
\xrightarrow[\text{joint segment solve}]{F_{3:7}}
Z_7^{(t)},
\qquad\text{or}\qquad
Z_2^{(t)}\to\widehat Z_3\to\widehat Z_4\to\widehat Z_5\to\widehat Z_6\to Z_7^{(t)}.
\]
In the second route, $\widehat Z_3,\ldots,\widehat Z_6$ are states generated while solving the 2-to-7 span; they are not additional \emph{outer} checkpoint waypoints. The local targets used to fit those layers are themselves part of the declared solver. When interfaces are compatible, a solver can direct successive local layers toward the segment's terminal anchor; a checkpoint-peeling solver can instead use stored intermediate representations, and alternative target assignments or permutations give further challengers \citep{yeganegi2025dataaware,eamaz2026trust}. Auxiliary-variable BCD provides another within-segment decomposition; when the induced problem satisfies the assumptions of \citet{akiyama2025bcd}, its convergence theorem can make that challenger proof-backed. Thus, \emph{waypoint density} and \emph{within-segment decomposition} are independent axes. A sparse outer route can still be solved layer by layer, while a dense outer route may use a joint or exact solver inside each span. All such alternatives must ultimately be assembled into complete models and re-evaluated under the original objective.

Let $\Lambda_{\le m}$ denote the declared family of route configurations using at most $m$ outer waypoints and the declared within-segment solver/target-assignment portfolio, with all other resource restrictions held fixed. Then
\[
\Lambda_{\le0}\subseteq\Lambda_{\le1}\subseteq\cdots\subseteq\Lambda_{\le M-1}.
\]
If $B_{\le m,t}$ is the best complete-model objective among the executed and retained candidates in $\Lambda_{\le m}$, nestedness gives
\[
B_{\le m+1,t}\le B_{\le m,t}.
\]
This monotonicity belongs to the \emph{best retained family value}; an individual route can improve or worsen when another waypoint is imposed because local compatibility and downstream amplification change.

When $\Lambda_{\le m}$ is the named mandatory current suite, completed passage is an $m$-waypoint-qualified statement for the declared within-segment solver portfolio. A natural post-passage policy may next activate $\Lambda_{\le m+1}$ and retain a lower complete-model value as the next staircase target. Waypoint count is only one axis of strength: later levels may instead enlarge the within-segment solver or target-assignment portfolio---for example, adding peeling to a joint span solve---increase budget, switch from stored-state to sequential composition, or append route-and-release. In every case, the evidentiary rule is unchanged: only the materialized terminal model and its reevaluated original objective determine Green, a retained witness, or a stronger stair.

Our constructions in earlier work are special cases of this route principle \citep{yeganegi2025dataaware,eamaz2026trust}. The present formulation separates their two roles explicitly: selected checkpoint outputs define the outer route, while intermediate-target permutations provide within-segment construction rules. It then adds the full subset hierarchy, complete-model materialization, current-suite passage, and coverage-qualified interpretation. Exhaustive enumeration is not required: a declared search or dynamic-programming policy may select promising subsets, but the resulting claim remains relative to the routes and within-segment solvers actually executed. Appendix~\ref{app:route-resource} develops route selection and the nested retained portfolio formally.

\subsection{Composing local challenges and releasing the route}

Several local proposals can be assembled into one complete candidate. If each segment is optimized against its stored checkpoint input, the local problems are more separable, but the assembled route may encounter a distribution shift because an upstream replacement changes the state seen downstream.

\paragraph{Sequential route.}
A sequential route addresses that mismatch by propagating each newly constructed output into the next segment. In the running example, the rebuilt $F_2$ receives $\widehat Z_1=\widehat F_1(x)$ rather than $Z_1^{(t)}$. The route is less separable, but its local training states better match those encountered by the assembled candidate.

\paragraph{Route-and-release.}
After the route is assembled, all permitted parameters may be released and the original end-to-end objective optimized for a declared budget:
\begin{equation}
\widehat\theta_{\mathrm{route}}
\longmapsto
\operatorname{Opt}_{A,B_{\mathrm{rel}}}J(\theta).
\label{eq:route-release}
\end{equation}
This final phase tests whether the assembled route provides a better basin or initialization for the same end-to-end optimizer. Head refits, exact blocks, continuations, restarts, and solver-diverse proposals can be used independently or inside either composition rule. In every case, only the materialized complete terminal model is evaluated as evidence.

\subsection{Why intermediate targets can change conditioning}

Waypoint challenges are useful when they replace a long multiplicative credit-assignment path by shorter supervised subproblems. The following exact anchor isolates this conditioning mechanism for a disclosed product model.

\begin{theorem}[Waypoint advantage in a deep product model]
\label{thm:waypoint-advantage}
Let $K\ge3$ and
\[
f_w(1)=\prod_{r=1}^{K}w_r,
\qquad
J(w)=\frac12\left(\prod_{r=1}^{K}w_r-1\right)^2.
\]
Initialize end-to-end gradient flow at the balanced point $w_r(0)=\alpha\in(0,1/2]$. The balanced trajectory is preserved. If $T_{2\alpha}$ is the first time at which every coordinate reaches $2\alpha$, then
\begin{equation}
T_{2\alpha}
\ge
\frac{1-2^{2-K}}{K-2}\,\alpha^{2-K}.
\label{eq:waypoint-flow-time}
\end{equation}
Under a matched budget of $K$ scalar derivative evaluations, one unit-step full-gradient update satisfies
\begin{equation}
J(w^{(1)})
\ge
\frac12\left[1-\left(\frac32\alpha\right)^K\right]^2,
\label{eq:waypoint-one-step}
\end{equation}
whereas $K$ one-coordinate waypoint updates toward the disclosed ladder $Z_0=\cdots=Z_K=1$ attain $J=0$ exactly.
\end{theorem}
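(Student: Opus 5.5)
The plan treats the three claims separately; each reduces to a scalar computation.

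\textbf{Balanced trajectory.} Write $P(w)=\prod_r w_r$, so $\partial_{w_r}J=(P-1)\prod_{s\ne r}w_s$. The gradient-flow vector field is equivariant under coordinate permutations. The balanced line $\{w_1=\cdots=w_K\}$ is therefore invariant: on that line all partial derivatives coincide, and uniqueness of ODE solutions (the field is locally Lipschitz) keeps the trajectory there. On the line, with $w_r=u$, the flow reduces to the scalar ODE
\[
\dot u=(1-u^K)u^{K-1}.
\]

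\textbf{Flow-time bound.} Start from $\alpha\le 1/2$. Up to the time $T_{2\alpha}$ at which $u$ first reaches $2\alpha\le1$, we have $0<u\le1$, so $\dot u\le u^{K-1}$. Separating variables gives
\[
T_{2\alpha}=\int_\alpha^{2\alpha}\frac{du}{(1-u^K)u^{K-1}}\ge\int_\alpha^{2\alpha}u^{1-K}\,du=\frac{\alpha^{2-K}-(2\alpha)^{2-K}}{K-2}.
\]
This equals $\frac{1-2^{2-K}}{K-2}\alpha^{2-K}$, which is exactly \eqref{eq:waypoint-flow-time}. Since $1-u^K>0$ on $[\alpha,2\alpha]$, the target $2\alpha$ is reached in finite time and the integral identity holds.

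\textbf{One-step bound.} At the balanced initialization each gradient component equals $(\alpha^K-1)\alpha^{K-1}$. The unit-step update is therefore $w^{(1)}_r=\alpha+(1-\alpha^K)\alpha^{K-1}\le\alpha+\alpha^{K-1}$. Because $K\ge3$ and $\alpha\le1/2$, we have $\alpha^{K-2}\le\alpha\le 1/2$, hence $\alpha^{K-1}\le\alpha/2$ and $w^{(1)}_r\le\tfrac32\alpha\le\tfrac34<1$. Then $P(w^{(1)})\le(\tfrac32\alpha)^K<1$, so $1-P\ge1-(\tfrac32\alpha)^K\ge0$. Squaring gives \eqref{eq:waypoint-one-step}. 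The matched budget is consistent: the full gradient consists of $K$ scalar partial derivatives.

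\textbf{Waypoint construction.} Take segment $r$ to be the scalar map $Z_{r-1}\mapsto w_rZ_{r-1}$ with ladder targets $Z_{r-1}=Z_r=1$. Its local loss $\tfrac12(w_r\cdot1-1)^2$ has derivative $w_r-1$ and Hessian $1$. A unit Newton or gradient step on this quadratic therefore sets $w_r=1$ exactly, using one scalar derivative evaluation. After $K$ such updates $P=1$, so the assembled complete model attains $J=0$.

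\textbf{Main obstacle.} The step I expect to need the most care is the one-step bound. There the inequality $\alpha^{K-1}\le\alpha/2$ must be justified cleanly from $K\ge3$ and $\alpha\le1/2$, and the sign $1-P\ge0$ must be checked before squaring. I will state both explicitly rather than assume them.
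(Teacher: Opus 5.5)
Your proposal is correct and follows the paper's proof essentially step for step: the permutation-symmetric reduction to $\dot u=(1-u^K)u^{K-1}$, the comparison $\dot u\le u^{K-1}$ integrated from $\alpha$ to $2\alpha$, the estimate $\alpha^{K-1}\le\alpha/2$ giving $w^{(1)}_r\le\tfrac32\alpha$, and the unit local step that sends each $w_r$ to $1$. One minor slip: at $\alpha=1/2$ the target $2\alpha=1$ is an equilibrium of the scalar ODE, so $1-u^K$ vanishes at the endpoint and $T_{2\alpha}=+\infty$ rather than being finite; the lower bound then holds trivially, so your argument is otherwise unaffected.
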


The theorem identifies a conditioning mechanism: the end-to-end derivative contains the product of every other small factor, while the local waypoint objectives remove that multiplicative coupling. The ladder is favorable and uses the target; the general certificate continues to depend on the complete materialized terminal value. The proof and boundary calculation appear in Appendix~\ref{app:waypoint-mechanism}. \Cref{fig:waypoint-advantage} visualizes the resulting depth-dependent separation at matched scalar-derivative budget.

\begin{figure}[t]
\centering
\includegraphics[width=\linewidth]{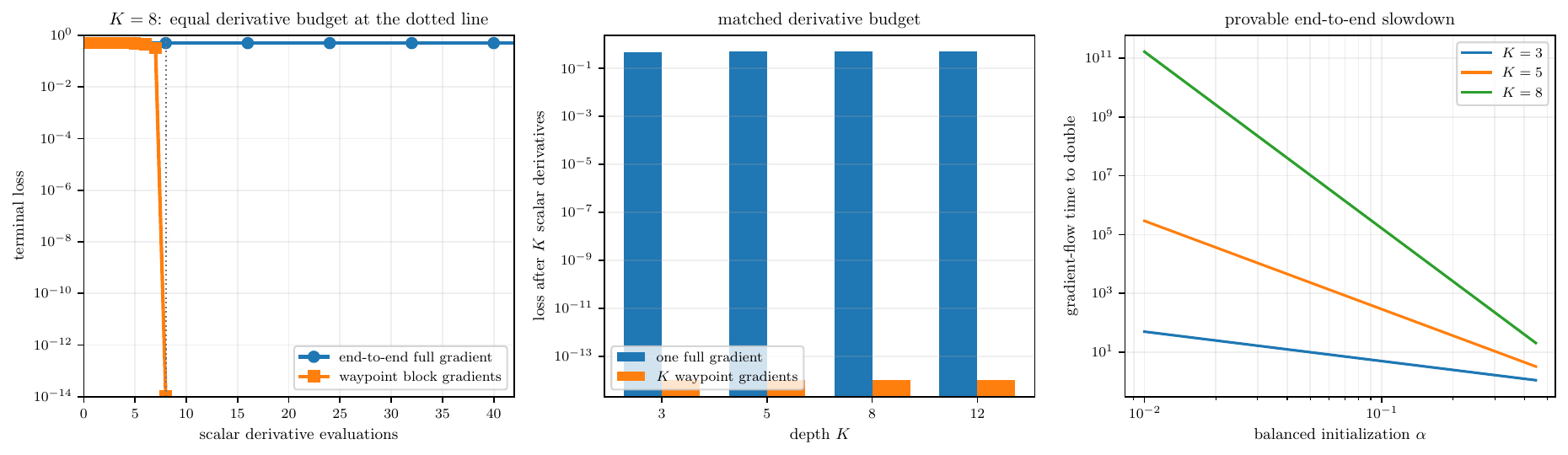}
\caption{Intermediate targets can remove depth-induced multiplicative flatness. At equal scalar-derivative budget, the disclosed waypoint ladder solves the product model while an end-to-end update barely moves; the separation grows with depth, and the proved gradient-flow time required merely to double a small balanced initialization scales as $\alpha^{2-K}$. The figure establishes the conditioning mechanism that motivates waypoint challengers; the complete terminal objective supplies the certificate for every assembled route.}
\label{fig:waypoint-advantage}
\end{figure}

\subsection{Terminal materialization is the certified value}

Local waypoint losses guide construction, while the complete terminal objective supplies the certificate value. An improved segment changes the inputs received by downstream segments, so local orderings can reverse after composition. For an assembled candidate $\widehat\theta_{t,\lambda}$, the certificate records only
\begin{equation}
B_{t,\lambda}=J(\widehat\theta_{t,\lambda})
\label{eq:terminal-value}
\end{equation}
after execution from the original model input.

The route-analysis appendix quantifies composition error. If segment $r$ is $L_r$-Lipschitz in its input and has local waypoint residual $\delta_r$, then
\begin{equation}
\|\widehat Z_R-Z_R\|
\le
\sum_{r=1}^{R}\delta_r\prod_{j=r+1}^{R}L_j.
\label{eq:route-propagation-preview}
\end{equation}
The bound guides route selection and budget allocation; the complete terminal objective remains the evidentiary output. Appendix~\ref{app:waypoint-mechanism} gives the fixed-waypoint separability theorem, the propagation proof, a finite-budget guarantee, a water-filling allocation rule, and an $O(M^2m)$ dynamic program for selecting up to $m$ segments among $M$ ordered cuts.

\subsection{Challenge records, solver status, and cost}

Every accepted candidate stores the exact problem identity, permitted information, candidate initialization, optimized parameters, optimizer and budget, restart policy, solver residual or primal--dual gap, complete terminal value, numerical and materiality tolerances, effective-rank controls, marginal correction cost, total-from-scratch cost, and replay artifact. A hash authenticates a state; replay additionally requires the state or a deterministic regeneration path.

The cost origin determines the claim. A checkpoint-dependent correction can be inexpensive marginally while inheriting the full cost of producing the checkpoint representation. Marginal cost measures audit burden and repair actionability. Total-from-scratch cost is required for Pareto or compute-efficiency comparisons. Nested portfolio values, resource-qualified Pareto statements, power profiles, and symmetry-safe challenge construction are developed in Appendix~\ref{app:route-resource}.

\subsection{Mandatory current execution, retention, and status}
\label{sec:current-status}

Partition the current suite into mandatory calls $\mathcal C_{\mathrm{mand}}$ and optional calls. Include the identity or a predeclared no-improvement fallback so that the best current value never exceeds the checkpoint value. Let $E_t=1$ exactly when every mandatory current call returns a valid finite replayable terminal result under its declared protocol. Let $B_{\mathrm{on}}(\theta_t)$ be the smallest value among completed current candidates, including the fallback. With the feasible reference value $J_{\mathrm{ref}}$, define
\begin{equation}
G_{-1}=J_{\mathrm{ref}},
\qquad
G_t=\min\{G_{t-1},B_{\mathrm{on}}(\theta_t)\}.
\label{eq:frontier}
\end{equation}
The attaining model is archived with the value.

\begin{lemma}[Monotone retained frontier]
\label{thm:frontier}
For every realized adaptive challenge history,
\[
J^\star\le G_t\le J_{\mathrm{ref}},
\qquad
G_{t+1}\le G_t.
\]
\end{lemma}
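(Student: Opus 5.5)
The argument is an induction on $t$ driven by the recursion \eqref{eq:frontier}, with the lower bound supplied by feasibility exactly as in \cref{prop:witness}. Monotonicity is immediate: $G_{t+1}=\min\{G_t,B_{\mathrm{on}}(\theta_{t+1})\}\le G_t$, and this also covers $t=-1$ since $G_0\le G_{-1}=J_{\mathrm{ref}}$. Iterating from $G_{-1}=J_{\mathrm{ref}}$ then gives $G_t\le J_{\mathrm{ref}}$ for every $t\ge -1$.

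For the lower bound I would show that every value entering a minimum is the certified objective of some element of $\Theta$, and then use $J^\star=\inf_{\Theta}J$. The base case is $G_{-1}=J(\theta_{\mathrm{ref}})$ with $\theta_{\mathrm{ref}}\in\Theta$, so $G_{-1}\ge J^\star$. For the step, $B_{\mathrm{on}}(\theta_t)$ is the smallest of finitely many completed current candidate values. These candidates include the identity or no-improvement fallback. Each candidate is a materialized $\widehat\theta_{C,t}\in\Theta$ in the sense of \cref{def:challenge}, with value $J(\widehat\theta_{C,t})\ge J^\star$, and the fallback has value $J(\theta_t)\ge J^\star$ because $\theta_t\in\Theta$. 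A minimum of finitely many quantities each $\ge J^\star$ is $\ge J^\star$. So $B_{\mathrm{on}}(\theta_t)\ge J^\star$, and therefore $G_t=\min\{G_{t-1},B_{\mathrm{on}}(\theta_t)\}\ge J^\star$ by the inductive hypothesis.

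The phrase ``every realized adaptive challenge history'' needs care, and I expect it to be the only real obstacle. Challenges may depend on $\mathcal I_t$, on earlier retained models, on randomness, and on the outcome of the status logic. The argument above is pathwise, though: it uses only that each realized terminal candidate lies in $\Theta$ and that its value is recomputed under the same $J$. No probabilistic or adaptivity assumption enters, so the bound holds for each realized history.

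Two details remain. The minimum defining $B_{\mathrm{on}}(\theta_t)$ must be over a finite nonempty set, which the fallback guarantees even when $E_t=0$ and some mandatory calls fail. Failed calls contribute no value, so they cannot push the minimum below $J^\star$. Finally, $J^\star$ is finite by \eqref{eq:objective}, so all the inequalities are between real numbers.
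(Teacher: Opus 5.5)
Your proof is correct and follows the paper's argument. Every value entering the running minimum is the objective of a feasible model in $\Theta$ (the reference, a materialized challenger, or the fallback), hence at least $J^\star$, and a running minimum initialized at $J_{\mathrm{ref}}$ cannot increase; your induction and pathwise remark on adaptive histories just spell out what the paper's two-line proof leaves implicit.
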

\begin{proof}
Each value entering the minimum is attained by a feasible stored model and is therefore at least $J^\star$. A running minimum cannot increase.
\end{proof}

Given display/materiality tolerance $\tau_G\ge0$, positive status is issued only when $E_t=1$:
\[
\redstatus:\ J_t>J_{\mathrm{ref}}+\tau_G,
\qquad
\yellow:\ G_t+\tau_G<J_t\le J_{\mathrm{ref}}+\tau_G,
\qquad
\green:\ J_t\le G_t+\tau_G.
\]
When $E_t=0$, the current report is \emph{Inconclusive}; all archived strict gaps remain valid. Because $G_t\le B_{\mathrm{on}}(\theta_t)$, current Green implies
\begin{equation}
I_{\mathrm{suite}}(\theta_t)
:=J(\theta_t)-B_{\mathrm{on}}(\theta_t)
\le\tau_G.
\label{eq:current-passage}
\end{equation}
This current-state inequality is the input to the power and spectral coverage theorems.

\paragraph{Evidence levels.}
A witness certifies constructive failure. Current Green certifies passage of the named mandatory suite. Coverage or a lower floor turns that passage into a global-gap statement. Paired nested certificates turn predictive optimization control into representation control. Protected-task claims enter through a held-out audit or transfer theorem. The certificate records these levels separately and reports the highest active one.

\section{The Event-Driven Stronger Examination}
\label{sec:staircase}

Current Green closes the inexpensive online examination and opens a stronger one. Define the first Green event
\begin{equation}
\tau_0
=
\inf\{t:E_t=1,\ J_t\le G_t+\tau_G\}.
\label{eq:green-zero}
\end{equation}
Before $\tau_0$, no post-Green stair is active. At $\tau_0$, the stronger policy constructs a complete candidate with proposed value $\widetilde S_1$. It is accepted only when
\begin{equation}
\widetilde S_1
<
\min\{G_{\tau_0},J_{\tau_0}\}
-
\tau_{\mathrm{acc}}.
\label{eq:first-stair}
\end{equation}
Set $S_1=\widetilde S_1$ when accepted. For $j\ge1$, define the next hit time
\begin{equation}
\tau_j
=
\inf\{t>\tau_{j-1}:J_t\le S_j+\tau_{\mathrm{hit}}\}.
\label{eq:stair-hit}
\end{equation}
At $\tau_j$, accept the next candidate only when
\begin{equation}
\widetilde S_{j+1}
<
\min\{S_j,J_{\tau_j}\}
-
\tau_{\mathrm{acc}}.
\label{eq:next-stair}
\end{equation}
Thus every active stair is a complete feasible value below the loss that creates it. Numerical uncertainty, materiality, solver error, hit tolerance, and acceptance margin remain distinct fields in the record. \Cref{fig:monitoring-story} shows the exact event timing and the roles of the retained frontier and active stair.

\begin{figure}[t]
\centering
\includegraphics[width=\linewidth]{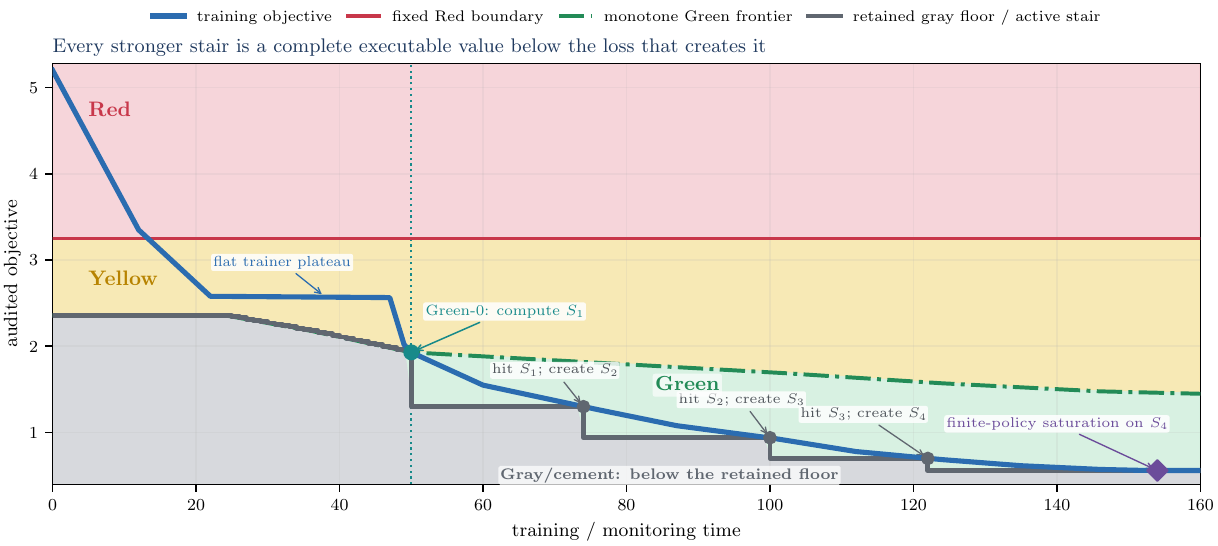}
\caption{The event-driven stronger examination. At Green-0, the stronger policy creates $S_1$ strictly below the triggering loss; every later stair is created only after the previous stair is reached. Before Green-0, the retained Gray ceiling equals the Green frontier, so the entire region below it is displayed as an unearned floor. The final diamond records finite-policy saturation after the loss lands on the last retained stair.}
\label{fig:monitoring-story}
\end{figure}

\subsection{Trainer-tracking and intervention branches}

The two post-Green branches answer different questions.

\paragraph{Trainer-tracking branch.}
The checkpoint is left unchanged and the declared trainer, or an explicitly labeled shadow continuation, is run toward the active stair. Reaching $S_j$ demonstrates attainability for that trainer and starting state. Convergence above $S_j$ exposes a trainer-specific barrier relative to an already feasible model.

\paragraph{Intervention branch.}
The challenger is adopted, and the complete current suite is rerun at the adopted state before current Green is issued again. The resulting intervention or adoption-closure path demonstrates actionability and renewed current-suite passage by the modified model.

\paragraph{Retrospective use of a later witness.}
If a complete candidate with value $S$ is discovered at time $t_0$ and the model class, data, and objective remain unchanged, then $J^\star\le S$ and
\[
J(\theta_t)-J^\star\ge J(\theta_t)-S
\]
for every earlier checkpoint. The archived model may therefore be shown as a retrospective comparator, provided its discovery time remains visible; it does not retroactively change the current status that was issued at the earlier time.

The staircase is a controlled experiment between challenge strength and trainer attainability. A lower stair proves that a better model exists. Reaching it proves that the trainer can attain that value. A coverage theorem determines how close the stair lies to the global value. A protected audit determines whether the intervention serves the intended task.

\section{Challenge Power: The Meaning of Finite Passage}
\label{sec:challenge-power}

A suite can pass for two very different reasons: the checkpoint may be strong, or the suite may be weak. Challenge power separates these explanations by asking how much improvement the declared family must expose at each level of true suboptimality. The answer is necessarily relative to a state region, budget, and target severity. Let $\mathcal R$ be a nonempty compact trainer-relevant region of the fixed problem. A challenge path $\pi$ from $x\in\mathcal R$ has total declared cost $c(\pi)$ and endpoint $T_\pi(x)$. Define
\begin{equation}
b_B(x)=\inf_{\pi:c(\pi)\le B}J(T_\pi(x)),
\qquad
I_x(B)=J(x)-b_B(x),
\qquad
\Delta(x)=J(x)-J^\star.
\label{eq:budgeted-improvement}
\end{equation}
Identity feasibility gives $0\le I_x(B)\le\Delta(x)$ even when the best endpoint is approached but not attained.

\begin{definition}[Budgeted challenge power and undetected-gap envelope]
\label{def:power}
For $B,s,\tau\ge0$, define the extended-real quantities
\begin{align}
\Psi(B,s)
&=\inf_{x\in\mathcal R:\,\Delta(x)\ge s} I_x(B),
\label{eq:psi}\\
E(B,\tau)
&=\sup_{x\in\mathcal R:\,I_x(B)\le\tau}\Delta(x).
\label{eq:envelope}
\end{align}
The conventions $\inf\varnothing=+\infty$ and $\sup\varnothing=-\infty$ are used; operationally, an empty passage set means that no checkpoint passes.
\end{definition}

$\Psi(B,s)$ is the improvement guaranteed at every checkpoint at least $s$ suboptimal. $E(B,\tau)$ is the largest global gap compatible with passage at tolerance $\tau$.

\begin{theorem}[Exact separation--certification inverse]
\label{thm:power-inverse}
Assume $\Delta$ and $I_{\cdot}(B)$ are continuous on compact $\mathcal R$, and endpoint families are nested in $B$. Then $\Psi$ is nondecreasing in $B$ and $s$, while $E$ is nonincreasing in $B$ and nondecreasing in $\tau$. Moreover,
\begin{equation}
\boxed{\Psi(B,s)\le\tau\quad\Longleftrightarrow\quad s\le E(B,\tau),}
\label{eq:inverse}
\end{equation}
so
\begin{equation}
\boxed{E(B,\tau)=\sup\{s\ge0:\Psi(B,s)\le\tau\}.}
\label{eq:geninverse}
\end{equation}
For every desired gap $\varepsilon$,
\begin{equation}
E(B,\tau)<\varepsilon
\quad\Longleftrightarrow\quad
\Psi(B,\varepsilon)>\tau.
\label{eq:strict-inverse}
\end{equation}
\end{theorem}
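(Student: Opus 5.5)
The plan is to argue through the sets $A_s=\{x\in\mathcal R:\Delta(x)\ge s\}$ and $P_\tau=\{x\in\mathcal R:I_x(B)\le\tau\}$. Continuity of $\Delta$ and of $I_\cdot(B)$ makes both sets closed subsets of the compact region $\mathcal R$, hence compact. So whenever they are nonempty, the infimum in \eqref{eq:psi} and the supremum in \eqref{eq:envelope} are attained. Attainment is the step everything else relies on.

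\emph{Monotonicity.} Nestedness of the endpoint families means the feasible set $\{\pi:c(\pi)\le B\}$ grows with $B$. Hence $b_B(x)$ is nonincreasing and $I_x(B)$ is nondecreasing in $B$ for each fixed $x$. It follows that $\Psi$ is nondecreasing in $B$, being an infimum of nondecreasing functions. It is also nondecreasing in $s$, because $A_s$ shrinks as $s$ grows, and the convention $\inf\varnothing=+\infty$ keeps this valid once $A_s$ becomes empty. Likewise $P_\tau$ shrinks as $B$ grows and grows as $\tau$ grows. So $E$ is nonincreasing in $B$ and nondecreasing in $\tau$, again using $\sup\varnothing=-\infty$.

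\emph{The equivalence \eqref{eq:inverse}.} For ($\Rightarrow$): if $\Psi(B,s)\le\tau<\infty$, then $A_s\ne\varnothing$. By compactness some $x^\ast\in A_s$ attains $I_{x^\ast}(B)=\Psi(B,s)\le\tau$. Then $x^\ast\in P_\tau$, so $E(B,\tau)\ge\Delta(x^\ast)\ge s$. For ($\Leftarrow$): if $s\le E(B,\tau)$, then $E>-\infty$, so $P_\tau\ne\varnothing$. Compactness gives $x^\ast\in P_\tau$ with $\Delta(x^\ast)=E(B,\tau)\ge s$. Then $x^\ast\in A_s$, so $\Psi(B,s)\le I_{x^\ast}(B)\le\tau$.

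I expect this step to be the main obstacle. Without attainment, each direction gives only approximate inequalities, such as $E\ge s-\epsilon$ or $\Psi\le\tau+\epsilon$, and the boxed equivalence fails at the boundary. The same care is needed for the extended-real conventions, for example $s\le-\infty$ being false when $P_\tau=\varnothing$.

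\emph{The remaining identities.} Formula \eqref{eq:geninverse} follows at once: by \eqref{eq:inverse}, $\{s\ge0:\Psi(B,s)\le\tau\}=\{s\ge0:s\le E(B,\tau)\}$. Its supremum is $E(B,\tau)$ whenever $E\ge0$. Nonnegativity needs a small check: since $\Delta\ge0$ on $\mathcal R$, $P_\tau\ne\varnothing$ forces $E\ge0$. If $P_\tau=\varnothing$, both sides equal $-\infty$, with the convention that the supremum of the empty set is $-\infty$; I will state this explicitly. Finally, \eqref{eq:strict-inverse} is the contrapositive of \eqref{eq:inverse} with $s=\varepsilon$: $\neg(\varepsilon\le E)\iff\neg(\Psi(B,\varepsilon)\le\tau)$.
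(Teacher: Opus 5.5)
Your proposal is correct and follows the paper's own argument: continuity on compact $\mathcal R$ makes the severity and passage level sets compact, so the infimum and supremum are attained; the equivalence is then the existence of a passing state with gap at least $s$; and monotonicity comes from nesting of the budget and severity level sets. The only difference is that you write out the extended-real edge cases that the paper's short proof leaves implicit, namely the empty passage set and the fact that $E\ge 0$ because $\Delta\ge 0$.
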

\begin{proof}
Compactness gives attainment. The event $\Psi(B,s)\le\tau$ holds exactly when there exists a passing state with gap at least $s$, which is exactly $s\le E(B,\tau)$. The remaining identities follow immediately; monotonicity follows from nesting of budget and severity level sets.
\end{proof}

\begin{corollary}[Certification-budget law and solver error]
\label{cor:budget-law}
The smallest budget certifying strict gap below $\varepsilon$ after passage at tolerance $\tau$ is
\[
\inf\{B:E(B,\tau)<\varepsilon\}
=
\inf\{B:\Psi(B,\varepsilon)>\tau\}.
\]
If the executed challenge is at most $\bar\epsilon_C$ above the best budget-$B$ value, observed improvement $\widehat I_x(B)\le\tau$ licenses
\begin{equation}
\Delta(x)\le E(B,\tau+\bar\epsilon_C),
\label{eq:approx-passage}
\end{equation}
not $E(B,\tau)$.
\end{corollary}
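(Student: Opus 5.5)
The corollary has two parts, and both follow from \cref{thm:power-inverse} once we are careful about which improvement quantity the passage event controls.

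\textbf{Budget law.} For each fixed $B$ we have, by \eqref{eq:strict-inverse},
\[
E(B,\tau)<\varepsilon \iff \Psi(B,\varepsilon)>\tau .
\]
The two budget sets
\[
\{B: E(B,\tau)<\varepsilon\}
\quad\text{and}\quad
\{B:\Psi(B,\varepsilon)>\tau\}
\]
are therefore identical, and their infima coincide. Both sets are upward closed, since $E$ is nonincreasing and $\Psi$ nondecreasing in $B$. So the infimum is a genuine threshold: every larger budget also certifies the strict gap. If the set is empty, both infima equal $+\infty$ under the convention of \cref{def:power}.

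\textbf{Solver error.} Here I compare the executed challenge with the ideal budget-$B$ value. The assumption says the executed value $\widehat b$ satisfies $\widehat b\le b_B(x)+\bar\epsilon_C$. Writing $\widehat I_x(B)=J(x)-\widehat b$, this gives
\[
I_x(B)=J(x)-b_B(x)\le \widehat I_x(B)+\bar\epsilon_C .
\]
Observed passage $\widehat I_x(B)\le\tau$ therefore implies $I_x(B)\le\tau+\bar\epsilon_C$. Then $x$ lies in the feasible set defining $E(B,\tau+\bar\epsilon_C)$ in \eqref{eq:envelope}, so
\[
\Delta(x)\le E(B,\tau+\bar\epsilon_C)
\]
directly from the supremum. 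This step needs no continuity or compactness.

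\textbf{Why the sharper bound fails.} To show that $E(B,\tau)$ is not licensed, I would note that a state with $I_x(B)\in(\tau,\tau+\bar\epsilon_C]$ can pass observationally. This happens whenever the solver realizes its full error allowance. Such a state can have gap exceeding $E(B,\tau)$, so the tolerance must be inflated.

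The only delicate point is the sign and direction of the solver-error inequality, that is, which improvement is under- or overestimated. The remaining steps are immediate from the definitions and \cref{thm:power-inverse}.
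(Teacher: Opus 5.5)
Your proposal is correct and follows the route the paper intends; the corollary is stated without a separate proof, as an immediate consequence of \cref{thm:power-inverse}. The budget law is the pointwise-in-$B$ set equality given by \eqref{eq:strict-inverse}, and the solver-error bound follows from $I_x(B)\le\widehat I_x(B)+\bar\epsilon_C\le\tau+\bar\epsilon_C$ together with the supremum definition \eqref{eq:envelope}. Your non-sharpness remark is only heuristic; a two-state region makes it concrete: one optimal state, and one state with $\Delta(x)=1$ and $I_x(B)=\tau+\bar\epsilon_C\in(\tau,1]$, so that $E(B,\tau)=0$ while a fully realized solver error lets $x$ pass.
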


\Cref{fig:power-master} visualizes the sharp resource-indexed inverse, the resulting certification frontier, and representative rate envelopes on exact synthetic calibrations.

\begin{figure}[t]
\centering
\includegraphics[width=\linewidth]{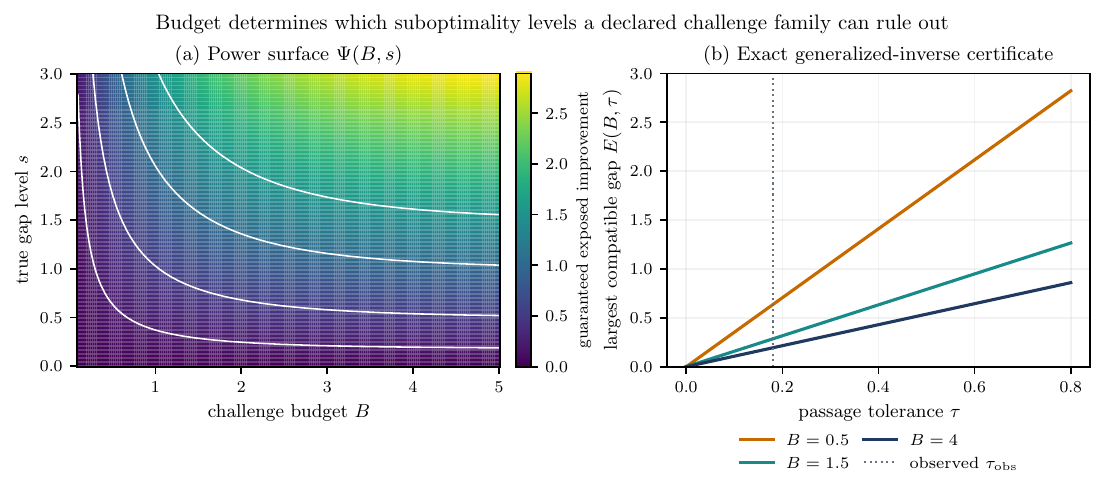}
\caption{Challenge power turns a budgeted suite into a quantitative certificate. The left panel shows the guaranteed improvement against each gap severity and budget. The right panel shows the exact generalized-inverse envelope: after passage at tolerance $\tau$, it is the largest gap still compatible with the declared suite. These synthetic surfaces visualize the theorem and the role of resource qualification.}
\label{fig:power-master}
\end{figure}

\subsection{Static power versus multistep dynamics}

The scalar surface exactly characterizes one-shot passage. Exact repeated intervention additionally depends on the endpoint transition map. Two systems can have identical $\Psi$ and $E$ at every budget while sending the same named state to different endpoints and requiring two versus eleven calls to reach the optimum. The transition-complete object is the Bellman endpoint operator
\[
(\mathsf T_BV)(x)=\inf_{y\in\mathcal C_B(x)}V(y).
\]
The formal counterexample and operator properties appear in Appendix~\ref{app:bellman}. The Bellman endpoint operator supplies the transition-level description required for exact adaptive challenge dynamics.

\paragraph{What this section establishes.}
Challenge power is an exact language for resource-qualified passage: it states the best improvement that a declared family must expose at a given gap severity and the largest gap still compatible with an observed pass. It does not by itself compute the unknown global gap. The next section supplies concrete coverage mechanisms that lower-bound this power from current neural geometry or exact conditional structure.

\section{Spectral Coverage for Current Neural Challenges}
\label{sec:spectral}

Challenge power states what must be shown; spectral coverage supplies a current, checkable route for showing it in empirical squared loss. The hard step is to establish that the executed block family covers every residual direction relevant to the claimed gap. Let
\begin{equation}
e(\theta)=\vecop(F_\theta-Y)\in\R^{nq},
\qquad
J(\theta)=\frac1{2n}\norm{e(\theta)}^2,
\label{eq:squared-loss}
\end{equation}
and partition challenged parameters into blocks $\theta_1,\ldots,\theta_m$. Let $G_r(\theta)=\partial e(\theta)/\partial\theta_r$ with dimensions $(nq)\times d_r$.

\begin{definition}[Certified decrease operator]
\label{def:decrease-operator}
A completed current block challenge has a certified decrease record $(Q_r,\epsilon_r)$ if $Q_r\succeq0$, $\epsilon_r\ge0$, and its completely materialized and reevaluated candidate value $\widehat B_r$ satisfies
\begin{equation}
J(\theta)-\widehat B_r(\theta)
\ge
\frac1{2n}e(\theta)^\top Q_r(\theta)e(\theta)-\epsilon_r.
\label{eq:decrease-record}
\end{equation}
The record is indexed by the current checkpoint and the exact complete objective. Transfer to changed data, class, regularizer, buffers, or state requires an explicit equivalence or regional-stability argument.
\end{definition}

Three canonical constructions are useful. If the block gradient is $L_r$-Lipschitz and the executable candidate takes the step $-\nabla_rJ/L_r$, then
\[
Q_r=\frac{G_rG_r^\top}{nL_r}.
\]
If a verified Armijo step of length $\eta_r$ satisfies
$J(\theta^+)\le J(\theta)-\sigma\eta_r\|\nabla_rJ(\theta)\|^2$, then
\[
Q_r=\frac{2\sigma\eta_r}{n}G_rG_r^\top.
\]
For an exact affine least-squares block, $Q_r=P_{\range(G_r)}$. In all cases the complete candidate value is the universal certificate; $Q_r$ records a proved portion of its strength.

\begin{theorem}[Certified spectral challenge coverage]
\label{thm:spectral}
For a completed current suite, let $\pi\in\Delta_m$ and define
\[
Q_\pi=\sum_{r=1}^m\pi_rQ_r,
\qquad
\bar\epsilon_\pi=\sum_{r=1}^m\pi_r\epsilon_r,
\qquad
I_{\mathrm{blk}}=\max_r[J-\widehat B_r].
\]
Then
\begin{equation}
I_{\mathrm{blk}}
\ge
\frac1{2n}e^\top Q_\pi e-\bar\epsilon_\pi.
\label{eq:spectral-average}
\end{equation}
If $Q_\pi\succeq cI_{nq}$ for $c>0$, then
\begin{equation}
J(\theta)\le\frac{I_{\mathrm{blk}}(\theta)+\bar\epsilon_\pi}{c},
\qquad
\boxed{J(\theta)-J^\star\le\frac{I_{\mathrm{blk}}(\theta)+\bar\epsilon_\pi}{c}.}
\label{eq:spectral-gap}
\end{equation}
This conclusion uses the nonnegative loss floor and therefore applies whether or not interpolation is attainable. When $J^\star>0$, the normal-residual refinement below can produce a substantially sharper excess-gap certificate.

More generally, let $U_\pi=\range(Q_\pi)$ and suppose $\lambda_{\min}^+(Q_\pi)\ge c$. For every valid lower floor $L\le J^\star$,
\begin{equation}
J(\theta)-J^\star
\le
\frac{I_{\mathrm{blk}}+\bar\epsilon_\pi}{c}
+
\left[\frac{\norm{P_{U_\pi^\perp}e}^2}{2n}-L\right]_+.
\label{eq:nullspace-gap}
\end{equation}
\end{theorem}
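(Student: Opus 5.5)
The plan is to derive all three displays from the per-block decrease records of \Cref{def:decrease-operator} by convex averaging, followed by elementary spectral lower bounds on the quadratic form $e^\top Q_\pi e$.

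\textbf{Averaging step, giving \eqref{eq:spectral-average}.} For each completed block $r$ the record \eqref{eq:decrease-record} gives $J-\widehat B_r\ge\frac1{2n}e^\top Q_re-\epsilon_r$. Since $\pi\in\Delta_m$ has nonnegative weights summing to one, the maximum over $r$ dominates any convex combination:
\[
I_{\mathrm{blk}}=\max_r[J-\widehat B_r]\ge\sum_r\pi_r[J-\widehat B_r]\ge\frac1{2n}e^\top\Big(\sum_r\pi_rQ_r\Big)e-\sum_r\pi_r\epsilon_r .
\]
This is exactly \eqref{eq:spectral-average}. Only the completed current suite enters, so every $\widehat B_r$ is an actually materialized value.

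\textbf{Uniform coverage, giving \eqref{eq:spectral-gap}.} If $Q_\pi\succeq cI$, then $e^\top Q_\pi e\ge c\|e\|^2=2nc\,J(\theta)$. Substituting into \eqref{eq:spectral-average} gives $I_{\mathrm{blk}}+\bar\epsilon_\pi\ge cJ$, hence $J\le (I_{\mathrm{blk}}+\bar\epsilon_\pi)/c$. Since squared loss is nonnegative, $J^\star\ge0$ and $J-J^\star\le J$, which yields the boxed bound with no interpolation assumption.

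\textbf{Nullspace-aware version, giving \eqref{eq:nullspace-gap}.} Write $e=P_Ue+P_{U^\perp}e$ with $U=U_\pi=\range(Q_\pi)$. Because $Q_\pi$ is symmetric PSD, $U^\perp=\ker Q_\pi$, so the cross terms vanish and
\[
e^\top Q_\pi e=(P_Ue)^\top Q_\pi(P_Ue)\ge\lambda_{\min}^+(Q_\pi)\|P_Ue\|^2\ge c\|P_Ue\|^2 .
\]
Then \eqref{eq:spectral-average} gives $\|P_Ue\|^2/(2n)\le(I_{\mathrm{blk}}+\bar\epsilon_\pi)/c$. Pythagoras gives $J=\|P_Ue\|^2/(2n)+\|P_{U^\perp}e\|^2/(2n)$. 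Using $J^\star\ge L$,
\[
J-J^\star\le \frac{I_{\mathrm{blk}}+\bar\epsilon_\pi}{c}+\frac{\|P_{U^\perp}e\|^2}{2n}-L .
\]
Separately, the first part alone bounds $J-J^\star$ whenever the bracketed remainder is negative. We still need $J-J^\star\le (I_{\mathrm{blk}}+\bar\epsilon_\pi)/c$ in that case, which follows from $J-J^\star\ge0$ together with the displayed inequality being at least as large as the actual gap. In other words, replacing a negative remainder by zero only weakens the bound, which gives the positive part $[\cdot]_+$.

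\textbf{Expected obstacle.} The only delicate point is bookkeeping rather than analysis. The decrease records must all refer to the same current residual $e(\theta)$ and the same complete objective, and the $\lambda_{\min}^+$ bound must be applied only on $\range(Q_\pi)$. This uses symmetry of $Q_\pi$, which holds because every $Q_r\succeq0$ is symmetric by convention. I would state these two facts explicitly before the computation.
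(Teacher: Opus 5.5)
Your proposal is correct and follows the paper's proof essentially verbatim: the maximum dominates the $\pi$-average of the decrease records, full coverage gives $e^\top Q_\pi e\ge 2nc\,J(\theta)$ followed by $J^\star\ge0$, and partial coverage splits $e$ along $\range(Q_\pi)\oplus\ker Q_\pi$ before subtracting the floor $L$ from the uncovered energy. The only wrinkle is your justification of the positive part, which needs nothing beyond $b\le[b]_+$; the appeal to $J-J^\star\ge0$ there is superfluous.
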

\begin{proof}
A maximum dominates every convex average, so \cref{eq:spectral-average} follows from \cref{eq:decrease-record}. Under full coverage,
$e^\top Q_\pi e\ge c\|e\|^2=2ncJ(\theta)$; rearrange and then use $J^\star\ge0$. For partial coverage, decompose $e=P_{U_\pi}e+P_{U_\pi^\perp}e$, control the covered component spectrally, and subtract the valid floor from the uncovered component.
\end{proof}

The zero-floor theorem identifies exactly which current residual directions the suite sees. A sharper excess-gap theorem is available when the optimum residual is compatible with the covered prediction defect.

\begin{theorem}[Normal-residual spectral coverage certificate]
\label{thm:normal-residual}
Assume a global optimum $\theta^\star$ is attained. Write
\[
e^\star=e(\theta^\star),
\qquad
d=\vecop(F_\theta-F_{\theta^\star}),
\qquad e=e^\star+d.
\]
Suppose, for a declared subspace $U$,
\begin{equation}
\ip{e^\star}{d}=0,
\qquad d\in U,
\qquad Q_\pi\succeq cP_U,
\qquad \xi\ge\norm{Q_\pi^{1/2}e^\star}.
\label{eq:normal-residual-assumptions}
\end{equation}
Then
\begin{equation}
\boxed{
J(\theta)-J^\star
\le
\frac{\bigl(\sqrt{2n[I_{\mathrm{blk}}(\theta)+\bar\epsilon_\pi]}+\xi\bigr)^2}{2nc}.}
\label{eq:normal-residual-gap}
\end{equation}
If $Q_\pi e^\star=0$, this simplifies to
\begin{equation}
\boxed{J(\theta)-J^\star\le\frac{I_{\mathrm{blk}}(\theta)+\bar\epsilon_\pi}{c}.}
\label{eq:normal-residual-simple}
\end{equation}
\end{theorem}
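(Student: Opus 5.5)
We first convert the excess gap into a statement about the prediction defect $d$. Since $e=e^\star+d$ and $\ip{e^\star}{d}=0$, Pythagoras gives $\norm{e}^2=\norm{e^\star}^2+\norm{d}^2$. Hence
\[
J(\theta)-J^\star=\frac{1}{2n}\bigl(\norm{e}^2-\norm{e^\star}^2\bigr)=\frac{\norm{d}^2}{2n}.
\]
The target \cref{eq:normal-residual-gap} is therefore equivalent to the norm bound
\[
\sqrt{c}\,\norm{d}\le\sqrt{2n[I_{\mathrm{blk}}+\bar\epsilon_\pi]}+\xi .
\]
We will prove this bound in the $Q_\pi$-seminorm $\norm{v}_{Q}:=\norm{Q_\pi^{1/2}v}$.

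For the upper side, we invoke the averaged inequality \cref{eq:spectral-average} of \cref{thm:spectral}. It gives $e^\top Q_\pi e\le 2n[I_{\mathrm{blk}}+\bar\epsilon_\pi]$, that is,
\[
\norm{e}_Q\le\sqrt{2n[I_{\mathrm{blk}}+\bar\epsilon_\pi]}.
\]
For the lower side, write $d=e-e^\star$ and apply the triangle inequality for the seminorm $\norm{\cdot}_Q$, which is valid because $Q_\pi\succeq0$. Together with the hypothesis $\xi\ge\norm{Q_\pi^{1/2}e^\star}$, this yields
\[
\norm{d}_Q\le\norm{e}_Q+\norm{e^\star}_Q\le\sqrt{2n[I_{\mathrm{blk}}+\bar\epsilon_\pi]}+\xi .
\]

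It remains to show $\norm{d}_Q^2\ge c\norm{d}^2$. Because $d\in U$, we have $P_Ud=d$. The hypothesis $Q_\pi\succeq cP_U$ then gives $d^\top Q_\pi d\ge c\,d^\top P_U d=c\norm{d}^2$. Chaining the three facts gives $c\norm{d}^2\le(\sqrt{2n[I_{\mathrm{blk}}+\bar\epsilon_\pi]}+\xi)^2$. Dividing by $2nc$ and using $J-J^\star=\norm{d}^2/(2n)$ gives \cref{eq:normal-residual-gap}.

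For the simplified form \cref{eq:normal-residual-simple}, the hypothesis $Q_\pi e^\star=0$ gives $\norm{Q_\pi^{1/2}e^\star}=0$, so we may take $\xi=0$. The square then cancels against the factor $2n$, leaving $(I_{\mathrm{blk}}+\bar\epsilon_\pi)/c$.

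The only delicate point is justifying the triangle inequality in the degenerate seminorm. We handle this by working with $\norm{Q_\pi^{1/2}\,\cdot\,}$ explicitly, which is an ordinary Euclidean norm of a linear image and so needs no separate argument. The rest is bookkeeping.
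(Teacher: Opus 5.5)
Your proposal is correct and follows the paper's proof essentially step for step. Both use Pythagoras under $\ip{e^\star}{d}=0$ to get $J(\theta)-J^\star=\norm{d}^2/(2n)$, the bound $\sqrt{c}\,\norm{d}\le\norm{Q_\pi^{1/2}d}$ from $d\in U$ and $Q_\pi\succeq cP_U$, and the triangle inequality together with the averaged block inequality to bound $\norm{Q_\pi^{1/2}d}$ by $\sqrt{2n[I_{\mathrm{blk}}+\bar\epsilon_\pi]}+\xi$, before squaring and dividing by $2nc$.
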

\begin{proof}
Orthogonality gives $J(\theta)-J^\star=\|d\|^2/(2n)$. Since $d\in U$ and $Q_\pi\succeq cP_U$,
$\sqrt c\|d\|\le\|Q_\pi^{1/2}d\|$. The triangle inequality and \cref{eq:spectral-average} give
\[
\|Q_\pi^{1/2}d\|
\le \|Q_\pi^{1/2}e\|+\|Q_\pi^{1/2}e^\star\|
\le \sqrt{2n[I_{\mathrm{blk}}+\bar\epsilon_\pi]}+\xi.
\]
Square and divide by $2nc$.
\end{proof}

The normal-residual condition is exact for affine least squares: the optimum residual is orthogonal to the attainable prediction subspace, while the current prediction difference lies inside it. In nonlinear models the theorem is a conditional compatibility result unless a separate argument or proof-bearing lower problem supplies a computable upper bound $\xi$ on $\|Q_\pi^{1/2}e^\star\|$.

\begin{corollary}[Green-to-gap certificate]
\label{cor:green-gap}
Suppose every block with $\pi_r>0$ is mandatory, every mandatory current call completes, and Green is issued at $\theta_t$. Then
\[
I_{\mathrm{blk}}(\theta_t)\le\tau_G.
\]
Under full residual coverage,
\begin{equation}
J(\theta_t)-J^\star\le\frac{\tau_G+\bar\epsilon_\pi}{c}.
\label{eq:green-gap}
\end{equation}
Under \cref{eq:normal-residual-assumptions}, replace $I_{\mathrm{blk}}$ by $\tau_G$ in \cref{eq:normal-residual-gap}; if $Q_\pi e^\star=0$, the same simple ratio applies to the true excess gap. These conclusions attach to the checkpoint at which the required current coverage calls completed; the retained historical frontier continues to support its one-sided witness statements.
\end{corollary}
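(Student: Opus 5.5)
The plan is to derive the corollary directly from the Green status definition, the frontier inequality \cref{eq:current-passage}, and \Cref{thm:spectral,thm:normal-residual}, evaluated at $\theta=\theta_t$.

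\emph{Step one: current passage bounds the block improvement.} Since Green is issued, $E_t=1$, so every mandatory current call completed validly and its complete candidate value entered the computation of $B_{\mathrm{on}}(\theta_t)$. Each block $r$ with $\pi_r>0$ is mandatory, so its reevaluated value satisfies $\widehat B_r(\theta_t)\ge B_{\mathrm{on}}(\theta_t)$. Green gives $J_t\le G_t+\tau_G$, and \cref{eq:frontier} gives $G_t\le B_{\mathrm{on}}(\theta_t)$. Chaining these yields $J_t-\widehat B_r(\theta_t)\le J_t-B_{\mathrm{on}}(\theta_t)\le\tau_G$ for every such $r$.

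\emph{Step two: handle blocks outside the support of $\pi$.} This is the one delicate point. The displayed $I_{\mathrm{blk}}$ is a maximum over all $r$, but blocks with $\pi_r=0$ need not be mandatory. I would use the support-restricted maximum $\max_{r:\pi_r>0}[J-\widehat B_r]$ instead. The averaging argument behind \cref{eq:spectral-average} only weights blocks with $\pi_r>0$, so it already dominates $\frac1{2n}e^\top Q_\pi e-\bar\epsilon_\pi$. If any optional block with $\pi_r=0$ was also run, its value still enters $B_{\mathrm{on}}$ and obeys the same bound, so $I_{\mathrm{blk}}(\theta_t)\le\tau_G$ as stated in either reading.

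\emph{Step three: substitute into the coverage theorems.} Under full coverage $Q_\pi\succeq cI_{nq}$, insert $I_{\mathrm{blk}}\le\tau_G$ into \cref{eq:spectral-gap}. The right-hand side is monotone in $I_{\mathrm{blk}}$, which gives \cref{eq:green-gap}. Under \cref{eq:normal-residual-assumptions}, the bound \cref{eq:normal-residual-gap} is nondecreasing in $I_{\mathrm{blk}}$, since it is a square of a nonnegative increasing function. Replacing $I_{\mathrm{blk}}$ by $\tau_G$ is therefore valid. When $Q_\pi e^\star=0$, take $\xi=0$ to recover the simple ratio.

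\emph{Step four: scope of the conclusion.} The decrease records $(Q_r,\epsilon_r)$ of \cref{def:decrease-operator} are indexed to the current checkpoint, so the conclusion holds at $\theta_t$ only. The historical frontier contributes only through $G_t\le B_{\mathrm{on}}(\theta_t)$, and \cref{prop:witness} keeps the archived witnesses valid.
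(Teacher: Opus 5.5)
Your proposal is correct and follows the paper's own implicit argument. Green together with $G_t\le B_{\mathrm{on}}(\theta_t)$ gives the passage inequality \eqref{eq:current-passage}; the mandatory status of every block with $\pi_r>0$ transfers that bound to each $J(\theta_t)-\widehat B_r(\theta_t)$; and monotonicity in $I_{\mathrm{blk}}$ of the bounds in Theorems~\ref{thm:spectral} and~\ref{thm:normal-residual} finishes the job. Your restriction of the maximum to the support of $\pi$, which is all that \eqref{eq:spectral-average} uses, is a worthwhile tightening of a point the paper leaves implicit.
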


On a region where a genuine gap inequality
$I_{\mathrm{blk}}(\theta)\ge c[J(\theta)-J^\star]-\bar\epsilon$
holds uniformly, the challenge-power objects satisfy
\begin{equation}
\Psi(B,s)\ge cs-\bar\epsilon,
\qquad
E(B,\tau)\le\frac{\tau+\bar\epsilon}{c},
\label{eq:spectral-power-law}
\end{equation}
where $B$ is the total budget of the complete suite.

\paragraph{Certificate record.}
A current-state spectral certificate records the covered residual subspace, the current decrease operators, every regularization contribution, and any gate-stability region used by the proof. An initialization spectrum enters only through a stated regional-stability theorem. These quantities determine the validity and tightness of the bound and are archived with the executable challenge values. \Cref{fig:neural-green-pipeline} summarizes the chain from executed block values to the coverage-qualified global-gap conclusion.

\begin{figure}[t]
\centering
\includegraphics[width=\linewidth]{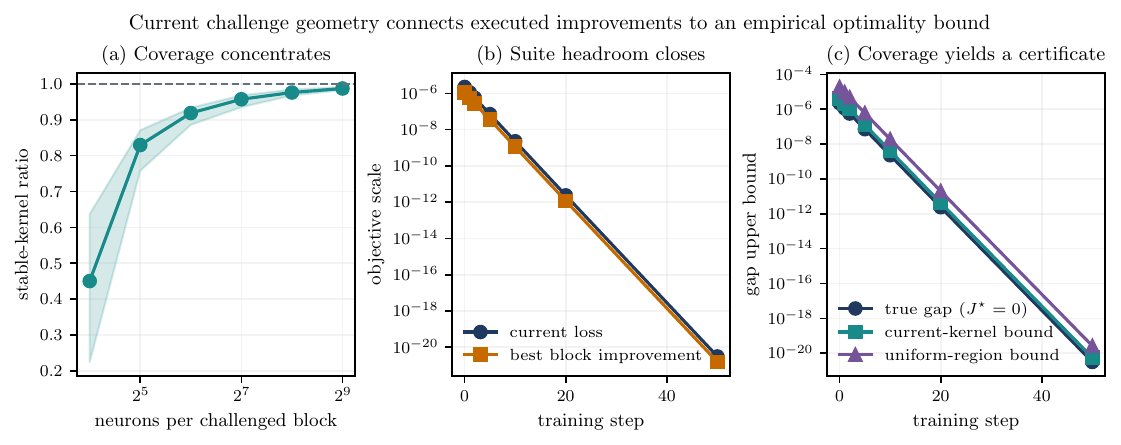}
\caption{The spectral coverage certificate chain. Finite-width geometry produces a stable coverage spectrum; executed current challenges quantify the residual directions that the suite can reduce; the certified current kernel converts that directional coverage into a global-gap upper bound. The panels show the theorem mechanisms and the quantities recorded by the certificate.}
\label{fig:neural-green-pipeline}
\end{figure}

\begin{corollary}[Realized-residual current-state certificate]
\label{cor:realized-residual}
Assume $J^\star=0$ for the fixed certification problem and let $e=e(\theta)\ne0$. For any completed suite mixture $Q_\pi$, define
\begin{equation}
\kappa_{\mathrm{cur}}(\theta,\pi)
=
\frac{e^\top Q_\pi e}{\|e\|^2}.
\label{eq:kappa-current}
\end{equation}
Then
\begin{equation}
I_{\mathrm{blk}}(\theta)+\bar\epsilon_\pi
\ge
\kappa_{\mathrm{cur}}(\theta,\pi)J(\theta),
\qquad
\boxed{J(\theta)-J^\star\le
\frac{I_{\mathrm{blk}}(\theta)+\bar\epsilon_\pi}{\kappa_{\mathrm{cur}}(\theta,\pi)}}
\label{eq:realized-residual-certificate}
\end{equation}
whenever $\kappa_{\mathrm{cur}}>0$.
\end{corollary}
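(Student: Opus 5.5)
The plan is to obtain the corollary as a direct specialization of the averaged inequality \cref{eq:spectral-average} in \Cref{thm:spectral}. The only change is that the uniform constant $c$ is replaced by the Rayleigh quotient of the realized residual $e$ itself. No spectral lower bound on $Q_\pi$ over all of $\R^{nq}$ is needed, because we evaluate the quadratic form only at the one vector that actually occurs.

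\textbf{Step 1: averaged decrease inequality.} Fix the completed suite and the mixture $\pi\in\Delta_m$. By \Cref{def:decrease-operator}, each record satisfies $J-\widehat B_r\ge \frac1{2n}e^\top Q_re-\epsilon_r$. A maximum dominates every convex average, which gives exactly \cref{eq:spectral-average}:
\[
I_{\mathrm{blk}}(\theta)+\bar\epsilon_\pi\ge \frac{1}{2n}e^\top Q_\pi e .
\]

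\textbf{Step 2: rewrite the quadratic form.} Since $e\neq0$, the definition \cref{eq:kappa-current} gives $e^\top Q_\pi e=\kappa_{\mathrm{cur}}(\theta,\pi)\,\|e\|^2$. Substituting $\|e\|^2=2nJ(\theta)$ from \cref{eq:squared-loss} yields
\[
\frac1{2n}e^\top Q_\pi e=\kappa_{\mathrm{cur}}\,J(\theta).
\]
This is the first inequality in \cref{eq:realized-residual-certificate}. Note that $\kappa_{\mathrm{cur}}\ge0$ because $Q_\pi\succeq0$, being a convex combination of positive semidefinite matrices.

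\textbf{Step 3: divide and use $J^\star=0$.} When $\kappa_{\mathrm{cur}}>0$, dividing gives $J(\theta)\le (I_{\mathrm{blk}}+\bar\epsilon_\pi)/\kappa_{\mathrm{cur}}$. The assumption $J^\star=0$ then turns $J(\theta)$ into $J(\theta)-J^\star$, which is the boxed bound.

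There is no real obstacle. The only points needing care are the following:
\begin{itemize}
\item The hypothesis $e\ne0$ makes the quotient well defined.
\item The hypothesis $\kappa_{\mathrm{cur}}>0$ justifies the division.
\item The role of $J^\star=0$ should be stated precisely. Because the loss is nonnegative, the bound on $J(\theta)$ always bounds $J(\theta)-J^\star$ from above. The assumption $J^\star=0$ is what makes that bound coincide with the exact excess gap.
\end{itemize}
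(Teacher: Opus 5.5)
Your proposal is correct and follows essentially the same route as the paper: both insert $e^\top Q_\pi e=\kappa_{\mathrm{cur}}\|e\|^2=2n\kappa_{\mathrm{cur}}J(\theta)$ into the averaged decrease inequality \cref{eq:spectral-average}, divide by $\kappa_{\mathrm{cur}}>0$, and invoke $J^\star=0$. Your remark that nonnegativity of the loss already makes the bound valid for $J(\theta)-J^\star$, with $J^\star=0$ only removing the slack, is accurate.
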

\begin{proof}
Insert $e^\top Q_\pi e=\kappa_{\mathrm{cur}}\|e\|^2=2n\kappa_{\mathrm{cur}}J(\theta)$ into \cref{eq:spectral-average} and use $J^\star=0$.
\end{proof}

The uniform coefficient $\lambda_{\min}(Q_\pi)$ protects every possible residual direction. The realized coefficient \cref{eq:kappa-current} measures how strongly the same certified operator covers the residual actually present at the checkpoint. It is computed from the current residual and current operators already stored in the certificate; it does not transfer to another checkpoint without recomputation.

\subsection{Modern nonlinear nonvacuity: a current ResNet-18 certificate}
\label{sec:resnet-current-certificate}

A channel-gated CIFAR-10 ResNet-18 student is distilled from a teacher under a fixed empirical squared-logit objective on 24 certification examples. The teacher belongs to the declared student class, so $J^\star=0$ is known. This controlled construction is a ground-truth test of certificate validity and tightness. 
Eight architecture-valid gate challenges are run from each of five saved student checkpoints. Every call uses a verified Armijo step, materializes the complete network, restores the checkpoint before the next challenge, and recomputes the same objective. Exact current Jacobians are evaluated in memory-bounded reverse-mode chunks; the complete operator identity is numerically checked for every block.

All eight challenges are accepted at all five checkpoints. The E-optimal mixture has rank $240/240$ on the audited output space, and the residual fraction outside its range is zero throughout. The uniform worst-direction coefficient is positive but conservative: the corresponding bound is $480$--$6449$ times the known gap. The realized-residual coefficient is much larger along the current residual and yields a certificate only $1.74$--$3.02$ times the true gap. \Cref{tab:resnet-current,fig:resnet-current} report the complete result.

\begin{table}[t]
\centering
\small
\caption{Current internal spectral certificate on the channel-gated ResNet-18. The operator mixture has full rank and zero uncovered residual at every audited checkpoint. Uniform uses $\lambda_{\min}(Q_\pi)$; realized uses $\kappa_{\mathrm{cur}}$.}
\label{tab:resnet-current}
\begin{tabular}{rrrrrr}
\toprule
Epoch & True gap & $I_{\mathrm{blk}}$ & $\lambda_{\min}$ & $\kappa_{\mathrm{cur}}$ & Realized ratio\\
\midrule
0  & 55.555 & 1.887 & $7.07\times10^{-5}$ & 0.01125 & 3.020\\
5  & 16.836 & 3.678 & $1.10\times10^{-4}$ & 0.09582 & 2.280\\
10 & 10.751 & 3.233 & $7.09\times10^{-5}$ & 0.15819 & 1.901\\
20 & 6.475  & 1.784 & $4.27\times10^{-5}$ & 0.15812 & 1.743\\
40 & 2.958  & 0.635 & $3.41\times10^{-5}$ & 0.10891 & 1.972\\
\bottomrule
\end{tabular}
\end{table}

\begin{figure}[t]
\centering
\makebox[\linewidth][c]{%
  \includegraphics[width=1.03\linewidth]{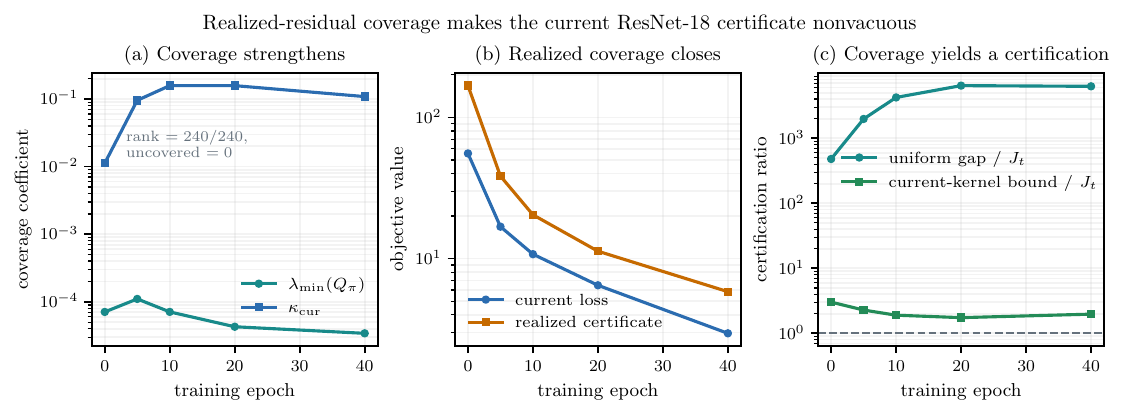}%
}
\caption{Uniform and realized-residual coverage on a channel-gated ResNet-18. The current eight-block suite covers all $240$ audited output directions at every checkpoint. The minimum-eigenvalue certificate protects a worst-case direction and is conservative; evaluating the same certified operator along the realized residual gives a nonvacuous current gap bound within $1.74$--$3.02\times$ of the known optimum gap.}
\label{fig:resnet-current}
\end{figure}

\subsection{E-optimal challenge bases}
\label{sec:eoptimal}

A sparse challenge basis can cover a declared residual subspace $U$ of dimension $d_U$. Represent each $Q_r$ on an orthonormal basis of $U$ and define
\begin{equation}
\chi_U=\max_{\pi\in\Delta_m}\lambda_{\min}\!\left(\sum_r\pi_rQ_r\big|_U\right).
\label{eq:eoptimal}
\end{equation}

\begin{proposition}[E-optimal minimax design and sparse support]
\label{prop:eoptimal}
The value $\chi_U$ is the semidefinite program
\[
\max_{c,\pi}\left\{c:\sum_r\pi_rQ_r\succeq cI_U,\ \pi\in\Delta_m\right\}
\]
and has minimax form
\begin{equation}
\chi_U
=
\min_{Z\succeq0,\ \tr Z=1}\max_r\tr(Q_rZ).
\label{eq:eoptimal-minimax}
\end{equation}
An optimal design exists with support at most $d_U(d_U+1)/2+1$.
\end{proposition}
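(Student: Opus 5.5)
We prove the three parts in order: the semidefinite formulation, the minimax identity, and the sparse support bound.

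\emph{SDP form.} Write $Q_r^U$ for the compression of $Q_r$ to an orthonormal basis of $U$; each is a symmetric PSD $d_U\times d_U$ matrix. For fixed $\pi$, the largest $c$ with $\sum_r\pi_rQ_r^U\succeq cI_U$ is exactly $\lambda_{\min}(\sum_r\pi_rQ_r^U)$, by the variational characterization of the minimum eigenvalue. Maximizing over $\pi\in\Delta_m$ therefore gives the displayed program. The constraint is linear matrix inequality in $(c,\pi)$ and the simplex is compact. Moreover $\pi\mapsto\lambda_{\min}(\cdot)$ is concave and continuous, so the maximum in \cref{eq:eoptimal} is attained.

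\emph{Minimax form.} We use $\lambda_{\min}(A)=\min_{Z\succeq0,\tr Z=1}\tr(AZ)$, which holds because the spectraplex is the convex hull of the rank-one matrices $vv^\top$ with $\|v\|=1$. This gives
\[
\chi_U=\max_{\pi\in\Delta_m}\min_{Z}\sum_r\pi_r\tr(Q_r^UZ).
\]
The payoff is bilinear in $(\pi,Z)$, and both the simplex and the spectraplex are compact and convex. Von Neumann's (or Sion's) minimax theorem therefore lets us swap the max and the min. For fixed $Z$, a linear function on the simplex attains its maximum at a vertex, so the inner maximum equals $\max_r\tr(Q_r^UZ)$. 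This yields \cref{eq:eoptimal-minimax}. As an alternative, SDP duality with a Slater point also works: take $\pi$ uniform and $c$ very negative, so strict feasibility holds and there is no duality gap.

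\emph{Sparse support.} We apply Carathéodory's theorem in the space of symmetric $d_U\times d_U$ matrices, which has dimension $D=d_U(d_U+1)/2$. Fix an optimal $\pi^\star$ and let $M^\star=\sum_r\pi^\star_rQ_r^U$. Then $M^\star$ lies in the convex hull of $\{Q_r^U\}$, a subset of a $D$-dimensional space. Carathéodory's theorem expresses $M^\star$ as a convex combination of at most $D+1$ of the $Q_r^U$. The resulting weight vector $\tilde\pi$ is in $\Delta_m$, has support at most $D+1=d_U(d_U+1)/2+1$, and gives exactly the same matrix $M^\star$. It therefore has the same minimum eigenvalue $\chi_U$ and is also optimal.

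The main obstacle is the minimax exchange, and it reduces to checking the hypotheses: bilinearity and compact convexity of both feasible sets. Restricting to $U$, as opposed to working on the ambient space, must be done consistently, with $I_U$ and the spectraplex taken inside $U$. The sparse-support bound could be sharpened to $D$ using the fact that an optimal point lies on the boundary, but $D+1$ is what the statement claims and it requires no further argument.
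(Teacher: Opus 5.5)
Your proof is correct and takes essentially the paper's route: the variational formula $\lambda_{\min}(A)=\min_{Z\succeq0,\tr Z=1}\tr(AZ)$, a Sion/von Neumann exchange for the bilinear payoff on the simplex and the spectraplex, and Carath\'eodory in the $d_U(d_U+1)/2$-dimensional space of symmetric operators on $U$. Your extra details (attainment, the vertex argument for the inner maximum, and the observation that the Carath\'eodory reweighting reproduces the same matrix and hence the same $\lambda_{\min}$) make explicit what the paper's one-line proof leaves implicit.
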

\begin{proof}
Use $\lambda_{\min}(A)=\min_{Z\succeq0,\tr Z=1}\tr(AZ)$, apply Sion's minimax theorem to the bilinear payoff on compact convex sets, and apply Carath\'eodory's theorem in the $d_U(d_U+1)/2$-dimensional space of symmetric operators on $U$ \citep{sion1958,caratheodory1911}.
\end{proof}

The design exposes the least-covered residual density matrix $Z$ and selects the block most useful against it. In the registered eight-dimensional example, four two-coordinate blocks attain the information-theoretic maximum $\chi_U=0.25$, whereas random subsets are commonly singular through eight blocks. \Cref{fig:challenge-basis} shows how E-optimal design protects the weakest covered direction while using a sparse executable basis.

\begin{figure}[t]
\centering
\includegraphics[width=\linewidth]{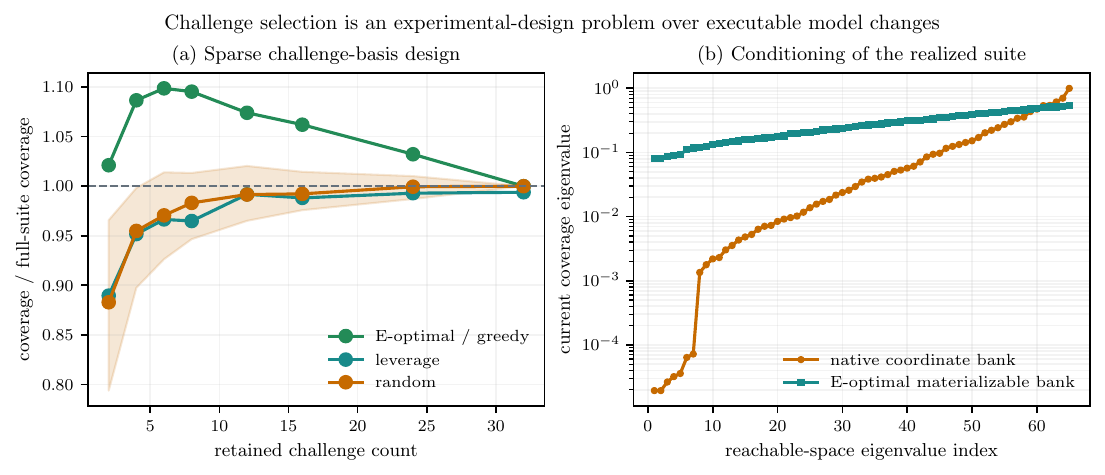}
\caption{Challenge-basis design. E-optimal weighting preserves the weakest residual direction and can be much sparser than the available portfolio. Random subsets can remain singular even after using more blocks. Only the executed and materialized selected calls enter status.}
\label{fig:challenge-basis}
\end{figure}

\subsection{Exact affine and collective neural instantiations}
\label{sec:exact-affine}

The cleanest exact challenge arises when all upstream nonlinear computation is frozen and the complete prediction is affine in an internal block. For certification matrices $T\in\R^{q\times n}$, $W\in\R^{q\times d}$, $H\in\R^{p\times n}$, define
\[
L(U)=\frac1{2nq}\fro{T-WUH}^2,
\qquad
P_W=WW^\dagger,
\qquad
P_H=H^\dagger H.
\]

\begin{theorem}[Exact residual-output challenge]
\label{thm:affine-block}
The minimum-Frobenius-norm solution is $U^\star=W^\dagger T H^\dagger$, the exact conditional optimum is
\begin{equation}
B_{\mathrm{cond}}=\frac1{2nq}\fro{T-P_WTP_H}^2,
\label{eq:affine-opt}
\end{equation}
and every current $U_0$ satisfies
\begin{equation}
L(U_0)-B_{\mathrm{cond}}
=
\frac1{2nq}\fro{P_WTP_H-WU_0H}^2.
\label{eq:affine-gap}
\end{equation}
The vectorized design has rank $\rank(H)\rank(W)$. If $W$ has full row rank and $H$ full column rank, then $B_{\mathrm{cond}}=0$, so the materialized replacement is a global empirical minimizer of the declared nonnegative unregularized squared-loss model.
\end{theorem}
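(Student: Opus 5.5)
The plan is to treat $L$ as an ordinary least-squares problem in $\vecop(U)$ and use the Moore--Penrose projector identities. First vectorize: $\vecop(WUH)=(H^\top\otimes W)\vecop(U)$. The design matrix is $A=H^\top\otimes W$, and $\rank(A)=\rank(H)\rank(W)$ because the rank of a Kronecker product is the product of the ranks. The minimum-norm least-squares solution is then $A^\dagger\vecop(T)$. Using $(H^\top\otimes W)^\dagger=(H^\dagger)^\top\otimes W^\dagger$, this unvectorizes to $U^\star=W^\dagger TH^\dagger$.

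Next we compute the optimal value and the gap. The fitted prediction is $WU^\star H=WW^\dagger TH^\dagger H=P_WTP_H$. Since $AA^\dagger=P_H^\top\otimes P_W$ is the orthogonal projector onto $\range(A)$, we get $B_{\mathrm{cond}}=\frac1{2nq}\fro{T-P_WTP_H}^2$.

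For the gap identity, write
\[
T-WU_0H=(T-P_WTP_H)+(P_WTP_H-WU_0H).
\]
We then show the two terms are Frobenius-orthogonal. Expanding the trace and using symmetry and idempotence of $P_W$ and $P_H$, along with $P_WW=W$ and $HP_H=H$, gives $\tr\bigl((T-P_WTP_H)^\top P_W M P_H\bigr)=0$ for every $M$. Both $P_WTP_H$ and $WU_0H=P_W(WU_0H)P_H$ have the form $P_WMP_H$. Pythagoras then yields \cref{eq:affine-gap} directly. Equivalently, this is the normal-equation orthogonality of the least-squares residual to $\range(A)$.

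For the full-rank claim, the main care point is the dimension conventions. If $W\in\R^{q\times d}$ has full row rank, then $P_W=I_q$. If $H\in\R^{p\times n}$ has full column rank, then $P_H=H^\dagger H=I_n$. Together these give $P_WTP_H=T$ and $B_{\mathrm{cond}}=0$. Because the model is nonnegative and unregularized, $J^\star\ge0$. The materialized replacement attains $0$, so by \Cref{prop:witness} logic (the candidate lies in $\Theta$) it is a global empirical minimizer. The global part of the claim applies to the whole declared class, not only the conditional block, because the loss floor is zero.

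The only real obstacle is bookkeeping in the orthogonality step, namely checking the trace identity with the correct side for each projector. Everything else is standard pseudoinverse algebra.
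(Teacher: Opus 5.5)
Your proposal is correct and follows essentially the same route as the paper: vectorize to the Kronecker design $H^\top\otimes W$ with pseudoinverse $(H^\dagger)^\top\otimes W^\dagger$, identify $T\mapsto P_WTP_H$ as the orthogonal projection onto the attainable predictions, and apply Pythagoras. Your explicit trace check of the orthogonality and the dimension bookkeeping in the full-rank case ($P_W=I_q$, $P_H=I_n$) only spell out steps that the paper leaves implicit.
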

\begin{proof}
Vectorization gives design $H^\top\otimes W$ and pseudoinverse $(H^\top)^\dagger\otimes W^\dagger$. The attainable prediction space is the matrices with columns in $\range(W)$ and rows in $\range(H)$; its orthogonal projection is $T\mapsto P_WTP_H$. Pythagoras gives \cref{eq:affine-opt,eq:affine-gap}.
\end{proof}

The same motif appears in a residual CNN's final internal convolution before linear pooling/readout and in a pre-LayerNorm transformer's MLP down-projection before the final residual addition, provided there is no post-addition nonlinearity or terminal LayerNorm. The exact architecture statement and terminal-normalization counterexample are in Appendix~\ref{app:architecture}.

A stronger collective result applies when several affine branches merge additively.

\begin{theorem}[Collective spectral coverage by rank-deficient blocks]
\label{thm:collective}
Suppose the complete predictions on the certification set are
\[
\widehat y(u_1,\ldots,u_m)=b+\sum_{r=1}^mA_ru_r\in\R^N,
\qquad
J(u)=\frac1{2N}\norm{y-\widehat y(u)}^2.
\]
Let $S_r=\range(A_r)$, $P_r=P_{S_r}$, $S=\sum_rS_r$, $P_S=P_S$, and
\[
F=\sum_{r=1}^mP_r,
\qquad
\lambda_{\mathrm{coll}}=\lambda_{\min}(F|_S)>0.
\]
For current residual $e=y-\widehat y(u)$,
\begin{align}
J(u)-J^\star&=\frac1{2N}\norm{P_Se}^2,\\
I_r(u)&=\frac1{2N}\norm{P_re}^2
\end{align}
for the exact frozen-context refit of block $r$. Consequently,
\begin{equation}
\sum_r I_r(u)\ge\lambda_{\mathrm{coll}}[J(u)-J^\star],
\qquad
\max_r I_r(u)\ge\frac{\lambda_{\mathrm{coll}}}m[J(u)-J^\star].
\label{eq:collective}
\end{equation}
Thus passage of all exact current block challenges at tolerance $\tau$ gives
\begin{equation}
J(u)-J^\star\le\frac{m\tau}{\lambda_{\mathrm{coll}}},
\label{eq:collective-passage}
\end{equation}
with $\sum_r\epsilon_r$ added to the numerator for approximate solves.
\end{theorem}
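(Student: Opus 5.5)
The whole argument reduces to orthogonal projections in $\R^N$. The full model is an unconstrained affine least-squares problem whose attainable prediction set is $b+S$. The refit of block $r$ is the same kind of problem with attainable set $\widehat y(u)+S_r$, because the other blocks stay frozen. I would therefore first establish the two identities, then combine them with the spectral bound on $F$ restricted to $S$, and finally pass to the passage statement.

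For the global identity, $J^\star=\frac1{2N}\min_{s\in S}\norm{y-b-s}^2$, and the minimum is attained at the projection, so $J^\star=\frac1{2N}\norm{(I-P_S)(y-b)}^2$. Since $\widehat y(u)-b\in S$, we have $(I-P_S)e=(I-P_S)(y-b)$, and Pythagoras gives $\norm{e}^2=\norm{P_Se}^2+\norm{(I-P_S)e}^2$. Hence $J(u)-J^\star=\frac1{2N}\norm{P_Se}^2$.

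For the block identity, refitting $u_r$ with the other blocks fixed produces residuals $e-A_r\delta$ for $\delta\in\R^{d_r}$. The minimum of $\norm{e-s}^2$ over $s\in S_r$ equals $\norm{(I-P_r)e}^2$. So the exact conditional improvement is $I_r=\frac1{2N}(\norm{e}^2-\norm{(I-P_r)e}^2)=\frac1{2N}\norm{P_re}^2$. This is the affine case $Q_r=P_{\range(G_r)}$ noted earlier, with $G_r=-A_r$.

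Next I sum over blocks. Each $P_r$ is an orthogonal projection onto $S_r\subseteq S$, so $P_r=P_rP_S$ and $\norm{P_re}^2=\ip{e}{P_re}=\ip{P_Se}{P_rP_Se}$. Summing gives $\sum_r\norm{P_re}^2=\ip{P_Se}{F\,P_Se}$. The operator $F$ maps $S$ into $S$ and is self-adjoint there, so with $v=P_Se\in S$ we get $\ip{v}{Fv}\ge\lambda_{\mathrm{coll}}\norm{v}^2$. Dividing by $2N$ yields $\sum_rI_r\ge\lambda_{\mathrm{coll}}[J-J^\star]$. The max bound follows because a maximum is at least the average. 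If every exact block challenge passes, each $I_r\le\tau$, so $\max_rI_r\le\tau$ and $J-J^\star\le m\tau/\lambda_{\mathrm{coll}}$. The sum inequality in fact gives the same bound directly.

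For approximate solves, I read $\tau$ as the bound on the realized improvements $J-\widehat B_r$. If each solver is within $\epsilon_r$ of the exact refit, then $I_r\le(J-\widehat B_r)+\epsilon_r\le\tau+\epsilon_r$. Summing gives $\sum_rI_r\le m\tau+\sum_r\epsilon_r$, which is the stated numerator correction.

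I expect no serious obstacle. The one point needing care is that $F$ restricted to $S$ is a well-defined self-adjoint operator on $S$, so that $\lambda_{\mathrm{coll}}$ controls $\ip{v}{Fv}$ for $v\in S$. This holds because each $P_r$ has range in $S$ and annihilates $S^\perp$, so $F$ leaves both $S$ and $S^\perp$ invariant. Attainment of both the global and the conditional minima is automatic, since these are finite-dimensional projections.
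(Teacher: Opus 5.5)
Your proof is correct and follows the paper's argument: the gap identity via the irreducible residual $(I-P_S)(y-b)$, the block identity via projection onto $S_r$, the summation $\sum_r\norm{P_re}^2=\ip{P_Se}{FP_Se}$ using $P_r=P_rP_S$, and the maximum dominating the average. You also derive the $\sum_r\epsilon_r$ numerator correction explicitly through $I_r\le\tau+\epsilon_r$, whereas the paper only states it.
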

\begin{proof}
The irreducible residual is $(I-P_S)(y-b)$, yielding the gap identity. Exact refitting of block $r$ subtracts $P_re$. Since $P_re=P_rP_Se$,
\[
\sum_rI_r=\frac1{2N}\ip{P_Se}{FP_Se}\ge\frac{\lambda_{\mathrm{coll}}}{2N}\norm{P_Se}^2.
\]
The best block is at least the average.
\end{proof}

Collective coverage permits every individual block to be rank deficient. The theorem covers parallel or additive residual adapters and branches that merge linearly before a fixed readout. Sequential modifications use state-dependent decrease operators because later nonlinear features or buffers change. \Cref{fig:collective} makes the collective-versus-individual rank distinction explicit.

\begin{figure}[t]
\centering
\includegraphics[width=\linewidth]{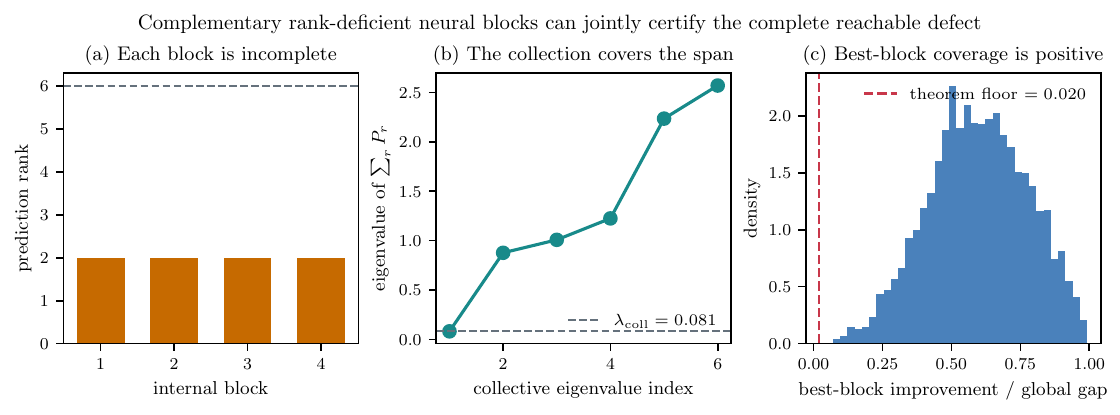}
\caption{Collective exact coverage. Four rank-two blocks jointly span a six-dimensional prediction space. The smallest fusion-frame eigenvalue gives a certified worst-case best-block fraction; observed random residuals are typically better. Approximate passage includes each solver error explicitly.}
\label{fig:collective}
\end{figure}

\subsection{A nonlinear finite-width ReLU theorem}
\label{sec:relu-theorem}

Once decrease operators are certified, the preceding results are deterministic. The next theorem establishes the spectral premise for hidden-weight challenges in a genuinely nonlinear network.

Consider a two-layer ReLU model with fixed output signs
\begin{equation}
f_W(x)=\frac1{\sqrt p}\sum_{j=1}^p a_j[w_j^\top x]_+,
\qquad a_j\in\{-1,+1\},
\label{eq:relu-model}
\end{equation}
unit-norm samples $x_i$, data matrix $X$ with the $x_i^\top$ as rows, and squared loss \cref{eq:squared-loss}. At Gaussian initialization $w_j^0\stackrel{\mathrm{iid}}{\sim}N(0,I)$ define
\[
D(w)=\diag(\mathbf1\{x_i^\top w>0\})_{i=1}^n,
\qquad
H^\infty=\E[D(w)XX^\top D(w)].
\]
Set
\[
\lambda_0=\lambda_{\min}(H^\infty)>0,
\qquad
R=\opnorm{X}^2.
\]
Choose
\[
0<\gamma\le\frac{\lambda_0\sqrt{2\pi}}{8nR}
\]
and call neuron $j$ stable when $\min_i|x_i^\top w_j^0|\ge2\gamma$. Prepartition the neurons into $m$ \emph{balanced} bins, each of size at most $\lceil p/m\rceil$, and let every bin challenge modify only its stable neurons while leaving unstable neurons and every other parameter unchanged. Let
\[
s_j=\mathbf1\!\left\{\min_i|x_i^\top w_j^0|\ge2\gamma\right\}.
\]
Define the stable-core empirical kernel
\[
H_{\mathrm{st}}=\frac1p\sum_{j=1}^p
s_jD(w_j^0)XX^\top D(w_j^0).
\]

\begin{theorem}[Stable-core hidden-block coverage certificate]
\label{thm:relu-green}
Fix $\delta\in(0,1)$. If
\begin{equation}
p\ge\frac{16R}{\lambda_0}\log\frac n\delta,
\label{eq:relu-width1}
\end{equation}
then with probability at least $1-\delta$,
\begin{equation}
\lambda_{\min}(H_{\mathrm{st}})\ge\frac{\lambda_0}{4}.
\label{eq:relu-kernel}
\end{equation}
On this event, simultaneously for every current checkpoint satisfying
\[
\norm{w_j-w_j^0}\le\gamma/2\quad\text{for stable neurons},
\qquad
\norm{f_W(X)-y}\le E_{\max},
\]
assume $p\ge m$ and
\begin{equation}
\sqrt p\ge\frac{mE_{\max}}{\gamma\sqrt R}.
\label{eq:relu-width2}
\end{equation}
Then the executable stable-bin gradient candidates with step
$\eta=mn/(2R)$ preserve every stable gate and satisfy
\begin{equation}
\boxed{\max_r[J(W)-J(W^{(r,+)})]\ge\frac{\lambda_0}{8R}J(W).}
\label{eq:relu-coverage}
\end{equation}
If these verified calls are mandatory and current Green is issued,
\begin{equation}
J(W)\le\frac{8R}{\lambda_0}\tau_G,
\qquad
\boxed{J(W)-J^\star\le\frac{8R}{\lambda_0}\tau_G.}
\label{eq:relu-green}
\end{equation}
A stronger heuristic block solver may be used, but the verified gradient candidate must be retained as a fallback unless a declared solver error is added to the passage tolerance.
\end{theorem}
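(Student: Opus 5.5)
The argument splits into a probabilistic part at initialization and a deterministic part at the current checkpoint.

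\textbf{Step 1: the stable-core kernel at initialization.} For a single Gaussian neuron, write
\[
M(w)=s(w)\,D(w)XX^\top D(w),\qquad \E[M(w)]=H^\infty-\E[(1-s(w))D(w)XX^\top D(w)].
\]
The unstable event $\{\min_i|x_i^\top w|<2\gamma\}$ has probability at most $n\cdot 4\gamma/\sqrt{2\pi}$, since each $x_i^\top w\sim N(0,1)$ has density at most $1/\sqrt{2\pi}$. Each summand is bounded by $\opnorm{XX^\top}=R$. Hence the correction has operator norm at most $4n\gamma R/\sqrt{2\pi}\le\lambda_0/2$ by the choice of $\gamma$, and so $\lambda_{\min}(\E M)\ge\lambda_0/2$. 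The summands $M(w_j^0)/p$ are independent, PSD, and bounded by $R/p$. The matrix Chernoff lower tail with deviation $1/2$ gives
\[
\lambda_{\min}(H_{\mathrm{st}})\ge\lambda_0/4
\]
with failure probability at most $n\exp(-p\lambda_0/(16R))$. This is at most $\delta$ under \cref{eq:relu-width1}.

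\textbf{Step 2: deterministic analysis on the event.}
\begin{itemize}
\item \emph{Gate stability.} A stable neuron with $\norm{w_j-w_j^0}\le\gamma/2$ has $|x_i^\top w_j|\ge 3\gamma/2$, so its gate pattern equals $D(w_j^0)$.
\item \emph{The gradient step stays in the stable region.} The bin-$r$ gradient on stable neuron $j$ is
\[
\nabla_{w_j}J=\frac{a_j}{n\sqrt p}X^\top D(w_j^0)e,
\]
with norm at most $\sqrt R\,E_{\max}/(n\sqrt p)$. The step $\eta=mn/(2R)$ moves it by at most $mE_{\max}/(2\sqrt{pR})\le\gamma/2$ using \cref{eq:relu-width2}.
\item \emph{Exact quadratic decrease.} The movement above keeps each stable neuron at margin at least $\gamma$ from every hyperplane $x_i^\top w=0$ along the whole segment. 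I have not checked whether the $\gamma/2$ budget is measured from $w^0$ or from the current point, and would tune the constants if needed. Granting this, the prediction is exactly linear in the stable-bin parameters along the step, so $J$ is an exact quadratic in the bin. The bin Jacobian $G_r$ satisfies
\[
G_rG_r^\top=\frac1p\sum_{j\in\text{bin }r}s_jD_jXX^\top D_j,
\]
so the bin Hessian has operator norm at most $L_r\le|\text{bin}|R/(np)$. With balanced bins and $p\ge m$ this gives $L_r\le 2R/(nm)$. Therefore $\eta=1/L_r$ for the worst case, and the descent lemma (an exact identity for a quadratic) yields
\[
J-J(W^{(r,+)})\ge\frac{\eta}{2}\norm{\nabla_rJ}^2=\frac{\eta}{2n^2}\,e^\top G_rG_r^\top e\cdot n .
\]
\end{itemize}

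\textbf{Step 3: averaging over bins.} Averaging over the $m$ bins gives $\sum_r G_rG_r^\top=H_{\mathrm{st}}$. Hence
\[
\max_r\bigl[J-J(W^{(r,+)})\bigr]\ge\frac1m\sum_r\bigl[J-J(W^{(r,+)})\bigr]\ge\frac{\eta}{2mn}\,e^\top H_{\mathrm{st}}e\ge\frac{\eta\lambda_0}{8mn}\norm{e}^2 .
\]
With $\eta=mn/(2R)$ and $\norm{e}^2=2nJ$ this is of order $(\lambda_0/R)J$. Matching the exact constant $1/8$ requires careful normalization of $J=\norm{e}^2/(2n)$ against the gradient factor $1/n$. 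This is essentially \cref{thm:spectral} with $Q_r$ from the Lipschitz-step construction and $c=\lambda_0/(4m)$ times the step factor. Green then gives $\max_r[\cdot]\le\tau_G$, so $J\le 8R\tau_G/\lambda_0$, and \cref{eq:relu-green} follows from $J^\star\ge0$.

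\textbf{Main obstacle.} I expect the main obstacle to be the constant bookkeeping in Steps 2 and 3. Specifically, the bin smoothness $L_r$, the width condition guaranteeing that the step preserves gates, and the $1/m$ averaging loss must combine to exactly $\lambda_0/(8R)$. Handling the ceiling in $\lceil p/m\rceil$ with $p\ge m$ uses $\lceil p/m\rceil\le 2p/m$. If the constants do not close, I would first recheck the gradient normalization.
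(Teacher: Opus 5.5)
Your route is the same as the paper's proof. It uses the truncated kernel controlled by Gaussian anti-concentration, a union bound and Weyl, then the lower-tail matrix Chernoff bound at deviation $1/2$. It then uses gate preservation, the bin bound $G_rG_r^\top\preceq(2R/m)I$ via $\lceil p/m\rceil\le 2p/m$, the descent lemma with $\eta L_r\le 1$, and averaging over the $m$ bins. Your doubt about the $\gamma/2$ budget resolves as you guessed. The hypothesis measures the current stable weight from $w_j^0$, and the step adds at most $\gamma/2$. So the whole segment stays within $\gamma$ of $w_j^0$ and keeps margin at least $\gamma$, which is what \cref{eq:relu-width2} is calibrated to deliver.

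The one real defect is in the constant bookkeeping you flag as the main obstacle. It is a concrete slip: the trailing factor $n$ in your descent-lemma display is spurious. Since $J=\norm{e}^2/(2n)$ and $G_r=\partial e/\partial\theta_r$, one has $\nabla_rJ=G_r^\top e/n$, hence $\norm{\nabla_rJ}^2=e^\top G_rG_r^\top e/n^2$. Carried through, your version ends at $n\lambda_0J/(8R)$, which is false in general. For orthonormal inputs $\lambda_0=1/2$ and $R=1$, so the claimed decrease would exceed $J$ itself once $n>16$. As written, the proposal therefore does not establish the boxed constant, and ``of order $(\lambda_0/R)J$'' is weaker than what the theorem claims. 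With the correct expression, $\sum_r\norm{\nabla_rJ}^2=e^\top H_{\mathrm{st}}e/n^2$ and
\[
\max_r\bigl[J(W)-J(W^{(r,+)})\bigr]\ge\frac{\eta}{2mn^2}\,e^\top H_{\mathrm{st}}e=\frac{1}{4nR}\,e^\top H_{\mathrm{st}}e\ge\frac{\lambda_0}{16nR}\norm{e}^2=\frac{\lambda_0}{8R}J(W),
\]
so the constant closes exactly. In the language of \cref{thm:spectral}, this is $Q_r=(\eta/n)G_rG_r^\top$ with the uniform mixture $Q_\pi=H_{\mathrm{st}}/(2R)\succeq(\lambda_0/(8R))I$. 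The Green conclusion then follows as you describe.
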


The proof combines Gaussian anti-concentration, a lower-tail matrix Chernoff bound \citep{tropp2012matrix}, uniform gate preservation, and fixed-cell block smoothness; it appears in Appendix~\ref{app:relu-proof}. The width condition is sufficient and conservative. The theorem certifies the fixed empirical problem for the declared local hidden-weight suite. Its assumptions specify the gate-stable region, finite-width probability, and zero-floor bound used by the certificate.

\section{Why Coverage Is Necessary}
\label{sec:necessity}

A challenge-relative endpoint is the strongest universally valid positive statement when coverage is absent. The following theorem isolates the logical obstruction.

\begin{proposition}[A proper closure need not be global]
\label{thm:no-free-globality}
Let $\Theta$ be a compact metric space, let $\theta_0\in\Theta$, and let $\mathcal R\subsetneq\Theta$ be a closed proper subset containing $\theta_0$. There exists a continuous objective $J$ such that $\theta_0$ minimizes $J$ over $\mathcal R$ but is not globally optimal over $\Theta$.
\end{proposition}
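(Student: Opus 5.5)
Since $\mathcal R$ is a closed proper subset, pick a point $\theta_1\in\Theta\setminus\mathcal R$. Closedness of $\mathcal R$ makes $\rho:=\operatorname{dist}(\theta_1,\mathcal R)=\inf_{\vartheta\in\mathcal R}d(\theta_1,\vartheta)$ strictly positive. If $\mathcal R$ were empty this would fail, but $\mathcal R$ contains $\theta_0$, so $\rho$ is finite and positive.

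The plan is to build $J$ as a constant on $\mathcal R$ minus a continuous bump supported away from $\mathcal R$. Define
\[
J(\theta)=-\Bigl[1-\tfrac{2}{\rho}\,d(\theta,\theta_1)\Bigr]_+ .
\]
This function is continuous because it is a composition of the $1$-Lipschitz map $\theta\mapsto d(\theta,\theta_1)$ with continuous scalar operations. For $\vartheta\in\mathcal R$ we have $d(\vartheta,\theta_1)\ge\rho$, so the bracket is at most $1-2<0$ and $J(\vartheta)=0$. Hence $J\equiv0$ on $\mathcal R$, and $\theta_0$ trivially minimizes $J$ over $\mathcal R$. At the other point, $J(\theta_1)=-1<0=J(\theta_0)$, so $\theta_0$ is not globally optimal over $\Theta$.

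Compactness of $\Theta$ is not needed for the construction itself. Its role is to guarantee that $J^\star=\min_\Theta J=-1$ is attained, so the statement fits the paper's setting where $J^\star\in\mathbb R$. If one wants $J$ to be nonnegative, as in the squared-loss sections, replace $J$ by $J+1$; this changes nothing in the argument.

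The only step needing care is the strict positivity of $\rho$, which is exactly where closedness of $\mathcal R$ enters: a point outside a closed set has positive distance to it. There is no substantive obstacle beyond this. The point of the construction is that no property of the restricted minimization over $\mathcal R$ constrains $J$ off $\mathcal R$ without a coverage mechanism, which is why the paper needs coverage to pass from local to global optimality.
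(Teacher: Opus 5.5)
Your proof is correct and is essentially the paper's construction: a continuous bump supported in a ball disjoint from $\mathcal R$, subtracted from a baseline. The only difference is the baseline. The paper uses $d(\cdot,\theta_0)^2$, which makes $\theta_0$ the \emph{strict}, unique minimizer over $\mathcal R$; this is the form the paper later invokes when it says an uncovered closure can contain a strict non-global minimum. Your constant baseline makes every point of $\mathcal R$ tie---adding $\varepsilon\,d(\cdot,\theta_0)$ with $\varepsilon\,d(\theta_1,\theta_0)<1$ recovers strictness. Incidentally, your $J$ attains its minimum $-1$ at $\theta_1$ with no compactness needed, so your remark about the role of compactness is unnecessary.
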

\begin{proof}
Choose $\phi^\star\notin\mathcal R$ and a continuous bump supported in a ball disjoint from $\mathcal R$. Subtract a sufficiently large multiple of the bump from $d(\cdot,\theta_0)^2$.
\end{proof}

The next result is stronger: it uses a finite ReLU problem, an actual first-order trainer, and exact conditional head challenges.

\begin{theorem}[An exact infinite executable staircase can remain non-global]
\label{thm:non-global}
For every $m\ge3$, there is a two-unit ReLU regression problem on $m+1$ coordinate inputs with a zero-loss model, a nonempty open set of initializations, and a declared full-batch first-order trainer such that:
\begin{enumerate}[leftmargin=1.5em,itemsep=2pt]
\item one unit remains dead on every sample and the other remains inactive on the final sample;
\item the trainer converges to the positive floor $d^2/[2(m+1)]$;
\item the exact head value computed at initialization is retained as the online frontier until Green-0; mandatory identity/replay calls complete at every current issuance;
\item the first stronger exact head stair is computed only at Green-0, and every later stair only when the trainer reaches the preceding active target; every stair is materialized in the complete model;
\item with exact crossing/acceptance (or a vanishing tolerance schedule), the trainer starts Red, enters Yellow and Green, and generates and reaches infinitely many strictly decreasing exact stairs;
\item nevertheless,
\[
J(\theta_j)\longrightarrow\frac{d^2}{2(m+1)}>J^\star=0.
\]
\end{enumerate}
\end{theorem}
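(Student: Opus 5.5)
The plan is an explicit construction in which every claimed property can be checked in closed form.

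\emph{Construction.} Take inputs $x_i=e_i\in\R^{m+1}$ for $i=1,\ldots,m+1$ and targets $y_i>0$ with $y_{m+1}=d>0$. Use the model $f(x)=a_1[w_1^\top x]_++a_2[w_2^\top x]_+$ with squared loss normalized by $2(m+1)$.

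\textbf{Zero-loss model.} Setting $w_{2}=y$, $a_2=1$ and $a_1=0$ interpolates, so $J^\star=0$.

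\textbf{Initializations.} The open set consists of the parameters with
\begin{itemize}
\item $w_{1,i}<0$ for every $i$;
\item $w_{2,m+1}<0$;
\item $w_{2,i}>0$ for $i\le m$;
\item $a_2=1$;
\item $h(0):=(w_{2,1},\ldots,w_{2,m})$ not parallel to $y_{1:m}$, and $h(0)-y_{1:m}$ not parallel to $y_{1:m}$;
\item $J_0>J_{\mathrm{ref}}$, where $\theta_{\mathrm{ref}}$ is the zero-output model with $J_{\mathrm{ref}}=\|y\|^2/(2(m+1))$.
\end{itemize}
Every condition is a strict inequality or a non-parallelism condition, so the set is open. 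The hypothesis $m\ge3$ leaves ample room for the genericity requirements.

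\textbf{Trainer.} The declared trainer is full-batch gradient descent on the hidden weights with a fixed step $\eta$, head frozen, and ReLU derivative $0$ on the negative side.

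\emph{Trainer dynamics.}
\begin{itemize}
\item Every preactivation of unit~1 is negative, so its gradient vanishes identically and unit~1 stays dead.
\item The coordinate $w_{2,m+1}$ receives zero gradient, so it stays negative and sample $m+1$ is always predicted as $0$.
\item For $i\le m$ the coordinates decouple: $h_i(t)-y_i=\rho^t(h_i(0)-y_i)$ with $\rho=1-\eta/(m+1)\in(0,1)$.
\item Hence each $h_i$ stays positive, the gates never change, and $J(\theta_t)$ decreases strictly to $d^2/[2(m+1)]$.
\end{itemize}
This gives items~1, 2 and~6.

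\emph{Exact head stairs.} The exact head value at a state with representation $h$ is
\[
B_{\mathrm{head}}(h)=\frac{1}{2(m+1)}\Bigl[\|y_{1:m}\|^2-\frac{\langle h,y_{1:m}\rangle^2}{\|h\|^2}+d^2\Bigr].
\]
Refitting $a_1$ is irrelevant because unit~1 is dead. The head is optimized in closed form and plugged into the full model, so each value is a materialized complete candidate. The key facts are:
\begin{itemize}
\item Since $h(t)=y_{1:m}+\rho^t(h(0)-y_{1:m})$ and $h(0)-y_{1:m}\not\parallel y_{1:m}$, the vector $h(t)$ is never parallel to $y_{1:m}$.
\item Therefore $B_{\mathrm{head}}(h(t))$ lies strictly above the floor $d^2/[2(m+1)]$.
\item Since $a_2=1$ is not the conditional optimum unless $h(t)=y_{1:m}$, we also have $B_{\mathrm{head}}(h(t))<J(\theta_t)$ for every $t$.
\end{itemize}

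\emph{Status sequence.}
\begin{itemize}
\item Set $G_t$ to the initialization head value, as prescribed by item~3; the identity/replay calls trivially complete. The trainer starts Red because $J_0>J_{\mathrm{ref}}$.
\item The trainer passes $J_{\mathrm{ref}}$ and enters Yellow. This requires $G_0<J_{\mathrm{ref}}$, which I will verify or impose as an additional open condition, for instance via $\langle h(0),y_{1:m}\rangle>0$.
\item Because $J\downarrow$ floor $<G_0$, the trainer reaches Green-0 in finite time.
\end{itemize}

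\emph{The staircase.}
\begin{itemize}
\item At Green-0 compute $S_1=B_{\mathrm{head}}(h(\tau_0))<\min\{G,J_{\tau_0}\}$; under exact acceptance this is accepted.
\item Since $S_1>$ floor and $J\to$ floor strictly monotonically, the hit time $\tau_1$ is finite.
\item Inductively, $S_{j+1}=B_{\mathrm{head}}(h(\tau_j))<J_{\tau_j}\le S_j$ is accepted, exceeds the floor, and is hit again.
\end{itemize}
This yields infinitely many strictly decreasing reached stairs, giving items~4 and~5, while $J(\theta_j)\to d^2/[2(m+1)]>0=J^\star$.

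\emph{Main obstacle.} The delicate step is ensuring that every stair lies strictly above the floor yet strictly below the triggering loss. Only then is each stair accepted and also attained in finite time. The non-parallelism invariant of the explicit geometric trajectory handles both requirements. For the vanishing-tolerance variant, I would choose $\tau_{\mathrm{acc}},\tau_{\mathrm{hit}}\to0$ smaller than the computable margins $J_{\tau_j}-S_{j+1}$ and $S_{j+1}-d^2/[2(m+1)]$ along the known trajectory.
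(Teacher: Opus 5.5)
Your construction is the paper's own: frozen active head, dead second unit, final coordinate gated off, and exact head refits as stairs. You run it with discrete gradient descent instead of gradient flow. The gap is in the step you flag as delicate.

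\textbf{The main gap.} Non-parallelism gives only the lower bound $B_{\mathrm{head}}(h(t))>d^2/[2(m+1)]$. Your justification of the upper bound $B_{\mathrm{head}}(h(t))<J(\theta_t)$ is that $a_2=1$ is conditionally optimal only at $h(t)=y_{1:m}$, and this is false. Completing the square gives
\[
J(\theta_t)-B_{\mathrm{head}}(h(t))=\frac{\bigl(\|h(t)\|^2-\langle h(t),y_{1:m}\rangle\bigr)^2}{2(m+1)\|h(t)\|^2},
\]
which vanishes on the entire sphere $\langle h,h-y_{1:m}\rangle=0$. Write $z=h(0)-y_{1:m}$. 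Along your trajectory $\langle h(t),h(t)-y_{1:m}\rangle=\rho^t\bigl(\langle y_{1:m},z\rangle+\rho^t\|z\|^2\bigr)$, which is zero at $\rho^t=s^\star:=-\langle y_{1:m},z\rangle/\|z\|^2$. On your set $\|z\|>\|y_{1:m}\|$, so $s^\star\in(0,1)$ whenever $\langle y_{1:m},z\rangle<0$. Your conditions allow this. For $m=3$, $y_{1:3}=(1,1,1)$, $h(0)=(0.1,0.1,2.5)$, every condition you list holds, $\langle y_{1:3},z\rangle=-0.3$, and the tangency occurs after Green-0.

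Near $s^\star$ the defect $J-B_{\mathrm{head}}$ is second order in $\rho^t-s^\star$, while one step lowers $J$ at first order, so each stair there is hit on the very next step. Hence there are initializations in your set (by continuity of $s^\star$ in $h(0)$) for which some hit event lands exactly at $\rho^{\tau_j}=s^\star$. There the new stair equals $J_{\tau_j}$, exact acceptance rejects it, and only finitely many stairs are produced. Under the paper's gradient-flow trainer the failure is worse: hit times accumulate at the tangency time, and $J(\theta_j)$ stalls strictly above the floor.

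The missing ingredient is the paper's open condition $c^\top z>0$, here $\langle y_{1:m},h(0)-y_{1:m}\rangle>0$. It makes the numerator above strictly positive for all $t$. It is compatible with your other conditions: take $z=2y_{1:m}+\epsilon w$ with $w\perp y_{1:m}$, $w\neq0$, and $\epsilon>0$ small.

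\textbf{Two smaller repairs.} First, listing $a_2=1$ among the initialization conditions makes your set non-open. Either treat the frozen head as part of the trainer declaration, or let $a_2=A$ range over an open interval, as the paper effectively does.

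Second, with a fixed discrete step the loss can jump from Red straight into Green. The Yellow band $B_{\mathrm{head}}(h(0))<J_t\le J_{\mathrm{ref}}$ reads $\|y_{1:m}\|^2\sin^2\angle(h(0),y_{1:m})<\rho^{2t}\|z\|^2\le\|y_{1:m}\|^2$, and each step multiplies $\rho^{2t}$ by $\rho^2$. You therefore need gradient flow, as in the paper, or an extra open condition such as $\sin^2\angle(h(0),y_{1:m})<\rho^2$.

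With these fixes, your discrete-time version has one genuine convenience over the paper's argument. Strictly increasing integer hit times force $\tau_j\to\infty$, so item 6 needs no separate non-accumulation argument.
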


The construction and closed-form stair recurrence are proved in Appendix~\ref{app:staircase-proof}. The theorem uses an event policy that retains the initialization comparator until Green-0 and recomputes the exact head challenge at each subsequent hit event. The challenge is exact, every displayed stair is feasible, every mandatory issuance completes, and every active stair is reached. Failure comes from the preserved representation obstruction: the policy never revives the dead unit that can fit the final sample. Under a fixed positive materiality margin only a finite prefix is displayed, but any prescribed finite prefix can be scaled above that margin. \Cref{fig:positive-negative} places this exact obstruction beside the positive coverage regime and identifies the assumption separating them.

\begin{figure}[t]
\centering
\includegraphics[width=\linewidth]{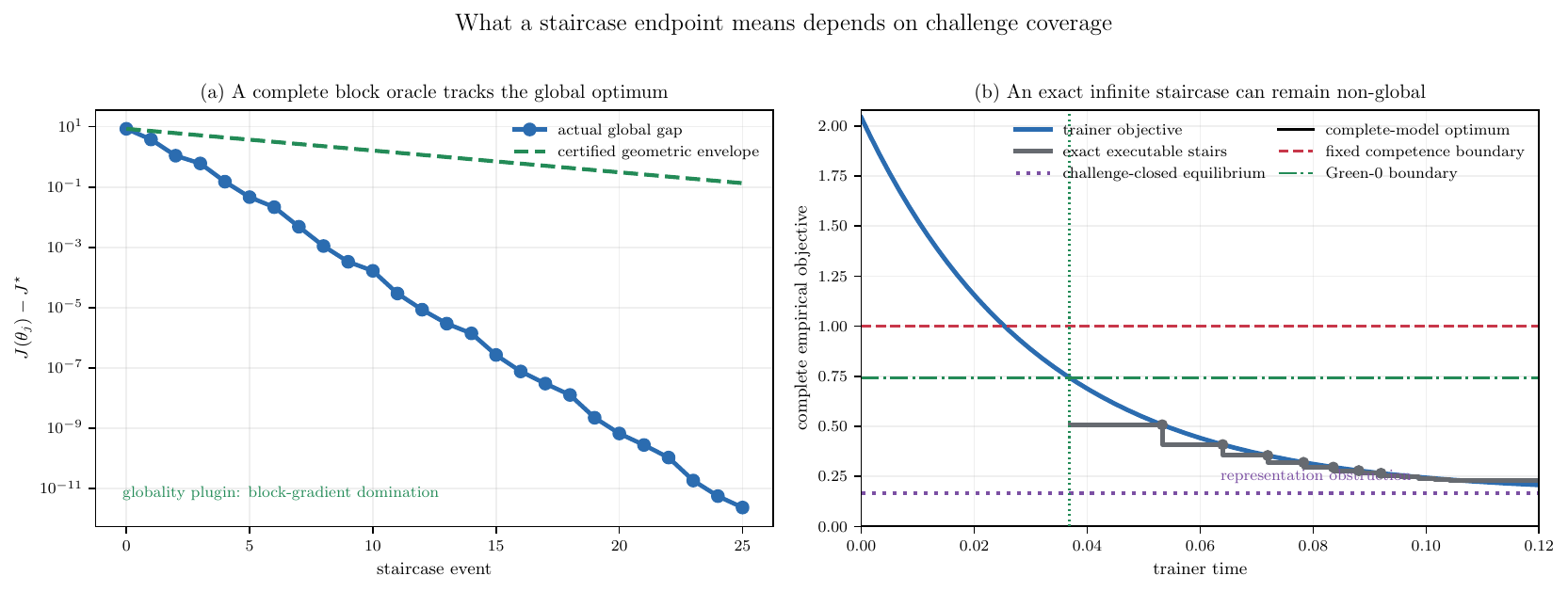}
\caption{Coverage, not stair count, determines global meaning. Left: under a valid block-coverage coefficient, exact reached stairs contract the true gap. Right: the support-preserving ReLU trainer reaches infinitely many exact current-representation head stairs but converges to a positive non-global floor. The positive and negative panels use different declared regimes; the figure identifies the assumption that separates them.}
\label{fig:positive-negative}
\end{figure}

Challenge-closed optimality characterizes the endpoint earned by an exhaustive declared operation family. Under compactness, regularity, and vanishing solve-hit error, a convergent event process is optimal over the closure of those executable operations. Completeness, power, spectral coverage, duality, or another lower-bound theorem then upgrades that endpoint to a global statement over the full certified class. The formal closure theorem appears in Appendix~\ref{app:closure}.

\section{Challenge-Closed Optimality and the Globality Frontier}
\label{sec:closure-main}

Coverage provides a global interpretation when the challenge family controls every relevant defect. The general endpoint is challenge-closed optimality: optimality over the closure of the declared executable operations. This concept remains useful across architectures and solver policies because the policy, information state, and replay memory are part of the state being certified.

Let $\mathcal K$ be a compact metric state space containing the model, optimizer state, buffers, finite policy memory, and realized auxiliary randomness. Let $p_\Theta:\mathcal K\to\Theta$ be the continuous projection to the certified model parameters and let $J_\Theta:\Theta\to\mathbb R$ be the continuous certified objective; write $J(x)=J_\Theta(p_\Theta(x))$. For hierarchy level $m$, let $\mathcal C_m(x)\subseteq\mathcal K$ contain the endpoints of every admissible composition of at most $m$ declared primitive challenge operations from state $x$, including the identity. Assume $\mathcal C_m(x)$ is nonempty, compact-valued, and nested in $m$. Define
\begin{equation}
b_m(x)=\min_{y\in\mathcal C_m(x)}J(y),
\qquad
g_m(x)=J(x)-b_m(x),
\label{eq:closure-level}
\end{equation}
and
\begin{equation}
\mathcal C_\infty(x)
=
\operatorname{cl}\!\left(\bigcup_{m\ge0}\mathcal C_m(x)\right),
\qquad
b_\infty(x)=\min_{y\in\mathcal C_\infty(x)}J(y),
\qquad
g_\infty(x)=J(x)-b_\infty(x).
\label{eq:closure-infty}
\end{equation}
A state is an $(m,\tau)$ challenge equilibrium when $g_m(x)\le\tau$, and a $\tau$ challenge-closed optimum when $g_\infty(x)\le\tau$.

Nestedness yields
\[
b_{m+1}(x)\le b_m(x)\le J(x),
\qquad
g_{m+1}(x)\ge g_m(x).
\]
The hierarchy therefore records progressively stronger passage and optimality statements. Exact one-block coverage gives blockwise global optimality; passage of a complete restart family establishes qualification relative to that family; equality $p_\Theta(\mathcal C_\infty(x))=\Theta$ gives full-class global optimality.

\begin{theorem}[Challenge-closure limit characterization]
\label{thm:closure-limit-main}
Assume $x_k\to\bar x$, $m_k\to\infty$, and $b_m$ is upper semicontinuous at $\bar x$ for every finite $m$. At event $k$, suppose a feasible returned stair satisfies
\[
S_{k+1}\le b_{m_k}(x_k)+\epsilon_k,
\qquad
J(x_{k+1})\le S_{k+1}+h_{k+1},
\]
with $\limsup_k(\epsilon_k+h_{k+1})\le e$. Then
\[
0\le J(\bar x)-b_m(\bar x)\le e
\quad\text{for every fixed }m,
\]
and hence
\begin{equation}
0\le g_\infty(\bar x)\le e.
\label{eq:closure-main-bound}
\end{equation}
When $e=0$, the convergent event sequence has an exact challenge-closed limit.
\end{theorem}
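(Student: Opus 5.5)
The plan is to fix a finite level $m$, squeeze $J(\bar x)-b_m(\bar x)$ between $0$ and $e$, and then pass to $g_\infty$ through the closure.

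\textbf{Step 1 (lower bound).} Since the identity lies in $\mathcal C_m(\bar x)$, we have $b_m(\bar x)\le J(\bar x)$. This gives $J(\bar x)-b_m(\bar x)\ge0$ for every $m$, and likewise $g_\infty(\bar x)\ge0$.

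\textbf{Step 2 (upper bound at fixed $m$).} For $k$ large enough that $m_k\ge m$, nestedness gives $b_{m_k}(x_k)\le b_m(x_k)$. Chaining the two hypotheses yields
\[
J(x_{k+1})\le S_{k+1}+h_{k+1}\le b_m(x_k)+\epsilon_k+h_{k+1}.
\]
Take $\limsup_k$ and use three facts:
\begin{enumerate}[leftmargin=1.5em,itemsep=2pt]
\item continuity of $J=J_\Theta\circ p_\Theta$, so $J(x_{k+1})\to J(\bar x)$, since $x_{k+1}\to\bar x$ as well;
\item upper semicontinuity of $b_m$ at $\bar x$, so $\limsup_k b_m(x_k)\le b_m(\bar x)$;
\item $\limsup_k(\epsilon_k+h_{k+1})\le e$.
\end{enumerate}
Together these give $J(\bar x)\le b_m(\bar x)+e$. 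I will check carefully that $\limsup$ of a sum is at most the sum of the $\limsup$s, which holds here because every term is finite.

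\textbf{Step 3 (passing to the closure).} This is the step I expect to be the main obstacle. We must show $b_\infty(\bar x)=\inf_m b_m(\bar x)$, so that the uniform bound from Step 2 transfers to $g_\infty$.

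The inequality $b_\infty\le b_m$ is immediate from $\mathcal C_m\subseteq\mathcal C_\infty$. For the reverse inequality:
\begin{enumerate}[leftmargin=1.5em,itemsep=2pt]
\item The closure $\mathcal C_\infty(\bar x)$ is a closed subset of the compact space $\mathcal K$, hence compact, so the minimum of $J$ over it is attained at some $y^\star$.
\item Since $y^\star$ lies in the closure of the union, there are points $y_j\in\bigcup_m\mathcal C_m(\bar x)$ with $y_j\to y^\star$.
\item Each $y_j$ lies in some $\mathcal C_{m_j}(\bar x)$, so $b_{m_j}(\bar x)\le J(y_j)$.
\item By continuity, $J(y_j)\to J(y^\star)=b_\infty(\bar x)$, so $\inf_m b_m(\bar x)\le b_\infty(\bar x)$.
\end{enumerate}
Hence
\[
g_\infty(\bar x)=\sup_m g_m(\bar x)\le e.
\]

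\textbf{Step 4 (exact case).} Setting $e=0$ gives the exact challenge-closed limit.
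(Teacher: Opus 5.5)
Your proposal is correct and follows the same route as the paper: fix $m$, use eventual $m_k\ge m$ and nesting to get $J(x_{k+1})\le b_m(x_k)+\epsilon_k+h_{k+1}$, pass to the limit using continuity of $J$ and upper semicontinuity of $b_m$, and obtain the lower bound from identity feasibility. Your Step 3 proves $b_\infty(\bar x)=\inf_m b_m(\bar x)$ from compactness of the closure and continuity of $J$; this makes explicit what the paper compresses into ``taking the infimum over $m$,'' so it adds detail rather than changing the argument.
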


The proof appears in Appendix~\ref{app:closure}; its structure is simple. For any fixed $m$, hierarchy exhaustion eventually gives $m_k\ge m$, so the returned value is no larger than $b_m(x_k)$ up to solve error. Continuity of $J$ and upper semicontinuity of $b_m$ pass the inequality to the limit. Taking the closure over all finite levels yields \cref{eq:closure-main-bound}.

\subsection{Trainer--challenger equilibria}

Let $r_A(x)$ be a declared residual for the audited trainer. An $(\eta,\tau,m)$ trainer--challenger equilibrium satisfies
\[
r_A(x)\le\eta,
\qquad
g_m(x)\le\tau.
\]
The two coordinates answer independent questions: the trainer residual measures motion under one update rule, while $g_m$ measures executable improvement under the declared challenge family. A stationary Red or Yellow checkpoint is therefore a constructive trainer--challenger disagreement.

Full parameter convergence requires a separate trajectory theorem. Gradient flow and fixed-step gradient descent under the usual smoothness and boundedness assumptions drive the gradient residual to zero; KL or finite-length hypotheses can promote subsequential convergence to a single state. Once convergence is established, \cref{thm:closure-limit-main} identifies the executable endpoint reached by the challenge hierarchy.

\subsection{Finite-budget improvement exclusion}

For declared paths $\pi$ with total cost $c(\pi)$, define
\[
I_x(B)=J(x)-\min_{c(\pi)\le B}J(T_\pi(x)),
\qquad
\chi_x(\varepsilon)=\min\{c(\pi):J(T_\pi(x))\le J(x)-\varepsilon\}.
\]
Whenever the minima are attained,
\begin{equation}
I_x(B)\ge\varepsilon
\quad\Longleftrightarrow\quad
\chi_x(\varepsilon)\le B.
\label{eq:improvement-escape}
\end{equation}
Thus $I_x(B)\le\tau$ certifies that every declared route improving the objective by more than $\tau$ lies beyond budget $B$. This is the operational endpoint when exhaustive closure is unaffordable.

\subsection{Globality upgrades}

Challenge closure becomes near-global through a completeness defect
\[
\delta_m(\mathcal R)
=
\sup_{x\in\mathcal R}[b_m(x)-J^\star].
\]
If $g_m(x)\le\tau$, then
\begin{equation}
J(x)-J^\star\le\tau+\delta_m(\mathcal R).
\label{eq:closure-completeness}
\end{equation}
A contractive challenge law
\[
b_m(x)-J^\star
\le
\alpha_m[J(x)-J^\star]+\zeta_m,
\qquad 0\le\alpha_m<1,
\]
yields
\begin{equation}
J(x)-J^\star
\le
\frac{\tau+\zeta_m}{1-\alpha_m}.
\label{eq:closure-contraction}
\end{equation}
The spectral coverage certificate, exact affine-block coverage, proof-bearing dual floors, and complete activation-pattern solvers are concrete mechanisms for controlling these terms. The no-free-globality theorem and exact ReLU staircase in Section~\ref{sec:necessity} establish the converse boundary: a proper uncovered closure can contain a strict non-global minimum.

\paragraph{Road map from optimization evidence to interpretation.}
The central optimization argument is now complete: witness, passage, coverage, and globality. The next two sections collect exact, proof-bearing, and population-level mechanisms that instantiate or supplement that argument. Only afterward do we ask what the resulting evidence says about representation quality, trainer attainability, and protected-task performance.

\section{Exact Conditional Challenge Regimes}
\label{sec:exact-solvers}

One of the principal advantages of the proposed framework is that certification can be decomposed into layerwise, blockwise, or subnetwork problems for which global or near-global optimization is often available. These conditional problems are typically smaller than full-network training, can exploit convexity, affine structure, activation-pattern structure, or exact algebra, and return especially strong challenge values. For a fixed checkpoint, objective, and conditional feasible set, the exact optimum value is a deterministic scalar: different solver initializations may return different minimizers, but a correct exact solver returns the same value and therefore the same pass/fail conclusion. Certification is consequently reproducible rather than restart-dependent.

Table~\ref{tab:exact-solver-islands} summarizes the exact-solver islands used by the framework. Their assumptions differ, but their evidentiary role is the same: an exact conditional optimum yields the best possible challenge of its declared type, while a primal--dual interval states exactly how much solver uncertainty remains.

\begin{table}[t]
\centering
\small
\caption{Architecture-native exact and proof-bearing challenge islands. The conditional problem is substantially smaller than unrestricted network training, and its exact value is a canonical certificate for the fixed checkpoint and declared challenge class.}
\label{tab:exact-solver-islands}
\begin{adjustbox}{max width=\textwidth}
\begin{tabular}{@{}>{\raggedright\arraybackslash}p{3.15cm}>{\raggedright\arraybackslash}p{4.0cm}>{\raggedright\arraybackslash}p{4.5cm}>{\raggedright\arraybackslash}p{3.25cm}@{}}
\toprule
Challenge regime & Structural reason & Solver output & Certificate contribution \\
\midrule
Affine or ridge head & Convex least squares / strongly convex quadratic & Closed form or primal--dual optimum & Exact decoder under-use and complete-model witness \\
Affine residual-output CNN/transformer block & Complete prediction is $B+WUH$ & Orthogonal projection and canonical minimum-norm block & Exact internal-block gap; global empirical optimum under full-rank zero-loss conditions \\
Fixed-pattern ReLU segment & Activation mask converts the segment to a convex quadratic program & Exact pattern value; enumeration, cutting planes, or branch-and-bound & Exact conditional segment optimum and explicit residual solver gap \\
Frozen-backbone homogeneous ReLU adapter & Positive homogeneity yields an atomic convex formulation & Executable primal ceiling and certified polar-dual floor & Full-class global bracket in the declared adapter class \\
\bottomrule
\end{tabular}
\end{adjustbox}
\end{table}

\subsection{Exact conditional block certificates}

Partition the parameters as $\theta=(\theta_{-r},\phi_r)$ and define
\[
B_r^\star(\theta_{-r})
=
\inf_{\phi_r}J(\theta_{-r},\phi_r).
\]
When the infimum is attained by an exact conditional solver, materialized in the original architecture, and reevaluated under the complete objective,
\begin{equation}
J(\theta)-B_r^\star(\theta_{-r})
\label{eq:conditional-gap}
\end{equation}
is the exact conditional gap of block $r$ and a lower bound on the full empirical global gap. Sequential exact block replacements create a monotone executable challenger even when a full sweep is not globally optimal.

\subsection{A shared residual-output corollary for CNNs and transformers}

\Cref{thm:affine-block} applies directly when a graph cut freezes sample-specific offsets $b_i\in\mathbb R^q$ and features $h_i\in\mathbb R^p$, while an internal block $U\in\mathbb R^{d\times p}$ enters the complete prediction as
\[
\widehat y_i(U)=b_i+WU h_i.
\]
The same conditional projection theorem therefore covers a residual CNN's final internal convolution before linear pooling/readout and a pre-LayerNorm transformer's MLP down-projection before the final residual addition. The conditional optimum is exact for the declared frozen context, and the materialized complete model yields the executable certificate. A terminal nonlinearity or post-addition LayerNorm breaks the affine superposition and requires a different challenge or a bounded-gap solver. The architecture diagrams and boundary proof appear in Appendix~\ref{app:architecture}.

\begin{figure}[t]
\centering
\includegraphics[width=0.96\linewidth]{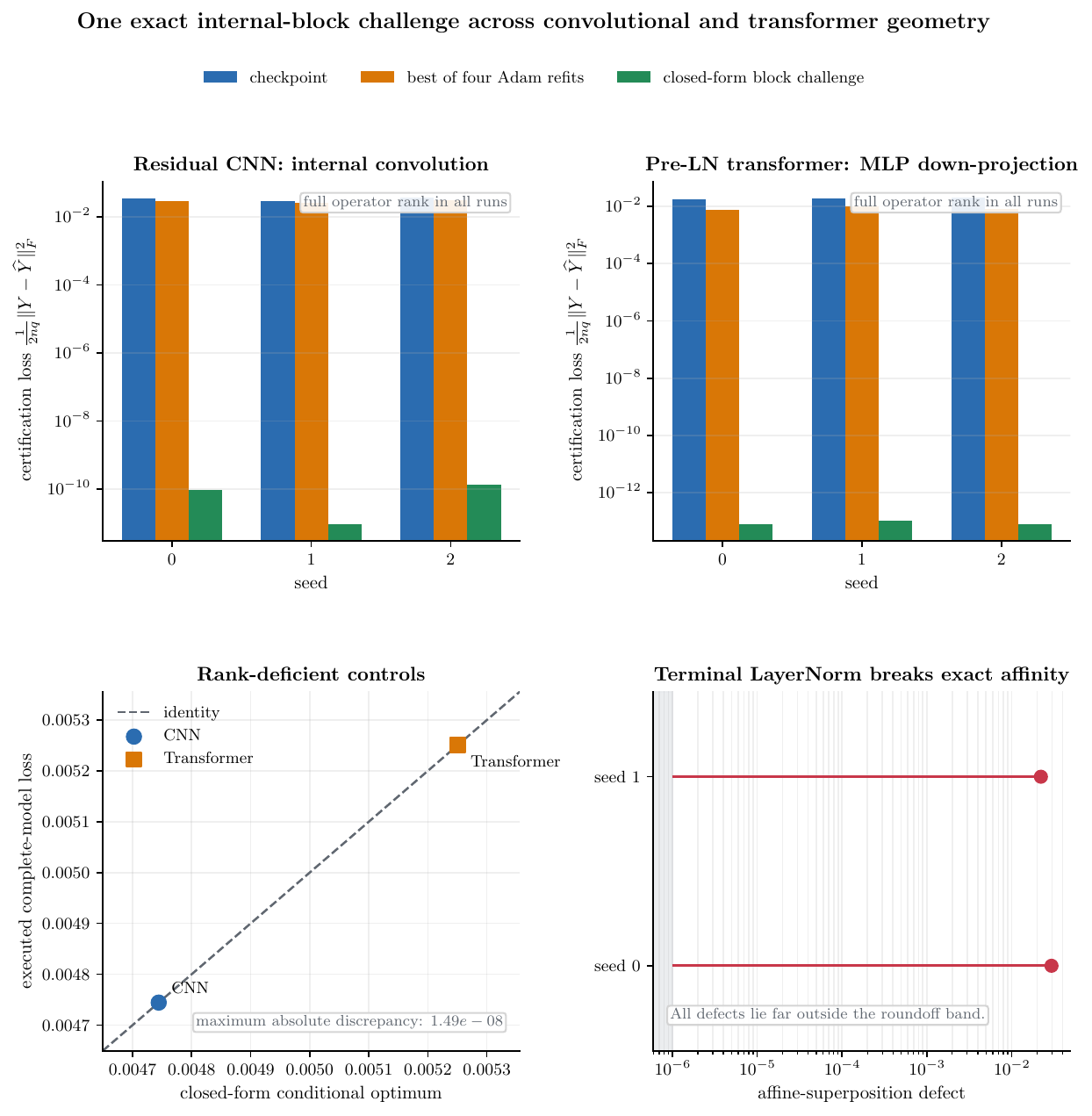}
\caption{One exact internal-block certificate across residual CNN and pre-LayerNorm transformer motifs. The closed-form replacement reaches the formula-predicted optimum in full-rank and rank-deficient controls. A terminal LayerNorm creates a clear affine-superposition defect, identifying the exact motif boundary.}
\label{fig:exact-architecture-main}
\end{figure}

\subsection{Projection, deep-linear, and activation-pattern anchors}

For frozen representation $H$, an exact linear head satisfies
\begin{equation}
\|Y-AH\|_F^2
=
\underbrace{\|Y(I-P_H)\|_F^2}_{\text{representation insufficiency}}
+
\underbrace{\|YP_H-AH\|_F^2}_{\text{head under-use}}.
\label{eq:projection-anchor}
\end{equation}
This identity is the additive anchor for the paired representation certificate.

For a deep linear network with widths $d_0,\ldots,d_K$, input matrix $X$, and bottleneck rank $r=\min_{0\le k\le K}d_k$, the exact squared-loss optimum is
\begin{equation}
\|Y(I-P_X)\|_F^2
+
\sum_{j>r}\sigma_j^2(YP_X).
\label{eq:deep-linear-anchor}
\end{equation}
The first term is inaccessible from the inputs; the second is the architectural bottleneck cost. It provides an exact regression test for class-preserving materialization.
The complete statement, attainability argument, rank-constrained-head analogue, target-code result, and class-preserving padding construction appear in Appendix~\ref{app:projection-anchors}.

For a one-layer ReLU segment with fixed input $H$, each realizable activation mask defines a convex quadratic problem. Enumerating all realizable masks recovers the global conditional value. The naive mask space is exponential; fixed-rank hyperplane-arrangement bounds, cutting planes, and branch-and-bound determine when enumeration is practical. Every primal candidate remains an executable witness, and every valid lower bound sharpens the conditional interval. The full conditional-solver intervals, activation-pattern theorem, shared-model feasibility result, stochastic attainability result, and task-calibrated translations are collected in Appendix~\ref{app:strong-solvers}.

\section{Proof-Bearing and Population-Level Extensions}
\label{sec:coverage-plugins}

The spectral theorem converts current executable improvements into an optimality bound through residual-defect coverage. Two additional mechanisms extend the certificate beyond settings with directly verified spectral geometry. A proof-bearing lower floor gives a deterministic pointwise bracket for one exact neural class. Independent calibration instead controls the miss behavior of a frozen randomized policy over a declared checkpoint population. The deterministic bracket and the population card answer complementary questions and are reported in separate fields.

\subsection{A proof-bearing neural bracket}
\label{sec:adapter-bracket}

Let $Z\in\R^{n\times d}$ be a frozen feature matrix and $y\in\R^n$ the residual target after subtracting a frozen backbone prediction. For width $M$ and $\lambda>0$, consider the homogeneous two-layer ReLU adapter
\begin{equation}
J_{\mathrm{ad,M}}(a,u)
=
\frac12\left\|y-\sum_{j=1}^{M}a_j(Zu_j)_+\right\|^2
+
\frac{\lambda}{2}\sum_{j=1}^{M}(a_j^2+\|u_j\|^2),
\qquad J_{\mathrm{ad,M}}^\star=\inf J_{\mathrm{ad,M}}.
\label{eq:adapter-objective}
\end{equation}
Define the symmetric atom set $\mathcal A_Z=\{\pm(Zv)_+:\|v\|=1\}$, its gauge $\|\cdot\|_{\mathcal A_Z}$, and the atomic primal--dual pair
\begin{align}
P_Z^\star
&=\min_{z\in\R^n}\frac12\|y-z\|^2+\lambda\|z\|_{\mathcal A_Z},
\label{eq:adapter-primal}\\
D_Z^\star
&=\max_{\nu\in\R^n}
\left\{y^\top\nu-\frac12\|\nu\|^2:
\sigma_Z(\nu)\le\lambda\right\},
\quad
\sigma_Z(\nu)=\sup_{\|v\|\le1}|\nu^\top(Zv)_+|.
\label{eq:adapter-dual}
\end{align}
Positive homogeneity and balancing map finite networks to signed atoms, Fenchel--Rockafellar duality gives $P_Z^\star=D_Z^\star$ \citep{rockafellar1970}, and $M\ge n+1$ is sufficient for equality with the finite-width neural optimum by Carath\'eodory \citep{caratheodory1911}.

At event $s$, let $\overline U_s$ be an outward-rounded complete adapter value. For a proposed dual vector $\nu_s$, let a complete separator return $\overline\sigma_s\ge\sigma_Z(\nu_s)$, set
\[
\rho_s=
\begin{cases}
1, & \overline\sigma_s\le\lambda,\\
\lambda/\overline\sigma_s, & \overline\sigma_s>\lambda,
\end{cases}
\qquad
\underline D_s\le y^\top(\rho_s\nu_s)-\tfrac12\|\rho_s\nu_s\|^2,
\]
and retain
\begin{equation}
L_t=\max_{s\le t}\underline D_s,
\qquad
U_t=\min_{s\le t}\overline U_s.
\label{eq:adapter-retained}
\end{equation}

\begin{theorem}[Executable primal--dual neural coverage]
\label{thm:adapter-coverage}
Assume every model entering $U_t$ uses the same frozen backbone, data, objective, regularizer, and width bound $M$, and is completely materialized and reevaluated. Assume every separator and numerical enclosure is valid. Then
\begin{equation}
\boxed{L_t\le P_Z^\star\le J_{\mathrm{ad,M}}^\star\le U_t.}
\label{eq:adapter-bracket}
\end{equation}
Consequently, for any width-$M$ checkpoint $\theta$,
\begin{equation}
J_{\mathrm{ad,M}}(\theta)-J_{\mathrm{ad,M}}^\star
\le J_{\mathrm{ad,M}}(\theta)-L_t.
\label{eq:adapter-gap-upper}
\end{equation}
If $U_t\le J_{\mathrm{ad,M}}(\theta)$ and
$\Delta_t=J_{\mathrm{ad,M}}(\theta)-U_t$, then
\begin{equation}
\boxed{
\Delta_t
\le J_{\mathrm{ad,M}}(\theta)-J_{\mathrm{ad,M}}^\star
\le\Delta_t+(U_t-L_t).}
\label{eq:adapter-sandwich}
\end{equation}
Thus the executable witness estimates the entire unknown gap to additive error at most the bracket width.
\end{theorem}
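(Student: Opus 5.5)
The plan is to prove the four inequalities in \cref{eq:adapter-bracket} one at a time. Each will come from one of three ingredients: weak duality applied to a scaled feasible dual point, the homogeneity reduction of the finite-width network to the atomic primal, and feasibility of every retained materialized model. The two gap statements \cref{eq:adapter-gap-upper,eq:adapter-sandwich} then follow by subtraction.

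\emph{Lower floor.} Fix an event $s$. Since $\sigma_Z$ is positively homogeneous and $\overline\sigma_s\ge\sigma_Z(\nu_s)$, the choice of $\rho_s$ gives $\sigma_Z(\rho_s\nu_s)=\rho_s\sigma_Z(\nu_s)\le\lambda$. Thus $\rho_s\nu_s$ is dual feasible in \cref{eq:adapter-dual}, and by validity of the enclosure $\underline D_s\le D_Z^\star$.

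Weak duality alone suffices here; strong duality is not needed. For any $z$ and any feasible $\nu$,
\[
\tfrac12\|y-z\|^2\ge y^\top\nu-\tfrac12\|\nu\|^2-\nu^\top z,
\]
which follows from $\tfrac12\|y-z-\nu\|^2\ge0$. Next, $\nu^\top z\le\sigma_Z(\nu)\|z\|_{\mathcal A_Z}\le\lambda\|z\|_{\mathcal A_Z}$, because $\sigma_Z$ is the support function of the atom set and the atom set is symmetric. Combining these gives $D_Z^\star\le P_Z^\star$, and taking the maximum over $s\le t$ gives $L_t\le P_Z^\star$.

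\emph{Middle inequality.} I need $P_Z^\star\le J^\star_{\mathrm{ad},M}$. Take any width-$M$ parameters $(a,u)$ and set $z=\sum_j a_j(Zu_j)_+$. Write $u_j=\|u_j\|v_j$ with $v_j$ a unit vector. Positive homogeneity gives $z=\sum_j a_j\|u_j\|(Zv_j)_+$, so $\|z\|_{\mathcal A_Z}\le\sum_j|a_j|\|u_j\|$. By AM--GM, $\sum_j|a_j|\|u_j\|\le\tfrac12\sum_j(a_j^2+\|u_j\|^2)$. Hence $J_{\mathrm{ad},M}(a,u)\ge\tfrac12\|y-z\|^2+\lambda\|z\|_{\mathcal A_Z}\ge P_Z^\star$, and taking the infimum over $(a,u)$ finishes this step. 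Units with $u_j=0$ contribute nothing to $z$ and can be discarded.

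This direction does not need the Carath\'eodory width condition $M\ge n+1$. That condition matters only for the reverse equality, which the bracket does not use.

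\emph{Upper ceiling.} Every model entering $U_t$ is a width-$M$ adapter with the same backbone, data, and regularizer, and each is completely reevaluated. By \cref{prop:witness}, $J^\star_{\mathrm{ad},M}\le J_{\mathrm{ad},M}(\hat\theta_s)\le\overline U_s$ for each $s$, where the last step uses outward rounding. Minimizing over $s\le t$ gives $J^\star_{\mathrm{ad},M}\le U_t$.

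\emph{Consequences.} For \cref{eq:adapter-gap-upper}, subtract $J^\star_{\mathrm{ad},M}\ge L_t$ from $J_{\mathrm{ad},M}(\theta)$. For the sandwich \cref{eq:adapter-sandwich}, the lower side is $J^\star_{\mathrm{ad},M}\le U_t$. The upper side is
\[
J_{\mathrm{ad},M}(\theta)-L_t=\Delta_t+(U_t-L_t).
\]
The hypothesis $U_t\le J_{\mathrm{ad},M}(\theta)$ is used only to make $\Delta_t$ nonnegative, so that it reads as a gap.

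\emph{Main obstacle.} The subtle step is the dual-feasibility scaling. It relies on the separator returning an upper bound over the whole unit ball; the set $\{\|v\|\le1\}$ and the unit sphere give the same supremum by homogeneity. It also relies on $\sigma_Z$ being exactly the dual norm of the atomic gauge, which holds because the atom set is symmetric and the gauge is the gauge of its closed convex hull. I would state this dual-norm identity as a short lemma before the main argument.
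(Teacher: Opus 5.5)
Your proposal is correct and follows the paper's proof. The floor comes from dual feasibility of $\rho_s\nu_s$ plus weak duality, the middle inequality $P_Z^\star\le J_{\mathrm{ad,M}}^\star$ from the homogeneity/AM--GM balancing reduction, the ceiling from feasibility of the completely materialized adapters, and the gap statements from retention and subtraction. Your direct derivation of weak duality, and your remark that neither strong duality nor $M\ge n+1$ is needed for the bracket, are accurate refinements of the same route rather than a different argument.
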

\begin{proof}
The scaled vector $\rho_s\nu_s$ is dual feasible because
$\rho_s\sigma_Z(\nu_s)\le\rho_s\overline\sigma_s\le\lambda$. Weak duality gives $\underline D_s\le P_Z^\star$, while every materialized adapter is feasible for the width-$M$ problem. Retention preserves both inequalities; subtraction gives \cref{eq:adapter-gap-upper,eq:adapter-sandwich}.
\end{proof}

The complete separator has two roles. A violated polar constraint supplies the next quantitatively improving ReLU atom; complete nonviolation supplies the floor. In the exact-history rank-two audit, the bracket closes to $3.9941\times10^{-7}$. Across 36 randomized rank-two instances, the median relative width is $7.44\times10^{-5}$ and the largest ratio of the computable gap upper bound to the realized witness is $1.000006$. The proof and outward-rounding details are in Appendix~\ref{app:coverage-details}. The bracket is a pointwise global certificate for the exactly declared frozen-backbone adapter class. \Cref{fig:adapter-bracket} shows the retained executable ceiling and certified floor closing from opposite directions.

\begin{figure}[t]
\centering
\includegraphics[width=\linewidth]{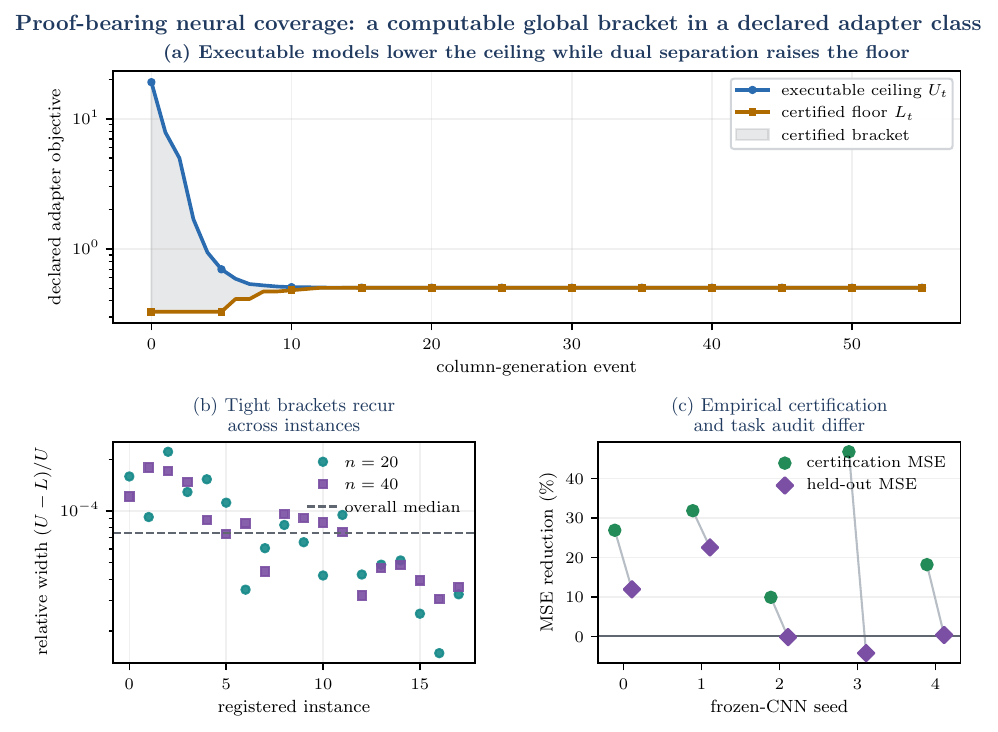}
\caption{Proof-bearing pointwise coverage in a declared neural class. Complete executable adapters lower the primal ceiling while complete polar separation raises the certified floor. The resulting bracket nearly identifies the entire empirical adapter gap, whereas held-out behavior remains a separate audit.}
\label{fig:adapter-bracket}
\end{figure}

\subsection{Finite-sample population coverage of a frozen policy}
\label{sec:statistical-coverage}

A randomized or learned policy may be too broad for deterministic pointwise coverage but can still be calibrated over a named failure population. Freeze the complete policy after development. Let
\[
q(\theta;\delta)
=\Prb\{\text{one fresh policy call constructs a sound improvement of at least }\delta\mid\theta\}.
\]
Let $\theta_1,\ldots,\theta_n\stackrel{\mathrm{iid}}{\sim}P_{\mathcal F}$ with $n\ge1$. At each checkpoint execute $m\ge1$ calls using fresh policy randomness that is independent across calls and calibration episodes conditional on the sampled checkpoints, and observe
$S_i\mid\theta_i\sim\operatorname{Binomial}(m,q_i)$. Choose a cutoff $c\in\{0,\ldots,m-1\}$ using development data and set $T=\sum_i\mathbf1\{S_i\le c\}$. For confidence error $\alpha\in(0,1)$ and permitted uncovered mass $\beta\in(0,1)$, define
\begin{align}
a_\alpha(T)
&=
\begin{cases}
1,&T=n,\\
1-F^{-1}_{\operatorname{Beta}(n-T,T+1)}(\alpha),&0\le T<n,
\end{cases}\\
p_L
&=
\begin{cases}
0,&T=n\text{ or }a_\alpha(T)\ge\beta,\\
1-F^{-1}_{\operatorname{Beta}(m-c,c+1)}\!\left(a_\alpha(T)/\beta\right),
& a_\alpha(T)<\beta.
\end{cases}
\label{eq:population-pl}
\end{align}
This exact finite-trial inversion is related to confidence procedures for binomial mixing distributions \citep{basu2026mixing}.

\begin{theorem}[Finite-sample valid finite-trial quantile coverage]
\label{thm:population-coverage}
Assume $\theta_1,\ldots,\theta_n\stackrel{\mathrm{iid}}{\sim}P_{\mathcal F}$, with $n,m\ge1$, $c\in\{0,\ldots,m-1\}$, and $\alpha,\beta\in(0,1)$. The policy, failure population, threshold $\delta$, trial count, cutoff, and confidence levels are fixed before calibration counts are inspected. Conditional on the sampled checkpoints, the policy-call randomness is independent across all calibration episodes and calls. Then, with probability at least $1-\alpha$,
\begin{equation}
\boxed{
\Prb_{\theta\sim P_{\mathcal F}}\{q(\theta;\delta)\ge p_L\}\ge1-\beta.}
\label{eq:population-coverage}
\end{equation}
For $k$ conditionally independent fresh calls at a future failure checkpoint,
\begin{equation}
\Prb(\text{miss all $k$ calls})
\le\beta+(1-\beta)(1-p_L)^k
\label{eq:population-miss}
\end{equation}
with the same calibration confidence.
\end{theorem}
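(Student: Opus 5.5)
The plan is to reduce the statement to an exact Clopper--Pearson bound on a single Bernoulli probability, followed by a Markov-type inversion through the binomial CDF. Write $Q_i=q(\theta_i;\delta)$ and define
\[
g(p)=\Prb\{\operatorname{Binomial}(m,p)\le c\}.
\]
Because the checkpoints are i.i.d.\ and the policy randomness is conditionally independent across episodes and calls, the indicators $\mathbf 1\{S_i\le c\}$ are i.i.d.\ Bernoulli with mean
\[
A=\E_{\theta\sim P_{\mathcal F}}[g(q(\theta;\delta))].
\]
Hence $T\sim\operatorname{Binomial}(n,A)$. The policy, $\delta$, $m$, $c$, $\alpha$ and $\beta$ are all fixed in advance, so $p_L$ is a deterministic function of $T$ alone.

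\textbf{Step 1 (confidence bound on $A$).} I would identify $a_\alpha(T)$ as the exact one-sided Clopper--Pearson upper bound for $A$. This uses the standard beta--binomial identity $\Prb\{\operatorname{Binomial}(n,A)\le T\}=F_{\operatorname{Beta}(n-T,T+1)}(1-A)$. It gives $a_\alpha(T)=1-F^{-1}_{\operatorname{Beta}(n-T,T+1)}(\alpha)$ for $T<n$, and the trivial bound $1$ when $T=n$. Consequently the event $\mathcal E=\{A\le a_\alpha(T)\}$ has probability at least $1-\alpha$. I expect this to be the most delicate step. It requires a careful statement of the identity, including the boundary cases $T=0$ and $T=n$, and the usual monotonicity argument that a one-sided binomial test inverts into a valid confidence bound for every value of $A$.

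\textbf{Step 2 (from $A$ to a quantile of $q$).} On $\mathcal E$ I will show $\Prb_\theta\{q<p_L\}\le\beta$.
\begin{itemize}
\item If $p_L=0$ the claim is trivial.
\item Otherwise $a:=a_\alpha(T)<\beta$. By the same beta--binomial identity, $g(p)=F_{\operatorname{Beta}(m-c,c+1)}(1-p)$, which is continuous and nonincreasing in $p$. Since $c<m$, the definition of $p_L$ therefore gives $g(p_L)=a/\beta$.
\item If $a>0$, then $q<p_L$ implies $g(q)\ge g(p_L)$. Markov's inequality then yields $\Prb\{q<p_L\}\le A/g(p_L)\le a\beta/a=\beta$.
\item The degenerate case $a=0$ forces $A=0$, hence $g(q)=0$ almost surely, hence $q=1\ge p_L$ almost surely.
\end{itemize}
This establishes \cref{eq:population-coverage} on $\mathcal E$.

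\textbf{Step 3 (miss bound).} For a future checkpoint $\theta\sim P_{\mathcal F}$ and $k$ conditionally independent fresh calls, the probability of missing all of them conditional on $\theta$ is $(1-q(\theta;\delta))^k$. I split the expectation over the events $\{q<p_L\}$ and $\{q\ge p_L\}$. The first event has mass at most $\beta$ on $\mathcal E$, and there I bound the miss probability by $1$. On the second event I bound it by $(1-p_L)^k$. The resulting bound $\Prb\{q<p_L\}+(1-\Prb\{q<p_L\})(1-p_L)^k$ is nondecreasing in $\Prb\{q<p_L\}$. Substituting the upper bound $\beta$ gives \cref{eq:population-miss}, valid on the same calibration event $\mathcal E$.
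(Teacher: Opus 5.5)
Your proof is correct and follows the paper's route. Step 1 is exactly the paper's first beta inversion, a Clopper--Pearson upper bound on the marginal low-count probability $A=\E g(q)$. Your Markov step $\Prb\{q<p\}\,g(p)\le A$ is the paper's mixing-domination inequality $\Prb(S_i\le c)\ge\beta g_{m,c}(p)$; the paper packages it as inversion of the nested composite nulls $\Prb(q_i\le p)\ge\beta$, whereas you use a single confidence event $\mathcal E$ followed by a deterministic transfer, which is slightly cleaner, and your Step 3 matches the paper's split into covered and uncovered mass.
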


The outer confidence probability is over the joint draw of the iid calibration checkpoints and their fresh policy-call randomness. Conditional on realized checkpoints, the count is Poisson--binomial; after marginalizing over the iid checkpoint draw, the low-count indicators are iid Bernoulli with the common marginal probability inverted in Appendix~\ref{app:coverage-details}.

The proof, given in Appendix~\ref{app:coverage-details}, inverts nested composite binomial-mixture tests while allowing heterogeneous per-checkpoint detection probabilities. Policy selection must be separated from calibration or corrected. In the registered least-favorable stress test, a policy frozen before calibration attains $95.3\%$ confidence coverage; selecting the best of 20 policies on the same calibration data drops coverage to $40.5\%$, while Bonferroni correction restores $98.1\%$. The card controls population false negatives for the frozen policy. Deterministic coverage mechanisms supply pointwise floors on~$J^\star$.

\begin{corollary}[Hybrid deterministic--statistical coverage]
\label{cor:hybrid-coverage}
Suppose a proof-bearing branch has zero misses on its certified domain and at most a fraction $\pi_{\mathrm{res}}$ of failures are routed to a residual branch satisfying \cref{eq:population-coverage}. Then
\[
\Prb(\text{portfolio misses a future failure})
\le
\pi_{\mathrm{res}}\bigl[\beta+(1-\beta)(1-p_L)^k\bigr].
\]
\end{corollary}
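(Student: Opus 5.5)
The plan is to condition on the failure event and split it by which branch receives the failure. Let $F$ denote a future failure checkpoint, drawn from the failure population covered by the portfolio. Let $R$ be the event that the routing rule sends $F$ to the residual branch. By hypothesis $\Prb(R)\le\pi_{\mathrm{res}}$, and on the complement $R^c$ the failure lies in the certified domain of the proof-bearing branch. By the law of total probability,
\[
\Prb(\text{miss})=\Prb(\text{miss}\mid R^c)\Prb(R^c)+\Prb(\text{miss}\mid R)\Prb(R).
\]
The first term vanishes because the proof-bearing branch has zero misses on its certified domain. The deterministic bracket of \cref{thm:adapter-coverage} or an exact solver produces a sound improving witness whenever the checkpoint is suboptimal there, so the conditional miss probability is zero.

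For the second term I would invoke \cref{thm:population-coverage} applied to the residual population. Here $P_{\mathcal F}$ is taken as the conditional law of failures given $R$. On the calibration event of probability at least $1-\alpha$, inequality \cref{eq:population-miss} gives
\[
\Prb(\text{miss}\mid R)\le\beta+(1-\beta)(1-p_L)^k.
\]
Multiplying by $\Prb(R)\le\pi_{\mathrm{res}}$ then yields the displayed bound, holding with the same calibration confidence $1-\alpha$.

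The main obstacle is bookkeeping rather than analysis. The residual branch's calibration must be carried out on the same conditional population $P_{\mathcal F}$ as the failures actually routed to it. The routing rule must also be frozen before calibration, so that the iid and independence hypotheses of \cref{thm:population-coverage} apply to the routed law. In addition, the $k$ fresh calls at the future checkpoint must be conditionally independent of both the routing and the calibration. I would state these as standing hypotheses implicit in ``routed to a residual branch satisfying \cref{eq:population-coverage}''. I would also note that the bound is an upper bound using $\Prb(R)\le\pi_{\mathrm{res}}$ together with nonnegativity of the bracketed quantity, which holds since $\beta\in(0,1)$ and $p_L\in[0,1]$.
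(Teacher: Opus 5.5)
Your proof is correct and follows the paper's own justification: condition on the routing event, use zero misses on the proof-bearing branch, and apply \cref{eq:population-miss} to the routed residual population, with the whole residual miss bound multiplied by $\pi_{\mathrm{res}}$. One small caveat: citing \cref{thm:adapter-coverage} to justify zero misses is unnecessary, and a finitely run bracket does not guarantee it automatically, but this does not affect the argument because the corollary assumes zero misses directly.
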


The factor $\pi_{\mathrm{res}}$ multiplies the entire residual-branch miss probability because failures handled by the proof-bearing branch cannot be missed under the stated routing model.  \Cref{fig:population-coverage} reports both finite-trial strengthening and the failure caused by reusing calibration data for policy selection.

\begin{figure}[t]
\centering
\includegraphics[width=\linewidth]{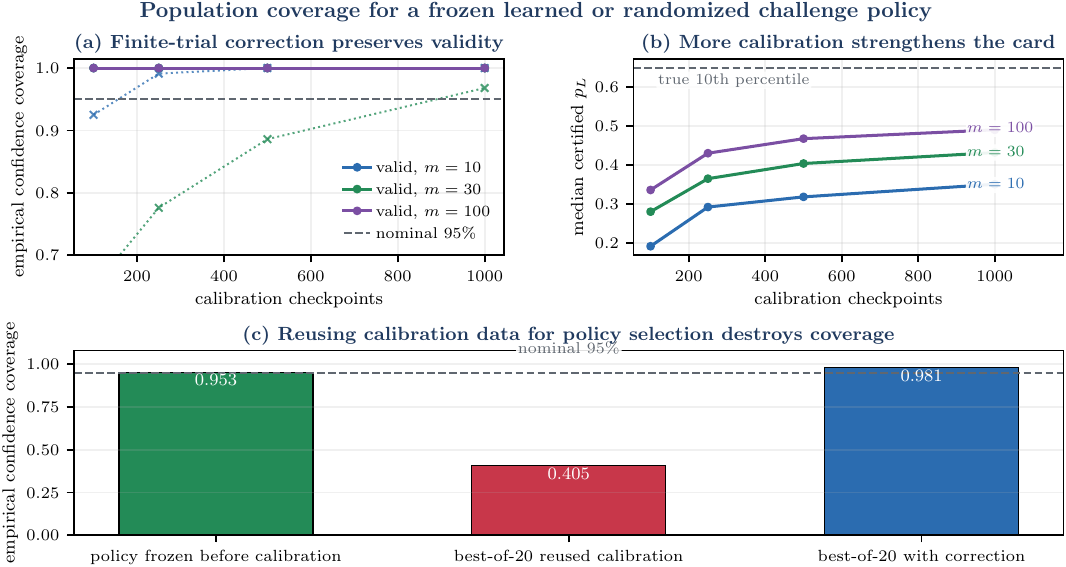}
\caption{Calibrated population coverage for a frozen randomized challenge policy. Finite-trial correction preserves nominal confidence and strengthens with additional checkpoints and calls. Development-data selection, fresh calibration, or a declared multiplicity correction preserves the confidence guarantee. The card bounds misses over the named checkpoint population; deterministic coverage supplies pointwise global-gap evidence.}
\label{fig:population-coverage}
\end{figure}

\subsection{Null-relative rarity under a declared fresh-null law}
\label{sec:green-rarity}

Deterministic certificates and null-relative rarity answer different questions. A complete lower-valued model proves an inequality for the audited empirical problem. A null calculation asks how unusual a frozen observed threshold would be under a declared random mechanism.

Let $\mathcal H_t$ contain the history used to select the reported checkpoint, suite, and threshold. Conditional on $\mathcal H_t$, draw a fresh complete model $W\sim\mathcal Q_t(\cdot\mid\mathcal H_t)$ independently of the observed training episode and define
\begin{equation}
F_{\mathcal Q_t}(b\mid\mathcal H_t)
=
\Prb\!\left(J(W)\le b\mid\mathcal H_t\right).
\label{eq:null-cdf}
\end{equation}
Three useful frozen thresholds are
\[
b_t^{\mathrm{gate}}=G_t+\tau_G,
\qquad
b_t^{\mathrm{depth}}=G_t,
\qquad
b_j^{\mathrm{stair}}=S_j+\tau_{\mathrm{hit}}.
\]
The retained depth below the reference is equivalently $g_t^{\mathrm{depth}}=J_{\mathrm{ref}}-G_t\ge0$. If $\bar p_t$ is a valid conditional upper bound on $F_{\mathcal Q_t}(b\mid\mathcal H_t)$ at one of these frozen thresholds, define
\begin{equation}
\mathcal R_{\mathcal Q,t}=-\log_{10}\bar p_t.
\label{eq:green-surprisal-score}
\end{equation}
Then an independent null replicate reaches at least that loss depth with probability at most $10^{-\mathcal R_{\mathcal Q,t}}$. This is a due-diligence statement under the named null, not a global-optimality certificate and not automatically a sequential p-value for the trajectory that selected the threshold.

A conditional block null fixes the remainder of the checkpoint and randomizes one predeclared block:
\begin{equation}
p_{r,t}
=
\Prb\!\left[
J(\theta_{-r,t},W_r)\le J(\theta_t)
\mid\mathcal H_t,\theta_{-r,t}
\right].
\label{eq:block-surprisal}
\end{equation}
The construction applies to a head, adapter, LoRA module, attention block, CNN stage, or router when the null law is fixed before the fresh replicate is drawn.

\begin{proposition}[Gaussian small-ball null]
\label{thm:gaussian-small-ball}
Let $z\sim\mathcal N(\mu,\Sigma)$, let $y$ be fixed, and set $D_z=\|z-y\|^2$. For every $b>0$,
\begin{equation}
\Prb(D_z\le b)
\le
\inf_{s>0}
\frac{\exp\!\left[sb-s(\mu-y)^\top(I+2s\Sigma)^{-1}(\mu-y)\right]}
{\sqrt{\det(I+2s\Sigma)}}.
\label{eq:gaussian-small-ball}
\end{equation}
If $\Sigma=\sigma^2I_d$ with $\sigma>0$, then $D_z/\sigma^2$ has a noncentral $\chi^2_d$ law with noncentrality $\|\mu-y\|^2/\sigma^2$, giving the exact probability.
\end{proposition}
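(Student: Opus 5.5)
The plan is a Chernoff (exponential Markov) bound on the lower tail, followed by an exact Gaussian moment-generating-function computation. For any $s>0$, the event $\{D_z\le b\}$ equals $\{e^{-sD_z}\ge e^{-sb}\}$. Markov's inequality then gives
\[
\Prb(D_z\le b)\le e^{sb}\,\E\bigl[e^{-s\|z-y\|^2}\bigr].
\]
Taking the infimum over $s>0$ yields \cref{eq:gaussian-small-ball}, once the expectation is identified with the displayed ratio.

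\textbf{Step 1: the expectation.} Write $z-y=m+\Sigma^{1/2}g$ with $m=\mu-y$ and $g\sim N(0,I)$. I will not assume $\Sigma$ is invertible; using the square root keeps the singular case covered. Then $\E[e^{-s\|m+\Sigma^{1/2}g\|^2}]$ is a Gaussian integral of the exponential of a negative-definite quadratic in $g$, so it is always finite for $s>0$. I would either complete the square or diagonalize: rotate by an orthogonal $O$ so that $\Sigma=O\Lambda O^\top$, and set $\tilde m=O^\top m$. The quantity then factorizes into independent one-dimensional terms
\[
\E\bigl[e^{-s(\tilde m_k+\sqrt{\lambda_k}g_k)^2}\bigr]
=(1+2s\lambda_k)^{-1/2}\exp\!\Bigl(-\frac{s\tilde m_k^2}{1+2s\lambda_k}\Bigr),
\]
which is the standard noncentral chi-square MGF evaluated at $-s$. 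Coordinates with $\lambda_k=0$ give $e^{-s\tilde m_k^2}$, consistent with the same formula. Multiplying over $k$ and rotating back gives $\det(I+2s\Sigma)^{-1/2}\exp[-s\,m^\top(I+2s\Sigma)^{-1}m]$. The identity $\sum_k\tilde m_k^2/(1+2s\lambda_k)=m^\top(I+2s\Sigma)^{-1}m$ holds because $I+2s\Sigma$ is diagonalized by $O$ with eigenvalues $1+2s\lambda_k>0$. This also shows the determinant is positive and the inverse exists for every $s>0$, with no invertibility assumption on $\Sigma$.

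\textbf{Step 2: the isotropic case.} When $\Sigma=\sigma^2I_d$, we have $(z-y)/\sigma\sim N((\mu-y)/\sigma,I_d)$. The squared norm of a $d$-dimensional Gaussian vector with identity covariance and mean $\nu$ is by definition noncentral $\chi^2_d(\|\nu\|^2)$. Hence $\Prb(D_z\le b)=F_{\chi^2_d(\|\mu-y\|^2/\sigma^2)}(b/\sigma^2)$ exactly. This step is definitional.

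\textbf{Main obstacle.} The only real care point is the bookkeeping in Step 1 when $\Sigma$ is singular; diagonalizing via $\Sigma^{1/2}$ handles it uniformly. I would also state explicitly that taking the infimum over $s>0$ preserves the inequality trivially, since each $s$ gives a valid upper bound.
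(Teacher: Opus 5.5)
Your proposal is correct and follows the paper's argument: Markov's inequality applied to $e^{-sD_z}$, exact evaluation of the Gaussian expectation, and then the infimum over $s>0$. Your diagonalization is the coordinatewise form of the paper's completing-the-square, the $\Sigma^{1/2}$ parametrization also covers singular $\Sigma$, and the isotropic noncentral-$\chi^2_d$ claim is, as you say, definitional.
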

\begin{proof}
For $s>0$, Markov's inequality gives $\Prb(D_z\le b)\le e^{sb}\mathbb E e^{-sD_z}$. Completing the square in the Gaussian integral yields the displayed expression; optimize over $s$.
\end{proof}

\begin{proposition}[Exact isotropic alignment null]
\label{prop:beta-alignment-null}
Let $Y\in\R^{q\times n}$ have independent standard Gaussian entries and be independent of a fixed rank-$r$ orthogonal projector $P$, with $0<r<n$. Then
\[
\alpha(Y,P)=\frac{\|YP\|_F^2}{\|Y\|_F^2}
\sim
\operatorname{Beta}\!\left(\frac{qr}{2},\frac{q(n-r)}{2}\right).
\]
\end{proposition}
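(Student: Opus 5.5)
The plan is to reduce the statement to the standard fact that a squared Gaussian norm splits into independent chi-square pieces along complementary orthogonal subspaces, followed by the Gamma--Beta identity.

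\textbf{Vectorize.} Since $P$ is a fixed rank-$r$ orthogonal projector on $\R^n$, choose an orthogonal matrix $O\in\R^{n\times n}$ whose first $r$ columns span $\range(P)$. Then $P=O\,\diag(I_r,0)\,O^\top$. Set $\widetilde Y=YO$. Because $Y$ is independent of $P$, conditioning on $P$ leaves the rows of $Y$ as iid $N(0,I_n)$. Rotation invariance of the standard Gaussian then makes $\widetilde Y$ again a $q\times n$ matrix of iid standard normals. By orthogonal invariance of the Frobenius norm,
\[
\|YP\|_F^2=\sum_{i\le q}\sum_{j\le r}\widetilde Y_{ij}^2,
\qquad
\|Y\|_F^2-\|YP\|_F^2=\|Y(I-P)\|_F^2=\sum_{i\le q}\sum_{j>r}\widetilde Y_{ij}^2.
\]

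\textbf{Identify the pieces.} The two sums use disjoint sets of independent entries. Hence $A:=\|YP\|_F^2\sim\chi^2_{qr}$ and $C:=\|Y(I-P)\|_F^2\sim\chi^2_{q(n-r)}$, and $A$ and $C$ are independent. The condition $0<r<n$ guarantees that both degrees of freedom are positive. In addition, $\|Y\|_F^2=A+C>0$ almost surely, so the ratio is well defined.

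\textbf{Conclude.} Write $\chi^2_k=\mathrm{Gamma}(k/2,\text{scale }2)$. For independent Gammas with a common scale, $A/(A+C)\sim\operatorname{Beta}(qr/2,\,q(n-r)/2)$. I would cite this as the standard result, or verify it directly by the change of variables $(A,C)\mapsto(A+C,\,A/(A+C))$, whose Jacobian is $A+C$; the joint density then factors into a Gamma density times a Beta density. This conditional law does not depend on $P$ beyond its rank, so the same law holds unconditionally.

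The only point needing care is the independence of $Y$ from $P$: it is exactly what justifies treating $P$ as fixed and applying rotation invariance. Everything else is routine.
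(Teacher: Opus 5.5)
Your proposal is correct and follows essentially the same route as the paper's proof: rotate so that $P=\operatorname{diag}(I_r,0)$, observe that the projected and orthogonal Frobenius energies are independent $\chi^2_{qr}$ and $\chi^2_{q(n-r)}$ variables, and apply the Gamma--Beta identity. You spell out the rotation-invariance, independence, and almost-sure positivity steps that the paper leaves implicit, which makes the argument more explicit without changing it.
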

\begin{proof}
Rotate coordinates so that $P=\operatorname{diag}(I_r,0)$. The projected and orthogonal energies are independent chi-square variables with $qr$ and $q(n-r)$ degrees of freedom; their ratio to the sum has the beta law.
\end{proof}

\Cref{fig:green-rarity} gives two worked null calculations: a Gaussian small-ball probability and an exact beta alignment tail. 
In consequential use, the null should represent the alternative explanation under examination, such as a matched-budget restart or randomized block policy.

\begin{figure}[t]
\centering
\includegraphics[width=\linewidth]{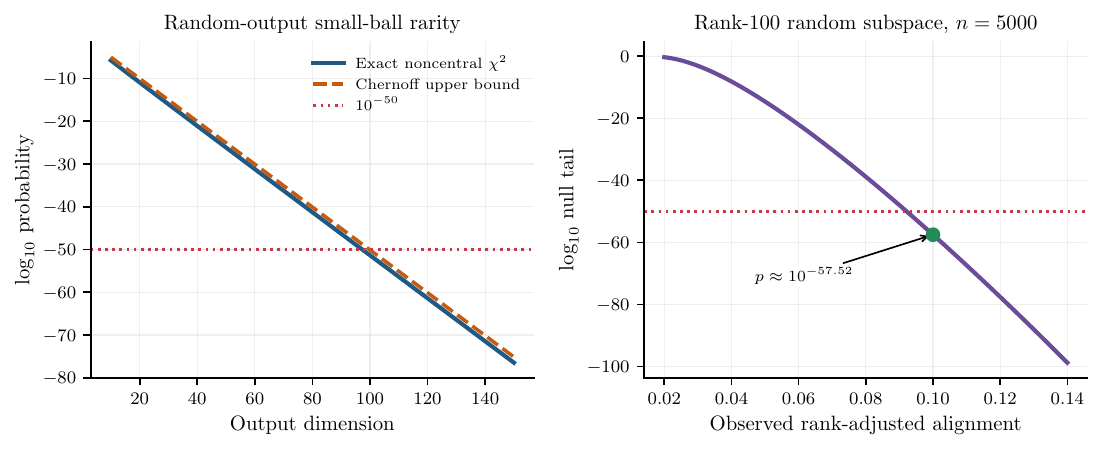}
\caption{Two worked null-relative rarity calculations. Left: the exact noncentral-$\chi^2$ lower-tail probability and its valid Gaussian Chernoff upper bound for the declared small-ball problem. Right: the exact beta tail for the declared rank-adjusted Gaussian alignment null. The model and threshold determine the statistical interpretation; the executable candidate and objective value carry the deterministic certificate.}
\label{fig:green-rarity}
\end{figure}

Repeated inspection across epochs, seeds, architectures, or policies requires a declared multiplicity treatment. A finite look set admits a family-wise correction; confidence sequences or e-processes give time-uniform alternatives when their premises hold \citep{howard2021confidence,ramdas2023game}. Permutation references apply only when a complete permutation null is executed \citep{ojala2010permutation}. The pathwise inequalities $J^\star\le G_t$ and $J^\star\le S_j$ require no statistical correction because every displayed value belongs to a stored feasible model.

\section{Representation Sufficiency from Paired Certificates}
\label{sec:representation}

An optimized decoder and an informative representation are different properties. If a new head improves the checkpoint while the representation is frozen, the missing value was already present in that representation but unused by the current decoder. Representation insufficiency is what remains after the best representation-preserving decoder has been accounted for. This section formalizes that subtraction with two nested predictive certificates.

Let a predictor factor as $f_{\phi,w}=g_w\circ h_\phi$ and write $Z=h_\phi(X)$. Fix a predictive risk $R$. Let
\begin{equation}
R_X^\star=\inf_{f\in\mathcal F_X}R(f(X)),
\qquad
R_Z^\star=\inf_{g\in\mathcal F_Z}R(g(Z)),
\label{eq:nested-risks}
\end{equation}
where every pullback $g\circ h_\phi$ belongs to $\mathcal F_X$. Thus $R_X^\star\le R_Z^\star$. At current decoder risk $r=R(g_w(Z))$,
\begin{equation}
r-R_X^\star
=
\underbrace{r-R_Z^\star}_{\text{decoder under-use}}
+
\underbrace{R_Z^\star-R_X^\star}_{D_{\mathrm{rep}}:\text{ representation deficit}}.
\label{eq:risk-decomp}
\end{equation}
The two optima become certifiable through paired bounds
\begin{equation}
L_X\le R_X^\star\le U_X,
\qquad
L_Z\le R_Z^\star\le U_Z,
\label{eq:paired-brackets}
\end{equation}
where upper endpoints are complete executable models and lower endpoints are valid for exactly the named classes, objective, data law, and regularizer.

\begin{theorem}[Sharp nested-certificate representation interval]
\label{thm:nested-rep}
Assume \cref{eq:paired-brackets} is compatible with $R_X^\star\le R_Z^\star$. Then
\begin{equation}
\boxed{
\max\{0,L_Z-U_X\}
\le D_{\mathrm{rep}}
\le U_Z-L_X.}
\label{eq:rep-interval}
\end{equation}
Both endpoints are sharp from the two marginal brackets and nesting alone.

Because the current decoder belongs to the representation-preserving class, tighten $U_Z\leftarrow\min\{U_Z,r\}$. Define the outer gap upper bound $\Gamma_X=r-L_X$ and the nonnegative executable representation-preserving improvement $\Delta_Z=r-U_Z$. Then
\begin{equation}
\boxed{D_{\mathrm{rep}}\le\Gamma_X-\Delta_Z.}
\label{eq:headroom-subtract}
\end{equation}
Thus demonstrated decoder headroom is subtracted from total predictive uncertainty, leaving the representation-deficit interval.
\end{theorem}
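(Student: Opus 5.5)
The argument is elementary interval arithmetic on the two nested brackets, followed by a sharpness construction. I would proceed in three steps.

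\textbf{Validity of \cref{eq:rep-interval}.} Since $D_{\mathrm{rep}}=R_Z^\star-R_X^\star$, the upper endpoint follows from $R_Z^\star\le U_Z$ and $R_X^\star\ge L_X$, which give $D_{\mathrm{rep}}\le U_Z-L_X$. For the lower endpoint, $R_Z^\star\ge L_Z$ and $R_X^\star\le U_X$ give $D_{\mathrm{rep}}\ge L_Z-U_X$. Nesting ($R_X^\star\le R_Z^\star$, which holds because every pullback $g\circ h_\phi$ lies in $\mathcal F_X$) gives $D_{\mathrm{rep}}\ge0$. Combining the two lower bounds yields the maximum.

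\textbf{Sharpness.} I read this as: for the information consisting of the two marginal brackets plus the nesting constraint, both endpoints are attained by some admissible pair $(R_X^\star,R_Z^\star)$. The feasible set is the rectangle $[L_X,U_X]\times[L_Z,U_Z]$ intersected with the half-plane $\{R_X\le R_Z\}$. The compatibility assumption says this set is nonempty, which requires $L_X\le U_Z$.

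For the upper endpoint, the point $(L_X,U_Z)$ is feasible because $L_X\le U_Z$, and it realizes $U_Z-L_X$.

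For the lower endpoint I split into two cases.
\begin{itemize}
\item If $L_Z\ge U_X$, take $(U_X,L_Z)$, which realizes $L_Z-U_X$.
\item Otherwise the intervals $[L_X,U_X]$ and $[L_Z,U_Z]$ overlap. Pick any common point $v$ in both intervals and take $(v,v)$, which realizes $0$. The overlap is nonempty because $L_Z<U_X$ and $L_X\le U_Z$ together force $\max\{L_X,L_Z\}\le\min\{U_X,U_Z\}$.
\end{itemize}

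The main obstacle is saying precisely what "sharp" quantifies over. I would state it as sharpness over all value pairs consistent with the brackets and nesting, that is, the endpoints cannot be improved using only these inputs. I would not claim it over realizable function classes.

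\textbf{The headroom subtraction \cref{eq:headroom-subtract}.} The current decoder $g_w$ belongs to $\mathcal F_Z$, so $R_Z^\star\le r$, and replacing $U_Z$ by $\min\{U_Z,r\}$ keeps the bracket valid. This makes $\Delta_Z=r-U_Z\ge0$. Then
\[
\Gamma_X-\Delta_Z=(r-L_X)-(r-U_Z)=U_Z-L_X,
\]
so \cref{eq:headroom-subtract} is just the upper endpoint of \cref{eq:rep-interval} rewritten with the tightened $U_Z$. Every step in this part is purely algebraic.
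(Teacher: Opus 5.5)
Your proof is correct and follows the paper's argument essentially step for step. You prove validity by combining the marginal bracket endpoints with nesting, prove sharpness with the same witness pairs $(L_X,U_Z)$ and $(U_X,L_Z)$ or a common point $(v,v)$ in the overlap case, and obtain the headroom bound from the identity $\Gamma_X-\Delta_Z=U_Z-L_X$. Your explicit check that compatibility means $L_X\le U_Z$, and that this forces a nonempty overlap when $L_Z<U_X$, fills in feasibility details the paper leaves implicit.
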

\begin{proof}
The upper bound uses $R_Z^\star\le U_Z$ and $R_X^\star\ge L_X$; the lower bound uses $R_Z^\star\ge L_Z$, $R_X^\star\le U_X$, and nesting. If the intervals overlap, $R_X^\star=R_Z^\star$ attains the zero lower endpoint; otherwise $(U_X,L_Z)$ attains $L_Z-U_X$. The pair $(L_X,U_Z)$ attains the upper endpoint. Substitution gives \cref{eq:headroom-subtract}.
\end{proof}

Let $R_{\mathrm{ref}}>R_X^\star$ be a fixed reference feasible in the representation-preserving class and define
\[
M_{\mathrm{rep}}=\frac{R_{\mathrm{ref}}-R_Z^\star}{R_{\mathrm{ref}}-R_X^\star}.
\]

\begin{corollary}[Representation-maturity interval]
\label{prop:maturity-interval}
Tighten $U_Z\leftarrow\min\{U_Z,R_{\mathrm{ref}}\}$ and assume $U_X<R_{\mathrm{ref}}$. Then
\begin{equation}
\boxed{
\frac{R_{\mathrm{ref}}-U_Z}{R_{\mathrm{ref}}-L_X}
\le M_{\mathrm{rep}}
\le
\min\left\{1,\frac{R_{\mathrm{ref}}-L_Z}{R_{\mathrm{ref}}-U_X}\right\}.}
\label{eq:maturity-interval}
\end{equation}
Both endpoints are sharp from the two marginal brackets and nesting alone.
\end{corollary}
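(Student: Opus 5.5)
The plan is to treat $M_{\mathrm{rep}}$ as a function of the two unknown optima and bound it over the feasible set that the brackets and nesting leave open. Set $a=R_{\mathrm{ref}}-R_Z^\star$ and $b=R_{\mathrm{ref}}-R_X^\star$, so $M_{\mathrm{rep}}=a/b$ with $b>0$ by assumption. The admissible pairs $(R_X^\star,R_Z^\star)$ are those with $L_X\le R_X^\star\le U_X$, $L_Z\le R_Z^\star\le U_Z$, and $R_X^\star\le R_Z^\star$. After the tightening $U_Z\leftarrow\min\{U_Z,R_{\mathrm{ref}}\}$ (valid because the reference is feasible in the representation-preserving class), we have $a\ge R_{\mathrm{ref}}-U_Z\ge0$. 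Together with $R_X^\star\le R_Z^\star$ this gives $0\le a\le b$.

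For the lower bound, decrease the numerator and increase the denominator. Use $R_Z^\star\le U_Z$ to get $a\ge R_{\mathrm{ref}}-U_Z\ge 0$, and $R_X^\star\ge L_X$ to get $b\le R_{\mathrm{ref}}-L_X$. Since the numerator bound is nonnegative, dividing by the larger positive denominator gives $M_{\mathrm{rep}}\ge (R_{\mathrm{ref}}-U_Z)/(R_{\mathrm{ref}}-L_X)$. This is the step where the nonnegativity from the tightening matters.

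For the upper bound, $M_{\mathrm{rep}}\le 1$ follows from nesting ($a\le b$). Also $a\le R_{\mathrm{ref}}-L_Z$ and $b\ge R_{\mathrm{ref}}-U_X>0$, where positivity uses the assumption $U_X<R_{\mathrm{ref}}$. If $R_{\mathrm{ref}}-L_Z\ge0$, the quotient bound follows directly. If $R_{\mathrm{ref}}-L_Z<0$, then $a<0$ would be forced, contradicting $a\ge0$, so that case is vacuous; equivalently, the min with $1$ still holds.

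Sharpness is the step needing care; it parallels the proof of \cref{thm:nested-rep}. For the lower endpoint, choose the pair $(R_X^\star,R_Z^\star)=(L_X,U_Z)$. This pair satisfies nesting whenever the brackets are compatible: under the compatibility assumption we may take $L_X\le U_Z$, since otherwise no admissible pair exists. For the upper endpoint there are two cases:
\begin{itemize}
\item If $L_Z\le U_X$, the intervals overlap. Choose a common value $R_X^\star=R_Z^\star$ in the overlap, which attains $M_{\mathrm{rep}}=1$. In this case $(R_{\mathrm{ref}}-L_Z)/(R_{\mathrm{ref}}-U_X)\ge1$, so the min equals $1$.
\item Otherwise choose $(U_X,L_Z)$, which satisfies nesting and attains the quotient. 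Here the quotient is below $1$.
\end{itemize}
I expect the main obstacle to be bookkeeping of boundary cases: ensuring the denominators are positive, and checking that the extremal pairs lie inside the tightened brackets. In particular, if the tightening makes $U_Z=R_{\mathrm{ref}}$, the lower endpoint becomes $0$ and is attained by $R_Z^\star=R_{\mathrm{ref}}$.
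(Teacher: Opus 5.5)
Your proposal is correct and follows the paper's own proof. The paper uses that $M_{\mathrm{rep}}$ is decreasing in $R_Z^\star$ and increasing in $R_X^\star$, which is what your numerator and denominator bounds express; you also make explicit the nonnegativity of the numerator, which the paper's monotonicity claim uses without stating it. For sharpness, the paper uses the same extremal pairs $(L_X,U_Z)$ and $(U_X,L_Z)$, and falls back to $R_X^\star=R_Z^\star$ (so $M_{\mathrm{rep}}=1$) when the brackets overlap.
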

\begin{proof}
The ratio is decreasing in $R_Z^\star$ and increasing in $R_X^\star$ whenever $R_X^\star<R_{\mathrm{ref}}$. The lower endpoint uses $(R_X^\star,R_Z^\star)=(L_X,U_Z)$. For the upper endpoint, $(U_X,L_Z)$ is feasible when $U_X\le L_Z$; otherwise nesting permits equality $R_X^\star=R_Z^\star$ and hence $M_{\mathrm{rep}}=1$. These choices also establish sharpness.
\end{proof}

\Cref{fig:information-sufficiency} visualizes the two independent brackets and the cross-difference that yields the representation-deficit interval.

\begin{figure}[t]
\centering
\includegraphics[width=0.96\linewidth]{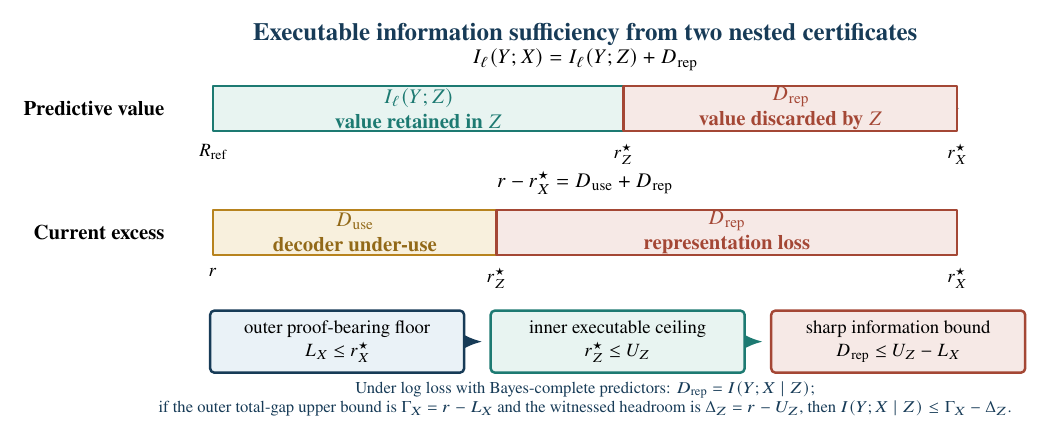}
\caption{Paired certificates isolate representation insufficiency. An outer bracket controls the best full-input predictor, while an inner representation-preserving bracket controls the best decoder on $Z$. Their cross-difference gives the sharp deficit interval. Under Bayes-complete log loss the same quantity is conditional mutual information; under squared loss it is conditional-mean predictive loss.}
\label{fig:information-sufficiency}
\end{figure}

\subsection{Log-loss information certificate}
\label{sec:log-info}

Assume $\mathcal Y$ is finite, $Y-X-Z$, risk is population natural-log loss, and both predictor classes are Bayes-complete. Then
\[
R_X^\star=H(Y\mid X),
\qquad
R_Z^\star=H(Y\mid Z),
\qquad
D_{\mathrm{rep}}=I(Y;X\mid Z).
\]
These identities are classical consequences of proper scoring and the Markov chain \citep{gneiting2007,reid2011information,williamson2024bridge}.

\begin{corollary}[Executable information-sufficiency certificate]
\label{cor:log-info}
Under the preceding assumptions,
\begin{equation}
\max\{0,L_Z-U_X\}
\le I(Y;X\mid Z)
\le U_Z-L_X.
\label{eq:cmi-interval}
\end{equation}
Suppose a valid outer population-risk certificate gives $r-H(Y\mid X)\le\Gamma_X$. If a population-valid representation-preserving challenger demonstrates improvement $\Delta_Z$ for the same risk, predictor classes, and event, then
\begin{equation}
I(Y;X\mid Z)\le\Gamma_X-\Delta_Z.
\label{eq:cmi-subtraction}
\end{equation}
With an input-ignorant Bayes reference and $I(Y;X)>0$, \cref{eq:maturity-interval} also bounds the retained information fraction
\[
\frac{I(Y;Z)}{I(Y;X)}.
\]
\end{corollary}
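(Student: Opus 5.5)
The plan is to reduce everything to \cref{thm:nested-rep} and \cref{prop:maturity-interval} by identifying the abstract optima with entropies under the stated assumptions. First I would verify the identifications. Under natural-log loss with finite $\mathcal Y$, properness of the log score gives that the Bayes-optimal predictor from any observation $V$ is $p(y\mid V)$, with risk $H(Y\mid V)$. Bayes-completeness of $\mathcal F_X$ and $\mathcal F_Z$ then gives $R_X^\star=H(Y\mid X)$ and $R_Z^\star=H(Y\mid Z)$. I also need the pullback condition and $R_X^\star\le R_Z^\star$; both follow since $Z$ is a function of $X$. The Markov chain $Y-X-Z$ then gives
\[
D_{\mathrm{rep}}=H(Y\mid Z)-H(Y\mid X)=H(Y\mid Z)-H(Y\mid X,Z)=I(Y;X\mid Z).
\]

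With these identities, \cref{eq:cmi-interval} is \cref{eq:rep-interval} verbatim. I would check only that the paired brackets \cref{eq:paired-brackets} are population-valid for the same risk, so that compatibility with nesting holds. For \cref{eq:cmi-subtraction}, I would take $L_X:=r-\Gamma_X$, a valid lower endpoint because $r-H(Y\mid X)\le\Gamma_X$. The challenger demonstrating improvement $\Delta_Z$ supplies $U_Z:=r-\Delta_Z\ge R_Z^\star$. Substituting into the upper endpoint $U_Z-L_X$ gives $\Gamma_X-\Delta_Z$, which is \cref{eq:headroom-subtract}.

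For the retained-information fraction, I would take $R_{\mathrm{ref}}=H(Y)$, which is the risk of the input-ignorant Bayes predictor $p(y)$. This predictor is feasible in the $Z$-class, since a constant decoder is Bayes-complete-admissible. Then $R_{\mathrm{ref}}-R_Z^\star=I(Y;Z)$ and $R_{\mathrm{ref}}-R_X^\star=I(Y;X)>0$, so $M_{\mathrm{rep}}=I(Y;Z)/I(Y;X)$. The hypothesis $R_{\mathrm{ref}}>R_X^\star$ of \cref{prop:maturity-interval} is exactly $I(Y;X)>0$, and that corollary applies once $U_X<H(Y)$ is assumed.

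The main obstacle is interpretive rather than computational. I must make sure the Bayes-complete classes actually contain the conditional laws (measurability, and finite log loss when probabilities vanish) and that the executable endpoints are population-level rather than empirical bounds. I would state these as the conditions under which the brackets are valid and otherwise defer to \cref{thm:nested-rep}.
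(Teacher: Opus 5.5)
Your proposal is correct and follows the paper's own argument, which gives no separate proof beyond the same reduction: under log loss, Bayes-completeness and $Y-X-Z$, the paper identifies $R_X^\star=H(Y\mid X)$, $R_Z^\star=H(Y\mid Z)$ and $D_{\mathrm{rep}}=I(Y;X\mid Z)$, then reads all three claims off \cref{thm:nested-rep} and \cref{prop:maturity-interval} with $R_{\mathrm{ref}}=H(Y)$. Your explicit substitutions $L_X=r-\Gamma_X$ and $U_Z=r-\Delta_Z$, together with the side condition $U_X<H(Y)$, are exactly what that reduction requires.
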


For restricted but nested decoder families, \cref{thm:nested-rep} remains exact for class-accessible predictive value. A Shannon interpretation adds explicit approximation errors between restricted optima and Bayes risks, and an empirical certificate acquires population meaning through a stated population lift. Finite-sample log-loss bounds use a declared probability floor or an appropriate tail theorem. Proper-score and finite-sample details appear in Appendix~\ref{app:information-details}; \cref{fig:crossfit} shows why cross-fitting and selectivity controls are essential when a representation reaches full sample rank.

\subsection{Squared-loss information analogue}
\label{sec:squared-info}

Assume $Y$ is square-integrable in a Hilbert space and predictor classes are unrestricted under squared loss. Let
\[
\mu_X=\E[Y\mid X],
\qquad
\mu_Z=\E[Y\mid Z].
\]

\begin{corollary}[Conditional-mean representation deficit]
\label{cor:squared}
For every measurable decoder $g(Z)$,
\begin{equation}
\E\norm{Y-g(Z)}^2-\E\norm{Y-\mu_X}^2
=
\underbrace{\E\norm{g(Z)-\mu_Z}^2}_{\text{decoder under-use}}
+
\underbrace{\E\norm{\mu_X-\mu_Z}^2}_{D_{\mathrm{rep}}}.
\label{eq:squared-decomp}
\end{equation}
Consequently, the sharp nested interval \cref{eq:rep-interval} bounds
\begin{equation}
D_{\mathrm{rep}}=\E\norm{\E[Y\mid X]-\E[Y\mid Z]}^2.
\label{eq:conditional-mean-deficit}
\end{equation}
For an objective using one-half squared loss, every term is multiplied by one-half.
\end{corollary}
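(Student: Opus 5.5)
The plan is to establish the pointwise decomposition \cref{eq:squared-decomp} by two orthogonal (Pythagorean) splits in $L^2$, and then read off \cref{eq:conditional-mean-deficit} by identifying the two unrestricted optima with the terms of that decomposition and invoking \cref{thm:nested-rep}.

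\emph{First split.} Fix a measurable $g(Z)$ with $\E\norm{g(Z)}^2<\infty$; if this fails, both sides are $+\infty$ and there is nothing to prove. Since $Z=h_\phi(X)$ is $\sigma(X)$-measurable, $g(Z)-\mu_X$ is $\sigma(X)$-measurable. The residual $Y-\mu_X$ is orthogonal in $L^2$ to every square-integrable $\sigma(X)$-measurable variable, by the defining property of conditional expectation. Expanding $\E\norm{(Y-\mu_X)+(\mu_X-g(Z))}^2$ therefore gives
\[
\E\norm{Y-g(Z)}^2-\E\norm{Y-\mu_X}^2=\E\norm{\mu_X-g(Z)}^2 .
\]

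\emph{Second split.} Write $\mu_X-g(Z)=(\mu_X-\mu_Z)+(\mu_Z-g(Z))$. The second summand is $\sigma(Z)$-measurable. By the tower property, $\E[\mu_X\mid Z]=\E[\E[Y\mid X]\mid Z]=\mu_Z$, because $\sigma(Z)\subseteq\sigma(X)$. Hence $\mu_X-\mu_Z$ is orthogonal to all square-integrable $\sigma(Z)$-measurable variables, the cross term vanishes, and we obtain
\[
\E\norm{\mu_X-g(Z)}^2=\E\norm{g(Z)-\mu_Z}^2+\E\norm{\mu_X-\mu_Z}^2 .
\]
Combining the two splits yields \cref{eq:squared-decomp}. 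In a Hilbert space the inner products are taken in the Bochner sense. Square-integrability of $Y$ makes $\mu_X$ and $\mu_Z$ well defined and square-integrable, which is all the argument needs.

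\emph{Identifying the deficit.} For unrestricted classes, $R_X^\star=\E\norm{Y-\mu_X}^2$: apply the first split with an arbitrary $\sigma(X)$-measurable predictor in place of $g(Z)$. Minimizing \cref{eq:squared-decomp} over $g$ shows $R_Z^\star-R_X^\star=\E\norm{\mu_X-\mu_Z}^2$, attained at $g=\mu_Z$. This matches the definition $D_{\mathrm{rep}}=R_Z^\star-R_X^\star$ in \cref{eq:risk-decomp}, and it also identifies the first term of \cref{eq:squared-decomp} with the decoder under-use $r-R_Z^\star$. The nesting hypothesis of \cref{thm:nested-rep} holds because pullbacks $g\circ h_\phi$ are measurable functions of $X$. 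So the interval \cref{eq:rep-interval} applies verbatim and bounds \cref{eq:conditional-mean-deficit}.

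For a one-half squared loss, every identity is linear in the loss and simply scales by $1/2$. The only point needing care is the measurability and integrability bookkeeping: the tower property for $\sigma(Z)\subseteq\sigma(X)$ and the existence of $\mu_X,\mu_Z$ as $L^2$ projections. Both are standard once $\E\norm{Y}^2<\infty$.
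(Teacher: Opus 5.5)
Your proof is correct and takes essentially the paper's route. The paper writes $Y-g=(Y-\mu_X)+(\mu_X-\mu_Z)+(\mu_Z-g)$ and invokes pairwise orthogonality in one step, while you carry out the same orthogonal split in two stages. Your explicit identification of $R_X^\star$, $R_Z^\star$, and the decoder under-use, together with the integrability bookkeeping, fills in details the paper leaves implicit.
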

\begin{proof}
Write $Y-g=(Y-\mu_X)+(\mu_X-\mu_Z)+(\mu_Z-g)$. Conditional-expectation orthogonality makes the three components pairwise orthogonal in $L_2$, giving Pythagoras.
\end{proof}

For the denoising experiments, this quantity measures the target-relevant conditional-mean value absent from the representation.

\subsection{Same-target bounded decisions}

The log-loss information certificate also controls any bounded decision problem about the same target under the same population.

\begin{theorem}[Bounded same-target decision transfer]
\label{thm:decision-transfer}
Assume regular conditional laws $P_{Y\mid X}$ and $P_{Y\mid Z}$ exist, $Y-X-Z$, and
\[
I(Y;X\mid Z)\le\Gamma<\infty.
\]
Let the action space be standard Borel, let $c(a,y)\in[0,B]$ be jointly measurable, and assume measurable $\varepsilon$-optimal decision rules exist (equivalently, the conditional Bayes envelope is measurable). Let $\mathcal R_c(X)$ and $\mathcal R_c(Z)$ be the corresponding Bayes risks when decisions may use $X$ and $Z$, respectively. With total variation defined by
$\operatorname{TV}(P,Q)=\sup_A|P(A)-Q(A)|$,
\begin{equation}
0\le\mathcal R_c(Z)-\mathcal R_c(X)
\le B\sqrt{\Gamma/2}.
\label{eq:decision-transfer}
\end{equation}
\end{theorem}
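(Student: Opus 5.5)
The argument has three parts: a lower bound from garbling, an upper bound that couples a $Z$-decision to the $X$-posterior, and a conversion of conditional mutual information into a total-variation budget.

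\textbf{Lower bound.} Since $Y-X-Z$, we may realize $Z$ from $X$ by a kernel $P_{Z\mid X}$, possibly with independent randomness. Any measurable $Z$-rule $a(Z)$ therefore induces a randomized $X$-rule with the same joint law of $(a,Y)$. Randomization cannot beat the best measurable $X$-rule, because the conditional Bayes envelope is attained up to $\varepsilon$ by measurable rules. Hence $\mathcal R_c(X)\le\mathcal R_c(Z)$.

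\textbf{Coupling step for the upper bound.} Fix $\varepsilon>0$ and take a measurable $\varepsilon$-optimal rule $a_Z$ for the posterior $P_{Y\mid Z}$, so that
\[
\int c(a_Z(z),y)\,P_{Y\mid Z=z}(dy)\le \inf_a\int c(a,y)\,P_{Y\mid Z=z}(dy)+\varepsilon .
\]
Use $a_Z(Z)$ as an $X$-measurable-after-randomization decision and compare its risk with the $X$-Bayes rule conditionally on $(X,Z)$. By the Markov chain, $P_{Y\mid X,Z}=P_{Y\mid X}$. For any fixed action $a$ and $c\in[0,B]$,
\[
\Bigl|\int c(a,y)\,P_{Y\mid X}(dy)-\int c(a,y)\,P_{Y\mid Z}(dy)\Bigr|\le B\,\operatorname{TV}(P_{Y\mid X},P_{Y\mid Z}).
\]
Write $\rho(\mu)=\inf_a\int c(a,y)\,\mu(dy)$ for the Bayes envelope. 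This is an infimum of maps that are $B$-Lipschitz in TV, so $\rho$ is itself $B$-Lipschitz in TV. By the tower property, $\mathcal R_c(Z)=\E[\rho(P_{Y\mid Z})]$ up to $\varepsilon$, and $\mathcal R_c(X)=\E[\rho(P_{Y\mid X})]$. Conditioning on $(X,Z)$ therefore gives
\[
\mathcal R_c(Z)-\mathcal R_c(X)\le B\,\E\bigl[\operatorname{TV}(P_{Y\mid X},P_{Y\mid Z})\bigr]+\varepsilon .
\]

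\textbf{From TV to mutual information.} Conditional mutual information has the representation
\[
I(Y;X\mid Z)=\E\bigl[\KL(P_{Y\mid X,Z}\,\|\,P_{Y\mid Z})\bigr]=\E\bigl[\KL(P_{Y\mid X}\,\|\,P_{Y\mid Z})\bigr],
\]
where the second equality again uses the Markov chain. Pinsker's inequality (natural logs, TV as sup over events) gives $\operatorname{TV}\le\sqrt{\KL/2}$ pointwise. Jensen's inequality for the concave square root then yields $\E\operatorname{TV}\le\sqrt{I/2}\le\sqrt{\Gamma/2}$. Letting $\varepsilon\to0$ gives the stated bound \eqref{eq:decision-transfer}.

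\textbf{Main obstacle: measurability.} The delicate step is verifying that the plug-in decision and the envelope $\rho$ are measurable, so that $\mathcal R_c(Z)=\E\rho(P_{Y\mid Z})$ holds up to $\varepsilon$. I would handle this through the standing assumption on measurable $\varepsilon$-optimal rules and the existence of regular conditional laws. The remaining steps are inequalities between conditional expectations.
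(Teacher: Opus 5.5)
Your proposal is correct and follows essentially the same route as the paper. The left inequality comes from the fact that a garbled observation cannot lower Bayes risk. The upper bound comes from the $B$-Lipschitz property of the Bayes envelope $\rho$ in total variation, which gives $\mathcal R_c(Z)-\mathcal R_c(X)\le B\,\E\operatorname{TV}(P_{Y\mid X},P_{Y\mid Z})$, followed by Pinsker and Jensen with the Markov-chain identity $\E\KL(P_{Y\mid X}\|P_{Y\mid Z})=I(Y;X\mid Z)$. The ``coupling'' framing around $a_Z(Z)$ is not needed, since the pointwise envelope comparison already does the work. Deferring measurability to the standing hypothesis matches how the paper handles it.
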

\begin{proof}
More information cannot increase Bayes risk, giving the left inequality. The Bayes envelope of a $[0,B]$-valued loss is $B$-Lipschitz in total variation. Hence
\[
\mathcal R_c(Z)-\mathcal R_c(X)
\le B\E\operatorname{TV}(P_{Y\mid X},P_{Y\mid Z}).
\]
Pinsker's inequality and Jensen's inequality give \citep{pinsker1964}
\[
\E\operatorname{TV}(P_{Y\mid X},P_{Y\mid Z})
\le\sqrt{\tfrac12\E\KL(P_{Y\mid X}\|P_{Y\mid Z})}
=\sqrt{I(Y;X\mid Z)/2}.
\]
\end{proof}

The theorem applies to the same target and distribution used in its conditional-information premise. A protected audit supplies the separate intended-task channel.

\section{Trainer Attainability, Closed-Loop Bounds, and Task Adequacy}
\label{sec:tracking}

After current Green, a stronger policy may construct a feasible target value $S<J(\theta_{\mathrm{G}})$. Two branches answer different questions. Direct adoption followed by current-state recertification tests actionability and renewed current-suite passage along the intervention path. Continuing the declared trainer toward $S$ tests attainability. Neither should be mislabeled as the other.

Every feasible stair yields the exact decomposition
\begin{equation}
J(\theta)-J^\star
\le
\underbrace{[J(\theta)-S]_+}_{\epsilon_T:\text{ tracking}}
+
\underbrace{S-J^\star}_{\epsilon_S:\text{ challenge strength}}.
\label{eq:tracking-strength}
\end{equation}
The new power and spectral results control $\epsilon_S$; a trainer theorem or measured continuation controls~$\epsilon_T$.

\begin{theorem}[Finite-time and target-relative PL stair attainment]
\label{thm:pl-attainment}
Suppose $J$ has $L$-Lipschitz gradient and gradient descent uses
\[
\theta_{k+1}=\theta_k-\gamma\nabla J(\theta_k),
\qquad 0<\gamma<2/L.
\]
Let $J_0>S+\varepsilon$, and suppose a compact pre-hit region $\mathcal R_{S,\varepsilon}$ contains every iterate before $J\le S+\varepsilon$. Define
\[
g_{S,\varepsilon}
=
\inf_{\theta\in\mathcal R_{S,\varepsilon}}\|\nabla J(\theta)\|,
\qquad
 a_\gamma=\gamma(1-L\gamma/2).
\]
If $g_{S,\varepsilon}>0$, then the stair is reached within
\begin{equation}
N_{S,\varepsilon}
\le
\left\lceil
\frac{J_0-S-\varepsilon}{a_\gamma g_{S,\varepsilon}^2}
\right\rceil
\label{eq:finite-hit}
\end{equation}
additional iterations. If instead
\[
\|\nabla J(\theta)\|^2
\ge
2\mu_S[J(\theta)-S]
\]
holds on the pre-hit band, $0<\gamma\le1/L$, and $0<\mu_S\gamma\le1$, then
\begin{equation}
J(\theta_n)-S
\le
(1-\mu_S\gamma)^n[J_0-S]
\label{eq:pl-hit}
\end{equation}
until the first hit.
\end{theorem}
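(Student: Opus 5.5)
The plan is to build both parts on the standard descent lemma for $L$-smooth functions. For $\theta_{k+1}=\theta_k-\gamma\nabla J(\theta_k)$, $L$-Lipschitz gradients give
\[
J(\theta_{k+1})\le J(\theta_k)-\gamma\Bigl(1-\tfrac{L\gamma}{2}\Bigr)\|\nabla J(\theta_k)\|^2
=J(\theta_k)-a_\gamma\|\nabla J(\theta_k)\|^2 .
\]
Here $a_\gamma>0$ exactly because $0<\gamma<2/L$. This single inequality drives both the finite-time hitting bound and the linear rate.

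For \cref{eq:finite-hit}, let $N$ be the first index with $J(\theta_N)\le S+\varepsilon$. Suppose no hit occurs during the first $n$ steps. Then $\theta_0,\ldots,\theta_{n-1}$ are all pre-hit iterates and so lie in $\mathcal R_{S,\varepsilon}$. By definition of the infimum, each of them satisfies $\|\nabla J(\theta_k)\|\ge g_{S,\varepsilon}$.

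Telescoping the descent lemma then gives
\[
S+\varepsilon<J(\theta_n)\le J_0-n\,a_\gamma g_{S,\varepsilon}^2 .
\]
This forces $n<(J_0-S-\varepsilon)/(a_\gamma g_{S,\varepsilon}^2)$. Taking $n$ equal to the displayed ceiling therefore contradicts the no-hit assumption, which yields the bound. Compactness is needed only to keep the region well defined; the hypothesis $g_{S,\varepsilon}>0$ is what makes the bound finite.

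For the PL part, take $\gamma\le1/L$. Then $1-L\gamma/2\ge1/2$, so $a_\gamma\ge\gamma/2$. While the iterate is still in the pre-hit band, combine the descent lemma with the target-relative PL inequality $\|\nabla J\|^2\ge2\mu_S[J-S]$:
\[
J(\theta_{k+1})-S\le J(\theta_k)-S-\tfrac{\gamma}{2}\cdot 2\mu_S[J(\theta_k)-S]=(1-\mu_S\gamma)[J(\theta_k)-S].
\]
The factor $1-\mu_S\gamma$ lies in $[0,1)$ because $0<\mu_S\gamma\le1$. Inducting up to the first hit gives \cref{eq:pl-hit}.

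The main subtlety is the bookkeeping of where each hypothesis is allowed to hold. The PL inequality is assumed only on the pre-hit band, so each inductive step may use it only at iterates $\theta_k$ with $J(\theta_k)>S+\varepsilon$ (or $>S$, depending on how the band is declared). I would state this explicitly so that the recursion is applied only before the hit time. Likewise, in the first part, membership in $\mathcal R_{S,\varepsilon}$ is guaranteed only for pre-hit iterates, so the telescoping must stop at the first hit.
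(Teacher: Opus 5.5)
Your proposal is correct and follows the paper's proof. It uses the descent inequality $J(\theta_{k+1})\le J(\theta_k)-a_\gamma\|\nabla J(\theta_k)\|^2$, a uniform per-step decrease of $a_\gamma g_{S,\varepsilon}^2$ on the pre-hit region for \cref{eq:finite-hit}, and $a_\gamma\ge\gamma/2$ combined with the target-relative PL inequality and iterated for \cref{eq:pl-hit}. One small correction to your aside: compactness is not what makes the region well defined. Together with continuity of $\nabla J$, it would make $g_{S,\varepsilon}>0$ automatic whenever the gradient does not vanish there; since positivity is assumed, it plays no role in your argument.
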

\begin{proof}
Smoothness gives $J(\theta_{k+1})\le J(\theta_k)-a_\gamma\|\nabla J(\theta_k)\|^2$. The compact-region lower bound yields a fixed decrease before the hit and hence \cref{eq:finite-hit}. Under the target-relative PL inequality and $\gamma\le1/L$, the descent lemma gives a decrease of at least $(\gamma/2)\|\nabla J\|^2$; substitution and iteration yield \cref{eq:pl-hit}.
\end{proof}

The hypotheses concern the region traversed before the active stair is reached. Structured regimes can verify them analytically; Appendix~\ref{app:attainability-plugins} gives a stopped conditional-drift theorem and a noisy-SGD instantiation with an explicit tracking floor. A measured residual or a finite-budget barrier record provides the operational alternative when an analytic rate is unavailable.

\begin{proposition}[Closed-loop strength--tracking recurrence]
\label{thm:strength-tracking}
At event $j$, let $x_j=J(\theta_j)-J^\star$. Suppose the challenge returns a feasible stair satisfying
\[
S_{j+1}-J^\star\le\alpha x_j+\epsilon^C_j,
\qquad 0\le\alpha<1,
\]
and the next attained or terminal checkpoint satisfies
\[
J(\theta_{j+1})\le S_{j+1}+\epsilon^T_{j+1}.
\]
Then
\begin{equation}
x_{j+1}\le\alpha x_j+\epsilon^C_j+\epsilon^T_{j+1},
\label{eq:tracking-recurrence}
\end{equation}
and
\[
x_j\le\alpha^jx_0+
\sum_{i=0}^{j-1}\alpha^{j-1-i}(\epsilon^C_i+\epsilon^T_{i+1}).
\]
If both error sequences vanish, $x_j\to0$; if they are bounded by $\bar\epsilon_C$ and $\bar\epsilon_T$, the limsup is at most $(\bar\epsilon_C+\bar\epsilon_T)/(1-\alpha)$.
\end{proposition}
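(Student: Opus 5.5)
The argument is a one-step chaining of the two hypotheses followed by unrolling a linear recurrence. First, I subtract $J^\star$ from the tracking hypothesis $J(\theta_{j+1})\le S_{j+1}+\epsilon^T_{j+1}$, which gives $x_{j+1}\le (S_{j+1}-J^\star)+\epsilon^T_{j+1}$. Then I insert the strength hypothesis $S_{j+1}-J^\star\le\alpha x_j+\epsilon^C_j$ to obtain \cref{eq:tracking-recurrence} directly. This is exactly the decomposition \cref{eq:tracking-strength} applied at the stair $S_{j+1}$; the positive part there is not needed because we only use an upper bound.

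Second, I unroll the recurrence by induction on $j$. The base case $j=0$ is trivial. For the inductive step, multiplying the hypothesis for $x_j$ by $\alpha\ge0$ preserves the inequality, and adding $\epsilon^C_j+\epsilon^T_{j+1}$ produces the stated geometric-sum bound with the index shifted by one. Nonnegativity of $\alpha$ is the only property needed here, and it is assumed.

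Third, for the asymptotics, write $\epsilon_i=\epsilon^C_i+\epsilon^T_{i+1}$. The term $\alpha^jx_0\to0$ since $\alpha<1$. The convolution $\sum_{i<j}\alpha^{j-1-i}\epsilon_i$ is handled by a standard Toeplitz/split argument. Given $\eta>0$, choose $N$ with $\epsilon_i\le\eta$ for $i\ge N$. The tail is then at most $\eta/(1-\alpha)$. The head is at most $\alpha^{j-N}\sum_{i<N}\alpha^{N-1-i}\epsilon_i$, which tends to $0$. Hence $\limsup x_j\le\eta/(1-\alpha)$ for every $\eta>0$, and so $\limsup x_j\le0$. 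Since every $\theta_j$ is feasible, $x_j\ge0$, which gives $x_j\to0$. In the bounded case, replacing each $\epsilon_i$ by $\bar\epsilon_C+\bar\epsilon_T$ and summing the geometric series gives the limsup bound $(\bar\epsilon_C+\bar\epsilon_T)/(1-\alpha)$.

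The only step needing care is the vanishing-error limit. There I will make sure that "vanish" means $\epsilon_i\to0$ (not summability), and that the split argument above covers it. Otherwise the proof is routine.
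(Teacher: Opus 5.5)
Your proposal is correct and follows the paper's proof, which simply combines the two hypotheses and iterates the affine recurrence. Your split argument for the vanishing-error limit, and your use of feasibility ($x_j\ge0$) to upgrade $\limsup_j x_j\le0$ to $x_j\to0$, fill in details that the paper leaves implicit.
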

\begin{proof}
Combine the two assumed inequalities and iterate the affine recurrence.
\end{proof}

A power law gives the premise directly. If the budgeted challenge achieves $J(y_j)\le b_{B_j}(\theta_j)+\epsilon_j^C$, then
\begin{equation}
x_{j+1}
\le x_j-\Psi(B_j,x_j)+\epsilon_j^C+\epsilon_{j+1}^T.
\label{eq:power-dynamics}
\end{equation}
Linear power $\Psi(B,s)\ge cs$ gives geometric contraction to an error floor; polynomial power gives the corresponding sublinear rate. Exact trajectory prediction still depends on the Bellman endpoint operator.

\subsection{Task adequacy remains a separate bridge}

Let $L_{\mathrm{task}}$ be a population or protected-task risk on the same comparison class $\Theta$. Suppose, on an event of probability at least $1-\delta$,
\begin{equation}
\sup_{\theta\in\Theta}|L_{\mathrm{task}}(\theta)-J(\theta)|\le\epsilon_G.
\label{eq:uniform-transfer}
\end{equation}

\begin{theorem}[Optimization strength, tracking, and task adequacy]
\label{thm:task-transfer}
If a feasible stair satisfies $S-J^\star\le\epsilon_S$ and a checkpoint satisfies $[J(\theta)-S]_+\le\epsilon_T$, then on \cref{eq:uniform-transfer},
\begin{equation}
L_{\mathrm{task}}(\theta)-\inf_{\vartheta\in\Theta}L_{\mathrm{task}}(\vartheta)
\le\epsilon_S+\epsilon_T+2\epsilon_G.
\label{eq:task-bound}
\end{equation}
The infima need not be attained.
\end{theorem}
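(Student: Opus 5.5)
The plan is to decompose the task excess risk into three pieces, each controlled by one hypothesis, and to work throughout with $\eta$-approximate minimizers so that attainment of either infimum is never used. Write $L^\star_{\mathrm{task}}=\inf_{\vartheta\in\Theta}L_{\mathrm{task}}(\vartheta)$ and $J^\star=\inf_{\vartheta\in\Theta}J(\vartheta)$, and work on the event \cref{eq:uniform-transfer}.

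\textbf{Step 1: transfer from task risk to the certified objective.} For any $\theta\in\Theta$, the uniform bound gives $L_{\mathrm{task}}(\theta)\le J(\theta)+\epsilon_G$.

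\textbf{Step 2: compare the two infima.} For any $\eta>0$, pick $\vartheta_\eta\in\Theta$ with $L_{\mathrm{task}}(\vartheta_\eta)\le L^\star_{\mathrm{task}}+\eta$. This point exists because $L^\star_{\mathrm{task}}$ is an infimum over the same class, and it is finite because $J^\star\in\mathbb R$ and the uniform transfer holds. Applying the uniform bound at $\vartheta_\eta$ gives
\[
J^\star\le J(\vartheta_\eta)\le L_{\mathrm{task}}(\vartheta_\eta)+\epsilon_G\le L^\star_{\mathrm{task}}+\eta+\epsilon_G .
\]
Letting $\eta\downarrow0$ yields $J^\star\le L^\star_{\mathrm{task}}+\epsilon_G$. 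This limiting argument is the only point needing care, since it is what avoids attainment.

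\textbf{Step 3: bound the certified gap.} Use the exact decomposition \cref{eq:tracking-strength}: $J(\theta)-J^\star\le[J(\theta)-S]_++(S-J^\star)\le\epsilon_T+\epsilon_S$. Because the stair $S$ is the value of a feasible model, $S\ge J^\star$, so the strength term $S-J^\star$ is nonnegative and the decomposition is valid.

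\textbf{Chaining.} Combining the three steps,
\[
L_{\mathrm{task}}(\theta)\le J(\theta)+\epsilon_G\le J^\star+\epsilon_S+\epsilon_T+\epsilon_G\le L^\star_{\mathrm{task}}+\epsilon_S+\epsilon_T+2\epsilon_G ,
\]
which is \cref{eq:task-bound}. The bound holds with probability at least $1-\delta$, inherited from the event \cref{eq:uniform-transfer}. I expect no real obstacle beyond keeping Step 2 attainment-free and noting that the same comparison class $\Theta$ is used in both infima.
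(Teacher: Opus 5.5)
Your proposal is correct and matches the paper's proof: the paper likewise uses the uniform event to get $L_{\mathrm{task}}(\theta)\le J(\theta)+\epsilon_G$ and $\inf L_{\mathrm{task}}\ge J^\star-\epsilon_G$, then applies the tracking--strength decomposition, and your $\eta$-approximate-minimizer argument simply writes out the infimum comparison that the paper states in one line. One minor point: $J(\theta)-J^\star\le[J(\theta)-S]_++(S-J^\star)$ holds whether or not $S\ge J^\star$, so your remark on nonnegativity of the strength term is harmless but not needed.
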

\begin{proof}
The uniform event gives $L_{\mathrm{task}}(\theta)\le J(\theta)+\epsilon_G$ and $\inf L_{\mathrm{task}}\ge J^\star-\epsilon_G$. Apply \cref{eq:tracking-strength}.
\end{proof}

A finite protected audit supplies direct finite-sample evidence; population performance follows from an additional population argument. In the QNN study, PSNR and SSIM are reported as protected task metrics and are never used to construct challengers.

\section{Experiments}
\label{sec:experiments}

The experiments are organized around the scientific questions posed by the certificate ladder. Exact or high-precision instances test identities and constants; current-state studies test whether the central bounds are numerically meaningful; QNN studies test the operational semantics. Numerical checks test constants, implementation, and nonvacuity alongside the analytic proofs. Full protocol and replay boundaries are recorded in Appendix~\ref{app:experiments}.

\subsection{A current-state nonzero-optimum certificate}
\label{sec:current-state-certificate}

A current-state nonvacuity study asks whether challenge-kernel geometry yields an informative \emph{excess-gap} certificate. A two-hidden-layer tanh feature extractor is trained on one split of the handwritten-digits data; its 65-dimensional augmented representation is then frozen on an independent 256-sample certification split, and scalar squared-loss heads are audited. The full head optimum is exact and has positive residual,
\[
J^\star=0.026182.
\]
The optimum residual is orthogonal to the reachable prediction subspace to numerical precision $7.64\times10^{-14}$, so \cref{thm:normal-residual} applies with $\xi=0$.

Two predeclared architecture-valid current suites are compared. The first partitions native feature coordinates into eight exact frozen-context head refits. The second uses twelve rank-18 prediction subspaces chosen by E-optimal design and maps each subspace back to an ordinary head-weight update through the representation pseudoinverse. The maximum relative materialization error is $2.20\times10^{-14}$; all candidate values are recomputed through the complete head. The native suite has coverage coefficient $1.94\times10^{-5}$. The designed suite has coefficient $0.08025$, a factor of $4.14\times10^3$ larger, without changing the complete head class. Across the fixed-objective continuation checkpoints, the designed certificate is within a median factor $4.78$ and a worst observed factor $6.06$ of the exact global head gap; the native certificate is roughly $2.1\times10^4$ times the gap. \Cref{fig:current-state-certificate} shows the exact gap, the two certificates, and the spectra that explain their radically different tightness.

\begin{figure}[t]
\centering
\includegraphics[width=\linewidth]{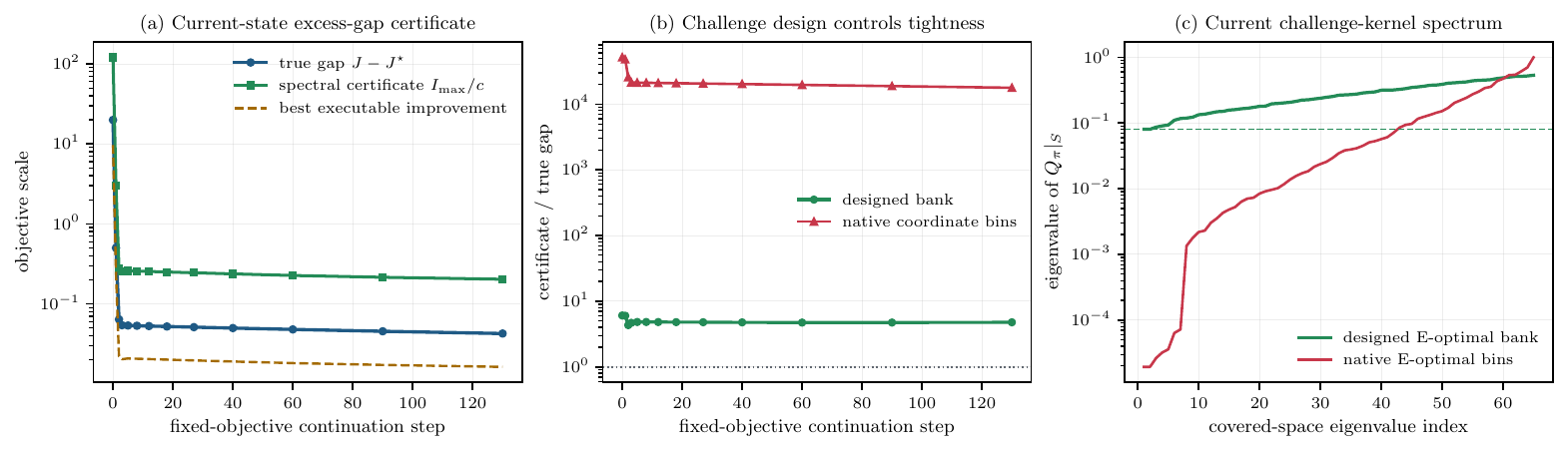}
\caption{Current-state excess-gap certification on a frozen nonlinear representation. (a) Exact materializable subspace challenges bound the true nonzero-optimum head gap throughout continuation. (b) The same head class can yield either a useful or a vacuous certificate depending on challenge-basis conditioning. (c) E-optimal design lifts the least-covered current prediction direction by more than three orders of magnitude. The study validates the normal-residual theorem and suite-design principle for the frozen nonlinear representation and declared head class.}
\label{fig:current-state-certificate}
\end{figure}

\subsection{Power, spectral, and information regressions}
\label{sec:theory-regressions}

\Cref{tab:regression-summary} summarizes the registered tests. The power suite checks the generalized inverse on 500 random finite systems, the two-versus-eleven-step scalar-incompleteness counterexample, and linear/polynomial rate envelopes. The exact collective neural example uses four rank-two blocks in a six-dimensional prediction space; the collective rank is six and $\lambda_{\mathrm{coll}}=0.08131$.

The nonlinear spectral suite includes a 12-sample tanh normalization test, a width-12,000 stable-core ReLU instance, width scaling, E-optimal basis design, and a trained width-1,024 tanh MLP on a deterministic handwritten-digits subset. In the ReLU registration, $\lambda_0=0.3129$, the observed stable-core eigenvalue is $0.2861$ versus theorem threshold $0.0782$, and the best actual block improvement is $11.05$ times the theorem floor. Across the trained MLP checkpoints, the smallest actual-to-certified ratio is $3.229$.

The information suite verifies exact log, Brier, and squared-Gaussian decompositions; 10,000 random sharp-interval instances; a finite-class population lift; and witness subtraction. In the discrete log-loss model, $I(Y;X\mid Z)=0.04228$ nats and $M_{\mathrm{rep}}=0.8648$. A deliberately imperfect paired certificate still proves $I(Y;X\mid Z)\le0.06728$ and $M_{\mathrm{rep}}\ge0.7947$.

\begin{table}[t]
\centering
\small
\caption{Registered theorem-regression results. ``Passed'' means every programmed inequality or identity held at the declared numerical precision, providing an independent numerical regression of the analytic statement.}
\label{tab:regression-summary}
\begin{tabular}{p{0.25\linewidth}p{0.55\linewidth}p{0.10\linewidth}}
\toprule
Program & Headline registered result & Status \\
\midrule
Challenge power & 500 generalized-inverse trials; zero violations; identical scalar surfaces with 2 vs. 11 intervention steps & Passed \\
Collective affine coverage & Four rank-2 blocks; collective rank 6; $\lambda_{\mathrm{coll}}=0.08131$ & Passed \\
Stable-core ReLU & Actual/theorem improvement ratio $11.05$; gate and spectrum margins satisfied & Passed \\
Trained tanh MLP & Seven checkpoints; minimum actual/theorem ratio $3.229$ & Passed \\
Information sufficiency & Exact log/Brier/squared identities; 10,000 sharp-interval trials; zero violations & Passed \\
Witness subtraction & True CMI $0.04228$; certified upper bound $0.06728$ & Passed \\
\bottomrule
\end{tabular}
\end{table}

\subsection{Quantized denoising: executable diagnosis and repair}
\label{sec:qnn}

The principal neural case study uses an eight-block residual denoiser at FP32, W8A8, W4A4, W2A4, and W1A2, with three seeds per regime. Hidden residual blocks use the declared quantization; the first and final affine maps remain full precision. Training and certification share the same fixed empirical mean-squared objective on 4,096 cached noisy/clean patches. A disjoint 512-patch audit and three crops from entirely held-out source images provide protected evidence.

The Core current suite contains: (i) an exact current-representation output projection materialized in the exact architecture; (ii) a late checkpoint-waypoint route followed by exact projection; and (iii) ordinary continuation matched to the route's gradient-batch count. A reinitialized-block no-waypoint control is withheld from live status and included in the stricter Full suite. After every adopted witness, the complete current suite is rerun before Green is issued.
This experiment instantiates selected levels of the general hierarchy rather than enumerating all internal subsets: the current suite contains a one-waypoint route, while the stronger post-passage policy adds two-block and two-waypoint routes, including route-and-release.

\Cref{tab:qnn-summary,fig:qnn-status} show a precision-dependent transition in the registered study. FP32 and W8A8 endpoints are Green with median known headroom below $0.003\%$. W4A4 and W2A4 are Yellow with median headroom $38.5\%$ and $10.8\%$; W1A2 is Red with median headroom $23.9\%$. Protocol-correct intervention improves the certified objective and protected image metrics in all nine low-precision runs. All 15 endpoints pass the Core suite after intervention; under the Full suite, only one of three W1A2 endpoints passes.
These five-regime, three-seed summaries are descriptive rather than population estimates: the artifact reports every seed, including the divergent binary trajectory, and no population-level confidence interval is inferred from three runs.
\Cref{fig:qnn-repair} separates the protected task gain from the stricter current-state suite outcome.

\begin{table}[t]
\centering
\small
\caption{Quantized-denoising centerpiece. Values are medians over three seeds; Core/Full counts are current-state passage after intervention. Compression reports parameter storage at the declared precision; runtime remains hardware- and kernel-dependent.}
\label{tab:qnn-summary}
\begin{tabular}{lccccc}
\toprule
Precision & Standard status & Headroom & $\Delta$PSNR & Core/Full & Storage \\
\midrule
FP32  & Green  & $0.003\%$ & $0.00$ dB & $3/3$ / $3/3$ & $1.0\times$ \\
W8A8  & Green  & $0.003\%$ & $0.00$ dB & $3/3$ / $3/3$ & $3.9\times$ \\
W4A4  & Yellow & $38.5\%$ & $+1.88$ dB & $3/3$ / $3/3$ & $7.6\times$ \\
W2A4  & Yellow & $10.8\%$ & $+0.80$ dB & $3/3$ / $3/3$ & $14.3\times$ \\
W1A2  & Red    & $23.9\%$ & $+1.20$ dB & $3/3$ / $1/3$ & $25.5\times$ \\
\bottomrule
\end{tabular}
\end{table}

\begin{figure}[t]
\centering
\includegraphics[width=\linewidth]{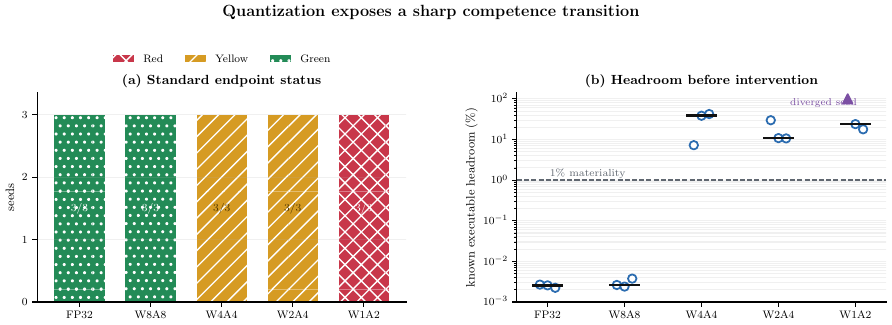}
\caption{Standard-training status and executable headroom. High-precision endpoints pass the declared Core suite, while every low-precision run contains material complete-model headroom. The bars report the registered medians over all three declared seeds; the complete per-seed ledger, including the divergent binary trajectory, is retained in the reproducibility artifact.}
\label{fig:qnn-status}
\end{figure}

\begin{figure}[t]
\centering
\includegraphics[width=\linewidth]{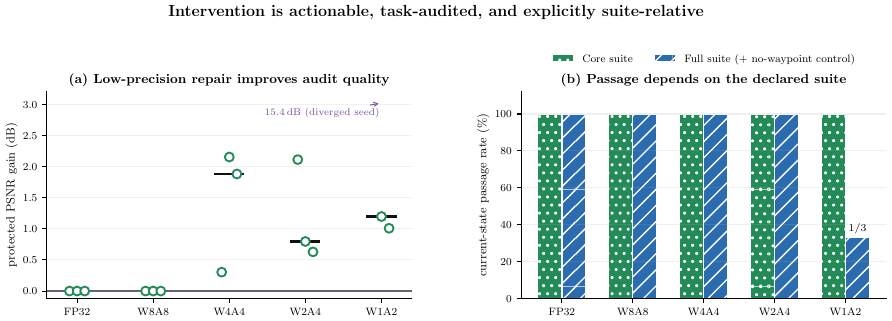}
\caption{Intervention is actionable, task-audited, and suite-relative. Every low-precision repair improves held-out PSNR. Every endpoint passes the predeclared Core suite after current-state recertification, whereas two binary endpoints fail the Full suite containing the protected no-waypoint control.}
\label{fig:qnn-repair}
\end{figure}

\subsection{Identical-checkpoint controls: transient versus structural failure}
\label{sec:qnn-controls}

A 75-endpoint control restores the identical saved checkpoint, optimizer state, scheduler prefix, deterministic data order, and random-number state for every branch. Same-step continuation, exact-head continuation, and alternating backbone-head training use the same 20-epoch learning-rate tail and 1,280 backbone-gradient batches; matched-wall continuation starts from the same checkpoint and uses a median 2,688 batches.

The precision regimes exhibit three distinct mechanisms. At W4A4, ordinary continuation earns Core and Full Green in all five seeds. At W2A4, same-step and matched-wall continuation each pass in only one of five seeds; exact-head continuation passes Core in all five and Full in four, while alternating training passes both in all five. At W1A2, exact-head and alternating updates stabilize catastrophic trajectories, but every endpoint remains Red relative to the fixed executable reference. Thus the certificate distinguishes transient undertraining, structural backbone-head coordination failure, and a deeper binary-precision failure to meet the declared competence standard; the identical-checkpoint comparison is summarized in \cref{fig:qnn-control}.

\begin{figure}[t]
\centering
\includegraphics[width=\linewidth]{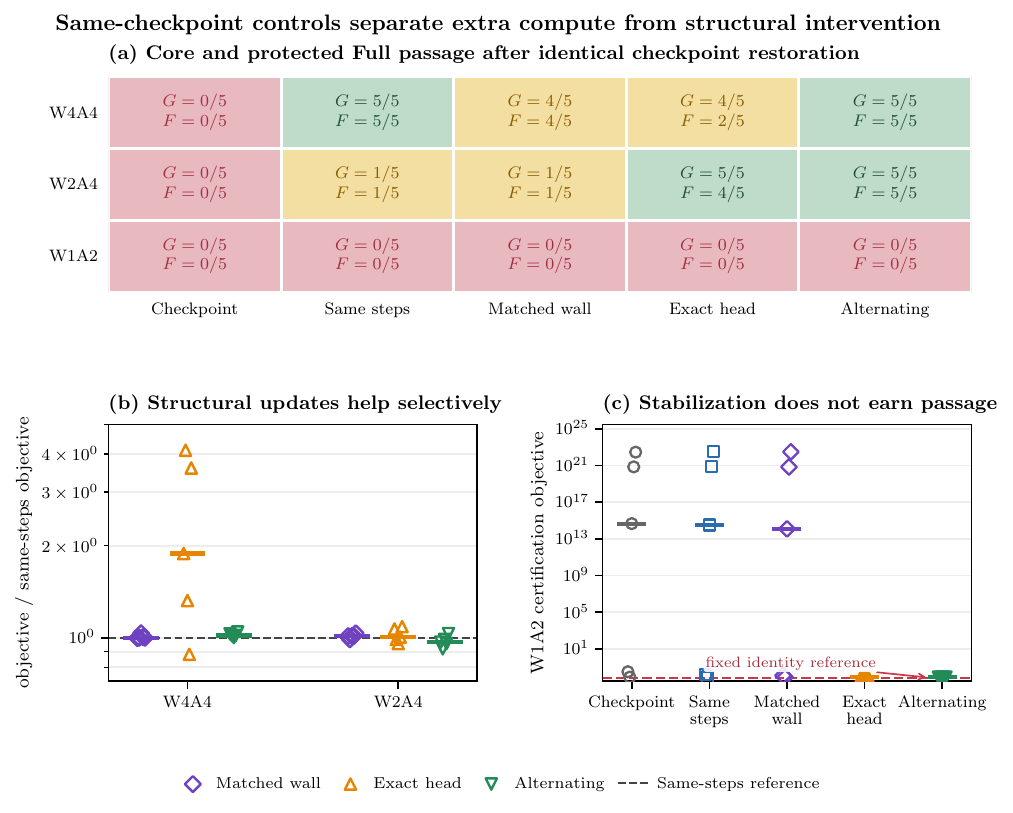}
\caption{Same-checkpoint controls separate extra compute from structural intervention. Ordinary continuation resolves W4A4. W2A4 is most reliably repaired by alternating backbone-head training despite wall-matched continuation using more backbone-gradient batches. W1A2 stabilization leaves every endpoint above the fixed executable reference.}
\label{fig:qnn-control}
\end{figure}

Two post-Green cases separately test attainability. A W2A4 matched-continuation target $3.94\%$ below the Green checkpoint is reached by a fresh same-family continuation in three epochs. A W4A4 two-waypoint route-and-release target $1.77\%$ lower remains unattained over the declared ten-epoch continuation budget, although direct adoption passes current-state recertification. The latter records a trainer barrier over the declared finite tracking budget.
\Cref{fig:qnn-attainability} displays the reached and unreached targets under the same evidentiary conventions.

\begin{figure}[t]
\centering
\includegraphics[width=\linewidth]{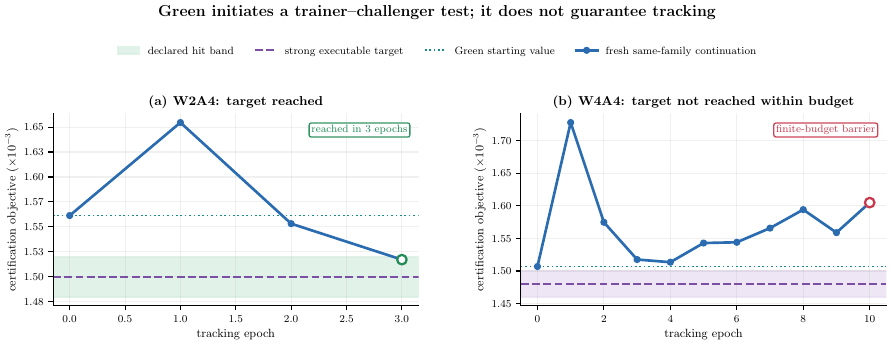}
\caption{Green begins a trainer--challenger experiment in which tracking is measured directly. One executable target is reached by fresh same-family continuation, while another remains unattained within budget. Both are actionable complete models; only the trainer dynamics differ.}
\label{fig:qnn-attainability}
\end{figure}

The study provides a diagnostic comparison within the declared architecture, trainer, and suites: under the declared architecture, objective, trainer, and suites, different precision regimes fail for different reasons.

\section{Operational Protocol, Reporting, and Scaling}
\label{sec:protocol}

A reported status is interpretable only together with the problem, the calls that were run, and the additional bridge supporting the claimed level. \Cref{tab:certificate-record} gives the minimum interoperable record.

\begin{table}[t]
\centering
\small
\caption{Minimum certificate record. These fields prevent evidence from being silently transferred across objectives, model classes, checkpoints, or probability spaces.}
\label{tab:certificate-record}
\begin{tabular}{>{\raggedright\arraybackslash}p{0.20\linewidth}>{\raggedright\arraybackslash}p{0.35\linewidth}>{\raggedright\arraybackslash}p{0.35\linewidth}}
\toprule
Layer & Required record & Scientific role \\
\midrule
Problem identity & exact class, architecture state, data, objective, regularizer, precision, reference & fixes the optimization problem to which every inequality refers \\
Current execution & mandatory calls, completion flag, candidate values, tolerances, solver and numerical status & distinguishes current passage from historical evidence \\
Executable evidence & candidate states, replay recipe, hashes, retained frontier, strict improvements & makes every failure witness independently checkable \\
Coverage & total budget, current decrease operators or proof-bearing floor, covered subspace, solver error, calibration population & licenses resource-qualified gap or population language \\
Representation and task & nested predictor classes, brackets, decoder headroom, transfer theorem or protected audit & separates optimization, representation, and intended-use claims \\
Resources & marginal and total-from-scratch compute, memory, wall time, data access & prevents a cheap correction from being mislabeled total-resource dominance \\
\bottomrule
\end{tabular}
\end{table}

A practical implementation follows seven steps:
\begin{enumerate}[leftmargin=1.65em,itemsep=3pt]
\item Declare the certified class and objective, training objective if different, feasible reference, protected task metric, and every state needed for deterministic execution.
\item Declare challenge information access, mandatory and optional calls, solver budgets, restart policies, tolerances, and marginal versus total cost conventions.
\item Materialize every candidate in the complete architecture and reevaluate it under the same objective. Reject mismatched, nonfinite, or unreplayable outputs.
\item Issue current status only after every mandatory call completes. Retain earlier strict witnesses independently of the current status.
\item Attach global-gap language only to current verified coverage quantities or a proof-bearing lower floor. Treat initialization spectra and unverified linearizations as descriptive geometry.
\item After adoption, rerun the current suite. Label intervention paths separately from trainer-driven staircases.
\item Report optimization, representation, population, and protected-task evidence in separate fields.
\end{enumerate}

\paragraph{Tolerance selection.}
The Green tolerance $\tau_G$ should dominate deterministic reevaluation and numerical uncertainty and then add the predeclared materiality threshold relevant to the use case. The acceptance margin $\tau_{\mathrm{acc}}$ should exceed solver and comparison uncertainty so that every new stair is a strict executable improvement. The hit tolerance $\tau_{\mathrm{hit}}$ describes trainer attainment and need not equal either quantity. Consequential reports should show status and bound sensitivity over a small predeclared grid of tolerances rather than selecting a favorable threshold after inspection.

\paragraph{Tiered cost and scaling.}
Coverage is not free. The intended deployment is tiered: inexpensive mandatory calls run online, stronger routes are triggered by passage events, and proof-bearing or exhaustive modules are reserved for high-consequence checkpoints. In the exhaustive QNN study, standard training used $0.54$ accelerator-hours and certification/intervention used $0.60$ accelerator-hours on an A100; the audit was therefore comparable to training cost, not negligible. Sparse E-optimal bases, cached graph-cut states, low-dimensional exact motifs, and event-triggered execution are the principal scaling mechanisms. Every report should give both marginal audit cost and total-from-scratch cost.

\paragraph{Scope.}
The primary certificates concern a fixed empirical objective. The squared-loss global-gap theorems apply only to declared classes satisfying their coverage premises. The stable-core ReLU theorem covers a local gate-stable region of a two-layer network. Representation statements are class relative; Shannon/Bayes interpretations require population-complete predictor classes. Robustness, fairness, safety, calibration, legal conformity, and downstream utility require their own protected audits or transfer theorems.

\section{Discussion and Conclusion}
\label{sec:conclusion}

Training completion becomes scientifically testable when a checkpoint is compared with complete alternatives that the same certified architecture can actually execute. A lower-valued challenger is a constructive counterexample whose complete model state can be replayed. Retaining such witnesses creates monotone pathwise evidence, while mandatory current execution distinguishes present suite passage from historical success. Green-0 then turns passive monitoring into a controlled trainer--challenger experiment.

The central theoretical bridge is coverage. Budgeted challenge power states which levels of suboptimality a resource-qualified family must expose, and its generalized inverse $E(B,\tau)$ gives the largest global gap compatible with passage. For squared loss, current decrease operators make this bridge computable. Uniform spectral coverage protects all relevant residual directions, whereas the realized-residual coefficient
\[
\kappa_{\mathrm{cur}}
=
\frac{e^\top Q_\pi e}{\|e\|^2}
\]
measures coverage along the error actually present at the checkpoint. In the ResNet-18 study, eight current internal challenges cover all $240$ audited output directions, and the realized-residual certificate bounds the known true gap within factors of $1.74$--$3.02$. For nonzero optima, the normal-residual theorem provides the corresponding excess-gap extension under its stated compatibility condition. E-optimal design identifies complementary challenge bases, while exact affine motifs and stable-core ReLU geometry provide concrete neural regimes.

The exact non-global staircase fixes the converse boundary. Conditional solver exactness, repeated stair attainment, and trajectory convergence can all hold while an uncovered direction preserves a positive global gap. Challenge-closed optimality is therefore the correct generic endpoint of an increasingly rich audit. Global-gap language requires an explicit mechanism---such as spectral coverage, contraction, or a proof-bearing lower floor---that controls the declared certified class.

Once predictive optimization uncertainty is controlled, paired certificates separate decoder under-use from representation insufficiency. Representation-preserving headroom can be separated from the outer predictive uncertainty to bound the class-relative representation deficit. Under Bayes-complete population log loss this deficit is $I(Y;X\mid Z)$; under squared loss it is the conditional-mean predictive value available from $X$ but absent from $Z$. The certificate ladder therefore connects optimization, representation quality, and intended-task evidence without conflating the assumptions needed at each level.

The quantized-denoising studies illustrate the operational value of this hierarchy. Complete candidates expose low-precision headroom, adoption followed by current-state recertification tests actionability, and fresh continuation tests attainability. Same-checkpoint controls distinguish additional training from structural backbone--head coordination: ordinary continuation resolves W4A4, structured coordination is decisive for W2A4, and W1A2 remains below the declared competence standard after stabilization.

The reusable object is an executable certificate record: problem identity, complete challenger states, strict improvements, current execution status, challenge budgets and solver evidence, coverage or lower-floor mechanisms, trainer-tracking evidence, representation brackets, and protected-task audit. Such a record turns a training plateau from an ambiguous visual pattern into a sequence of verifiable scientific statements. The resulting framework opens several directions, including scalable challenge-basis design, coverage certificates for broader neural objectives and architectures, and independent challenge-based training assurance.

\section{Disclosures and Reproducibility}
\label{sec:reproducibility}

A general-purpose large language model was used as an assistive tool for drafting and editing, code generation and debugging, and proof exploration. The accompanying artifact contains the complete LaTeX source, figure-generation code and data, theorem-regression outputs, proof and reference audits, manifests, checksums, and resumable notebooks. Separate directories collect the challenge-power, spectral-coverage, information-sufficiency, exact-block, ResNet current-certificate, and QNN reproducibility materials. The ResNet integration directory contains the corrected resumable notebook, validation record, certified operators, and derived uniform-versus-realized certificate table used in Section~\ref{sec:resnet-current-certificate}. Every numerical claim is linked to an included result record or to the identified external replay archive. The historical QNN state archive is identified by its canonical filename and checksum protocol; the manuscript package contains the code, metadata, compact summaries, and exact replay ledgers needed for interpretation.

\acks{
This work was supported in part by the Office of Naval Research under Award
N00014-22-1-2666 through the Mathematical and Resource Optimization Program.
Any opinions, findings, conclusions, or recommendations expressed in this
material are those of the authors and do not necessarily reflect the views of
the Office of Naval Research.

Mojtaba Soltanalian is the founder of AI-Certified.org. This work was not
funded by AI-Certified.org. Farhang Yeganegi and Arian Eamaz declare no
competing interests.
}

\appendix
\section{Waypoint Conditioning, Route Mechanics, and Resource Design}
\label{app:waypoint-mechanism}
\label{app:route-resource}

This appendix develops the mathematical tools behind architecture-native waypoint and route challenges. The local objectives are construction devices; the complete terminal model remains the certificate. The results identify when fixed-waypoint optimization separates, how local errors propagate through an assembled route, and how challenge compute can be allocated and routes selected before full materialization.

\subsection{Proof of the deep-product separation}

\begin{proof}[Proof of \cref{thm:waypoint-advantage}]
Permutation symmetry preserves the balanced trajectory $w_r(t)=s(t)$. Since
\[
\frac{\partial J}{\partial w_r}
=
\left(\prod_{\ell=1}^{K}w_\ell-1\right)
\prod_{\ell\neq r}w_\ell,
\]
gradient flow gives
\[
\dot s=(1-s^K)s^{K-1}.
\]
Until $s$ reaches $2\alpha\le1$,
$\dot s\le s^{K-1}$, and therefore
\[
T_{2\alpha}
\ge
\int_\alpha^{2\alpha}s^{1-K}\,ds
=
\frac{1-2^{2-K}}{K-2}\alpha^{2-K}.
\]
One full gradient evaluation computes $K$ scalar partial derivatives. A unit-step balanced update is
\[
s_1=\alpha+(1-\alpha^K)\alpha^{K-1}.
\]
For $K\ge3$ and $\alpha\le1/2$, $\alpha^{K-1}\le\alpha^2\le\alpha/2$, so $s_1\le3\alpha/2$ and
\[
J(w^{(1)})
=
\frac12(1-s_1^K)^2
\ge
\frac12\left[1-\left(\frac32\alpha\right)^K\right]^2.
\]
For the waypoint ladder, each local objective is $\phi_r(w_r)=\tfrac12(w_r-1)^2$, and one unit-step update gives $w_r\leftarrow1$. After $K$ scalar derivatives the assembled product equals one and its terminal loss is zero.
\end{proof}

\subsection{Fixed-waypoint separability and graph-cut propagation}

For fixed states $Z_0,\ldots,Z_R$, segment maps $F_r(\cdot;\phi_r)$, and local discrepancies $d_r$, define
\[
\Phi_Z(\phi_1,\ldots,\phi_R)
=
\sum_{r=1}^{R}d_r(F_r(Z_{r-1};\phi_r),Z_r).
\]

\begin{proposition}[Fixed-waypoint separability]
\label{prop:fixed-waypoint-separable}
Assume the segment feasible sets are nonempty, the local infima are finite, and the joint feasible set is a Cartesian product after all parameter sharing and global constraints have been encoded. Then
\[
\inf_{\phi_1,\ldots,\phi_R}\Phi_Z
=
\sum_{r=1}^{R}\inf_{\phi_r}d_r(F_r(Z_{r-1};\phi_r),Z_r).
\]
\end{proposition}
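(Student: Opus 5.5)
The plan is to prove the separability identity by two inequalities, using only that the objective is a sum of terms each depending on one block over a product feasible set. Write $\Phi_r^\star:=\inf_{\phi_r\in\mathcal P_r}d_r(F_r(Z_{r-1};\phi_r),Z_r)$, where $\mathcal P_r$ is the nonempty segment feasible set and the joint feasible set is $\mathcal P_1\times\cdots\times\mathcal P_R$. The key structural observation is that, because the waypoints $Z_0,\ldots,Z_R$ are \emph{fixed} stored states rather than outputs of upstream segments, the $r$th summand of $\Phi_Z$ depends on $\phi_r$ alone. This is exactly what fails for the sequential route, where $\widehat Z_{r-1}$ depends on earlier parameters.

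For the lower bound ($\ge$), take any feasible tuple $(\phi_1,\ldots,\phi_R)$. Each coordinate $\phi_r$ lies in $\mathcal P_r$, so each summand is at least $\Phi_r^\star$. Summing gives $\Phi_Z(\phi)\ge\sum_r\Phi_r^\star$, and taking the infimum over feasible tuples preserves this. Finiteness of the local infima makes the right-hand sum a well-defined real number, so no $\infty-\infty$ issue arises.

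For the upper bound ($\le$), fix $\varepsilon>0$ and, for each $r$, choose $\phi_r^\varepsilon\in\mathcal P_r$ with local value at most $\Phi_r^\star+\varepsilon/R$. Such a choice exists because each local infimum is finite and the feasible set is nonempty; if a local infimum is attained one may take the minimizer directly. The Cartesian-product hypothesis is what makes the assembled tuple $(\phi_1^\varepsilon,\ldots,\phi_R^\varepsilon)$ jointly feasible, and then $\inf\Phi_Z\le\Phi_Z(\phi^\varepsilon)\le\sum_r\Phi_r^\star+\varepsilon$. Letting $\varepsilon\downarrow0$ closes the argument.

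The only delicate point is the product hypothesis, not any calculation. If parameters were shared across segments, or a global constraint such as a total norm or precision budget coupled the blocks, then the independently chosen near-minimizers need not assemble into a feasible point. I would state explicitly that the hypothesis is meant to be read after re-parametrizing shared weights as a single block, and perhaps remark that without it only the inequality $\ge$ survives.
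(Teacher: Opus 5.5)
Your proof is correct and matches the paper's argument exactly. You bound the objective below termwise for any feasible tuple, and you get the reverse inequality by assembling $\varepsilon/R$-near-minimizers of each segment (jointly feasible by the Cartesian-product hypothesis) and letting $\varepsilon\downarrow0$.
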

\begin{proof}
Every feasible joint choice is bounded below by the sum of the separate infima. Conversely, select each block within $\varepsilon/R$ of its own infimum and combine the selections; Cartesian feasibility gives the reverse inequality as $\varepsilon\downarrow0$.
\end{proof}

Global solution of each segment therefore solves the anchored surrogate. The assembled route receives a separate terminal evaluation. Its state deviation satisfies the following recursion.

\begin{theorem}[Graph-cut error propagation]
\label{thm:graph-propagation}
Let each materialized segment be $L_r$-Lipschitz in its input and define
\[
\delta_r
=
\|F_r(Z_{r-1};\widehat\phi_r)-Z_r\|.
\]
Execute the route from $\widehat Z_0=Z_0$ and set $\widehat Z_r=F_r(\widehat Z_{r-1};\widehat\phi_r)$. Then
\begin{equation}
\|\widehat Z_R-Z_R\|
\le
\sum_{r=1}^{R}\delta_r\prod_{j=r+1}^{R}L_j.
\label{eq:graph-propagation}
\end{equation}
The same statement holds for a computation graph when each cut is represented by the product-space state of every tensor crossing it.
\end{theorem}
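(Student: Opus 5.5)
The plan is to prove \cref{eq:graph-propagation} by induction on the segment index, tracking the deviation $D_r:=\|\widehat Z_r-Z_r\|$ along the executed route. We start from $D_0=0$, since the route is launched from $\widehat Z_0=Z_0$. The target intermediate claim is
\[
D_r\le\sum_{i=1}^{r}\delta_i\prod_{j=i+1}^{r}L_j,
\]
with the empty product equal to one.

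For the inductive step, insert the anchored output $F_r(Z_{r-1};\widehat\phi_r)$ and apply the triangle inequality:
\[
D_r\le\|F_r(\widehat Z_{r-1};\widehat\phi_r)-F_r(Z_{r-1};\widehat\phi_r)\|+\|F_r(Z_{r-1};\widehat\phi_r)-Z_r\|.
\]
The first term is at most $L_rD_{r-1}$ by the Lipschitz property of the materialized segment in its input. The second term is exactly $\delta_r$ by definition. This yields the affine recursion $D_r\le L_rD_{r-1}+\delta_r$.

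We then substitute the inductive hypothesis. Multiplying the previous sum by $L_r$ extends each product $\prod_{j=i+1}^{r-1}L_j$ to $\prod_{j=i+1}^{r}L_j$. The new $\delta_r$ enters with an empty product. This gives the claim at $r$, and setting $r=R$ gives \cref{eq:graph-propagation}.

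For the computation-graph version, we treat each cut $Z_r$ as the product-space tuple of every tensor crossing it. The norm is any norm on that product, for instance the Euclidean norm of the concatenation. Each $L_r$ is the Lipschitz constant of the segment map between these product states, relative to the chosen norms on both sides. Skip connections are simply carried as components of the intermediate cut states, so the segment maps still form a chain. The same recursion then applies verbatim.

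The only point that needs care is this one: the Lipschitz constants must be measured in norms consistent across consecutive cuts, that is, the output norm of segment $r$ must equal the input norm of segment $r+1$. Once that convention is declared, the argument is a routine discrete Grönwall unrolling.
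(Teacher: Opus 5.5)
Your proposal is correct and follows the paper's proof: both add and subtract the anchored output $F_r(Z_{r-1};\widehat\phi_r)$, obtain the recursion $e_r\le L_re_{r-1}+\delta_r$ from $e_0=0$, and unroll it by induction. Your requirement that consecutive cuts use consistent product-space norms is a sensible clarification of the computation-graph extension, which the paper leaves implicit.
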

\begin{proof}
Let $e_r=\|\widehat Z_r-Z_r\|$. Adding and subtracting $F_r(Z_{r-1};\widehat\phi_r)$ gives $e_r\le L_re_{r-1}+\delta_r$. Since $e_0=0$, induction yields \cref{eq:graph-propagation}.
\end{proof}

An early residual is amplified by every downstream Lipschitz factor. This weighting explains why route quality depends jointly on local fit, cut location, and continuation stability.

\subsection{Finite solver budgets and route selection}

\begin{corollary}[Finite-budget route guarantee]
\label{cor:finite-budget-route}
If budget $B_r$ guarantees $\delta_r(B_r)\le\rho_r(B_r)$, then
\[
\|\widehat Z_R-Z_R\|
\le
\sum_{r=1}^{R}\rho_r(B_r)\prod_{j=r+1}^{R}L_j.
\]
\end{corollary}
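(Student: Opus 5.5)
This corollary follows directly from \cref{thm:graph-propagation}; the only change is to replace each local residual by its budget-indexed guarantee.

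\emph{Step 1: invoke the propagation theorem.} Take the materialized route produced under budgets $B_1,\ldots,B_R$, with segment parameters $\widehat\phi_r=\widehat\phi_r(B_r)$. Each materialized segment is assumed $L_r$-Lipschitz in its input, as in the hypotheses of \cref{thm:graph-propagation}. That theorem then gives
\[
\|\widehat Z_R-Z_R\|\le\sum_{r=1}^R\delta_r(B_r)\prod_{j=r+1}^R L_j,
\]
where $\delta_r(B_r)=\|F_r(Z_{r-1};\widehat\phi_r)-Z_r\|$ is the anchored local residual actually achieved.

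\emph{Step 2: substitute the budget guarantees.} Every coefficient $\prod_{j>r}L_j$ is nonnegative, and the right-hand side is monotone in each $\delta_r$. We may therefore replace each $\delta_r(B_r)$ by its guaranteed upper bound $\rho_r(B_r)$ term by term, which yields the stated inequality.

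\emph{Step 3: check the interpretation of the hypotheses.} The main care point is semantic rather than technical. The guarantee $\rho_r(B_r)$ must bound the residual of the solution actually materialized, measured against the stored anchored input $Z_{r-1}$; that is the quantity $\delta_r$ entering \cref{thm:graph-propagation}. It must not bound a residual measured against the sequentially propagated input $\widehat Z_{r-1}$.

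The Lipschitz constants $L_j$ must also be those of the materialized segments $F_j(\cdot;\widehat\phi_j)$. If the solver's parameters can vary, one should use a uniform Lipschitz bound over the budget-$B_j$ output set.

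If the solver is randomized, the guarantee holds on the event where all the $\delta_r\le\rho_r(B_r)$ hold simultaneously. Apart from these points, no further obstacle arises.
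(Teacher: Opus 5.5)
Your proposal is correct and follows the paper's approach: the corollary is an immediate consequence of \cref{thm:graph-propagation}, obtained by substituting $\delta_r(B_r)\le\rho_r(B_r)$ term by term, which is valid because each downstream product $\prod_{j=r+1}^{R}L_j$ is nonnegative. You also correctly note that $\rho_r$ must bound the anchored residual against the stored input $Z_{r-1}$ and that the $L_j$ are the Lipschitz constants of the materialized segments, which matches that theorem's hypotheses.
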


When the weighted contribution of segment $r$ is bounded by $a_re^{-\gamma_rB_r}$, continuous challenge compute has an exact allocation.

\begin{proposition}[Water-filling allocation of challenge compute]
\label{prop:budget-waterfill}
For $a_r,\gamma_r>0$, the convex program
\[
\min_{B_r\ge0}\sum_{r=1}^{R}a_re^{-\gamma_rB_r}
\quad\text{subject to}\quad
\sum_{r=1}^{R}B_r\le B
\]
has solution
\[
B_r^\star
=
\frac1{\gamma_r}\bigl[\log(a_r\gamma_r)-\log\lambda\bigr]_+,
\]
where, for $B>0$, the unique $\lambda>0$ makes the budget constraint active. At $B=0$, every $B_r^\star=0$.
\end{proposition}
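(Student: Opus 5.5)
The plan is the standard KKT argument for a separable convex program. The objective $\sum_r a_re^{-\gamma_rB_r}$ is a sum of strictly convex, strictly decreasing, differentiable functions of the separate variables. The feasible set $\{B_r\ge0,\ \sum_rB_r\le B\}$ is a compact simplex-like polytope. A minimizer therefore exists, and it is unique by strict convexity. Slater's condition holds for $B>0$ (take every $B_r=B/(2R)$), so the KKT conditions are necessary and sufficient. For $B=0$ the feasible set is the single point $B_r=0$, which settles the degenerate claim immediately.

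For $B>0$ I would first show that the budget constraint is active. Each term is strictly decreasing in $B_r$. If $\sum_rB_r<B$, increasing any coordinate would strictly lower the objective, so $\sum_rB_r^\star=B$ and the budget multiplier $\lambda$ may be taken nonnegative. I would then write the Lagrangian
\[
\sum_r a_re^{-\gamma_rB_r}+\lambda\Bigl(\sum_rB_r-B\Bigr)-\sum_r\mu_rB_r .
\]
Stationarity gives $-a_r\gamma_re^{-\gamma_rB_r}+\lambda-\mu_r=0$, with $\mu_r\ge0$ and $\mu_rB_r=0$. Stationarity at a coordinate with $B_r=0$ forces $\lambda=a_r\gamma_r+\mu_r\ge a_r\gamma_r>0$, and likewise if some $B_r>0$. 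Hence $\lambda>0$. If $B_r>0$, then $\mu_r=0$ and $B_r=\gamma_r^{-1}\log(a_r\gamma_r/\lambda)$, which requires $a_r\gamma_r>\lambda$. If $B_r=0$, then $\lambda\ge a_r\gamma_r$. The two cases combine into $B_r^\star=\gamma_r^{-1}[\log(a_r\gamma_r)-\log\lambda]_+$.

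It remains to show that exactly one $\lambda>0$ makes the budget constraint active. Define
\[
\Phi(\lambda)=\sum_r\gamma_r^{-1}[\log(a_r\gamma_r)-\log\lambda]_+ .
\]
This function is continuous and nonincreasing on $(0,\infty)$. It tends to $+\infty$ as $\lambda\downarrow0$ and equals $0$ for $\lambda\ge\max_ra_r\gamma_r$. On the region where $\Phi>0$ at least one summand is positive, and there $\Phi$ is strictly decreasing. By the intermediate value theorem there is a unique $\lambda$ with $\Phi(\lambda)=B$. Sufficiency of KKT under convexity then identifies the resulting allocation as the global minimizer.

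The main obstacle is this uniqueness-of-$\lambda$ bookkeeping together with the complementary-slackness case split, since the argument must rule out spurious multiplier values in the flat region of $\Phi$. That region is excluded because it only yields total budget $0<B$.
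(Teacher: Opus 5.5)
Your proposal is correct and follows essentially the same route as the paper's proof: KKT stationarity on the active coordinates, complementary slackness to truncate the inactive ones at zero, and strict monotonicity of the total assigned budget $\Phi(\lambda)$ on the region where $\Phi>0$ to obtain a unique $\lambda$. You also fill in details the paper leaves implicit, namely existence and uniqueness of the minimizer, $\lambda>0$, the active budget, and the degenerate case $B=0$.
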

\begin{proof}
KKT stationarity on an active coordinate gives $-a_r\gamma_re^{-\gamma_rB_r}+\lambda=0$. Complementary slackness truncates inactive coordinates at zero. The total assigned budget is strictly decreasing in $\lambda$ on the active set.
\end{proof}

For $M$ ordered cuts, an additive proposal surrogate permits exact route selection by dynamic programming.

\begin{theorem}[Optimal fixed-waypoint route by dynamic programming]
\label{thm:route-dp}
Let $c_{ij}$ be the declared fixed-waypoint surrogate cost of the segment from cut $i$ to cut $j$, $0\le i<j\le M$. Define
\[
D(j,\ell)
=
\min_{0=i_0<\cdots<i_\ell=j}
\sum_{r=1}^{\ell}c_{i_{r-1},i_r}.
\]
Then
\[
D(j,\ell)=\min_{i<j}\{D(i,\ell-1)+c_{ij}\},
\]
with $D(0,0)=0$ and unreachable states equal to $+\infty$. Best routes using at most $m$ segments---equivalently, at most $m-1$ internal waypoints---are recovered in $O(M^2m)$ time after the edge costs are available.
\end{theorem}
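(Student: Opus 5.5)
This is the standard shortest-path-with-edge-count dynamic program on the DAG whose vertices are the cuts $0,1,\ldots,M$ and whose edges $i\to j$ ($i<j$) carry the costs $c_{ij}$. Every admissible route with $\ell$ segments ending at cut $j$ is exactly a path $0=i_0<\cdots<i_\ell=j$ in this DAG. Its total surrogate cost is the additive sum $\sum_r c_{i_{r-1},i_r}$. So $D(j,\ell)$ is a constrained shortest-path value, and the plan is to prove the Bellman recurrence by a two-sided inequality and then count operations.

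\emph{Recurrence.} Fix $j$ and $\ell\ge1$. For the direction $\ge$, take any feasible sequence $0=i_0<\cdots<i_\ell=j$ and set $i=i_{\ell-1}<j$. The prefix $i_0<\cdots<i_{\ell-1}$ is feasible for $D(i,\ell-1)$, so the sequence's cost is at least $D(i,\ell-1)+c_{ij}$. Minimizing over sequences gives $D(j,\ell)\ge\min_{i<j}\{D(i,\ell-1)+c_{ij}\}$. For the direction $\le$, take any $i<j$ with $D(i,\ell-1)<\infty$ and an optimal prefix attaining it; minima exist because the index sets are finite. Appending the edge $(i,j)$ gives a feasible $\ell$-segment sequence to $j$, so $D(j,\ell)\le D(i,\ell-1)+c_{ij}$.

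\emph{Boundary cases.} With the convention $\min\varnothing=+\infty$, we have $D(0,0)=0$ (the empty route), $D(j,0)=+\infty$ for $j>0$, and $D(0,\ell)=+\infty$ for $\ell\ge1$, since strict increase forbids returning to $0$. The recurrence is consistent with these because $+\infty$ absorbs additions of finite $c_{ij}$. If some $c_{ij}=+\infty$, marking a forbidden segment, the same argument works in the extended reals. The only care needed is to note that $-\infty$ never arises, since every sum is finite or $+\infty$.

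\emph{Recovery and complexity.} The best route with at most $m$ segments has value $\min_{1\le\ell\le m}D(M,\ell)$. An optimal route is read off by storing, for each $(j,\ell)$, an argmin predecessor $i$ and backtracking from the minimizing $\ell$ at $j=M$. A route with $\ell$ segments has exactly $\ell-1$ internal waypoints, so ``at most $m$ segments'' is the same as ``at most $m-1$ waypoints''. The table has $(M+1)(m+1)$ entries, and each entry is a minimum over at most $M$ predecessors. Given precomputed edge costs, this is $O(M^2m)$ time, and backtracking adds only $O(m)$.

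I expect no real obstacle; the argument is routine. The one point to state carefully is that the surrogate is genuinely additive over segments with fixed waypoint anchors. This is what \cref{prop:fixed-waypoint-separable} guarantees: each $c_{ij}$ can be computed independently of the other segments, which is exactly what makes the prefix decomposition valid. The DP optimizes only this surrogate, not the terminal complete-model value.
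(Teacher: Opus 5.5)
Your proof is correct and follows essentially the same route as the paper: you split an $\ell$-segment route at its unique penultimate cut to get the lower bound, and you append the edge $(i,j)$ to an optimal prefix to get the upper bound. Your explicit treatment of the $+\infty$ boundary states, the predecessor backtracking, and the $(M+1)(m+1)$-entries-times-$M$ operation count fill in details that the paper leaves implicit.
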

\begin{proof}
Every $\ell$-segment route to $j$ has a unique penultimate cut $i$. Its prefix must solve $(i,\ell-1)$; otherwise replacement by a better prefix lowers the route cost. Conversely, appending $(i,j)$ to an optimal prefix gives a feasible route.
\end{proof}

The selected proposals are assembled and reevaluated through the complete objective because the additive surrogate orders proposals rather than certificate values.

\subsection{Nested portfolios and resource-qualified competence}

A richer individual route can improve or worsen after composition. Monotonicity belongs to retained nested families. Let $\mathcal C_{m,b}(t)$ contain every valid challenger discovered by time $t$ with at most $m$ internal waypoints and budget at most $b$, and define
\[
V_{m,b,t}=\min_{C\in\mathcal C_{m,b}(t)}J(\theta_C).
\]
When the families are nested in $m$, $b$, and $t$, the retained values are nonincreasing in every index. Figure~\ref{fig:challenge-lattice} separates this memory property from the behavior of individual routes.

\begin{figure}[t]
\centering
\includegraphics[width=\textwidth]{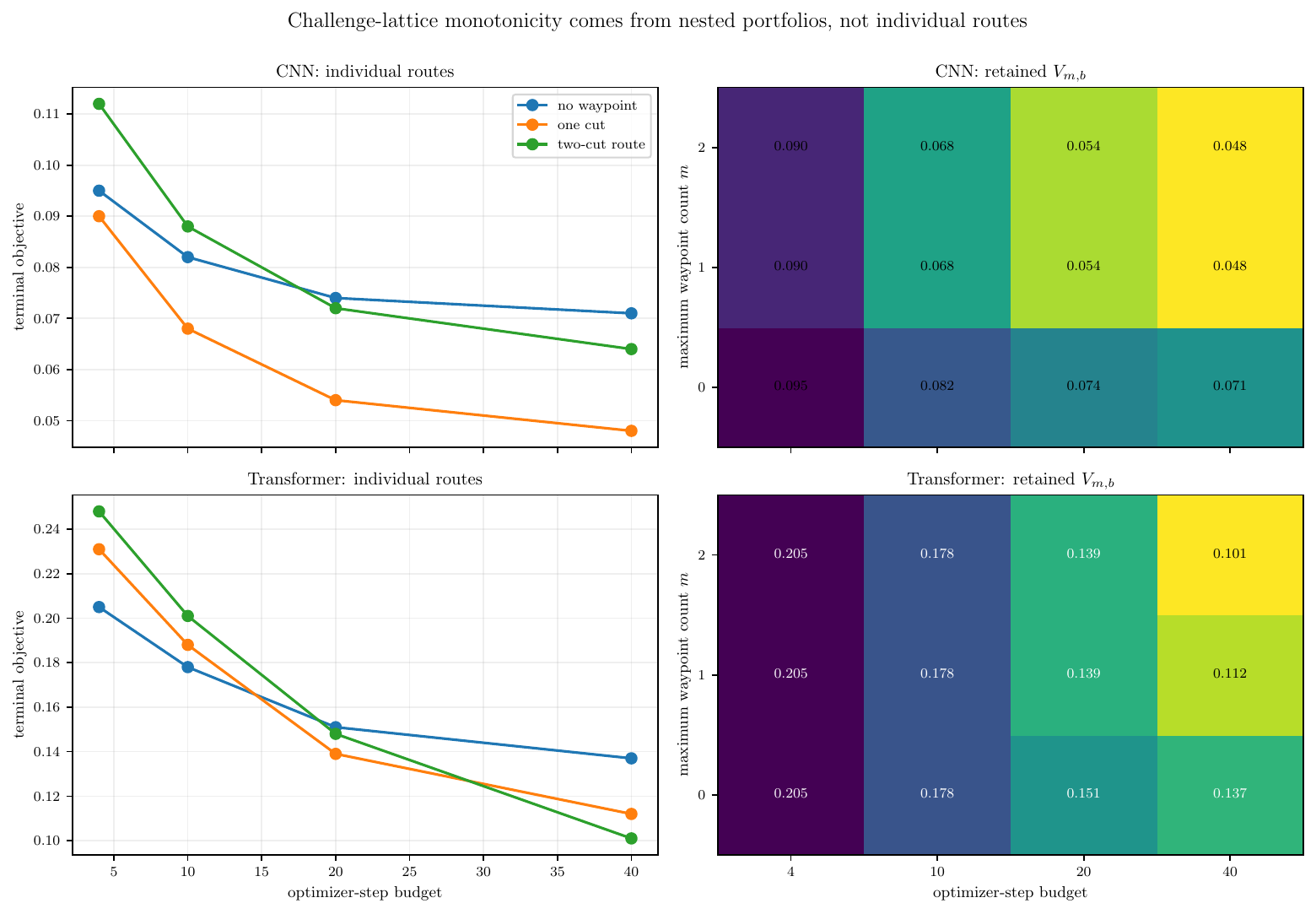}
\caption{Nested retained portfolios are monotone even when individual routes are not. Additional waypoints can improve or worsen one executed CNN or transformer candidate, while the minimum over all no-richer configurations yields the monotone value hierarchy used by the monitor. The figure distinguishes route behavior from retained evidence memory.}
\label{fig:challenge-lattice}
\end{figure}

Let $c_t\in\mathbb R_+^d$ be the audited run's cumulative resource vector and let $\operatorname{cost}_{\mathrm{tot}}(C)$ use the same total-from-scratch origin, including checkpoint production and challenge overhead.

\begin{proposition}[Resource-matched Pareto dominance]
\label{prop:resource-pareto}
If
\[
\operatorname{cost}_{\mathrm{tot}}(C)\preceq c_t,
\qquad
J(\theta_C)<J(\theta_t),
\]
and both resource vectors include every attributed cost from the common origin, then the audited checkpoint is Pareto-dominated in the declared resource coordinates and empirical objective.
\end{proposition}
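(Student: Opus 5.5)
The plan is to unpack the definition of Pareto dominance in the product order on $\mathbb R_+^d\times\mathbb R$ and check both coordinates directly. I take the declared comparison space to be pairs $(\text{resource vector},\text{objective value})$. A point $(c,J)$ Pareto-dominates $(c',J')$ when $c\preceq c'$ componentwise, $J\le J'$, and at least one inequality is strict. The candidate point is $(\operatorname{cost}_{\mathrm{tot}}(C),J(\theta_C))$ and the audited point is $(c_t,J(\theta_t))$.

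\textbf{First step.} The resource coordinate is given: $\operatorname{cost}_{\mathrm{tot}}(C)\preceq c_t$. The only substantive point here is that the two vectors are comparable at all. This is exactly the hypothesis that both include every attributed cost from the common total-from-scratch origin. Because $C$ is checkpoint-dependent, its total cost must include the production cost of any checkpoint state it inherits, as \S\ref{sec:framework} requires under ``cost origin''. I will state this explicitly, so that a marginal-cost comparison is not silently substituted.

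\textbf{Second step.} The objective coordinate holds strictly: $J(\theta_C)<J(\theta_t)$. Both values are evaluated under the same fixed certification objective. Moreover, $\theta_C\in\Theta$, since $C$ is an executable challenge in the sense of \cref{def:challenge}. So the candidate is a feasible competitor in the same certified class, and by \cref{prop:witness} the inequality is a genuine empirical witness.

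\textbf{Conclusion and main obstacle.} Combining the two steps gives weak dominance in every resource coordinate and strict dominance in the objective, which is Pareto dominance. The main obstacle is definitional rather than technical: the statement is only meaningful relative to the declared resource coordinates and the common origin. If some attributed cost were omitted from either vector, for example inherited training compute, the componentwise comparison would not reflect actual resources. The proof therefore consists of fixing that convention and then verifying the two inequalities.
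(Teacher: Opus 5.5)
Your proposal is correct and follows the paper's one-line proof: the challenger uses no more of any declared resource and attains a strictly lower objective, so weak dominance in every resource coordinate together with strict improvement in the objective gives Pareto dominance. Your remarks on the common total-from-scratch origin and on feasibility via \cref{prop:witness} fit the paper's framing, but the dominance conclusion itself does not need them.
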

\begin{proof}
The challenger uses no more of any declared resource and achieves a strictly lower objective.
\end{proof}

A same-checkpoint correction usually has total cost $c_t+\operatorname{cost}_{\mathrm{marg}}(C)$. Its marginal cost measures the added audit or repair burden; total-resource dominance uses the common from-scratch origin.

\subsection{Challenge profiles and executable symmetries}

A challenge family is summarized by a power profile rather than by its best returned value alone. For declared failure populations $Q_1,\ldots,Q_s$, an adequate-state population $Q_{\mathrm{ok}}$, materiality $\delta$, solver-tightness error $\bar\epsilon_C$, and total cost $T_C$, define
\[
\Pi(C)=\bigl(
\operatorname{Power}_{C}(Q_1,\delta),\ldots,
\operatorname{Power}_{C}(Q_s,\delta),
1-\operatorname{Power}_{C}(Q_{\mathrm{ok}},\delta),
\bar\epsilon_C,T_C
\bigr).
\]
The coordinates record failure detection, non-disruption of adequate states, solver tightness, and resource demand. Their partial order permits a cheap broad heuristic, an expensive exact block, and a targeted route to occupy different useful positions in the portfolio.

Waypoint comparison also respects proved executable symmetries. If $\mathcal G_r$ is a symmetry group at cut $r$, define
\[
d_r^{\mathcal G}(U,V)=\inf_{g\in\mathcal G_r}\|U-g(V)\|^2.
\]
Neuron, channel, and attention-head permutations, together with selected compensated positive scalings, are valid when the downstream parameters receive the matching inverse transformation. An arbitrary orthogonal rotation across a coordinatewise nonlinearity generally changes the computation. A safe transformation therefore lies in the activation symmetry group
\[
\mathcal G_\Omega=\{Q:\Omega(Qz)=Q\Omega(z)\ \text{for every }z\},
\]
occurs between consecutive linear maps with the compensating inverse inserted, or is represented by an explicit executable encoder--decoder pair whose distortion enters the ledger. Similarity measures can rank proposals; exact materialization determines certificate validity.

The distinction between feasibility and diagnostic power is visible in \cref{fig:challenge-power-calibration}. Purposeful exact or descent-certified challenges expose early failure states, while arbitrary feasible proposals rarely do. Near a true optimum their improvement probability falls to zero, as a well-calibrated challenge should.

\begin{figure}[t]
\centering
\includegraphics[width=0.90\textwidth]{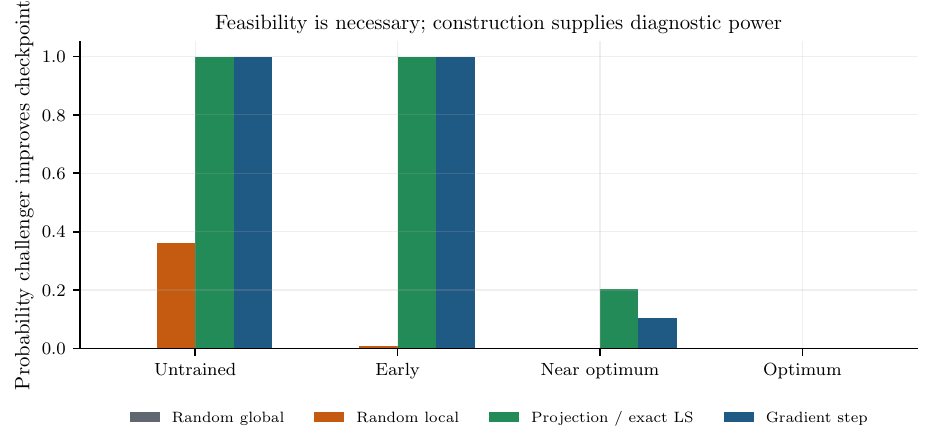}
\caption{Feasibility validates a realized witness; construction supplies diagnostic power. Across $4{,}000$ noiseless linear-regression problems, exact least squares and a certified gradient step expose all untrained and early checkpoints, while arbitrary feasible proposals rarely improve them. At the exact optimum, no challenger reports a material improvement.}
\label{fig:challenge-power-calibration}
\end{figure}

\section{Proof of the Stable-Core ReLU Theorem}
\label{app:relu-proof}

This section proves \cref{thm:relu-green} with all normalizations exposed. Let $D_j^0=D(w_j^0)$ and
\[
s_j=\mathbf1\left\{\min_i|x_i^\top w_j^0|\ge2\gamma\right\}.
\]
Define the truncated population kernel
\[
H_\gamma=\E[s(w)D(w)XX^\top D(w)].
\]
For $w\sim N(0,I_d)$ and unit $x_i$, $x_i^\top w\sim N(0,1)$ and
\[
\Prb\{|x_i^\top w|<2\gamma\}
\le \frac{4\gamma}{\sqrt{2\pi}}.
\]
A union bound and $\opnorm{DXX^\top D}\le R$ imply
\begin{align}
\opnorm{H^\infty-H_\gamma}
&\le R\Prb\{\min_i|x_i^\top w|<2\gamma\}\\
&\le \frac{4n\gamma R}{\sqrt{2\pi}}
\le\frac{\lambda_0}{2}.
\end{align}
By Weyl's inequality,
$\lambda_{\min}(H_\gamma)\ge\lambda_0/2$.

The matrices
$X_j=s_jD_j^0XX^\top D_j^0$
are independent, positive semidefinite, bounded by $RI$, and satisfy $\E X_j=H_\gamma$. The lower-tail matrix Chernoff inequality at relative deviation $1/2$ gives
\begin{align}
\Prb\left\{
\lambda_{\min}\!\left(\frac1p\sum_{j=1}^pX_j\right)
\le\frac12\lambda_{\min}(H_\gamma)
\right\}
&\le n\exp\left(-\frac{p\lambda_{\min}(H_\gamma)}{8R}\right)\\
&\le n\exp\left(-\frac{p\lambda_0}{16R}\right).
\end{align}
Under \cref{eq:relu-width1}, this probability is at most $\delta$. Therefore
\[
\lambda_{\min}(H_{\mathrm{st}})
=\lambda_{\min}\!\left(\frac1p\sum_jX_j\right)
\ge\frac{\lambda_0}{4},
\]
which proves \cref{eq:relu-kernel}. Notice that the proof needs only the eigenvalue inequality; it does not assert the stronger relative Loewner relation
$H_{\mathrm{st}}\succeq H_\gamma/2$.

Fix a checkpoint satisfying the current-region assumptions. For a stable neuron,
\[
\nabla_{w_j}J(W)
=
\frac{a_j}{n\sqrt p}X^\top D_j e,
\qquad
\norm{\nabla_{w_j}J(W)}
\le\frac{\sqrt R\norm e}{n\sqrt p}.
\]
The proposed displacement obeys
\[
\eta\norm{\nabla_{w_j}J(W)}
\le
\frac{mn}{2R}\frac{\sqrt R E_{\max}}{n\sqrt p}
=
\frac{mE_{\max}}{2\sqrt R\sqrt p}
\le\frac\gamma2.
\]
The current stable neuron lies within $\gamma/2$ of initialization; every point on the current-to-proposed segment therefore lies within $\gamma$ of $w_j^0$. Since the initialization margin is at least $2\gamma$ and $\|x_i\|=1$, each stable gate keeps its sign throughout the step.

Within the fixed gate cell, let $G_r$ be the residual Jacobian with respect to the stable neurons in balanced bin $r$. Every other parameter, including all unstable neurons, remains fixed. Because each bin has size at most $\lceil p/m\rceil$ and $p\ge m$,
$\lceil p/m\rceil\le2p/m$, so
\[
G_rG_r^\top
=
\frac1p\sum_{j\in r}s_jD_j^0XX^\top D_j^0
\preceq\frac{2R}{m}I.
\]
The block Hessian of $J$ is $G_r^\top G_r/n$, hence the block gradient is $L_r$-Lipschitz with $L_r\le2R/(mn)$. The choice $\eta=mn/(2R)$ satisfies $\eta L_r\le1$, so the descent lemma gives
\[
J(W)-J(W^{(r,+)})
\ge\frac\eta2\norm{\nabla_rJ(W)}^2.
\]
Taking the maximum, averaging over the $m$ balanced bins, and using
\[
\sum_r\norm{\nabla_rJ(W)}^2
=
\frac1{n^2}e^\top H_{\mathrm{st}}e
\]
yields
\begin{align}
\max_r[J(W)-J(W^{(r,+)})]
&\ge\frac\eta{2m}\sum_r\norm{\nabla_rJ(W)}^2\\
&=\frac1{4nR}e^\top H_{\mathrm{st}}e\\
&\ge\frac{\lambda_0}{16nR}\norm e^2
=\frac{\lambda_0}{8R}J(W).
\end{align}
Current Green bounds the left side by $\tau_G$, so
$J(W)\le8R\tau_G/\lambda_0$; nonnegativity then gives the asserted global-gap bound.

Taking equality in the allowed choice of $\gamma$ gives the explicit sufficient width
\[
p\ge
\max\left\{
 m,
 \frac{16R}{\lambda_0}\log\frac n\delta,
 \frac{32m^2E_{\max}^2n^2R}{\pi\lambda_0^2}
\right\}.
\]
This is a conservative certificate threshold, not a prediction of practical interpolation width.

\section{Proof of the Exact Non-Global Staircase}
\label{app:staircase-proof}

The construction is presented first as an open support-preserving family and then as a closed-form boundary instance. Let $m\ge3$, $n=m+1$, inputs $x_i=e_i$, and target
\[
y=(c_1,\ldots,c_m,d)^\top,
\qquad c_i>0,
\quad d>0,
\quad C^2=\norm c^2.
\]
Consider the complete two-unit class
\[
f_\theta(x)=a[w^\top x]_++b[v^\top x]_+,
\qquad
J(\theta)=\frac1{2n}\sum_{i=1}^n(f_\theta(x_i)-y_i)^2.
\]
The architecture permits both output coefficients to vary. The declared audited trainer holds the active coefficient fixed at $a=A>0$. Initialize $v_i<0$ for all coordinates and $w=(u,w_n)$ with $u>0$ and $w_n<0$. The second unit is dead on every sample, while the first is active on the first $m$ samples and inactive on the final sample. These strict inequalities define an open activation region. Inactive coordinates receive zero gradient, so full-batch gradient flow preserves the support.

After rescaling time, the active dynamics are
\begin{equation}
\dot u=-A(Au-c),
\qquad
u(t)=\frac cA+e^{-A^2t}z,
\qquad
u(0)=\frac cA+z.
\label{eq:stair-dynamics}
\end{equation}
The trainer converges to the support-restricted loss $d^2/(2n)$. The complete architecture has zero empirical loss: one unit can realize $(c^\top,0)$ and the other can activate only on the final coordinate and realize $d$. Thus $J^\star=0$.

At a checkpoint, freeze the hidden representation and solve the output head globally. The dead feature contributes nothing; the active feature equals $(u(t)^\top,0)$. The exact challenge chooses
\[
a_{\mathrm{ch}}(t)=\frac{c^\top u(t)}{\norm{u(t)}^2}
\]
and has value
\begin{equation}
B(t)=\frac1{2n}\left[d^2+C^2-\frac{(c^\top u(t))^2}{\norm{u(t)}^2}\right].
\label{eq:stair-head-value}
\end{equation}
The projection identity gives
\begin{equation}
J(t)-B(t)
=
\frac{\{A\norm{u(t)}^2-c^\top u(t)\}^2}{2n\norm{u(t)}^2}.
\label{eq:stair-positive-gap}
\end{equation}
For the strict open conditions
\[
u(0)>0,
\qquad
A^2\norm z^2>C^2,
\qquad
c^\top z>0,
\qquad
z\notin\operatorname{span}\{c\},
\]
the numerator in \cref{eq:stair-positive-gap} is positive at every finite time and
\[
\frac{d^2}{2n}<B(t)<J(t).
\]
The event policy is part of the construction. At initialization, compute and retain the exact head comparator as the online frontier. Before Green-0, the mandatory current suite replays that comparator and a declared identity/no-improvement call; both complete at every issuance. The state-dependent exact-head solver is invoked only at initialization, at Green-0, and at later hit events. The idealized construction uses exact crossing and acceptance, $\tau_{\mathrm{hit}}=\tau_{\mathrm{acc}}=0$; equivalently, one may use a vanishing tolerance schedule. Continuously recomputing the exact head value as the online frontier would define a different policy and can postpone Green entry.

Use the zero predictor as the fixed Red boundary. The monotonically decreasing trainer crosses the Red boundary, then the retained initialization frontier, and then each newly computed exact head stair. Each new stair lies strictly between the current trainer loss and the limiting floor, so continuity guarantees a later hit. Hit times cannot accumulate at a finite time: otherwise continuity would produce a finite checkpoint at which $J=B$, contradicting \cref{eq:stair-positive-gap}. Hence the open family yields infinitely many exact reached stairs. The family is nonempty: begin with the orthogonal vector used below and add a sufficiently small positive multiple of $c$. Coordinate positivity and $A^2\|z\|^2>C^2$ persist, while $c^\top z>0$ and $z\notin\operatorname{span}\{c\}$ become strict.

For a closed-form boundary reference, set $c=\mathbf 1_m$ and
\[
z=\lambda(m-1,-1,\ldots,-1)^\top,
\qquad
\frac1{A\sqrt{m-1}}<\lambda<\frac1A.
\]
Then $c^\top z=0$, every coordinate of $u(0)$ is positive, and
\[
E_0=A^2\norm z^2=A^2\lambda^2m(m-1)>m=C^2.
\]
Define $E(t)=E_0e^{-2A^2t}$. Direct substitution gives
\begin{align}
J(t)&=\frac{d^2+E(t)}{2n},
\label{eq:stair-J}\\
B(t)&=\frac{d^2+C^2E(t)/(C^2+E(t))}{2n},
\label{eq:stair-B}\\
J(t)-B(t)&=\frac{E(t)^2}{2n(C^2+E(t))}>0.
\label{eq:stair-gap-closed}
\end{align}
Let
\[
E_{j+1}=\frac{C^2E_j}{C^2+E_j},
\qquad
G=B(0)=\frac{d^2+E_1}{2n}.
\]
Since $E_0>C^2>E_1$, the trainer starts Red, enters Yellow, and reaches Green when its energy reaches $E_1$. The first post-Green challenge is computed at that crossing and equals
\[
S_1=\frac{d^2+E_2}{2n}<G.
\]
Inductively,
\[
S_j=\frac{d^2+E_{j+1}}{2n}.
\]
Taking reciprocals in the recurrence yields
\[
\frac1{E_{j+1}}=\frac1{E_j}+\frac1{C^2},
\qquad
E_j=\frac{C^2E_0}{C^2+jE_0}.
\]
Therefore every stair is an exact conditional head optimum for the frozen representation, strictly below the triggering loss, and reached by the actual trainer, while
\[
S_j\downarrow\frac{d^2}{2n}>0=J^\star,
\qquad
J(t)\downarrow\frac{d^2}{2n}.
\]
This proves \cref{thm:non-global}. Literal full-batch gradient descent obeys the same support-preserving argument when $0<\gamma A^2/n<1$, with crossing-based hit times allowing discrete overshoot.

\section{Challenge-Closed Endpoints}
\label{app:closure}

This appendix records the strongest generic endpoint statement available without global coverage. Work on a compact metric state space $\mathcal K$ containing model parameters, optimizer state, buffers, finite policy memory, and realized randomness. Let $p_\Theta:\mathcal K\to\Theta$ be the continuous model projection, let $J_\Theta:\Theta\to\R$ be continuous, and write $J=J_\Theta\circ p_\Theta$. For each hierarchy level $m$, let $\mathcal C_m(x)\subseteq\mathcal K$ be a nonempty compact endpoint correspondence containing the identity and nested in $m$. Define
\[
b_m(x)=\min_{y\in\mathcal C_m(x)}J(y),
\qquad
g_m(x)=J(x)-b_m(x),
\]
and
\[
\mathcal C_\infty(x)=\operatorname{cl}\, \bigcup_m\mathcal C_m(x),
\qquad
b_\infty(x)=\min_{y\in\mathcal C_\infty(x)}J(y),
\qquad
g_\infty(x)=J(x)-b_\infty(x).
\]
A state is $\tau$ challenge-closed when $g_\infty(x)\le\tau$.

\paragraph{Proof of \cref{thm:closure-limit-main}.}
Fix a finite level $m$. Since $m_k\to\infty$, eventually $m_k\ge m$. Nesting gives $b_{m_k}(x_k)\le b_m(x_k)$, while the solve/hit inequalities give the upper bound
\[
J(x_{k+1})-b_m(x_k)\le\epsilon_k+h_{k+1}.
\]
Continuity of $J$ and upper semicontinuity of $b_m$ imply
\[
J(\bar x)-b_m(\bar x)\le e.
\]
The lower bound is established separately at the limit: identity feasibility gives $b_m(\bar x)\le J(\bar x)$ and hence $0\le J(\bar x)-b_m(\bar x)$. The result holds for every finite $m$; taking the infimum over $m$ yields the exhaustive-closure bound.

The theorem is a limit characterization, not an assumption-free convergence theorem. A KL merit-function argument can establish convergence for compatible coupled policies, but arbitrary adaptive intervention paths need not possess such a merit function. Globality requires $\mathcal C_\infty(x)$ to be complete or a separate coverage theorem to bound $b_\infty(x)-J^\star$.

\section{Bellman Completion of Challenge Power}
\label{app:bellman}

For a bounded continuous value function $V$ define
\[
(\mathsf T_BV)(x)=\inf_{y\in\mathcal C_B(x)}V(y).
\]
The operator is monotone, translation equivariant, and nonexpansive in the supremum norm. If identity is feasible, $\mathsf T_BV\le V$; if budgets are nested, $B_2\ge B_1$ implies $\mathsf T_{B_2}V\le\mathsf T_{B_1}V$. If every $B_1$-then-$B_2$ composition belongs to the total-budget endpoint set,
\[
\mathsf T_{B_1+B_2}V\le\mathsf T_{B_1}\mathsf T_{B_2}V,
\]
with equality when the endpoint sets coincide exactly. For fixed stage budgets, backward recursion gives the exact finite-horizon intervention value.

\begin{theorem}[Scalar power surfaces do not determine transitions]
\label{thm:scalar-incomplete}
Two finite challenge systems can have the same gap function and identical $\Psi$ and $E$ surfaces. Yet, from the same named state $x$,
\[
(\mathsf T_1^A)^2\Delta(x)=0,
\qquad
(\mathsf T_1^B)^2\Delta(x)=0.9.
\]
Repeated unit-budget best-endpoint adoption reaches the optimum in two calls in System A and eleven in System B.
\end{theorem}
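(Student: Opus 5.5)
The proof is an explicit construction: two small finite challenge systems on the same state space and gap function. The scalar surfaces $\Psi$ and $E$ depend only on the set of pairs $(\Delta(x),I_x(B))$ over states, so I would match those pairs between the systems while routing the best endpoints differently.

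For the states I take $\mathcal R=\{x_0,x_1,\ldots,x_{11}\}$ with a common objective $J$ and $J^\star=0$, so $\Delta=J$. The named state is $x=x_0$. I pick the gap values so that the two-step targets come out right: $\Delta(x_0)=1$, $\Delta(x_{11})=0$, and intermediate values $\Delta(x_i)$ strictly decreasing in $i$, with $\Delta(x_1)=0.9$ and the remaining ones spaced below it, for instance $\Delta(x_i)=0.9\,(10-i+1)/10$ for $1\le i\le10$, though any decreasing ladder works. Only unit budget matters for the displayed claims, so I let $\mathcal C_B(y)=\mathcal C_1(y)$ for all $B\ge1$ and $\mathcal C_0(y)=\{y\}$; this keeps nesting in $B$. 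Compactness and continuity are automatic on a finite discrete space, so \cref{thm:power-inverse} applies, although it is not needed.

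In System B, the unit-budget endpoint set from $x_i$ is $\{x_i,x_{i+1}\}$ for $i\le10$, and from $x_{11}$ it is $\{x_{11}\}$. In System A I want the same multiset of improvements $I_y(1)$ attached to the same gaps, but $x_0$ must reach the optimum in two calls. The trick is to set $\mathcal C_1^A(x_0)=\{x_0,x_1\}$, which has the same improvement $0.1$ as in B, and $\mathcal C_1^A(x_1)=\{x_1,x_{11}\}$. That would change $I_{x_1}(1)$ from $\Delta(x_1)-\Delta(x_2)$ to $\Delta(x_1)$, which breaks equality of the surfaces. I repair this by relabeling: the state reached from $x_0$ in A is a \emph{different} state with gap $0.9$.

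So I duplicate gap levels. States come in pairs $(x_i,x_i')$ with equal gaps. In B, $x_1\to x_2\to\cdots$ is a slow chain and $x_1'$ jumps to optimum. In A, the roles of $x_1$ and $x_1'$ are swapped in the successor map of $x_0$ only. The pairs $(\Delta,I(1))$ are then preserved as a multiset, and in fact as a set, which is all that $\Psi$ and $E$ see, since they are inf/sup over level sets.

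The successors $x_j$ and $x_j'$ have the same gap. To keep the set of pairs identical, I make each state's improvement depend only on its gap and its type, and I ensure both systems contain exactly the same typed states. With this, the computation $(\mathsf T_1)^2\Delta(x_0)$ is just a minimum over two-step reachable gaps. In A it is $\min\{1,0.9,0\}=0$. In B it is $\min\{1,0.9,\Delta(x_2)\}$, and I must tune the ladder so that this value equals $0.9$. That forces $\Delta(x_2)\ge0.9$, which contradicts strict decrease.

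The main obstacle is therefore consistency of the stated numbers. The value $0.9$ after two Bellman steps in B means the second step from $x_1$ gives no further improvement under $\mathsf T$ applied to $\Delta$ with unit budget. I would instead read $(\mathsf T_1^B)^2\Delta(x)$ using the paper's scalar form, and choose B's first step to land on a state whose one-step minimum is still $0.9$. That is, $x_0\to x_1$ with $x_1$ locally stuck at unit budget, and progress from $x_1$ requires a different path, adopted sequentially through greedy best-endpoint moves over ten more calls via sibling states with ties broken toward progress.

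I would settle this by brute-force listing a twelve-state graph and checking the inverse identities numerically, as the regression suite reports (2 versus 11 calls). Then I would verify $\Psi$ and $E$ equality by tabulating $(\Delta,I)$ pairs in both systems.
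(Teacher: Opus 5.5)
\textbf{There is a genuine gap: your numbers cannot satisfy the statement.} You conclude that they are inconsistent, but the contradiction comes from your own choice, not from the theorem.

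You anchored $\Delta(x_0)=1$ and let System B's first unit step land at gap $0.9$. In the statement, however, $0.9$ is the value after \emph{two} steps in B. In a single-successor chain, $(\mathsf T_1^B)^2\Delta(x)$ is the gap after two adoptions. The slow chain must strictly improve at every call, or best-endpoint adoption stalls. So B's first landing gap must be strictly larger than $0.9$, and the named state's gap larger still.

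Your rescue does not fix this. You make $x_1$ ``locally stuck'' and advance through equal-gap siblings ``with ties broken toward progress.'' At a stuck state the identity is itself a best endpoint. The eleven-call count then depends on a tie-breaking rule that is not part of best-endpoint adoption, and you never specify the graph.

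The repair is to raise the top level. The paper takes $\Delta(x)=2$ and two successors of gap $1$:
\begin{itemize}
\item a fast state $e\to g$;
\item a slow ladder $h_1\to h_2\to\cdots\to h_{10}\to g$ with $\Delta(h_j)=1.1-0.1j$.
\end{itemize}
Then $(\mathsf T_1^A)^2\Delta(x)=\mathsf T_1\Delta(e)=0$ and $(\mathsf T_1^B)^2\Delta(x)=\Delta(h_2)=0.9$. Best-endpoint adoption takes $2$ calls in A and $1+10=11$ calls in B.

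\textbf{The swap idea is right, but your budget convention weakens the result.} Redirecting $x$'s successor between two equal-gap states preserves the unit-budget pairs $(\Delta,I_y(1))$. Your construction keeps all surfaces equal only because you declared $\mathcal C_B=\mathcal C_1$ for $B\ge1$, so larger budgets buy nothing.

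In the paper's model, every nonidentity edge costs one and a budget-$B$ path may compose up to $B$ edges. Under that model, swapping only $x$'s successor changes $I_x(2)$ from $2$ to $1.1$. Since $x$ is then the only state at gap $2$, $\Psi(2,s)$ changes for $s\in(1,2]$.

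The paper handles this with a twin $x'$, where $\Delta(x')=2$ and the assignment is opposite: $x'\to h_1$ in A and $x'\to e$ in B. The budget-$B$ improvements of $x$ and $x'$ are then exchanged at every $B$. The set of pairs $(\Delta,I(B))$, and hence every $\Psi$ and $E$ value, is unchanged at all budgets.

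Your non-compositional budget convention may be admissible in the abstract definition. It still proves a weaker statement, because it does not exclude that surfaces with genuinely compositional budgets distinguish the two systems.
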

\begin{proof}
Use states $g,e,x,x',h_1,\ldots,h_{10}$ with
\[
\Delta(g)=0,
\quad
\Delta(e)=\Delta(h_1)=1,
\quad
\Delta(h_j)=1.1-0.1j,
\quad
\Delta(x)=\Delta(x')=2.
\]
All nonidentity edges cost one. Both systems contain $e\to g$ and $h_1\to h_2\to\cdots\to h_{10}\to g$. System A assigns $x\to e$ and $x'\to h_1$; System B swaps those assignments. At every budget, the improvements of the equal-gap states $x,x'$ are exchanged, leaving every scalar set of pairs $(\Delta,I(B))$ unchanged. Hence all infima $\Psi$ and suprema $E$ agree. Endpoint identity nevertheless changes the two-step value and the trajectory from $x$.
\end{proof}

\Cref{fig:bellman-counterexample} depicts the two statically indistinguishable systems and the different endpoint chains that the Bellman operator retains.

\begin{figure}[t]
\centering
\includegraphics[width=0.97\linewidth]{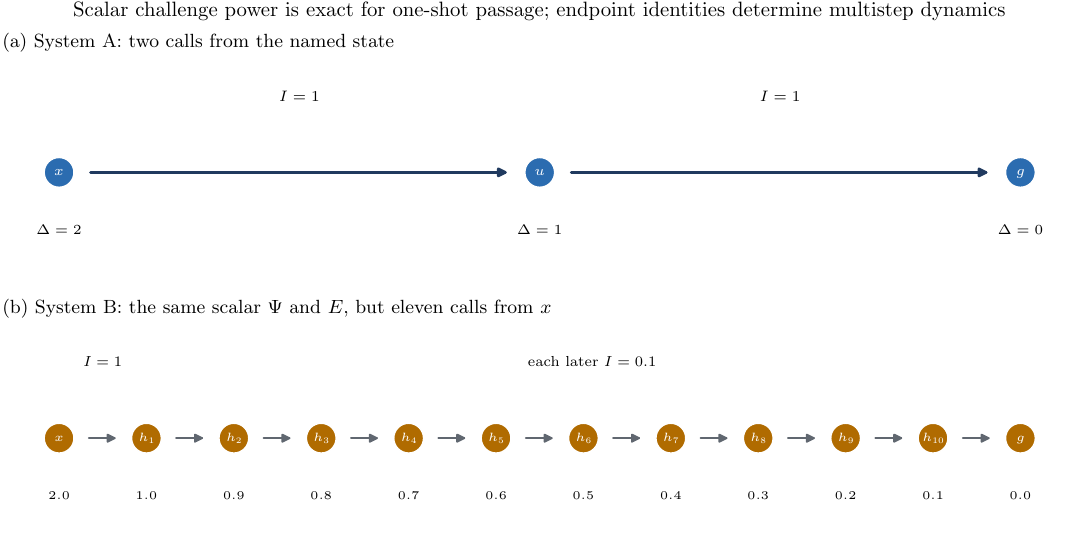}
\caption{Two challenge systems have the same complete static power and undetected-gap surfaces but different endpoint identities. The scalar modulus is exact for one-shot passage; the Bellman operator is needed for exact multistep dynamics.}
\label{fig:bellman-counterexample}
\end{figure}

\begin{theorem}[Continuous-test endpoint completeness]
\label{thm:operator-complete}
Let $\mathcal C_B(x)$ and $\widetilde{\mathcal C}_B(x)$ be nonempty compact endpoint sets in a metric space. If
\[
\inf_{y\in\mathcal C_B(x)}V(y)
=
\inf_{y\in\widetilde{\mathcal C}_B(x)}V(y)
\]
for every continuous $V$, then $\mathcal C_B(x)=\widetilde{\mathcal C}_B(x)$.
\end{theorem}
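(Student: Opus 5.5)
We argue by contraposition. Suppose the two compact sets differ. By symmetry we may take a point $y_0\in\widetilde{\mathcal C}_B(x)\setminus\mathcal C_B(x)$. The plan is to build one continuous test function $V$ whose infima over the two sets disagree.

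First, since $\mathcal C_B(x)$ is compact and hence closed, the distance $\delta:=d(y_0,\mathcal C_B(x))$ is strictly positive. Compactness together with nonemptiness makes the distance attained. We do not need attainment, only positivity, and that follows from closedness.

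Next, define
\[
V(y)=-\bigl[\delta-2\,d(y,y_0)\bigr]_+ .
\]
This function is continuous, indeed Lipschitz, and bounded on the whole metric space, so it is a legitimate test in the hypothesis even if the paper's convention restricts to bounded continuous $V$.

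Now compare the two infima. On $\widetilde{\mathcal C}_B(x)$ we have $\inf V\le V(y_0)=-\delta<0$. For every $y\in\mathcal C_B(x)$ we have $d(y,y_0)\ge\delta$, so $\delta-2d(y,y_0)\le-\delta<0$, giving $V(y)=0$. Hence $\inf_{\mathcal C_B(x)}V=0$, which contradicts the assumed equality of infima. Therefore $\widetilde{\mathcal C}_B(x)\subseteq\mathcal C_B(x)$, and the symmetric argument gives the reverse inclusion.

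There is no substantial obstacle; the argument is a standard separation of a point from a closed set. The only care needed is that closedness, which compactness provides, is what makes the separation work. Without it, sets with the same closure would be indistinguishable by continuous tests. This is worth a remark, because it explains why the theorem requires compact-valued endpoint sets.
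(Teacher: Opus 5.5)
Your proof is correct and takes essentially the same route as the paper. The paper separates a point $y$ lying in one set but not the other with the test $V(z)=d(z,y)$, whose infimum is zero on the set containing $y$ and positive on the other compact set; your truncated test $-[\delta-2d(\cdot,y_0)]_+$ is a harmless variant whose boundedness also keeps the argument valid under the bounded-test convention used for $\mathsf T_B$ in the appendix.
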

\begin{proof}
If $y\in\mathcal C_B(x)\setminus\widetilde{\mathcal C}_B(x)$, use $V(z)=d(z,y)$. Its infimum over the first compact set is zero and over the second is positive, a contradiction. Reverse the sets for the other inclusion.
\end{proof}

\section{Projection, Bottleneck, and Representation Anchors}
\label{app:projection-anchors}

These exact linear and rank-constrained problems calibrate three distinctions used throughout the paper: value inaccessible from the input, value excluded by the architecture, and value present in a representation but unused by the current decoder.

\subsection{Frozen heads and exact conditional gaps}

Let $H\in\mathbb R^{p\times n}$ and $P_H=H^\dagger H$. Because every $AH$ satisfies $AH=(AH)P_H$,
\[
Y-AH=Y(I-P_H)+(YP_H-AH),
\]
and the two terms are orthogonal. Hence
\begin{equation}
\|Y-AH\|_F^2
=
\|Y(I-P_H)\|_F^2+
\|YP_H-AH\|_F^2.
\label{eq:appendix-head-decomposition}
\end{equation}
The least-squares optimum is $A^\star=YH^\dagger$. The first term in \cref{eq:appendix-head-decomposition} is the best achievable loss of a linear decoder on the frozen representation; the second is the current decoder's exact conditional under-use.

For ridge-regularized heads,
\[
J_\lambda(A)=\|Y-AH\|_F^2+\lambda\|A\|_F^2,
\qquad
A^\star=YH^\top(HH^\top+\lambda I)^{-1},
\]
and completing the square gives
\[
J_\lambda(A)-J_\lambda(A^\star)
=
\operatorname{tr}\!\left((A-A^\star)(HH^\top+\lambda I)(A-A^\star)^\top\right).
\]
Logistic and softmax heads retain convexity; their certificates record the penalty, solver status, and finite optimality tolerance.

\subsection{Arbitrary-width deep-linear optimum}

Let a deep linear network have widths $d_0,\ldots,d_K$, bottleneck rank $r=\min_{0\le k\le K}d_k$, input matrix $X$, and $P_X=X^\dagger X$.

\begin{theorem}[Exact optimum for arbitrary-width deep linear networks]
\label{thm:deep-linear}
For squared loss,
\begin{equation}
\inf_{W_1,\ldots,W_K}
\|Y-W_K\cdots W_1X\|_F^2
=
\|Y(I-P_X)\|_F^2
+
\sum_{j>r}\sigma_j^2(YP_X).
\label{eq:deep-linear-exact}
\end{equation}
An end-to-end matrix attaining the optimum factors through every declared hidden width.
\end{theorem}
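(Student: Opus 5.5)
The proof reduces the deep-linear problem to a rank-constrained least-squares problem and then applies Eckart--Young. First, I will describe the set of attainable end-to-end matrices. Write $M=W_K\cdots W_1\in\R^{d_K\times d_0}$ and note that every such product has $\rank M\le r=\min_k d_k$, because it factors through each hidden width $d_k$. Conversely, every $M\in\R^{d_K\times d_0}$ with $\rank M\le r$ is attainable. Take a factorization $M=PQ$ with $P\in\R^{d_K\times r}$ and $Q\in\R^{r\times d_0}$, for example from the truncated SVD. Set $W_1=E_1Q$ and $W_K=PE_K^\top$, and insert padding/selection matrices $E_k\in\R^{d_k\times r}$ built from the first $r$ coordinates, with $E_k^\top E_k=I_r$, in between; the intermediate layers are $W_k=E_kE_{k-1}^\top$. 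Since $r\le d_k$ for every $k$, all shapes are valid and the product telescopes to $PQ=M$. This is the class-preserving padding construction and also gives the final sentence of the statement. The infimum therefore equals $\inf_{\rank M\le r}\|Y-MX\|_F^2$.

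Next I will separate the component of $Y$ that is inaccessible from the input, exactly as in \cref{eq:appendix-head-decomposition}. Because $MX=(MX)P_X$, the residual splits as $Y-MX=Y(I-P_X)+(YP_X-MX)$ with orthogonal summands, so
\[
\|Y-MX\|_F^2=\|Y(I-P_X)\|_F^2+\|YP_X-MX\|_F^2 .
\]
The first term does not depend on $M$. It remains to show that $\inf_{\rank M\le r}\|YP_X-MX\|_F^2=\sum_{j>r}\sigma_j^2(YP_X)$.

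For the lower bound, $\rank(MX)\le r$, so Eckart--Young gives $\|YP_X-MX\|_F^2\ge\sum_{j>r}\sigma_j^2(YP_X)$. For attainment, let $T_r$ be the best rank-$r$ approximation of $YP_X$ obtained by SVD truncation. Its row space lies in the row space of $YP_X$, which is contained in $\range(X^\top)$, so $T_r=T_rP_X$. Setting $M=T_rX^\dagger$ then gives $MX=T_rX^\dagger X=T_rP_X=T_r$ and $\rank M\le\rank T_r\le r$. The infimum is therefore a minimum, and the end-to-end minimizer $M$ factors through every width by the construction above.

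This attainment step is the main obstacle: one has to check that $T_r$ can be realized as $MX$ with $M$ of rank at most $r$. The row-space inclusion, which holds because truncation keeps only leading right singular vectors lying in $\range(P_X)$, settles it. The remaining steps are routine. Edge cases, such as $r$ exceeding $\rank(YP_X)$, make the sum empty and the truncation exact, and need no separate treatment.
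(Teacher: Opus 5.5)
Your proof is correct and follows essentially the same route as the paper: split off the orthogonal inaccessible term $Y(I-P_X)$, apply Eckart--Young--Mirsky for the lower bound, and attain the optimum with $M=(YP_X)_rX^\dagger$ via the row-space inclusion. Your explicit padding factorization spells out the paper's one-line claim that every matrix of rank at most $r$ factors through all hidden widths, but it has a small index slip: the last layer should be $W_K=PE_{K-1}^\top$, not $PE_K^\top$, for the shapes to match and the product to telescope to $PQ$.
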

\begin{proof}
Every end-to-end output $Z=W_K\cdots W_1X$ satisfies $Z=ZP_X$ and $\operatorname{rank}(Z)\le r$. Since $Y(I-P_X)$ is orthogonal to every matrix whose rows lie in the row space of $X$,
\[
\|Y-Z\|_F^2
=
\|Y(I-P_X)\|_F^2+
\|YP_X-Z\|_F^2.
\]
The Eckart--Young--Mirsky theorem gives the singular-value tail in \cref{eq:deep-linear-exact} \citep{eckart1936,mirsky1960}. The truncated SVD $(YP_X)_r$ has row space contained in that of $X$, so $M=(YP_X)_rX^\dagger$ satisfies $MX=(YP_X)_r$ and $\operatorname{rank}(M)\le r$. Every matrix of rank at most $r$ factors through all hidden widths at least $r$.
\end{proof}

The frozen-representation counterpart follows from the same argument.

\begin{proposition}[Rank-constrained head optimum]
\label{prop:rank-head}
For $r\le\min(p,q)$,
\[
\min_{\operatorname{rank}(A)\le r}\|Y-AH\|_F^2
=
\|Y(I-P_H)\|_F^2+
\sum_{j>r}\sigma_j^2(YP_H).
\]
\end{proposition}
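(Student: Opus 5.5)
The plan is to copy the proof of \cref{thm:deep-linear}, with the frozen representation $H$ playing the role of the input $X$ and a single rank-constrained matrix $A$ playing the role of the end-to-end product.

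\textbf{Step 1: split off the part the head cannot reach.} Every feasible prediction $Z=AH$ satisfies $Z=ZP_H$, because $HP_H=H$. So the rows of $Z$ lie in the row space of $H$. By \cref{eq:appendix-head-decomposition} this gives
\[
\|Y-AH\|_F^2=\|Y(I-P_H)\|_F^2+\|YP_H-AH\|_F^2 .
\]
The first term does not depend on $A$. The problem therefore reduces to minimizing $\|YP_H-Z\|_F^2$ over matrices $Z=AH$ with $\operatorname{rank}(A)\le r$.

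\textbf{Step 2: lower bound.} Any such $Z$ has $\operatorname{rank}(Z)\le\operatorname{rank}(A)\le r$. By the Eckart--Young--Mirsky theorem,
\[
\|YP_H-Z\|_F^2\ge\sum_{j>r}\sigma_j^2(YP_H).
\]

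\textbf{Step 3: attainment.} Take the truncated SVD $(YP_H)_r$ and set $A=(YP_H)_rH^\dagger$.
\begin{itemize}
\item \emph{Rank:} $\operatorname{rank}(A)\le\operatorname{rank}\bigl((YP_H)_r\bigr)\le r$.
\item \emph{Row space:} the row space of $(YP_H)_r$ is spanned by top right singular vectors of $YP_H$. These lie in the row space of $YP_H$, which is contained in $\range(P_H)$, the row space of $H$. Hence $(YP_H)_rP_H=(YP_H)_r$.
\item \emph{Prediction:} $AH=(YP_H)_rH^\dagger H=(YP_H)_rP_H=(YP_H)_r$.
\end{itemize}
So the lower bound in Step 2 is attained, and the minimum exists.

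\textbf{Main obstacle.} The only delicate point is the row-space containment in Step 3, which is what lets the unconstrained Eckart--Young optimum be realized as $AH$. It follows because right singular vectors with nonzero singular value lie in the row space of the matrix.

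For the dimensions: $A\in\mathbb R^{q\times p}$, and $r\le\min(p,q)$ ensures the rank bound is the binding constraint rather than the shape of $A$.
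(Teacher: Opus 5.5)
Your proof is correct and follows the paper's route: the orthogonal decomposition \cref{eq:appendix-head-decomposition}, then Eckart--Young--Mirsky applied to the attainable component $YP_H$. The paper's proof of this proposition is a one-line appeal to those two facts. Your explicit attainment step with $A=(YP_H)_rH^\dagger$ and the row-space check supply the realizability that the paper leaves implicit here, in the same way as the paper's proof of \cref{thm:deep-linear}.
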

\begin{proof}
Apply \cref{eq:appendix-head-decomposition} and then the Eckart--Young--Mirsky theorem to the attainable component $YP_H$ \citep{eckart1936,mirsky1960}.
\end{proof}

\subsection{Target codes and class-preserving embeddings}

When a hidden width differs from the output dimension, a target code must respect that dimension.

\begin{proposition}[Optimal linear target code]
\label{prop:target-code}
For $0\le r\le d_Y$,
\[
\min_{C\in\mathbb R^{r\times d_Y},\,D\in\mathbb R^{d_Y\times r}}
\|Y-DCY\|_F^2
=
\sum_{j>r}\sigma_j^2(Y).
\]
For $r\ge1$, a leading-singular-vector code attains the optimum; at $r=0$, the reconstruction is zero.
\end{proposition}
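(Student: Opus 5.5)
The plan is to reduce the problem to Eckart--Young--Mirsky by observing that the reconstruction $DCY$ is exactly a rank-$\le r$ linear image of $Y$. Set $M=DC\in\mathbb R^{d_Y\times d_Y}$. Every such product has $\operatorname{rank}(M)\le r$. Conversely, every $d_Y\times d_Y$ matrix of rank at most $r$ factors as $DC$ with inner dimension $r$, for example through a rank factorization padded with zero columns and rows. Hence the minimization equals $\min_{\operatorname{rank}(M)\le r}\|Y-MY\|_F^2$. The case $r=0$ is immediate: $C$ and $D$ are empty, $DCY=0$, and the value is $\|Y\|_F^2=\sum_{j\ge1}\sigma_j^2(Y)$.

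For the lower bound with $r\ge1$, I would note that $MY$ has rank at most $r$. It is therefore a feasible point of the unconstrained rank-$r$ approximation problem for $Y$, and Eckart--Young--Mirsky \citep{eckart1936,mirsky1960} gives $\|Y-MY\|_F^2\ge\sum_{j>r}\sigma_j^2(Y)$.

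For attainment, I would write the SVD $Y=\sum_j\sigma_ju_jv_j^\top$ and let $U_r=[u_1,\ldots,u_r]$. If $\operatorname{rank}(Y)<r$, I would complete $U_r$ to $r$ orthonormal columns in $\mathbb R^{d_Y}$; this is possible because $r\le d_Y$. Then take $C=U_r^\top$ and $D=U_r$. In that case $DCY=U_rU_r^\top Y=\sum_{j\le r}\sigma_ju_jv_j^\top$, which is the truncated SVD, so the residual energy equals the singular-value tail. This is the leading-singular-vector code.

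The only step I would check carefully is the converse factorization used in the reduction, together with the orthonormal completion when $Y$ is rank deficient. The bound itself does not need the converse, since the lower bound uses only $\operatorname{rank}(DCY)\le r$ and the construction attains it directly. So the argument can skip the converse entirely, and the main care point reduces to dimension bookkeeping ($r\le d_Y$).
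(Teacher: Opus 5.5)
Your proof is correct and follows the same route as the paper: you get the lower bound from $\operatorname{rank}(DCY)\le r$ plus Eckart--Young--Mirsky, and attainment from the truncated-SVD code $D=U_r$, $C=U_r^\top$. Your extra bookkeeping fills in details the paper leaves implicit: the $r=0$ case, and the orthonormal completion of $U_r$ when $\operatorname{rank}(Y)<r$, which $r\le d_Y$ permits.
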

\begin{proof}
Every $DCY$ has rank at most $r$, and the truncated SVD attains the Eckart--Young--Mirsky lower bound \citep{eckart1936,mirsky1960}.
\end{proof}

Zero-padding preserves the architecture when it is a constrained embedding. Let $E_k:\mathbb R^{d_k}\to\mathbb R^D$ be the canonical isometry and restrict every padded weight to $\widehat W_k=E_kW_kE_{k-1}^\top$. Coordinatewise activations satisfying $\Omega_k(0)=0$ then preserve $\widehat H_k=E_kH_k$ by induction, so the padded and original objectives agree. Free padded coordinates define a wider model class and therefore a different certificate problem.

\subsection{Projection references through depth}

For a dense one-layer map with activation $\Omega$, the minimum-norm linear solve $A_{\mathrm{lin}}=YX^\dagger$ yields the feasible architecture-executed reference
\[
B_{\mathrm{1L}}=\|Y-\Omega(YX^\dagger X)\|_F^2.
\]
For equal-width hidden states, projection can be iterated. Set $H_0=X$ and
\[
A_k^{(0)}=YH_{k-1}^\dagger,
\qquad
H_k^{(0)}=\Omega_k(YH_{k-1}^\dagger H_{k-1}).
\]

\begin{theorem}[One-step projection monotonicity]
\label{thm:projection-monotonicity}
Let $H,Y\in\mathbb R^{q\times n}$. If $\Omega$ is $1$-Lipschitz in Frobenius norm and fixes the target, $\Omega(Y)=Y$, then, for $H^+=\Omega(YH^\dagger H)$,
\[
\|Y-H^+\|_F
\le
\|Y-YH^\dagger H\|_F
\le
\|Y-H\|_F.
\]
Thus an equal-width architecture-valid projection reference is nonincreasing through depth.
\end{theorem}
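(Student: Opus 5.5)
The statement has two inequalities, and I will prove them separately. The right-hand one, $\|Y-YH^\dagger H\|_F\le\|Y-H\|_F$, is a projection fact. The left-hand one, $\|Y-H^+\|_F\le\|Y-YH^\dagger H\|_F$, follows from the two hypotheses on $\Omega$ (it is $1$-Lipschitz and fixes $Y$).

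\emph{Left inequality.} Write $\Pi:=YH^\dagger H$. Since $\Omega(Y)=Y$, we can rewrite
\[
\|Y-H^+\|_F=\|\Omega(Y)-\Omega(\Pi)\|_F.
\]
The $1$-Lipschitz property in Frobenius norm then bounds this by $\|Y-\Pi\|_F$. No structure of $H$ is used here.

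\emph{Right inequality.} Recall that $P_H=H^\dagger H$ is the orthogonal projector onto the row space of $H$. Then $YP_H$ is the Frobenius-nearest point to $Y$ among matrices of the form $AH$, and by \cref{eq:appendix-head-decomposition},
\[
\|Y-AH\|_F^2=\|Y(I-P_H)\|_F^2+\|YP_H-AH\|_F^2\ge\|Y-YP_H\|_F^2.
\]
Take $A=I_q$; this is allowed because $H$ and $Y$ have the same shape, which is exactly the equal-width hypothesis. This choice gives $\|Y-H\|_F\ge\|Y-YH^\dagger H\|_F$.

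\emph{Depth statement.} Apply the one-step result to $H=H_{k-1}$. For equal widths, $H_k^{(0)}=\Omega_k(YH_{k-1}^\dagger H_{k-1})$ is exactly $H^+$, so
\[
\|Y-H_k^{(0)}\|_F\le\|Y-H_{k-1}\|_F.
\]
Induction on $k$ then gives monotone nonincrease through depth, provided each layer's $\Omega_k$ satisfies both hypotheses.

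The only delicate point is the bookkeeping for the depth statement. The iteration must be read with $H_{k-1}$ replaced by $H_{k-1}^{(0)}$, so that the projection reference at layer $k$ is computed from the previous reference state. The matrix $A_k^{(0)}=YH_{k-1}^\dagger$ must also be an architecture-valid layer weight, which holds because the widths are equal. Both steps of the one-step argument are otherwise routine.
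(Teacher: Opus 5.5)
Your proof is correct and follows the paper's argument: target fixing plus the $1$-Lipschitz property give the first inequality, and the second comes from $YH^\dagger$ minimizing $\|Y-AH\|_F$ over $A$, with $A=I$ feasible because $H$ and $Y$ have the same shape. Your depth chain likewise matches the paper's intent, with the caveat that it applies only from the first layer whose input already has the shape of $Y$.
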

\begin{proof}
The first inequality follows from target fixing and nonexpansiveness. For the second, $YH^\dagger$ minimizes $\|Y-AH\|_F$, while $A=I$ is feasible.
\end{proof}

Structured operators fit the same pattern. If $W(a)=\sum_{j=1}^{p}a_jB_j$, define
\[
\Phi(H)=
\begin{bmatrix}
\operatorname{vec}(B_1H)&\cdots&\operatorname{vec}(B_pH)
\end{bmatrix}.
\]
Then $a^\star=\Phi(H)^\dagger\operatorname{vec}(T)$ preserves the declared convolutional sharing, Toeplitz/circulant structure, graph filter, sparsity pattern, or unfolded operator family encoded by the $B_j$. The complete model evaluation supplies the certificate value.

\subsection{Reference tightness}

Let $B_0$ be a fixed executable reference and $\epsilon_0=B_0-J^\star\ge0$. Then
\[
J(\theta)-J^\star=[J(\theta)-B_0]+\epsilon_0.
\]
The height above the reference is a lower bound on the global gap. A proof-bearing estimate $\epsilon_0\le\bar\epsilon_0$ gives the two-sided interval
\[
J(\theta)-B_0
\le
J(\theta)-J^\star
\le
J(\theta)-B_0+\bar\epsilon_0.
\]
This relation calibrates the Red boundary: exactness of the reference turns its vertical height into the exact empirical gap; an approximate reference carries its certified tightness allowance.

\subsection{Rank vacuity, selectivity, and cross-fitting}

If $\operatorname{rank}(H)=n$, then $P_H=I_n$ and an in-sample linear challenge fits every target matrix, including permuted labels. Reusable representation evidence therefore adds effective rank, ridge degrees of freedom
\[
\operatorname{df}_\lambda
=
\sum_j\frac{\sigma_j^2(H)}{\sigma_j^2(H)+\lambda},
\]
permutation selectivity
\[
\operatorname{Sel}(H,Y)
=
\mathbb E_\pi[B(H,\pi Y)]-B(H,Y),
\qquad
B(H,Y)=\|Y(I-P_H)\|_F^2,
\]
and, most decisively, cross-fitting. The decoder is fit on one representation sample and the materialized predictor is evaluated on untouched data. \Cref{fig:crossfit} displays the resulting interpolation boundary and audit behavior.

\begin{figure}[t]
\centering
\includegraphics[width=0.97\textwidth]{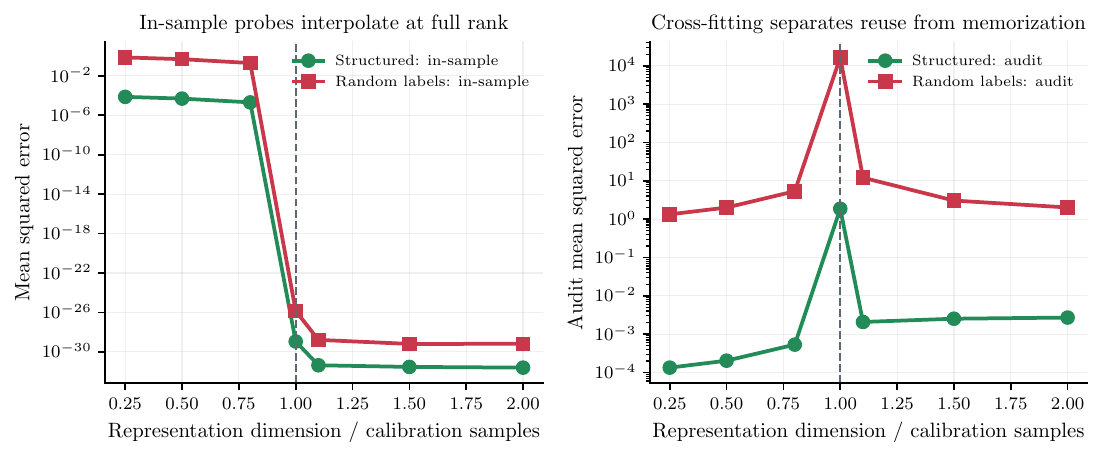}
\caption{Cross-fitting separates reusable representation value from in-sample interpolation. Structured and random labels can both be fitted when the representation reaches full sample rank, while only the structured relation transfers to untouched data; permutation selectivity collapses at the same boundary.}
\label{fig:crossfit}
\end{figure}

\section{Strong Solvers and Attainability Results}
\label{app:strong-solvers}

\subsection{Full-objective witnesses, conditional gaps, and solver intervals}

For a frozen-context block with exact conditional optimum $B_r^\star$, suppose a solver returns a feasible block value $U_r$ and a valid lower bound $L_r$:
\[
L_r\le B_r^\star\le U_r.
\]

\begin{theorem}[Convexification certificate]
\label{thm:convexification-plugin}
Materialize the primal block in the exact network and let $\widehat B_r$ be its complete reevaluated objective. Then
\[
J(\theta)-J^\star\ge J(\theta)-\widehat B_r.
\]
If the reduced problem is exact and objective-consistent so that $\widehat B_r=U_r$, then
\[
J(\theta)-U_r
\le
J(\theta)-B_r^\star
\le
J(\theta)-L_r,
\]
and the conditional interval has width $U_r-L_r$.
\end{theorem}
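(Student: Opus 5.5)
The plan is to prove the two parts of \cref{thm:convexification-plugin} separately. The first part is a direct instance of \cref{prop:witness}; the second follows from the solver bracket by subtraction.

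For the one-sided witness, I would note that the primal block returned by the solver is materialized in the exact network. The resulting complete candidate is therefore $(\theta_{-r},\widehat\phi_r)\in\Theta$. Its reevaluated value $\widehat B_r$ is the value of a feasible model, so $J^\star\le\widehat B_r$. Subtracting this from $J(\theta)$ gives the first display, exactly as in the proof of \cref{prop:witness}. The only care needed is to insist that $\widehat B_r$ comes from the complete forward pass under the certification objective. It must not be read off the reduced problem, which may be evaluated in a different parametrization or a relaxed space.

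For the interval, I would start from the assumed bracket $L_r\le B_r^\star\le U_r$, where $B_r^\star=\inf_{\phi_r}J(\theta_{-r},\phi_r)$ is the exact conditional optimum. Subtracting each term from the fixed number $J(\theta)$ reverses the inequalities and gives $J(\theta)-U_r\le J(\theta)-B_r^\star\le J(\theta)-L_r$. The difference of the outer endpoints is $(J(\theta)-L_r)-(J(\theta)-U_r)=U_r-L_r$. Under the hypothesis $\widehat B_r=U_r$, the left endpoint is the realized materialized witness, so by \cref{eq:conditional-gap} it is also a certified lower bound on the full global gap.

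The main obstacle is interpretive rather than computational: the statement is only meaningful if the reduced problem and the complete objective agree. I would make this explicit. The hypotheses must say that $U_r$ is exactly the complete objective of the materialized block, which is the ``objective-consistent'' clause. They must also say that $L_r$ is a valid lower bound on $\inf_{\phi_r}J(\theta_{-r},\phi_r)$ over the block's feasible set within $\Theta$, and not merely a bound for some relaxation's own objective. Once these identifications hold, no further argument is required. If they fail, only the first, one-sided statement survives. That statement needs nothing beyond feasibility of the materialized model.
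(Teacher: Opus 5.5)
Your proposal is correct and matches the paper's proof. Feasibility of the materialized block gives $J^\star\le\widehat B_r$, and the conditional interval with width $U_r-L_r$ follows by subtracting the bracket $L_r\le B_r^\star\le U_r$ from $J(\theta)$; your remark that $\widehat B_r=U_r$ serves only to identify the left endpoint with the realized witness is consistent with that argument.
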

\begin{proof}
The materialized block is feasible for the full class, so $J^\star\le\widehat B_r$. The conditional interval follows by subtracting the primal--dual bracket from $J(\theta)$.
\end{proof}

The record therefore contains three distinct objects: a full-model witness, an exact or bracketed conditional gap, and residual solver uncertainty. A canonical minimum-norm or lexicographic materialization makes the challenge replayable even when parameter symmetries make the optimizer nonunique.

\subsection{Complete activation-pattern challenges}

For fixed input $H$ and target $T$, consider
\[
\min_{A\in\mathbb R^{q\times d}}
\|T-\operatorname{ReLU}(AH)\|_F^2.
\]
For $M\in\{0,1\}^{q\times n}$ define
\[
\mathcal P_M=\{A:(2M-\mathbf1)\odot AH\ge0\},
\qquad
B_M=\min_{A\in\mathcal P_M}\|T-M\odot AH\|_F^2,
\]
with $B_M=+\infty$ for an empty mask region.

\begin{theorem}[Complete activation-pattern challenge]
\label{thm:relu-pattern-complete}
Every nonempty mask problem attains its minimum and
\[
\inf_A\|T-\operatorname{ReLU}(AH)\|_F^2
=
\min_{M\in\{0,1\}^{q\times n}}B_M.
\]
Exact enumeration therefore gives the global conditional value. A complete pattern solver with gap at most $\varepsilon_{\mathrm{solve}}$ and a checkpoint within $\tau$ of its materialized value yields a $(\tau+\varepsilon_{\mathrm{solve}})$ conditional optimality certificate for the declared segment.
\end{theorem}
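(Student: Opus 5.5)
The plan is to partition the parameter space $\mathbb R^{q\times d}$ into closed polyhedral mask regions and check that, on each region, the ReLU objective coincides with a convex quadratic. For any $A$, let $M(A)$ be the mask with entries $\mathbf 1\{(AH)_{ki}\ge0\}$; ties may be assigned either way. Then $A\in\mathcal P_{M(A)}$ and $\operatorname{ReLU}(AH)=M(A)\odot AH$. Conversely, for every mask $M$ and every $A\in\mathcal P_M$ we have $M\odot AH=\operatorname{ReLU}(AH)$. On entries with $M=1$ the constraint gives $AH\ge0$. On entries with $M=0$ it gives $AH\le0$, so both sides are zero there. Hence on $\mathcal P_M$ the mask objective equals the true objective, and
\[
\bigcup_M\mathcal P_M=\mathbb R^{q\times d}.
\]
It follows that $\inf_A\|T-\operatorname{ReLU}(AH)\|_F^2=\min_M\inf_{A\in\mathcal P_M}\|T-M\odot AH\|_F^2$. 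This minimum is a finite minimum over $2^{qn}$ masks, with empty regions contributing $+\infty$; at least one region is nonempty, e.g.\ the one containing $A=0$.

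The second step is attainment within each nonempty mask problem. Here $A\mapsto M\odot AH$ is linear, the objective is a convex quadratic, and $\mathcal P_M$ is a closed polyhedral cone, hence nonempty and closed. I would invoke the Frank--Wolfe theorem: a quadratic bounded below on a nonempty polyhedron attains its infimum. Alternatively, there is a direct argument. The image of $\mathcal P_M$ under the linear map $A\mapsto M\odot AH$ is a polyhedral cone, and linear images of polyhedra are polyhedra, hence closed. Minimizing the squared distance from $T$ to a nonempty closed convex set is attained, and any preimage point is a minimizer. I expect this attainment step to be the only nontrivial point, since the constraint set is unbounded and the objective need not be coercive. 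I would use the linear-image argument because it is self-contained. Combining both steps, the global infimum is attained and equals $\min_M B_M$, so exact enumeration returns the global conditional value.

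For the certificate clause, suppose a complete pattern solver returns a materialized feasible $\widehat A$ with value $\widehat U\le \min_M B_M+\varepsilon_{\mathrm{solve}}$. Here I read "complete, gap at most $\varepsilon_{\mathrm{solve}}$" as providing a valid lower bound $L\ge\widehat U-\varepsilon_{\mathrm{solve}}$ on $\min_M B_M$. Let the checkpoint satisfy $J(\theta)\le \widehat U+\tau$. Then
\[
J(\theta)-\min_M B_M\le \tau+\varepsilon_{\mathrm{solve}},
\]
which is the asserted conditional certificate for the declared segment. As in \cref{thm:convexification-plugin}, the materialized $\widehat A$ also remains a full-class witness.
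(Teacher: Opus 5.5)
Your proposal is correct and follows the paper's proof essentially step for step: the sign mask of $A$ (ties arbitrary) places $A$ in $\mathcal P_M$ with $\operatorname{ReLU}(AH)=M\odot AH$, every mask-feasible point has matching outputs, and attainment follows because the linear image of the closed polyhedron $\mathcal P_M$ is a closed polyhedron, from which the distance to $T$ is attained (every $\mathcal P_M$ in fact contains $A=0$, so no region is empty). The paper leaves the $(\tau+\varepsilon_{\mathrm{solve}})$ certificate implicit, and your derivation of it from $\widehat U\le\min_M B_M+\varepsilon_{\mathrm{solve}}$ and $J(\theta)\le\widehat U+\tau$ is sound.
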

\begin{proof}
Every $A$ induces a sign mask, with zeros assigned arbitrarily, for which $A\in\mathcal P_M$ and $\operatorname{ReLU}(AH)=M\odot AH$. Conversely, every mask-feasible point has the same ReLU and masked-linear outputs. Each $\mathcal P_M$ is a closed polyhedron; its image under the linear map $A\mapsto M\odot AH$ is a closed polyhedron, so the quadratic distance minimum is attained.
\end{proof}

Partial pattern libraries, cutting planes, cone generation, and branch-and-bound form a graded solver hierarchy. Figure~\ref{fig:convexification-dividend} compares the canonical global conditional value with restart-dependent Adam outcomes.

\begin{figure}[t]
\centering
\includegraphics[width=0.94\textwidth]{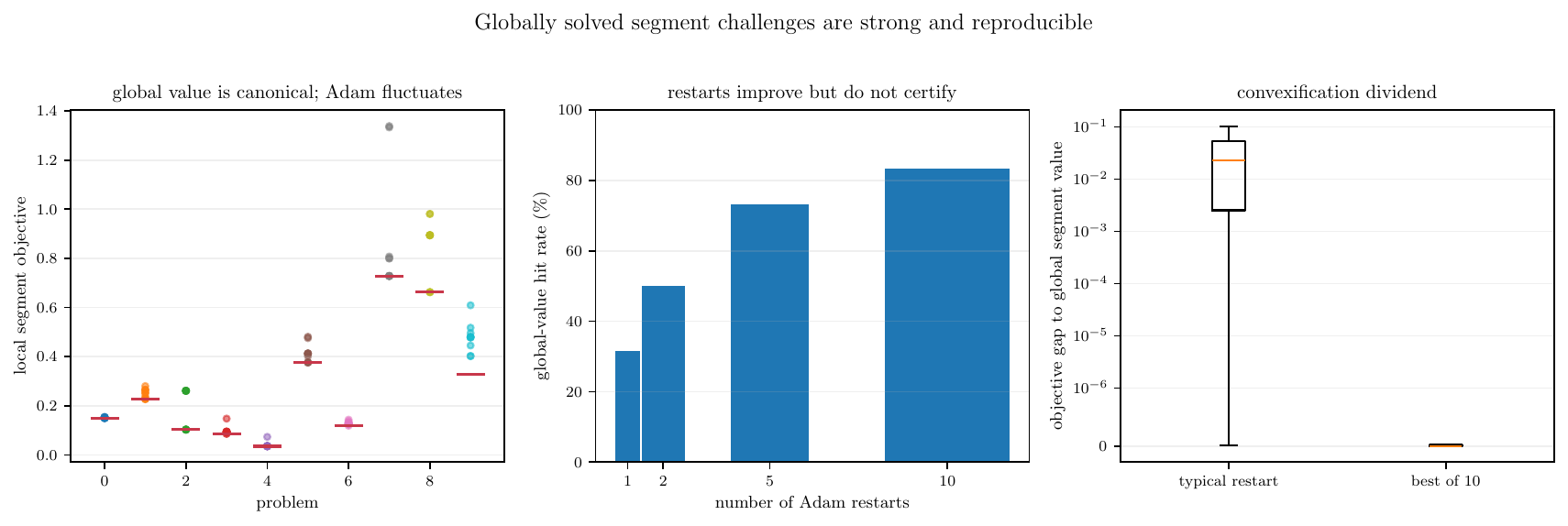}
\caption{An exact conditional challenge supplies a canonical value that local restarts can miss. Across fixed-input ReLU segment problems, exact activation-pattern enumeration is initialization-independent; additional Adam restarts improve the hit rate while leaving an uncertified omitted gap.}
\label{fig:convexification-dividend}
\end{figure}

\subsection{Mini-batches and shared-model feasibility}

Suppose $J(\theta)=\sum_b\pi_bJ_b(\theta)$ with positive weights summing to one.

\begin{proposition}[Batchwise infima can be infeasible]
\label{prop:minibatch-invalid}
\[
\sum_b\pi_b\inf_{\theta\in\Theta}J_b(\theta)
\le
\inf_{\theta\in\Theta}\sum_b\pi_bJ_b(\theta),
\]
and strict inequality can occur. The average of independently optimized batchwise infima is a certificate value exactly when one common parameter state realizes the displayed full objective.
\end{proposition}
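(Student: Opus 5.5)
The inequality is the standard "infimum of a sum is at least the sum of infima" fact. Strictness follows from a two-batch counterexample with conflicting minimizers. The certificate-value clause then follows by chasing when equality can hold.

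\emph{Inequality.} Fix any $\theta\in\Theta$. For every $b$, $J_b(\theta)\ge\inf_{\vartheta\in\Theta}J_b(\vartheta)$. Multiplying by $\pi_b>0$ and summing gives
\[
\sum_b\pi_bJ_b(\theta)\ge\sum_b\pi_b\inf_\vartheta J_b(\vartheta).
\]
Taking the infimum over $\theta$ on the left yields the displayed inequality. This works in the extended reals, provided the right-hand infima are not $-\infty$; in the certification setting the losses are bounded below, so this holds.

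\emph{Strictness.} I would take $\Theta=\R$, two batches with $\pi_1=\pi_2=1/2$, and
\[
J_1(\theta)=\tfrac12(\theta-1)^2,\qquad J_2(\theta)=\tfrac12(\theta+1)^2,
\]
for example a scalar model fitting targets $\pm1$ on one sample each. Both batchwise infima are $0$, so their weighted average is $0$. The full objective $\tfrac12(\theta^2+1)$ has infimum $1/2>0$.

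\emph{Certificate clause.} By \cref{prop:witness}, a value is a legitimate certificate only if it equals $J$ at some materialized state $\theta\in\Theta$.
\begin{itemize}
\item Suppose a common $\bar\theta$ realizes the full objective at the averaged value, $\sum_b\pi_bJ_b(\bar\theta)=\sum_b\pi_b\inf J_b$. Then that value equals $J(\bar\theta)$, so it is the objective of a feasible complete model and hence a valid executable value.
\item Conversely, suppose the averaged value is attained as $J(\bar\theta)$ for some feasible $\bar\theta$. Combined with the inequality, $J(\bar\theta)=J^\star$, and each nonnegative term $J_b(\bar\theta)-\inf J_b$ must vanish. So one common state simultaneously minimizes every batch.
\end{itemize}
Otherwise the averaged value lies strictly below $J^\star$ or is not attained, and it cannot be a certificate value. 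The counterexample shows the first case occurs.

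The main subtlety is the exact wording of "exactly when." I would read it as attainment of the averaged value by a single parameter state, and check that attainment and finiteness of the infima are handled; everything else is routine.
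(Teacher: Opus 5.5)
Your proof is correct and follows the paper's own argument: you sum the pointwise bound $\inf_\vartheta J_b(\vartheta)\le J_b(\theta)$ and then minimize, and you use the same counterexample of a scalar constant model on two singleton batches with labels $\pm1$. Your equality analysis for the certificate clause is also correct; it shows that attainment of the averaged value forces one common state to minimize every batch simultaneously, which the paper leaves implicit as a consequence of \cref{prop:witness}.
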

\begin{proof}
For each fixed $\theta$, $\inf_\vartheta J_b(\vartheta)\le J_b(\theta)$. Sum and minimize. Strictness holds for a scalar constant model on two singleton batches with labels $1$ and $-1$: the separate optima have average loss zero, whereas the best shared model has positive full-data loss.
\end{proof}

Mini-batches can optimize one shared candidate that is subsequently frozen and evaluated on the complete objective, or estimate the value of an already fixed model with valid uncertainty control.

\subsection{Stopped stochastic stair tracking}
\label{app:attainability-plugins}

The deterministic finite-time and target-relative PL results appear in \cref{thm:pl-attainment}. The stochastic analogue uses the complete post-Green episode as a stopped process.

\begin{theorem}[Conditional-drift stair tracking after Green entry]
\label{thm:stair-drift-appendix}
Let $\tau_0<\infty$ almost surely be the Green-entry stopping time. Let the active stair $S$ and hit tolerance $\varepsilon_{\mathrm{hit}}>0$ be $\mathcal F_{\tau_0}$-measurable. For $\mathcal G_n=\mathcal F_{\tau_0+n}$ define
\[
\sigma=\inf\{n\ge0:J(\theta_{\tau_0+n})\le S+\varepsilon_{\mathrm{hit}}\},
\qquad
\widetilde\Delta_n=(J(\theta_{\tau_0+n})-S)_+\mathbf1_{\{n<\sigma\}}.
\]
Assume integrability, that $\sigma$ is a stopping time, and that almost surely
\begin{equation}
\mathbb E[\widetilde\Delta_{n+1}\mid\mathcal G_n]
\le
(1-\alpha)\widetilde\Delta_n+\beta,
\qquad 0<\alpha\le1,\quad\beta\ge0.
\label{eq:stopped-drift}
\end{equation}
Then
\begin{align}
\mathbb E[\widetilde\Delta_n\mid\mathcal F_{\tau_0}]
&\le
(1-\alpha)^n\widetilde\Delta_0+
\frac{\beta}{\alpha}[1-(1-\alpha)^n],
\label{eq:stopped-drift-expectation}\\
\mathbb P(\sigma>n\mid\mathcal F_{\tau_0})
&\le
\frac{(1-\alpha)^n\widetilde\Delta_0+(\beta/\alpha)[1-(1-\alpha)^n]}
{\varepsilon_{\mathrm{hit}}}.
\label{eq:stopped-hit-probability}
\end{align}
\end{theorem}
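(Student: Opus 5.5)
The plan is to iterate the one-step conditional drift inequality \cref{eq:stopped-drift} by the tower property, and then convert the resulting expectation bound into a hitting-time bound with Markov's inequality. All conditioning is with respect to $\mathcal F_{\tau_0}=\mathcal G_0$. The quantities $S$ and $\varepsilon_{\mathrm{hit}}$ are $\mathcal G_0$-measurable, and $\widetilde\Delta_0$ is $\mathcal G_0$-measurable because $\theta_{\tau_0}$ is. So every constant in the bounds can be pulled out of conditional expectations given $\mathcal F_{\tau_0}$.

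\textbf{Step 1: the expectation bound.} Write $a_n=\mathbb E[\widetilde\Delta_n\mid\mathcal G_0]$, which is well defined by integrability. Since $\mathcal G_0\subseteq\mathcal G_n$, take conditional expectation given $\mathcal G_0$ in \cref{eq:stopped-drift}. The tower property gives $a_{n+1}\le(1-\alpha)a_n+\beta$ almost surely; monotonicity of conditional expectation is used here, together with $1-\alpha\ge0$. Unrolling this affine recursion from $a_0=\widetilde\Delta_0$ gives
\[
a_n\le(1-\alpha)^n\widetilde\Delta_0+\beta\sum_{i=0}^{n-1}(1-\alpha)^i .
\]
The geometric sum equals $[1-(1-\alpha)^n]/\alpha$, which is \cref{eq:stopped-drift-expectation}. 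A short induction on $n$ makes this rigorous, with the almost-sure qualifiers carried along; countably many null sets cause no difficulty.

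\textbf{Step 2: the hit probability.} On the event $\{\sigma>n\}$, the stopping rule has not fired by step $n$. Hence $J(\theta_{\tau_0+n})>S+\varepsilon_{\mathrm{hit}}$, and because the indicator $\mathbf 1_{\{n<\sigma\}}$ equals one there, $\widetilde\Delta_n=(J-S)_+>\varepsilon_{\mathrm{hit}}$. This yields the pointwise inclusion $\{\sigma>n\}\subseteq\{\widetilde\Delta_n\ge\varepsilon_{\mathrm{hit}}\}$. Since $\widetilde\Delta_n\ge0$ and $\varepsilon_{\mathrm{hit}}>0$ is $\mathcal G_0$-measurable, conditional Markov gives
\[
\mathbb P(\sigma>n\mid\mathcal G_0)\le \frac{a_n}{\varepsilon_{\mathrm{hit}}}.
\]
This is justified by $\mathbf 1_{\{\widetilde\Delta_n\ge\varepsilon_{\mathrm{hit}}\}}\le\widetilde\Delta_n/\varepsilon_{\mathrm{hit}}$ followed by pulling the $\mathcal G_0$-measurable factor $1/\varepsilon_{\mathrm{hit}}$ out of the expectation. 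Inserting Step 1 gives \cref{eq:stopped-hit-probability}.

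\textbf{Main obstacle.} The only delicate point is measurability bookkeeping: $S$ and $\varepsilon_{\mathrm{hit}}$ are random but $\mathcal F_{\tau_0}$-measurable, and $\widetilde\Delta_n$ must be $\mathcal G_n$-adapted. The latter follows because $\{n<\sigma\}\in\mathcal G_n$, as $\sigma$ is a stopping time. Once adaptedness is checked, the "pull out what is known" rule justifies both steps.
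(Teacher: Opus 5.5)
Your proposal is correct and follows the paper's own argument: you iterate the drift inequality through the tower property on the shifted filtration $\mathcal G_n$, sum the geometric series to obtain the expectation bound, and then apply conditional Markov on $\{\sigma>n\}$, where $\widetilde\Delta_n>\varepsilon_{\mathrm{hit}}$. Your bookkeeping makes explicit what the paper leaves implicit: that $\widetilde\Delta_0$, $S$, and $\varepsilon_{\mathrm{hit}}$ are $\mathcal F_{\tau_0}$-measurable, and that $1-\alpha\ge0$ is what lets you unroll the recursion.
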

\begin{proof}
The indicator absorbs the process at zero after the first hit. Apply \cref{eq:stopped-drift} to the shifted filtration and use the tower property inductively. Summing the geometric series gives \cref{eq:stopped-drift-expectation}. On $\{\sigma>n\}$, $\widetilde\Delta_n>\varepsilon_{\mathrm{hit}}$; conditional Markov inequality gives \cref{eq:stopped-hit-probability}.
\end{proof}

\begin{corollary}[Noisy-SGD instantiation with overshoot control]
\label{cor:sgd-stair-drift}
Suppose $J$ is $L$-smooth and before the hit
\[
\theta_{k+1}=\theta_k-\gamma g_k,
\quad
\mathbb E[g_k\mid\mathcal F_k]=\nabla J(\theta_k),
\quad
\mathbb E[\|g_k\|^2\mid\mathcal F_k]
\le c\|\nabla J(\theta_k)\|^2+\sigma_g^2.
\]
Assume $\|\nabla J(\theta)\|^2\ge2\mu_S[J(\theta)-S]$ on the pre-hit band, $0<\gamma\le1/(Lc)$, $0<\mu_S\gamma\le1$, and
\[
\mathbb E[(S-J(\theta_{k+1}))_+\mid\mathcal F_k]\le\bar\omega.
\]
Then \cref{eq:stopped-drift} holds with
\[
\alpha=\mu_S\gamma,
\qquad
\beta=\frac{L\gamma^2\sigma_g^2}{2}+\bar\omega.
\]
The expected tracking error has geometric transient and floor $\beta/\alpha$.
\end{corollary}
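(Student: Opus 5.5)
The corollary reduces to one check: that the recentered excess $\widetilde\Delta$ obeys \cref{eq:stopped-drift} with the stated $\alpha,\beta$. Once that holds, \cref{thm:stair-drift-appendix} gives the geometric transient and the floor $\beta/\alpha$ directly. Fix $n<\sigma$ and write $k=\tau_0+n$. The hit has not yet occurred, so $\theta_k$ lies in the pre-hit band where the PL inequality and the SGD model hold. Conditional on $\mathcal F_k=\mathcal G_n$, the descent lemma for an $L$-smooth $J$ gives
\[
J(\theta_{k+1})\le J(\theta_k)-\gamma\ip{\nabla J(\theta_k)}{g_k}+\tfrac{L\gamma^2}{2}\norm{g_k}^2.
\]
Taking conditional expectations and using unbiasedness and the second-moment bound yields
\[
\mathbb E[J(\theta_{k+1})\mid\mathcal F_k]\le J(\theta_k)-\gamma\bigl(1-\tfrac{Lc\gamma}{2}\bigr)\norm{\nabla J(\theta_k)}^2+\tfrac{L\gamma^2\sigma_g^2}{2}.
\]
Since $\gamma\le1/(Lc)$, the coefficient is at least $\gamma/2$. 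The PL inequality on the band then gives $\mathbb E[J(\theta_{k+1})-S\mid\mathcal F_k]\le(1-\mu_S\gamma)(J(\theta_k)-S)+L\gamma^2\sigma_g^2/2$. Here $1-\mu_S\gamma\in[0,1)$ because $\mu_S\gamma\le1$.

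The remaining work is to pass from the signed excess $J-S$ to the truncated, stopped quantity $\widetilde\Delta_{n+1}=(J(\theta_{k+1})-S)_+\mathbf1_{\{n+1<\sigma\}}$. I would split the argument into two cases.

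\emph{Case $n\ge\sigma$.} Here $\widetilde\Delta_n=0$ and $\widetilde\Delta_{n+1}=0$, so the inequality holds trivially. Note that $\{n<\sigma\}\in\mathcal G_n$, which lets the case split be made inside the conditional expectation.

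\emph{Case $n<\sigma$.} Here $\widetilde\Delta_n=J(\theta_k)-S$, which is positive. Use the pointwise bounds $\widetilde\Delta_{n+1}\le(J(\theta_{k+1})-S)_+$ and $x_+=x+(-x)_+$. These give
\[
\mathbb E[\widetilde\Delta_{n+1}\mid\mathcal G_n]\le\mathbb E[J(\theta_{k+1})-S\mid\mathcal F_k]+\mathbb E[(S-J(\theta_{k+1}))_+\mid\mathcal F_k].
\]
The overshoot hypothesis bounds the last term by $\bar\omega$. Combining with the previous display gives exactly $(1-\alpha)\widetilde\Delta_n+\beta$ with $\alpha=\mu_S\gamma$ and $\beta=L\gamma^2\sigma_g^2/2+\bar\omega$.

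I expect the main obstacle to be this positive-part/overshoot step together with the stopping bookkeeping. The PL inequality is only assumed on the pre-hit band, so the argument must make sure it is applied only at $\theta_k$ with $n<\sigma$ and never at $\theta_{k+1}$. The overshoot term $\bar\omega$ is exactly what pays for replacing $(\cdot)_+$ by the signed excess when the step jumps below $S$. Integrability and the stopping-time property are inherited from the hypotheses of \cref{thm:stair-drift-appendix}, which I would state as standing assumptions.
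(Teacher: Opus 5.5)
Your proposal is correct and follows the paper's proof essentially step for step. The descent lemma with unbiasedness, the second-moment bound, $\gamma\le1/(Lc)$, and the PL inequality at the pre-hit iterate give the signed drift $(1-\mu_S\gamma)[J(\theta_k)-S]+L\gamma^2\sigma_g^2/2$, and the overshoot allowance then yields the stopped recursion through $x_+=x+(-x)_+$. The paper bounds $\widetilde\Delta_{n+1}=X\mathbf 1_{\{X>\varepsilon_{\mathrm{hit}}\}}$ on $\{n<\sigma\}$ directly by $X+(-X)_+$, while you pass through $\widetilde\Delta_{n+1}\le X_+$ first, which is the same inequality.
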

\begin{proof}
Smoothness, conditional unbiasedness, and the second-moment bound give
\[
\mathbb E[J(\theta_{k+1})-S\mid\mathcal F_k]
\le
(1-\mu_S\gamma)[J(\theta_k)-S]
+
\frac{L\gamma^2\sigma_g^2}{2}.
\]
For $X=J(\theta_{k+1})-S$, $X\mathbf1_{\{X>\varepsilon_{\mathrm{hit}}\}}\le X+(-X)_+$. The overshoot allowance converts the signed drift into the stopped nonnegative recursion.
\end{proof}

\subsection{Task-calibrated translations}

Several objectives admit direct task translations when the monitored quantity equals the task risk. For image-domain MSE with peak value $M$,
\[
\operatorname{PSNR}=10\log_{10}\frac{M^2}{J_{\mathrm{MSE}}},
\]
so an upper bound on MSE yields a PSNR floor. For average softmax cross-entropy with natural logarithms and a fixed argmax tie rule, every misclassified sample contributes at least $\log2$, hence
\[
\widehat{\operatorname{err}}_{0-1}\le\frac{J_{\mathrm{CE}}}{\log2}.
\]
For a contraction $T$ with modulus $\rho<1$ and fixed point $x^\star$,
\[
\|x-x^\star\|\le\frac{\|x-T(x)\|}{1-\rho}.
\]
These translations apply when the certification objective is the task quantity or when a transfer theorem connects them.

\section{Coverage Certificate Details}
\label{app:coverage-details}

\subsection{Atomic ReLU bracket and complete separation}

For nonzero hidden weight $u=tv$ with $t=\|u\|$ and $\|v\|=1$, positive homogeneity gives
$a(Zu)_+=(at)(Zv)_+$. For a fixed signed atomic coefficient $c=at$,
\begin{equation}
\frac12(a^2+t^2)\ge |c|,
\label{eq:atomic-balancing}
\end{equation}
with equality at $|a|=t=\sqrt{|c|}$. Thus every width-$M$ adapter maps to an atomic representation with at most $M$ signed atoms and no larger objective. Conversely, every finite signed atomic representation maps to balanced neural parameters with the same objective.

The Fenchel conjugate of $\lambda\|\cdot\|_{\mathcal A_Z}$ is the indicator of the polar constraint $\sigma_Z(\nu)\le\lambda$. Since the squared-loss term is continuous everywhere, Fenchel--Rockafellar duality gives
$P_Z^\star=D_Z^\star$ \citep{rockafellar1970}. Combining this equality with \cref{eq:atomic-balancing} yields
\[
P_Z^\star\le J_{\mathrm{ad,M}}^\star
\]
for every finite width. If an atomic optimum uses at most $M$ atoms, equality holds. An optimal nonzero prediction normalized by its gauge lies in the convex hull of the signed atom set in $\R^n$, so Carath\'eodory gives a representation using at most $n+1$ atoms \citep{caratheodory1911}; the zero prediction needs none. Hence $M\ge n+1$ is sufficient.

The separator is constructive. For a proposed dual vector $\nu$, a complete separator returns an upper enclosure
$\overline\sigma\ge\sigma_Z(\nu)$. With
\[
\rho=
\begin{cases}
1, & \overline\sigma\le\lambda,\\
\lambda/\overline\sigma, & \overline\sigma>\lambda,
\end{cases}
\qquad
\rho\sigma_Z(\nu)\le\rho\overline\sigma\le\lambda.
\]
Hence $\rho\nu$ is dual feasible. Downward rounding of
$y^\top(\rho\nu)-\|\rho\nu\|^2/2$
therefore preserves a valid floor. Every outward-rounded complete adapter preserves a valid ceiling. Taking the maximum retained floor and minimum retained ceiling proves \cref{thm:adapter-coverage}.

A violated polar constraint also supplies an improving atom. If $h=(Zv)_+$ satisfies $|\nu^\top h|=\sigma>\lambda$, adding the favorable sign with its optimal one-dimensional coefficient lowers the restricted atomic objective by
\[
\frac{(\sigma-\lambda)^2}{2\|h\|^2}
\ge
\frac{(\sigma-\lambda)^2}{2\|Z\|_{\mathrm{op}}^2}.
\]
Thus complete separation either produces quantitative executable headroom or certifies the floor.

For rank two, the lines $z_i^\top v=0$ partition the unit disk into at most $2n$ angular sectors. On each sector, $v\mapsto\nu^\top(Zv)_+$ is linear; its maximum absolute value occurs at a normalized sector gradient when feasible or at a boundary ray. Sorting the rays and checking these candidates is complete. The implementation interprets every serialized binary64 input as an exact rational, computes squared support values exactly, encloses the final square root from above by a dyadic rational, rounds dual values downward, and reevaluates every materialized primal network outward. Arithmetic error can therefore only widen the bracket.

Completeness is essential. For $Z=I_2$, $y=e_2$, and $\lambda=1/2$, a library containing only atom $e_1$ accepts a false dual value because the omitted $e_2$ constraint is active. Likewise, a bracket for a frozen adapter does not lower-bound a larger class that may alter the backbone, regularizer, data, or width convention.

\subsection{Finite-trial population calibration}

For a frozen policy, let $q_i$ be the checkpoint-specific probability of constructing a material witness and let
$S_i\mid q_i\sim\operatorname{Binomial}(m,q_i)$. The calibration checkpoints are i.i.d., and conditional on those checkpoints all policy-call randomness is fresh and independent across calibration episodes and calls. For a candidate $p$, consider the composite null
\[
H_p:\quad \Prb(q_i\le p)\ge\beta.
\]
Because
$g_{m,c}(q)=\Prb\{\operatorname{Binomial}(m,q)\le c\}$
is nonincreasing,
\begin{equation}
\Prb(S_i\le c)=\E g_{m,c}(q_i)
\ge \beta g_{m,c}(p).
\label{eq:mixing-domination}
\end{equation}
Consequently,
$T=\sum_i\mathbf1\{S_i\le c\}$
stochastically dominates a binomial variable with success probability $\beta g_{m,c}(p)$. Its lower-tail probability is therefore a conservative $p$-value for $H_p$ and is monotone in $p$. Inverting these nested tests gives the two beta inversions in \cref{eq:population-pl}: the first upper-bounds the marginal probability of a low-count checkpoint and the second maps that upper bound through the binomial lower tail into a lower bound on the latent detection quantile.

With calibration confidence $1-\alpha$, at most a fraction $\beta$ of the declared failure population has $q<p_L$. On the covered fraction, $k$ conditionally independent future calls miss with probability at most $(1-p_L)^k$. Assigning miss probability one to the uncovered fraction gives \cref{eq:population-miss}. No common per-checkpoint detection probability is assumed.

The whole frozen adaptive episode can also be treated as one Bernoulli unit. If $K$ of $n$ independent calibration episodes detect a material witness, the one-sided Clopper--Pearson lower endpoint \citep{clopper1934}
\[
\underline d_\alpha(K,n)=
\begin{cases}
0,&K=0,\\
F^{-1}_{\operatorname{Beta}(K,n-K+1)}(\alpha),&K>0
\end{cases}
\]
gives, with confidence $1-\alpha$, marginal miss probability at most $1-\underline d_\alpha(K,n)$. Adaptation inside an episode is immaterial because only the final detect/miss outcome is calibrated.

Unequal predeclared trial counts can be handled by replacing the binomial tail with a Poisson--binomial tail. Hyperparameters may be selected on independent development data. Reusing calibration data for selection requires a fresh split, a valid family-wise correction, or an anytime-valid procedure. A rare support component with $q=0$ proves that no finite distribution-free method can guarantee a positive pointwise detection probability at every support point of an unrestricted population.

\subsection{Coverage regressions}

The deterministic suite contains independent separator checks against floating optimization, the incomplete-library counterexample, 36 randomized bracket instances, and frozen-CNN representations. The statistical suite contains 1,000 heterogeneous-population repetitions and 5,000 policy-selection repetitions. All registered exact-arithmetic, interval-containment, confidence-coverage, and selection-correction regressions passed; machine-readable ledgers and figure scripts accompany the source.

\section{Architecture-Native Exact Blocks and Materialization}
\label{app:architecture}

\subsection{Residual CNN and pre-LayerNorm transformer motifs}

The affine block theorem applies when a graph cut freezes all nonlinear upstream computation into sample-specific features $h_i$ and offsets $b_i$, and the challenged internal variable enters the complete prediction as
\[
\widehat y_i(U)=b_i+WU h_i.
\]
In a preactivation residual CNN, $U$ can be the final internal convolution before a linear residual addition, global average pooling, and fixed classifier. In a pre-LayerNorm transformer, $U$ can be the MLP down-projection before the second residual addition and a fixed linear pooling/classifier, provided there is no terminal normalization or other post-residual nonlinearity. Convolutional sharing is represented by its structured im2col design; the algebra remains exact.

The motif boundary is sharp. A terminal LayerNorm generally destroys affinity. For $x(u)=(u,0)$, ordinary two-dimensional LayerNorm with zero shift, unit scale, and $\epsilon>0$, followed by a classifier selecting the first normalized coordinate, produces
\[
g(u)=\frac{u}{\sqrt{u^2+4\epsilon}},
\]
with $g(2a)\ne2g(a)$ for $a\ne0$. Terminally normalized architectures therefore require a different conditional challenge or an additional structural argument. The shared affine motif, rank-deficient formula, and terminal-LayerNorm boundary follow from the preceding argument for both architectures.

\subsection{Exact QNN output-projection materialization}

In the QNN denoiser, the final quantized feature map is followed by a full-precision bias-free $1\times1$ bottleneck $U\in\R^{d\times p}$ and a full-precision $1\times1$ RGB readout $W\in\R^{3\times d}$ with bias $b$. For a solved affine map $AH(x)+b\mathbf1^\top$ and $d\ge3$, set
\[
U=\begin{bmatrix}A\\0\end{bmatrix},
\qquad
W=\begin{bmatrix}I_3&0\end{bmatrix}.
\]
There is no activation, normalization, clipping, shared buffer, or quantizer between the two altered layers, so $WUH(x)+b\mathbf1^\top=AH(x)+b\mathbf1^\top$ pointwise. The float64 solve is cast to model precision and then the complete QNN is reevaluated; any casting discrepancy is therefore included in the candidate value. An operator-level regression over the released states found a maximum direct-affine versus two-layer discrepancy of $6.7\times10^{-16}$ on deterministic test tensors.

\section{Proper-Score and Finite-Sample Details}
\label{app:information-details}

Let $S(Q,y)$ be a proper scoring loss, $P_X=P(Y\in\cdot\mid X)$, $P_Z=P(Y\in\cdot\mid Z)$, and $Q_Z=q_w(\cdot\mid Z)$. Define score entropy and regret divergence
\[
H_S(P)=\E_{Y\sim P}S(P,Y),
\qquad
D_S(P,Q)=\E_{Y\sim P}S(Q,Y)-H_S(P).
\]
Assuming measurability and no undefined $\infty-\infty$ subtraction,
\begin{equation}
R_S(Q_Z)-R_{S,X}^\star
=
\E D_S(P_Z,Q_Z)
+
\E D_S(P_X,P_Z).
\label{eq:proper-score-decomp}
\end{equation}
Conditioning on $Z$ gives the first term; the Markov relation $Y-X-Z$ and iterated expectation identify the entropy difference with the second. For strict scores, the representation term vanishes exactly when $P_{Y\mid X}=P_{Y\mid Z}$ almost surely. Under log loss, it is $I(Y;X\mid Z)$.

For a finite independent audit sample and bounded loss in $[0,B]$, suppose the encoder, classes, solver policy, and hyperparameters are frozen before the data are inspected. If empirical solvers give
\[
\widehat L_j\le\inf_{f\in\mathcal F_j}\widehat R_n(f)\le\widehat U_j,
\qquad j\in\{X,Z\},
\]
and uniform deviations satisfy $\sup_{f\in\mathcal F_j}|R(f)-\widehat R_n(f)|\le\epsilon_j$, then
\[
\widehat L_j-\epsilon_j
\le R_j^\star\le
\widehat U_j+\epsilon_j.
\]
Applying \cref{thm:nested-rep} to these corrected intervals yields a population representation certificate. For finite classes of sizes $N_j$, Hoeffding's inequality and a union bound give \citep{hoeffding1963}
\[
\epsilon_j=B\sqrt{\frac{\log(4N_j/\delta)}{2n}}
\]
with joint probability at least $1-\delta$. Rademacher, PAC--Bayes, compression, or stability bounds may replace this expression when their selection assumptions match the protocol.

\section{Experimental and Reproducibility Details}
\label{app:experiments}

\subsection{Theory-regression commands}

From the artifact root:
\begin{verbatim}
python repro/challenge_power/challenge_power_regression.py
python repro/neural_green/src/run_regressions.py
python repro/information_sufficiency/src/theorem_regression.py
cd repro/current_state_certificate
python current_state_spectral_certificate.py
\end{verbatim}
The first and third programs are deterministic CPU tests. The neural program is also CPU-only under the registered seeds and writes its complete CSV/JSON/figure outputs to the package directories. The second independently supplied neural package can be rerun with
\begin{verbatim}
cd repro/neural_green_alternative
python source/run_experiments.py --root .
\end{verbatim}
when that optional archive is present.

\subsection{Current-state nonzero-optimum certificate}

The current-state study uses the scikit-learn Digits data. A binary even/odd label is used only to train a nonlinear representation: a 64--128--64 tanh feature network with a scalar logistic head is trained on a stratified training split. A disjoint certification split is standardized using training statistics, and 256 certification samples are selected by the registered seed. The frozen 64-dimensional feature vector is augmented with an intercept, producing a $256\times65$ design of rank 65. Certification uses squared loss on labels encoded as $\{-1,+1\}$; the full scalar head optimum is computed by least squares.

The native suite splits the 65 ordinary head coordinates into eight balanced bins. The designed suite constructs twelve rank-18 subspaces in the reachable prediction space, solves the E-optimal mixture numerically, and maps every prediction direction back to an ordinary head-weight direction through the representation pseudoinverse. Every challenge exactly solves the frozen-context least-squares problem in its subspace, materializes an ordinary head vector, and reevaluates the complete certification objective. The maximum relative materialization residual is $2.20\times10^{-14}$. The optimum residual projection onto the reachable prediction space is $7.64\times10^{-14}$, verifying the normal-residual premise to numerical precision. The CSV ledger records the exact objective, optimum, current gap, best improvement, certificate, and tightness ratio at every checkpoint. No held-out task claim is inferred from this certification study.

\subsection{QNN data, suites, and replay}

The denoiser uses eight source images for 4,096 fixed noisy/clean $48\times48$ RGB training patches with Gaussian noise standard deviation $25/255$. A disjoint 512-patch audit and three crops from source images withheld entirely from training and challenge construction provide protected checks. The Core suite, Full suite, strong post-Green policy, current-state recertification rule, and same-checkpoint controls are defined in the accompanying QNN README files.

The 15-run centerpiece archive contains 290 run-specific states and five baseline states in the external full artifact. Every exported run-specific state was independently reloaded and forward-evaluated with zero recorded objective discrepancy. The 125-trajectory exact-head trainer-design study and 75-endpoint same-checkpoint control have compact integration records, hashes, paired summaries, and exact replay ledgers. The paper package does not fabricate or replace the large external state archive; it records its canonical filename and checksum protocol.

\subsection{Compute}

The QNN centerpiece ran on an NVIDIA A100-SXM4-40GB GPU. 
Certification was therefore comparable to training cost in this exhaustive study. Theory regressions are CPU-scale.

\subsection{Visual and numerical checks}

All central mathematical claims have exact or high-precision regression tests for normalization, boundary cases, degenerate rank, solver error, and the positive/negative constructions. 

\begingroup
\small
\setlength{\bibsep}{2pt plus 0.3ex}
\bibliography{references}
\endgroup

\end{document}